\documentclass{article}

\newif\ifarxiv
\arxivtrue
\PassOptionsToPackage{numbers,sort&compress}{natbib} %
\ifarxiv
  \usepackage[preprint]{neurips_2026}
\else
  \usepackage[main, final]{neurips_2026}
\fi
\makeatletter
\ifarxiv
  \renewcommand{\@noticestring}{Extended version of a paper accepted at NeurIPS 2026.}
\fi
\makeatother

\usepackage[utf8]{inputenc} %
\usepackage[T1]{fontenc}    %
\usepackage{hyperref}       %
\usepackage{url}            %
\usepackage{booktabs}       %
\usepackage{amsfonts}       %
\usepackage{nicefrac}       %
\usepackage{microtype}      %
\usepackage{xcolor}         %

\usepackage{balance}

\usepackage{xcolor}
\definecolor{myblue}{HTML}{4278f5}
\definecolor{colorcite}{HTML}{3b4670}
\usepackage{hyperref}
\usepackage{etoc} %
\hypersetup{
	colorlinks=true,
	linkcolor=myblue,
	citecolor=colorcite,
	urlcolor=colorcite
}
\usepackage{url}
\usepackage{mathtools}
\usepackage{graphicx}
\usepackage{paralist}
\usepackage{xspace}
\usepackage{caption}
\usepackage{setspace}
\usepackage{booktabs}
\usepackage{tabularx}

\usepackage{tikz}
\usetikzlibrary{arrows,decorations.pathmorphing,backgrounds,positioning,fit,petri,calc,positioning,shapes.geometric}
\usetikzlibrary{shapes.geometric}
\usetikzlibrary{shapes}
\usepackage{comment}
\usetikzlibrary{positioning,chains,fit,shapes,calc}
\usepackage{tkz-berge}
\definecolor{myblue}{RGB}{80,80,160}
\definecolor{mygreen}{RGB}{80,160,80}
\definecolor{mygrey}{HTML}{706968}
\definecolor{mydarkgrey}{HTML}{43110a}
\definecolor{mydarkred}{HTML}{e12800}
\ifarxiv
  
\else
  
\fi

\usepackage{amsmath}
\allowdisplaybreaks
\usepackage{amssymb}
\usepackage{mathtools}
\usepackage{amsthm}
\usepackage{thmtools}
\usepackage{thm-restate}
\usepackage{stmaryrd}
\usepackage{tabularray}
\usepackage{multirow}
\usepackage{subcaption}
\usepackage{microtype}

\usepackage{colortbl}
\usepackage{tabulary}
\usepackage{etoolbox}

\usepackage{titlesec}
\titlespacing{\section}{0pt}{*1}{*0.5}

\usepackage{dsfont}
\usepackage[ruled, noline, noend, linesnumbered, algo2e]{algorithm2e}
\PrintSemicolon

\usepackage[capitalize,nameinlink]{cleveref}

\newcommand{\para}[1]{\noindent\textbf{#1}}

\crefname{section}{Sec.}{Secs.}
\Crefname{section}{Sec.}{Secs.}
\Crefname{definition}{Def.}{Defs.}
\Crefname{appendix}{App.}{Apps.}
\crefname{equation}{eqs}{eqs}   %
\Crefname{equation}{Eq.}{Eqs}   %
\crefname{problem}{problem}{problems}
\Crefname{problem}{Problem}{Problems}

\newcommand{\charBin}{\ensuremath{v_{\mathsf{B}}}\xspace}
\newcommand{\charBinS}[1]{\ensuremath{\charBin(#1)}\xspace}

\newcommand{\modelSet}{\ensuremath{\mathcal{C}}\xspace}
\newcommand{\model}{\ensuremath{C}\xspace}
\newcommand{\coal}{\ensuremath{S}\xspace}
\newcommand{\labelSet}{\ensuremath{\mathcal{Y}}\xspace}

\newcommand{\tieSet}{\ensuremath{\mathcal{M}}\xspace}
\newcommand{\cardM}{\ensuremath{n}\xspace}
\newcommand{\domainX}{\ensuremath{\mathcal{X}}\xspace}
\newcommand{\totLabs}{\ensuremath{\ell}\xspace}
\newcommand{\xp}{\ensuremath{x}\xspace}
\newcommand{\predModel}{\ensuremath{\model(\xp)}\xspace}
\newcommand{\game}{\ensuremath{G}\xspace}
\newcommand{\pSet}{\ensuremath{N}\xspace}
\newcommand{\chF}{\ensuremath{v}\xspace}
\newcommand{\pCard}{\ensuremath{n}\xspace}
\newcommand{\shap}{\ensuremath{\phi}\xspace}
\newcommand{\banz}{\ensuremath{\beta}\xspace}

\newcommand{\VotVec}{\ensuremath{\bm{p}}\xspace}
\newcommand{\VotVecCoord}{\ensuremath{p}\xspace}
\newcommand{\ind}{\ensuremath{\mathds{1}}\xspace}

\newcommand{\y}{\ensuremath{y}\xspace}

\newcommand{\dataSet}{\ensuremath{\mathcal{D}}\xspace}
\newcommand{\dataSetVal}{\ensuremath{\mathcal{D}_{\text{val}}}\xspace}
\newcommand{\dataSetTra}{\ensuremath{\mathcal{D}_{\text{test}}}\xspace}

\newcommand{\sumGame}{\ensuremath{{G}_\dataSet}\xspace}
\newcommand{\sumGameMC}{\ensuremath{{G}_\dataSet}\xspace}

\newcommand{\charMC}{\ensuremath{v_{\mathsf{MC}}}\xspace}
\newcommand{\gameMC}{\ensuremath{G}\xspace}

\newcommand{\unique}{\ensuremath{U}\xspace}

\newcommand{\utilSuiteRaw}{enpower}
\newcommand{\suite}{{\textsc{\utilSuiteRaw}}\xspace}
\newcommand{\suiteBin}{\textsc{\utilSuiteRaw}\ensuremath{_{\mathsf{B}}}\xspace}
\newcommand{\suiteMC}{\textsc{\utilSuiteRaw}\ensuremath{_{\mathsf{MC}}}\xspace}
\newcommand{\suiteShapImp}{\textsc{\utilSuiteRaw}\ensuremath{_{\mathsf{MC}}\text{-}\shap}++\xspace}
\newcommand{\codeURL}{\url{https://github.com/iliesarpe/enpower}\xspace}

\newcommand{\computeSHAP}{\ensuremath{\mathtt{Get}\text{-}\shap}\xspace}
\newcommand{\computeBANZ}{\ensuremath{\mathtt{Get}\text{-}\banz}\xspace}

\newcommand{\getpsi}{\ensuremath{\text{\texttt{Get\_}}\powerIdx}\xspace}

\newcommand{\ensRF}{\texttt{RF}\xspace}
\newcommand{\ensAB}{\texttt{AB}\xspace}
\newcommand{\ensHET}{\texttt{HET}\xspace}
\DeclareMathOperator{\Bin}{Bin}

\usepackage{algorithm}
\usepackage{xspace}

\usepackage{amssymb,amsmath,amsfonts,amsthm,bm}

\def\eqref#1{equation~\ref{#1}}

\def\floor#1{\lfloor #1 \rfloor}
\def\1{\bm{1}}

\def\evn{{n}}

\def\evz{\ensuremath{z}\xspace}
\def\gfvar{{t}}

\DeclareMathAlphabet{\mathsfit}{\encodingdefault}{\sfdefault}{m}{sl}
\SetMathAlphabet{\mathsfit}{bold}{\encodingdefault}{\sfdefault}{bx}{n}

\newcommand{\E}{\mathbb{E}}

\newcommand{\R}{\mathbb{R}}

\newcommand{\corrPred}{\ensuremath{\evn_{+}}\xspace}
\newcommand{\wrongPred}{\ensuremath{\evn_{-}}\xspace}
\newcommand{\bigO}{\ensuremath{\mathcal{O}}\xspace}

\newcommand{\coalset}{\ensuremath{\mathcal{S}}\xspace}
\newcommand{\powerIdx}{\ensuremath{\psi}\xspace}

\newcommand{\CorW}{\ensuremath{\sigma}\xspace}
\newcommand{\WorC}{\ensuremath{\tau}\xspace}
\newcommand{\shapTermCorr}{\ensuremath{T_{+}}\xspace}
\newcommand{\shapTermWrong}{\ensuremath{T_{-}}\xspace}
\newcommand{\shapTerm}{\ensuremath{T}\xspace}

\theoremstyle{plain}
\newtheorem{theorem}{Theorem}[section]

\newtheorem{lemma}[theorem]{Lemma}
\newtheorem{corollary}[theorem]{Corollary}
\theoremstyle{definition}
\newtheorem{definition}[theorem]{Definition}

\newtheorem{property}[theorem]{Property}

\theoremstyle{remark}

\title{Exact power indices for plurality-voting ensembles}

\author{%
  Ilie Sarpe$^{*1\dagger}$ \qquad
  Theofanis Georgakopoulos$^{*2}$\qquad
  Aristides Gionis$^{1,3}$ \\[2pt]
  \normalfont
  $^{1}$\textit{KTH Royal Institute of Technology}, Stockholm, Sweden \\
  $^{2}$\textit{Ericsson BCSS}, Stockholm, Sweden \\
   $^{3}$\textit{Digital Futures}, Stockholm, Sweden \\
}

\begin{document}
\etocdepthtag.toc{mtmain}

\maketitle
{\renewcommand{\thefootnote}{\fnsymbol{footnote}}\footnotetext[1]{Equal contribution. $^\dagger$Contact author.}}

\begin{abstract}
Plurality-voting ensembles aggregate the 
predictions of multiple base classifiers and 
output the most voted class, 
with ties typically broken at random. 
Despite the simple aggregation rule, 
attributing the impact of each base classifier
to the ensemble's predictions is far from straightforward.
That is, a single base classifier's impact depends on  how it interacts with the votes of all other classifiers in the ensemble, for example, in resolving ties, and not only on its own predictions.
We address the attribution problem using established tools from cooperative game theory, 
studying the Banzhaf and Shapley power indices for new games modeling plurality voting and 
random tie-breaking, for binary and multi\-class classification settings.
Computing power indices is, in general, intractable. 
Surprisingly, our methods, collectively named~\suite, 
enable the \emph{exact} computation of Banzhaf and Shapley  values in 
\emph{linear time} for the binary classification setting and in
\emph{low-degree polynomial time} for the multiclass setting. 
The algorithms in~\suite rely on new combinatorial formulations of the Banzhaf and Shapley  indices, yielding closed-form solutions for the binary case and ordinary generating function representations for the multiclass case.
We validate empirically the methods in \suite demonstrating their high applicability, providing attribution values not captured by existing methods.
We apply the attribution of \suite to: 
ensemble pruning and prompt-based ensembling of vision-language models. %
\end{abstract}

\section{Introduction}

Ensemble methods, which combine the predictions of multiple \emph{base classifiers} 
through a voting mechanism, are widely used in modern machine learning, as they
achieve higher accuracy and robustness 
than individual models \citep{Sagi2018EnsSurv,Zhou2025EnsembleMethods}.
Among the variety of aggregation rules \citep{Mohammed2023RevEnsDL}, 
\emph{plurality voting}---where the prediction corresponds to the most voted class, 
with ties broken uniformly at random---is a well-established scheme
that applies directly to binary and multi\-class classification and is
used by many ensembles and popular software packages~\citep{Zhou2025EnsembleMethods, RpackageRF}. 

\emph{Attribution} is a central question for an ensemble, that is, 
quantifying how much each base classifier contributes to the ensemble's predictions.
Simple heuristics, such as leave-one-out accuracy or per-model performance,
ignore the interactions among base models:
such heuristics measure the contribution of a base classifier to a single sub-ensemble, i.e., the full or empty one, respectively, among the $\Theta(2^{\cardM})$ sub-ensembles (where \cardM corresponds to the number of base classifiers).
For example, a single vote of a base classifier can modify the prediction of any sub-ensemble with a tie for the most voted class.
Cooperative game theory addresses precisely this problem through \emph{power indices}, 
in particular the 
\emph{Banzhaf value} \citep{banzhaf1964weighted} and the 
\emph{Shapley value} \citep{shapley1953value}, uniquely characterized by natural axioms,
faithfully capture the contribution of a player to the outcome of the game.

Computing power indices is intractable in general, 
involving a summation over an exponential number of coalitions, 
which we also refer to as sub-ensembles.
Prior work has therefore resorted either to approximations
\citep{bachrach2010approximating,Rozemberczki2021ShapEnsemble} or to restricted game classes, 
most prominently weighted majority games (WMGs)
\citep{Uno2012WMG,Ykhlef2017BanzWeighted}.
However, these approaches do not capture plurality voting with random tie-breaking and 
cannot easily be applied to multi\-class settings.

We first introduce a new game capturing plurality-voting ensembles with random tie-breaking.
We then design \suite, a collection of algorithms computing the \emph{exact} 
Banzhaf and Shapley {power indices} of every base classifier 
in our game, 
for binary and multi\-class classification.
We consider base classifiers casting hard votes.
Since our analysis considers only the votes of the base classifiers, our results are agnostic to the underlying models and apply to heterogeneous ensembles.
The power indices we obtain can be used for many important applications such, ensemble pruning \citep{Kulkarni2012Pruning}, data validation, and feature selection \citep{ghorbani2019data}.
We first address the \emph{binary} case, 
which differently from a WMG is not a simple game (i.e., a monotone game with values in $\{0,1\}$) due to random tie-breaking.
We obtain closed-forms for both power indices, computable 
in time $\bigO(|\dataSet|\cardM)$, 
where
$|\dataSet|$ is the number of test points.
We then provide a new algorithmic contribution for the \emph{multi\-class} case with $\totLabs$ classes, where existing games do not capture faithfully ensemble aggregation. 
We show how to compute exactly the Banzhaf values in time $\bigO(|\dataSet| \cardM \totLabs^2)$
and the Shapley values in time $\bigO(|\dataSet| \cardM^3 \totLabs)$, 
that is, polynomial in $\cardM$ and $\totLabs$.
We consider unweighted votes; extending \suite to weighted plurality voting 
is non-trivial. 

Our technical results %
exploit \emph{structural symmetries}: 
base classifiers with identical votes yield the same contributions to the ensemble.
In the binary case, this yields closed-form solutions for both power indices.
In the multi\-class case, 
we encode coalition counts via \emph{ordinary generating functions}~\citep{Flajolet2013BookComb} with low degree, computed efficiently with dynamic programming. 
Our techniques build on previous methods using ordinary generating functions to compute power indices in WMGs \citep{Bilbao2000GenPI}. 
\ifarxiv
We also discuss a variant of our dynamic programming approach for the Shapley values using fast polynomial multiplication, with a faster asymptotic time bound.
\fi

In our experimental evaluation we use the Banzhaf and Shapley values computed with \suite on two applications: \emph{ensemble pruning}~\citep{Lu2010PruningScoreDiv,khan2021optimal}, and \emph{prompt-based ensembling of vision-language models}~\citep{radford2021learningCLIP}, on a zero-shot image classification task.
Our empirical study shows that \suite is highly efficient in practice computing \emph{exact} power indices in a few seconds on most instances, and produces attribution values that are not captured by existing methods, directly characterizing plurality-voting ensembles with random tie-breaking.

\section{Related work}\label{sec:related}

Ensembles combine predictions of \emph{base models} 
using a \emph{voting mechanism}~\citep{Sagi2018EnsSurv}. %
Prior work has considered
learning algorithms to obtain homogeneous or heterogeneous base models, 
e.g., through boosting, bagging, or stacking~\citep{Mohammed2023RevEnsDL}, 
and different aggregation rules to combine predictions of base models. 
Widely-adopted rules include averaging, majority voting, and \emph{plurality voting}~\citep{Zhou2025EnsembleMethods}, which is the focus of our work. 

Attribution is a key challenge in explainable AI~\citep{Minh2021XAI}, 
and power indices have been used extensively to assign values to individual models or features~\citep{lundberg2017unified,wang2023data, strumbelj2010efficient}. 
To the best of our knowledge, our work is the first to consider the attribution problem in plurality-voting ensembles with random tie-breaking, for both binary and multi\-class settings.

Earlier work has mostly focused on computing power indices for ensembles under \emph{binary} classification, 
building on results from classical cooperative game theory~\citep{Chalkiadakis2012ComputationalBook,Deng1994WMG} and weighted majority games (WMGs)~\citep{Georgiou2013Weight,Dogan2019Weight,Georgiou2006SVMWMG}.
Under unitary weights, power indices for WMGs can be computed in polynomial time~\citep{Uno2012WMG}, 
while recent contributions use techniques similar to ours~\citep{kakimura2026computing}.
However, as we discuss in~\Cref{sec:RWExtended},  WMGs cannot be used effectively in our setting.

Close to our work, 
\citet{Rozemberczki2021ShapEnsemble} study a game for ensembles of soft binary classifiers aggregated through a \emph{majority} rule. 
They design an approximation algorithm using multilinear extensions 
to estimate Shapley values.
\citet{Ykhlef2017GraphShapEnsamble} use a classical result by~\citet{Deng1994WMG} 
to model ensemble diversity through Shapley values, 
capturing, however, only pairwise relationships among base models and not matching the 
ensemble's aggregation rule.
\citet{Ykhlef2017BanzWeighted} apply their diversity scoring to Banzhaf values computed over WMGs.
In \Cref{sec:exps} we show that our exact methods are much faster than prior work \citep{Uno2012WMG, Ykhlef2017GraphShapEnsamble}, and are comparable with the fastest approximate method by \citet{Rozemberczki2021ShapEnsemble}.
Moreover, our algorithms yield new values not captured by existing methods.
Our methods are specifically designed for multi\-class settings, 
where prior work cannot be %
adapted; see~\Cref{sec:RWExtended} for details on additional related work.

\section{Preliminaries}
\label{sec:prelim}

Let $\modelSet = \{\model_1, \dots, \model_\cardM\}$ be an \emph{ensemble} of $\cardM$ classifiers, 
where each classifier $\model\in\modelSet$ corresponds to a function $\model: \domainX \mapsto \labelSet$ 
from a domain $\domainX$ to a target label set $\labelSet =\{\y_1,\dots,\y_\totLabs\}$. 
We use $\predModel \in \labelSet$ to denote the class assigned by $\model$ to data point $\xp\in \domainX$. 
We do not assume that classifiers come from the same model family, 
neither we require access to training data.

Our goal is to characterize the contribution of each classifier $\model\in\modelSet$. 
Inspired by \emph{attribution methods} in machine learning 
\citep{ghorbani2019data,lundberg2017unified,vstrumbelj2014explaining},  
we employ ideas from \emph{cooperative game theory}.

\begin{definition}[\citep{Chalkiadakis2012ComputationalBook}]
\label{def:charGame}
A \emph{characteristic function game} (or simply \emph{game}) $\game$ is given by a pair $(\pSet, \chF)$ where $\pSet = [\pCard]$ is a finite non-empty set of players, and $\chF: 2^\pSet \to \R$ is a characteristic function mapping each \emph{coalition} $\coal\subseteq\pSet$ to a \emph{value} $\chF(\coal)$.
\end{definition}
\emph{Power indices} are used to capture the contribution of players in games
by considering changes in the value of coalitions that the players participate.
Two of the most popular indices, widely adopted in the literature and considered in this paper, 
are the \emph{Banzhaf} and \emph{Shapley} values, defined below.

\begin{definition}\label{def:shapAndBanz}
	Given a game $\game= (\pSet, \chF)$ the \emph{Banzhaf} value $\banz_i(\game)$ 
  (or simply $\banz_i$ when the game $\game$ is clear) 
  of player $i\in \pSet$ and the \emph{Shapley} value $\shap_i(\game)$ 
  (or simply $\shap_i$) are defined respectively as,%
  \footnote{%
  For a set $\coal \subseteq \pSet$ and a player $i\in \pSet$, 
  we define $\coal+i \doteq \coal \cup \{i\}$ and $\coal-i \doteq \coal \setminus \{i\}$.}
      \begin{equation}
    \banz_i \doteq \frac{1}{2^{\pCard -1}}\!\!\sum_{i\in \coal \subseteq \pSet}\!\! \left( \chF(\coal) - \chF(\coal - i) \right),
    ~~~
    \shap_i\doteq \!\!\sum_{i\in \coal \subseteq \pSet}\!\! \frac{(\pCard- |\coal|)!(|\coal|-1)!}{\pCard!}\left(\chF(\coal) - \chF(\coal - i)\right).\label{eq:shap}
  \end{equation}
\end{definition}
The Shapley values are the \emph{unique} values that satisfy the following desirable properties: 
\emph{dummy player},  \emph{symmetry},  \emph{linearity}, and  \emph{efficiency}
(see~\cref{appendix:subsection:CGT}).
The Banzhaf values satisfy all the above properties \emph{except} efficiency.
While Shapley values are the natural choice when the values must sum to $\chF(\pSet)$, 
the Banzhaf values are more robust to noise and easier to estimate, 
making them preferable in stochastic or large-scale regimes~\citep{wang2023data}.

Next, we define the characteristic function $\chF$
for our setting.
Given a data point $\xp \in \domainX$ and a coalition $\coal \subseteq \modelSet$, 
we let $\VotVec(\xp, \coal) \in \mathbb{Z}^\totLabs$ be a \emph{vote profile}, 
where the $j$-th component corresponds to the number of classifiers from $\coal$ \emph{voting} for $\y_j$. 
We simplify $\VotVec(\xp, \coal)$ to $\VotVec(\coal)$ or $\VotVec$ when $\xp, \coal$ are clear from the context. 
We define the \emph{majority} set of labels 
$\tieSet(\xp, \coal) = \{\y_j \in \labelSet : \VotVecCoord_j \ge \VotVecCoord_i, i\in[\totLabs]\}$, 
containing the labels receiving the most votes from the classifiers in the coalition 
$\coal \subseteq \modelSet$. 

We now introduce characteristic function games that model how plurality voting ensembles with random tie-breaking are used in practice for 
\begin{inparaenum}
	\item[(1)] \emph{binary classification}; and
	\item[(2)] \emph{multi\-class classification}.
\end{inparaenum}

\textbf{Binary classification.} 
For a data point $\evz=(\xp, \y) \in \domainX\times \labelSet$, 
with $\labelSet= \{0,1\}$, 
and a coalition $\coal\subseteq \modelSet, \coal\neq \emptyset$,
we consider the \emph{point-wise} characteristic function game 
$\game_{\evz}=(\modelSet, \charBin)$ with
\begin{equation}\label{eq:charBin}
	\charBin(\evz,\coal) \doteq \begin{cases}
		1\,, & \text{if }\{\y\}= \tieSet(\coal)\,, \\
		\frac{1}{2}\,, & \text{if } \y \in \tieSet(\coal) \text{ and } |\tieSet(\coal)|=2\,,\\
		0\,, & \text{otherwise}\, .
	\end{cases}
\end{equation}
We let $\charBin(\evz,\emptyset)\doteq0$ for technical purposes.
In other words, for a data point  $\evz = (\xp, \y)$, 
the function $\charBin$ assigns a score to each coalition $\coal\subseteq\modelSet$, 
which is:
\begin{inparaenum}[(1)]
  \item[] 1 if the majority of models in $\coal$ correctly identify label~\y;
  \item[] 1/2 if there is a tie; and
  \item[] 0 if the majority of models in $\coal$ make an incorrect prediction.
\end{inparaenum}

The game $\game_{\evz}$ is defined for a single data point \evz. %
For a test set \dataSet, 
we define the game \sumGame
by averaging the point-wise values $\charBin(\evz,\coal), \evz\in\dataSet$, i.e., 
$\charBin(\dataSet,\coal) = \frac{1}{|\dataSet|} \sum_{(\xp, \y)\in \dataSet} \charBin(\evz,\coal)$. 
The Banzhaf and Shapley values satify linearity and an
efficient computation of the values for $\charBin(\evz,\coal)$ implies 
efficient computation for the values for $\charBin(\dataSet,\coal)$.
Thus, we focus on computing the Banzhaf and Shapley values
of $\game_\evz$ for a single point $\evz$.
We drop the first argument of \charBin, 
when clear from the context.
It can be shown that \charBin does not feature standard properties enabling efficient 
computation. 
\begin{lemma}\label{lemma:charNonProp}
	The characteristic function $\charBin(\evz,\cdot): 2^\modelSet \to \{0,\frac{1}{2},1\}$
  is neither super\-modular, nor~super\-additive, nor mono\-tone.
\end{lemma}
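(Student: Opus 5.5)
The plan is to refute each property with an explicit small counterexample, since the statement is purely existential. Fix a data point $\evz=(\xp,\y)$ with $\y=1$, and call a classifier \emph{correct} if $\model(\xp)=1$ and \emph{wrong} otherwise. The key observation is that $\charBin(\coal)$ depends only on the numbers $a$ of correct and $b$ of wrong classifiers in $\coal$. For $\coal\neq\emptyset$ it equals $1$ if $a>b$, $\tfrac12$ if $a=b$, and $0$ if $a<b$; also $\charBin(\emptyset)=0$. All three counterexamples use an ensemble with two correct classifiers $\model_1,\model_2$ and one wrong classifier $\model_3$. Any larger ensemble works too, because each property is a condition on specific coalitions.

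\emph{Non-monotonicity.} We have $\charBin(\{\model_1\})=1$ but $\charBin(\{\model_1,\model_3\})=\tfrac12<1$. So adding a player can decrease the value.

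\emph{Non-superadditivity.} Superadditivity requires $\charBin(\coal\cup T)\ge\charBin(\coal)+\charBin(T)$ for disjoint $\coal,T$. Take $\coal=\{\model_1\}$ and $T=\{\model_2\}$. Then $\charBin(\coal\cup T)=1<2=\charBin(\coal)+\charBin(T)$. An alternative is $\coal=\{\model_1\}$, $T=\{\model_3\}$, which gives $\tfrac12<1$.

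\emph{Non-supermodularity.} Supermodularity requires $\charBin(\coal\cup T)+\charBin(\coal\cap T)\ge\charBin(\coal)+\charBin(T)$. Take $\coal=\{\model_1\}$ and $T=\{\model_2\}$, so $\coal\cap T=\emptyset$. The left side is $1+0=1$ and the right side is $2$, so the inequality fails. In fact, since $\charBin(\emptyset)=0$, supermodularity implies superadditivity, so this case also follows from the previous one. Equivalently, in marginal-gain form, the marginal gain of $\model_2$ is $1$ with respect to $\emptyset$ but $0$ with respect to $\{\model_1\}$, which violates increasing marginal returns.

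No step is a real obstacle. The only care needed is to use the definitions of supermodularity and superadditivity that the paper intends (the standard ones in \cite{Chalkiadakis2012ComputationalBook}), and to handle the empty coalition through the convention $\charBin(\evz,\emptyset)=0$. One might also check that the failures are not artifacts of that convention. They are not: the monotonicity counterexample uses only nonempty coalitions.
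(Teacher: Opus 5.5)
Your proposal is correct and follows essentially the same route as the paper, which also refutes each property with an explicit small ensemble (five classifiers, three correct and two wrong, with supermodularity refuted in marginal-gain form by adding $\model_5$ to $\{\model_3\}\subseteq\{\model_1,\model_2,\model_3\}$). One caveat: your closing remark is overstated. Your supermodularity counterexample genuinely depends on the convention $\charBin(\evz,\emptyset)=0$, because on your three-classifier ensemble the supermodularity inequality holds for every pair of coalitions with nonempty intersection. The paper's counterexamples, by contrast, use only nonempty coalitions. This does not affect validity, since the convention is part of the paper's definition.
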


All proofs can be found in~\Cref{app:proofs}. Since $\charBin(\evz,\cdot)$ is not monotone and it can be $\frac{1}{2}$, our game is not a simple game (i.e., a monotone game with values in $\{0,1\}$), in particular not a WMG (see \Cref{def:WMG}). 
Therefore, we cannot use existing standard algorithms to compute the power indices for our formulation. 
From the previous lemma,
one may think that \emph{exact} computation of power indices for our new game
is intractable. 
Surprisingly, we can show that both the Banzhaf and Shapley values 
can be computed efficiently, see~\Cref{subsec:binClass}. 

\begin{theorem}
\label{theo:main01}
Let $\modelSet$ be an ensemble of classifiers, 
$\dataSet$ a test dataset, 
and $\sumGame=(\modelSet,\charBin(\dataSet,\cdot))$ the characteristic function game over \dataSet.
For all classifiers $\model_i\in\modelSet$
the Banzhaf values $\banz_i(\sumGame)$ and 
the Shapley values $\shap_i(\sumGame)$
can be computed in linear-time complexity $\bigO(|\dataSet|\cardM)$.
\end{theorem}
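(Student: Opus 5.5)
By linearity of both indices, $\banz_i(\sumGame)=\frac{1}{|\dataSet|}\sum_{\evz\in\dataSet}\banz_i(\game_\evz)$, and the same holds for $\shap_i$. It therefore suffices to show that, for a single point $\evz=(\xp,\y)$, all $\cardM$ values $\banz_i(\game_\evz)$ and $\shap_i(\game_\evz)$ can be computed in $\bigO(\cardM)$ total time after an $\bigO(\cardM)$ preprocessing. Summing over the test points then gives $\bigO(|\dataSet|\cardM)$.

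Fix $\evz$. Let $\corrPred$ be the number of classifiers voting $\y$ and $\wrongPred=\cardM-\corrPred$ the number voting the other label; both are computed in $\bigO(\cardM)$. The key structural observation is symmetry. For $\coal\neq\emptyset$, $\charBin(\coal)$ depends only on $(a,b)$, the numbers of correct and wrong voters in $\coal$: it is $1$ if $a>b$, $\frac12$ if $a=b$, and $0$ if $a<b$. Hence all correct voters share one Banzhaf value and one Shapley value, and likewise all wrong voters. We need only four numbers per point.

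Take a correct voter $i$. For $\coal\ni i$, write $\coal-i$ as containing $a$ correct and $b$ wrong voters, where $0\le a\le\corrPred-1$ and $0\le b\le\wrongPred$. The marginal contribution $\charBin(\coal)-\charBin(\coal-i)$ is nonzero only in two cases:
\begin{itemize}
\item when $a=b$, adding $i$ turns a tie into a win, contributing $\frac12$; for $a=b=0$ it goes from $\charBin(\emptyset)=0$ to $1$, contributing $1$;
\item when $a=b-1$, adding $i$ turns a loss into a tie, contributing $\frac12$.
\end{itemize}
The Banzhaf value is therefore
\[
\banz_i=\frac{1}{2^{\cardM-1}}\Bigl(\tfrac12\sum_{k\ge0}\tbinom{\corrPred-1}{k}\tbinom{\wrongPred}{k}+\tfrac12+\tfrac12\sum_{k\ge0}\tbinom{\corrPred-1}{k}\tbinom{\wrongPred}{k+1}\Bigr),
\]
where the extra $\frac12$ accounts for the empty-coalition correction. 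By Vandermonde's identity the sums equal $\binom{\corrPred+\wrongPred-1}{\wrongPred}=\binom{\cardM-1}{\wrongPred}$ and $\binom{\cardM-1}{\wrongPred-1}$, which adds up to $\binom{\cardM}{\wrongPred}$. This gives a closed form.

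The Shapley value has the same terms weighted by $\frac{(\cardM-|\coal|)!(|\coal|-1)!}{\cardM!}$, where $|\coal|=2k+1$ or $2k+2$. I would evaluate these as finite sums over $k\le\min(\corrPred,\wrongPred)$, which costs $\bigO(\cardM)$ after precomputing factorials or log-factorials up to $\cardM$ once. That is $\bigO(\cardM)$ per point, or $\bigO(\cardM+|\dataSet|\cardM)$ overall, which is within the bound. I would also try to find a closed form via the Beta-integral representation of the weights, but it is not needed.

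Wrong voters are handled symmetrically. Adding one turns a win into a tie when $a=b+1$, contributing $-\frac12$, and a tie into a loss when $a=b$, contributing $-\frac12$, except for the empty coalition, where it contributes $0$. Their Banzhaf values also follow from Vandermonde, and their Shapley values from the analogous weighted sums.

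The main obstacle is keeping the case analysis correct at the boundary caused by $\charBin(\emptyset)=0$, and checking that the Shapley sums need no per-classifier work beyond the two class-specific values. As a consistency check, efficiency should give $\corrPred\shap_+ +\wrongPred\shap_- =\charBin(\modelSet)$.

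Finally, assign each classifier the value for its class, and average over $\dataSet$ in $\bigO(|\dataSet|\cardM)$.
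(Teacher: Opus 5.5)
Your proposal is correct and follows the paper's own argument. It uses linearity over $\dataSet$ and symmetry to reduce each point to one value per class, the marginal-gain case analysis of \Cref{lemma:marginalBinaryUnweighted} with the $\charBin(\emptyset)=0$ correction, Vandermonde plus Pascal for the Banzhaf closed form, and an $\bigO(\cardM)$ weighted sum for the Shapley value. The only minor difference is that the paper gets the incorrect classifiers' Shapley value from efficiency, $(\charBin(\modelSet)-\corrPred\shap_{+})/\wrongPred$, whereas you compute it with the mirrored sum and use efficiency only as a check; both cost $\bigO(\cardM)$ per point.
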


The term $|\dataSet|$ in the running time comes from averaging 
$\banz_i(\game_{\evz})$ and $\shap_i(\game_{\evz})$ over all $\evz\in\dataSet$
to compute 
$\banz_i(\sumGame)$ and $\shap_i(\sumGame)$.
The linear term $\bigO(\cardM)$ is the time needed to compute~$\banz_i(\game_{\evz})$ 
and~$\shap_i(\game_{\evz})$. 

\textbf{Multiclass classification.} 
We consider a set of labels $\labelSet=\{\y_1,\dots,\y_\totLabs\}$ with $\totLabs>2$.
For a single data point 
$\evz = (\xp, \y)$, 
we define the game $\gameMC_\evz=(\modelSet, \charMC)$
where $\charMC(\evz,\emptyset)\doteq 0$ and for $\coal\subseteq \modelSet, \coal\neq\emptyset$ it is
\begin{equation}\label{eq:charMC}
	\charMC(\evz,\coal) \doteq 
	\begin{cases}
		\frac{1}{|\tieSet(\coal)|}\,, & \text{if } \y \in\tieSet(\coal) \,, \text{ and}\\
		0\,, & \text{otherwise} \, .
	\end{cases}
\end{equation}
We then define the characteristic function for dataset \dataSet by 
$\charMC(\dataSet,\coal) = \frac{1}{|\dataSet|} \sum_{(\xp, \y)\in \dataSet} \charMC(\evz,\coal)$. 
Clearly, the characteristic function \charMC generalizes \charBin from~\Cref{eq:charBin}. 

To state our result
in the multi\-class case 
we need one additional definition.
First, without loss of generality (simply by permuting the class labels), 
we will assume that for any data point $(\xp,\y)$ it holds $\y = \y_1 \in \labelSet$. 
We let $\cardM_j$
be the number of classifiers predicting class $\y_j, j\in [\totLabs]$ 
for data point~$(\xp, \y)$. 
Finally, let $\unique = \{\VotVec(\xp, \modelSet) = (\cardM_1,\cardM_2,\dots,\cardM_\totLabs): (\xp,\y)\in\dataSet \text{ and } \cardM_2\le \dots\le\cardM_\totLabs \}$ be the \emph{set} of \emph{unique} vote profiles, where $\cardM_1$ is the number of correct classifiers.
Clearly, $|\unique|\le|\dataSet|$; moreover, $|\unique|\le\cardM+1$ for binary classification,
since a profile is determined by $\cardM_1$.
In practice, $|\unique|$ can be much smaller than $|\dataSet|$ also in the multi\-class case:
when most classifiers are accurate, many points share their vote profile
(see $|\unique|$ in \Cref{tab:clip_datasets}).
We now state the generalization of \Cref{theo:main01} in the multi\-class case.

\begin{theorem}
\label{theo:mainMC}
Let $\modelSet$ be an ensemble of classifiers, 
$\dataSet$ be a test dataset, 
and $\sumGameMC=(\modelSet,\charMC(\dataSet,\cdot))$ be the characteristic function game over \dataSet.
For all classifiers $i\in [\cardM]$
the Banzhaf values $\banz_i(\sumGame)$ can be computed exactly in time 
$\bigO(|\unique| \cardM \totLabs^2+|\dataSet|\cardM)$, 
and the Shapley values $\shap_i(\sumGame)$
can be computed exactly in time $\bigO(|\unique| \cardM^3 \totLabs + |\dataSet|\cardM)$.
\end{theorem}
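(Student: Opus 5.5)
By linearity of both power indices it suffices to handle a single point $\evz$ and then average. Points sharing the same vote profile $(\cardM_1,\dots,\cardM_\totLabs)$, up to permuting the wrong labels, give games that are isomorphic under relabeling of players. For each point we pay $\bigO(\cardM)$ to read its votes and to sort its wrong-label counts. The ordered profile fixes the type of each classifier, namely whether it votes correctly or for a wrong class with $\cardM_j$ voters. By symmetry, all classifiers voting for the same class get the same value. So for each $u\in\unique$ we compute at most $\totLabs$ distinct values, one per class. For every point we then assign to each classifier the value of its class and accumulate. This gives the $\bigO(|\dataSet|\cardM)$ term, and the rest of the plan is the per-profile cost.

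\textbf{Marginal contributions by type.} Fix a profile and let classifier $i$ vote for class $j$. Write a coalition $\coal\not\ni i$ as a vector $(s_1,\dots,s_\totLabs)$ with $s_k\le \cardM_k - [k=j]$. The marginal $\charMC(\coal+i)-\charMC(\coal)$ depends only on $s_1$, on $m=\max_{k\ne 1}s_k$, and on the number $t$ of wrong classes attaining that maximum. It also matters whether $j$ is among those classes.

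For $j=1$ (a correct voter) there are three cases. If $s_1=m$, the value goes from $1/(t+1)$ to $1$. If $s_1=m-1$, it goes from $0$ to $1/(t+1)$. Otherwise the marginal is $0$.

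For $j\neq1$ there are two cases. If $s_j=m\ge s_1$, adding $i$ drops the value to $0$. If $s_j=m-1$ and $s_1=m$, the tie set grows, changing $1/(t+1)$ to $1/(t+2)$. The empty-set convention $\charMC(\emptyset)=0$ is handled as a boundary case.

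So each index equals a weighted sum, over $(s_1,m,t)$ and the coalition size $|\coal|$, of the number of coalitions with that statistic, times an explicit rational marginal, times the index weight. The weight is $2^{-(\cardM-1)}$ for Banzhaf and $|\coal|!(\cardM-|\coal|-1)!/\cardM!$ for Shapley.

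\textbf{Counting with generating functions.} For a threshold $m$, the wrong classes other than $j$ contribute through the polynomials
\[
A_k^{<m}(z)=\sum_{s<m}\binom{\cardM_k}{s}z^s,\qquad E_k^{m}(z)=\binom{\cardM_k}{m}z^m .
\]
The number of ways to have exactly $t$ classes at $m$ and the rest below $m$ is the coefficient of $w^t$ in $\prod_k \bigl(A_k^{<m}(z)+wE_k^m(z)\bigr)$. The class $1$ and class $j$ factors are fixed binomials $\binom{\cardM_1}{s_1}$ and $\binom{\cardM_j-1}{s_j}$ with $s_1$ and $s_j$ tied to $m$.

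For Banzhaf we set $z=1$. For each $m\le\cardM$ we run a DP over the $\le\totLabs$ classes in the variable $w$, of degree $\le\totLabs$, which costs $\bigO(\totLabs^2)$ per $m$. That is $\bigO(\cardM\totLabs^2)$ per profile, and $\bigO(\cardM\totLabs^2)$ in total over the $\totLabs$ types if we use prefix/suffix products (or divide out one factor). For Shapley we must keep the size variable $z$ of degree $\le\cardM$, and $t$ only enters through the factor $1/(t+1)$ or $1/(t+2)$. We can avoid tracking $t$ separately by integrating in $w$, since $\int_0^1 w^t\,dw=1/(t+1)$, or else track it jointly. A cleaner route is to sort classes by $\cardM_k$: classes with $\cardM_k<m$ always fall below the threshold, so only a suffix matters. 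Then a DP over $m$ and classes with polynomial multiplications in $z$ costs $\bigO(\cardM^2)$ per $(m,\text{class})$ pair. Summed suitably, this gives $\bigO(\cardM^3\totLabs)$ per profile.

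\textbf{Main obstacle.} The hard step is meeting the stated bounds while handling the $1/|\tieSet|$ weights. Tracking $t$ jointly with $z$ naively would cost $\bigO(\cardM^3\totLabs^2)$ for Shapley. Likewise, recomputing the product for each excluded type $j$ multiplies the cost by $\totLabs$. I would first try the ordering trick: $\cardM_2\le\dots\le\cardM_\totLabs$ makes the set of feasible classes at level $m$ a suffix. I would combine this with reusing prefix products across types. Before implementing, I would check carefully that the per-profile total really amortizes to $\cardM\totLabs^2$ and $\cardM^3\totLabs$.
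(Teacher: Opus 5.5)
The structure of your Banzhaf argument matches the paper's: a univariate generating function per threshold, then dividing out the factor of the excluded class $j$ in $O(\ell)$. However, the case analysis it rests on is incomplete for a voter $i$ of a wrong class $j$.

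\textbf{Missing marginal case.} You omit the configuration $s_1=m+1$, $s_j=m$. Here class $1$ is the unique leader, one vote ahead of $j$. Adding $i$ creates the tie $\{y_1,y_j\}$, so the value drops from $1$ to $1/2$. This is not a corner case. Take two correct voters and a single voter for $y_2$. Every coalition consisting of one correct voter has this form, so the wrong voter's Banzhaf value is $-1/4$, while your rules give $0$. The correct second case is $s_j+1=s_1=\max_k s_k$, with marginal $-1/(T(T+1))$, where $T$ is the size of the tie set of $S$ (which contains $y_1$). Its $T=1$ instance is the $|\mathcal{M}(S-i)|=1$ instance of the second case of \Cref{lemma:marginalMCUnweighted}, and it feeds the $k=0$ terms of \Cref{theo:BanzMC}. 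In your framework it is just the $w^0$ term of your second case. Your definition of $m$ as the maximum over wrong classes excludes it. As written, your values for every incorrect voter, Banzhaf and Shapley alike, would be wrong.

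\textbf{Shapley bound not established.} This is the gap you flag yourself. As you price it, the DP at $O(n^2)$ per (threshold, class) pair is univariate in $z$. It therefore discards $t$, which the factors $1/(t+1)$ and $1/(t+2)$ need. The identity $\int_0^1 w^t\,dw=1/(t+1)$ does not help. Integrating a polynomial of degree up to $\ell-1$ in $w$ exactly still needs its $\Theta(\ell)$ coefficients, or as many evaluations. Recombining a prefix table with a suffix table for each excluded $j$ is a bivariate product, whose naive cost is the product of the two table sizes.

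The missing idea is that the cost of a class is governed by $n_k$, not by $n$:
\begin{itemize}
\item $A_k^{<m}(z)$ has at most $\min(n_k,m-1)+1$ nonzero coefficients. So multiplying the full table by $A_k^{<m}(z)+wE_k^m(z)$ costs $O(\ell\,n\min(n_k,m))$. Here the table tracks $t$ jointly, with degree $<\ell$ in $w$ and $\le n-n_1$ in $z$.
\item Since $\sum_{k\ge2}n_k=n-n_1$, and classes with $n_k=0$ contribute the factor $1$, building the whole table costs $O(\ell n^2)$ per threshold.
\item The same bound covers dividing class $j$ out in place and multiplying it back. The division is exact and can be done sequentially because $A_j^{<m}$ has constant term $1$. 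So all excluded classes together also cost $O(\ell n^2)$ per threshold.
\item Summing over $m\le n_1$ gives $O(\ell n^3)$ per profile.
\end{itemize}
This is exactly the paper's \texttt{AddClass}/\texttt{RemoveClass} accounting. Your suffix observation can be completed the same way, since at most $n/m$ classes have $n_k\ge m$.
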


\section{Methods}
\label{sec:algs}

We present our polynomial-time algorithms for exact computation of Banzhaf and Shapley values. All missing proofs are reported in~\Cref{app:proofs}.

\subsection{Binary classification}
\label{subsec:binClass}

\para{Intuition.} 
As explained in the previous section, 
we focus on computing the Banzhaf and Shapley values  
$\banz_i(\game_\evz)$ and $\shap_i(\game_\evz)$
for a single data point $\evz$, and then average over all data in \dataSet
to compute $\banz_i(\game_\dataSet)$ and $\shap_i(\game_\dataSet)$.
For each classifier $i\in [\cardM]$ the power indices $\banz_i$ and $\shap_i$ 
are given by a weighed sum of the marginal gain $\charBin(\coal)-\charBin(\coal-i)$, 
over all coalitions $\coal\subseteq[n]$ with $i\in\coal$. 
We observe that for a \emph{correct} classifier, 
if $\coal-i\neq \emptyset$ then a non-zero marginal gain can only attain value $\frac{1}{2}$; 
and a symmetrical argument applies to an incorrect classifier.
Furthermore, a marginal gain of $\frac{1}{2}$ is attained only in two cases, 
namely, 
\begin{inparaenum}[($a$)]
	\item if the classifier $i$ introduces a tie,
              in which case $\charBin(\coal-i)=0$ and $\charBin(\coal)=\frac{1}{2}$; or
	\item if $i$ breaks a tie, 
              in which case $\charBin(\coal-i)=\frac{1}{2}$ and $\charBin(\coal)=1$.
\end{inparaenum}
This key observation is formalized in~\Cref{lemma:marginalBinaryUnweighted}.
Interestingly, given the special structure of the sets $\coal-i$, 
we can compute efficiently the number of such coalitions through combinatorial arguments. 
In addition, by combining with symmetry arguments
we obtain our efficient algorithms from~\Cref{theo:main01}.

We now show the closed-form solutions of the power indices.
\begin{theorem}[Banzhaf]
\label{theo:Banz01} 
Let $\modelSet$ be an ensemble, and
$\evz=(\xp,\y) \in \dataSet$ be a test data point. 
Let $\corrPred = |\{\model \in \modelSet : \model(\xp) = \y\}|$ 
be the number of classifiers in \modelSet that make a correct prediction for \evz, 
and $\wrongPred=\cardM-\corrPred$ 
the number of classifiers that make an incorrect prediction.
For each $i\in [\cardM]$ it is
	\begin{equation}
  \label{eq:closed01Banz}
	\banz_i(\game_{\evz}) =
		\frac{1}{2^\cardM}\left(\binom{\cardM}{\wrongPred}+1\right) ~\text{ if } \model_i(\xp) = \y\,,
		\quad
		\banz_i(\game_{\evz}) = -\frac{1}{2^\cardM}\left(\binom{\cardM}{\corrPred}-1\right) ~\text{ if } \model_i(\xp) \neq \y\,.
  \end{equation}
\end{theorem}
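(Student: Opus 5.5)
The plan is to work directly from the definition of $\banz_i$ in \Cref{eq:shap}, reparametrizing each coalition $\coal\ni i$ by $T=\coal-i\subseteq\modelSet-\model_i$. By symmetry, $\charBin(T)$ and $\charBin(T+i)$ depend only on the number $a$ of correct classifiers and the number $b$ of incorrect classifiers in $T$. So we will classify the marginal gains $\Delta(a,b)=\charBin(T+i)-\charBin(T)$ by the pair $(a,b)$, count how many sets $T$ realize each pair, and sum the results with Vandermonde's identity.

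\textbf{Correct classifier $\model_i(\xp)=\y$.} The other classifiers consist of $\corrPred-1$ correct and $\wrongPred$ incorrect ones, so a pair $(a,b)$ is realized by $\binom{\corrPred-1}{a}\binom{\wrongPred}{b}$ sets $T$. For $T\neq\emptyset$, adding a correct vote changes $\charBin$ only in two situations, in line with the key observation of \Cref{lemma:marginalBinaryUnweighted}:
\begin{itemize}
\item if $a=b$, the value goes from $\tfrac12$ to $1$;
\item if $b=a+1$, the value goes from $0$ to $\tfrac12$.
\end{itemize}
In each case the gain is $\tfrac12$, and it is $0$ otherwise. The empty set $T=\emptyset$ (the case $a=b=0$) needs special care, because $\charBin(\emptyset)=0$ by convention while $\charBin(\{\model_i\})=1$. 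Its gain is therefore $1$ rather than the generic $\tfrac12$, which leaves an extra $\tfrac12$. Summing gives
\[
\sum_{T}\Delta \;=\; \tfrac12\Bigl[\sum_a\tbinom{\corrPred-1}{a}\tbinom{\wrongPred}{a}+\sum_a\tbinom{\corrPred-1}{a}\tbinom{\wrongPred}{a+1}\Bigr]+\tfrac12 .
\]
We rewrite $\binom{\wrongPred}{a}=\binom{\wrongPred}{\wrongPred-a}$ and $\binom{\wrongPred}{a+1}=\binom{\wrongPred}{\wrongPred-a-1}$, and apply Vandermonde to each sum. The two sums become $\binom{\cardM-1}{\wrongPred}$ and $\binom{\cardM-1}{\wrongPred-1}$, and Pascal's rule combines them into $\binom{\cardM}{\wrongPred}$. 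Dividing by $2^{\cardM-1}$ yields $\frac{1}{2^\cardM}\bigl(\binom{\cardM}{\wrongPred}+1\bigr)$.

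\textbf{Incorrect classifier $\model_i(\xp)\ne\y$.} The argument is symmetric. The others now consist of $\corrPred$ correct and $\wrongPred-1$ incorrect classifiers. For $T\neq\emptyset$, adding a wrong vote gives a gain of $-\tfrac12$ in two cases:
\begin{itemize}
\item if $a=b$, the value goes from $\tfrac12$ to $0$;
\item if $a=b+1$, the value goes from $1$ to $\tfrac12$.
\end{itemize}
For $T=\emptyset$ the gain is $\charBin(\{\model_i\})-\charBin(\emptyset)=0$ instead of the generic $-\tfrac12$, so we add a correction of $+\tfrac12$. The same Vandermonde and Pascal computation, with the roles of $\corrPred$ and $\wrongPred$ swapped, produces
\[
\sum_T\Delta=-\tfrac12\tbinom{\cardM}{\corrPred}+\tfrac12 .
\]
Dividing by $2^{\cardM-1}$ gives the stated formula.

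\textbf{Main obstacle.} The only delicate point is the boundary case $T=\emptyset$: the convention $\charBin(\emptyset)=0$ breaks the uniform "gain $\pm\tfrac12$ at a tie" pattern, and this is exactly what produces the $\pm1$ terms in \Cref{eq:closed01Banz}. Beyond that, we must check that the Vandermonde ranges are correct when $\corrPred$ or $\wrongPred$ is $0$. Those degenerate cases agree with the formula under the convention $\binom{m}{-1}=0$.
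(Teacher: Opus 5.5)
Your proposal is correct and follows essentially the same route as the paper. The paper writes the sum as an expectation over a uniformly random subset of $\modelSet-i$ with independent binomial counts of correct and incorrect classifiers, while you count subsets directly, but the rest is identical: the marginal gain is $\pm\tfrac12$ exactly when the two counts are tied or differ by one, the empty coalition gets a $\tfrac12$ correction, and the sums are evaluated with Vandermonde's identity and Pascal's rule.
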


\begin{theorem}[Shapley]
\label{theo:Shap01} 
	Under the same notation as in~\Cref{theo:Banz01}, 
  for each $i\in [\cardM]$ it is %
	\begin{equation}
  \label{eq:closed01Shap}
		\shap_i(\game_{\evz})=
			\frac{1}{\cardM} + \frac{\shapTermCorr}{2} ~~~\text{ if } \model_i(\xp) = \y\,,
			\quad
			\shap_i(\game_{\evz})= -\frac{\shapTermWrong}{2} ~~~\text{ if } \model_i(\xp) \neq \y\,,
	\end{equation}
	where, if we let $\CorW\in \{+, - \}$ and $\WorC \in \{+,-\}$ with $\CorW\neq \WorC$ then 
	\begin{equation*}
  \begin{small}
  \hspace{-6mm}
  \shapTerm_{\CorW} =
		\frac{(\cardM-2\cardM_{\CorW})!(2\cardM_{\CorW}-1)!}{\cardM!}{\cardM_{\WorC} \choose \cardM_{\CorW}}
		+
    \!\!\sum_{h=1}^{\min\{\cardM_{\WorC},\cardM_{\CorW}-1\}}\!\! {\cardM_{\CorW}-1 \choose h-1} {\cardM_{\WorC} \choose h} \frac{(\cardM-2h-1)!(2h-1)!}{\cardM!} (2\cardM_{\CorW} + \cardM - 4h),
  \end{small}
	\end{equation*}
	where ${m \choose k}= 0$, for any $k>m$ or $k< 0$.
\end{theorem}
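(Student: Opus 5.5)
The plan is to classify the coalitions with non-zero marginal gain using the key observation (formalized in \Cref{lemma:marginalBinaryUnweighted}), and then to weight them by the Shapley coefficient $\frac{(\cardM-|\coal|)!(|\coal|-1)!}{\cardM!}$ from \Cref{def:shapAndBanz}. The Banzhaf derivation in \Cref{theo:Banz01} follows the same route with uniform weights.

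\textbf{Correct classifier, $\model_i(\xp)=\y$.} Write $T=\coal-i$, containing $a$ correct and $b$ incorrect classifiers, with $a\le \corrPred-1$ and $b\le\wrongPred$. The marginal gain is non-zero only in three cases.
\begin{itemize}
\item[(i)] $T=\emptyset$: the gain is $\charBin(\{i\})-0=1$, with weight $\frac{(\cardM-1)!\,0!}{\cardM!}=\frac1\cardM$. This produces the term $\frac1\cardM$.
\item[(ii)] $b=a+1$, so $i$ creates a tie: the gain is $\frac12$.
\item[(iii)] $b=a\ge1$, so $i$ breaks a tie: the gain is $\frac12$.
\end{itemize}
In all other cases the majority is unchanged, so the gain is $0$.

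\textbf{Counting and weighting.} Index by $h=b$.
\begin{itemize}
\item Case (iii): $|\coal|=2h+1$, the number of choices is $\binom{\cardM_+-1}{h}\binom{\cardM_-}{h}$, and the weight is $\frac{(\cardM-2h-1)!(2h)!}{\cardM!}$.
\item Case (ii): $a=h-1$, $|\coal|=2h$, the number of choices is $\binom{\cardM_+-1}{h-1}\binom{\cardM_-}{h}$, and the weight is $\frac{(\cardM-2h)!(2h-1)!}{\cardM!}$.
\end{itemize}

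To match the stated form, shift the case (iii) sum to use $\binom{\cardM_+-1}{h-1}$. Use $\binom{\cardM_+-1}{h}=\binom{\cardM_+-1}{h-1}\frac{\cardM_+-h}{h}$ and factor out $\frac{(\cardM-2h-1)!(2h-1)!}{\cardM!}$. The case (ii) term then contributes the factor $(\cardM-2h)$. The case (iii) term contributes $\frac{\cardM_+-h}{h}\cdot 2h=2(\cardM_+-h)$. Their sum is $\cardM+2\cardM_+-4h$, which is exactly the bracket in $\shapTerm_{+}$.

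The range is $h\le\min\{\cardM_-,\cardM_+-1\}$. It is limited by $b\le \cardM_-$, and by $a\le\cardM_+-1$ in case (iii).

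\textbf{Boundary term (main obstacle).} Case (ii) also allows $h=\cardM_+$ when $\cardM_+\le\cardM_-$. There $a=\cardM_+-1$, so the case (iii) partner requires $a=\cardM_+$, which is impossible. This lone term is $\binom{\cardM_-}{\cardM_+}\frac{(\cardM-2\cardM_+)!(2\cardM_+-1)!}{\cardM!}$, the first summand of $\shapTerm_{+}$. The convention $\binom{m}{k}=0$ for $k>m$ kills it when $\cardM_+>\cardM_-$.

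The delicate part is the degenerate factorials. When $2h+1>\cardM$, or when $h$ sits at the boundary, the factorial $(\cardM-2h-1)!$ can fall outside its valid range. I would check that such terms either vanish through the binomials or are exactly the ones split off into the boundary term.

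\textbf{Incorrect classifier.} The argument is symmetric with the roles of $+$ and $-$ swapped, and every non-zero gain is now $-\frac12$. Take $a$ correct and $b$ incorrect classifiers in $T$.
\begin{itemize}
\item $i$ creates a tie when $a=b+1$.
\item $i$ breaks a tie when $a=b\ge1$.
\item $T=\emptyset$ gives gain $0-0=0$, so no $\frac1\cardM$ term appears.
\end{itemize}
The same counting yields $-\shapTerm_{-}/2$. As a final sanity check, I would verify efficiency, $\sum_i\shap_i=\charBin(\modelSet)$, on small cases.
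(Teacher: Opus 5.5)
Your proposal is correct and matches the paper's proof essentially step for step. The paper likewise uses \Cref{lemma:marginalBinaryUnweighted} to split the non-zero marginal gains into three parts: the empty coalition (giving $1/\cardM$), a tie-breaking sum over coalitions of size $2h+1$, and a tie-creating sum over coalitions of size $2h$. It then splits off the $h=\corrPred$ term of the latter as the boundary summand, merges the rest via $h\binom{\corrPred-1}{h}=(\corrPred-h)\binom{\corrPred-1}{h-1}$ to get the factor $\cardM+2\corrPred-4h$, and handles the incorrect case by exchanging roles.

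The factorial issue you flag is harmless. Inside the sum, $h\le\min\{\wrongPred,\corrPred-1\}$ forces $2h+1\le\cardM$. The boundary factorial $(\cardM-2\corrPred)!$ is ill-defined only when $\corrPred>\wrongPred$, and then $\binom{\wrongPred}{\corrPred}=0$.
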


The proofs of Theorems~\ref{theo:Banz01} and~\ref{theo:Shap01} 
are given in~\Cref{appendix:proofs-binary-case}. 
The resulting approach is shown in~\Cref{alg:mainAlg01}. 
It receives as input the power index $\powerIdx\in\{\shap,\banz\}$ to be computed, 
the trained ensemble~$\modelSet$, and the test dataset $\dataSet$. 
The algorithm initializes $\powerIdx_i=0$, for all $i\in[\cardM]$, 
and then iterates over all data points in \dataSet. 
For each data point $(\xp, \y)$, 
it computes $\corrPred$ the number of classifiers that correctly predict the current label~\y. 
It uses $\corrPred$ to compute $\powerIdx_i$ 
via a closed-form solution through subroutine \getpsi, 
the details of which are provided in~\Cref{alg:computeBANZ01}.
The correctness of~\Cref{alg:mainAlg01} follows from~\Cref{theo:Banz01,theo:Shap01}
and the linearity of power indices.

To bound the time complexity we assume that 
evaluating the prediction $\model(\xp)$ of each classifier $\model \in \modelSet$
can be done in $\bigO(1)$ time, and that each arithmetic operation takes $\bigO(1)$ time.\footnote{Our analysis can be trivially extended when evaluating a model on a test point has a different complexity.}
The correctness of~\Cref{theo:main01} and stated complexity follows immediately 
as a corollary of~\Cref{theo:Banz01,theo:Shap01,theo:runtimeBin}.

\begin{theorem}
\label{theo:runtimeBin}
\Cref{alg:mainAlg01} computes the Banzhaf values $\powerIdx_i = \banz_i(\sumGame)$ and 
the Shapley values $\powerIdx_i = \shap_i(\sumGame)$
over the game $\sumGame$ in time $\bigO(|\dataSet|\cardM)$. 
The memory required is $\bigO(\cardM)$.
\end{theorem}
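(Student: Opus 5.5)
The plan is to split the statement into correctness and complexity. Correctness is immediate from the results already stated. By linearity of the Banzhaf and Shapley values, since $\charBin(\dataSet,\cdot)=\frac{1}{|\dataSet|}\sum_{\evz\in\dataSet}\charBin(\evz,\cdot)$, we have
\[
\powerIdx_i(\sumGame)=\frac{1}{|\dataSet|}\sum_{\evz\in\dataSet}\powerIdx_i(\game_\evz).
\]
\Cref{alg:mainAlg01} accumulates exactly this average. By \Cref{theo:Banz01,theo:Shap01}, each point-wise value $\powerIdx_i(\game_\evz)$ depends only on $\corrPred$ (with $\wrongPred=\cardM-\corrPred$) and on whether $\model_i(\xp)=\y$. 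So correctness reduces to checking that \getpsi evaluates \Cref{eq:closed01Banz} or \Cref{eq:closed01Shap} faithfully.

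For the running time, fix a point $\evz=(\xp,\y)$.
\begin{itemize}
\item Computing $\corrPred$ takes $\bigO(\cardM)$ time: one evaluation of each $\model_i(\xp)$ under the $\bigO(1)$-prediction assumption.
\item The key observation is that only two distinct values are needed per point, one for correct and one for incorrect classifiers. So \getpsi needs only two scalars; adding them to the $\cardM$ accumulators $\powerIdx_i$ then costs $\bigO(\cardM)$.
\item For Banzhaf, the quantities $\binom{\cardM}{\wrongPred}$, $\binom{\cardM}{\corrPred}$ and $2^{\cardM}$ are computable in $\bigO(\cardM)$ arithmetic operations, e.g.\ by iterated multiplication $\binom{\cardM}{k}=\prod_{j=1}^{k}\frac{\cardM-k+j}{j}$.
\item For Shapley, $\shapTerm_{+}$ and $\shapTerm_{-}$ are each a sum of at most $\min\{\corrPred,\wrongPred\}\le\cardM$ terms, plus one extra term. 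Each term is a product of two binomials and a factorial ratio $\frac{(\cardM-2h-1)!(2h-1)!}{\cardM!}$.
\end{itemize}

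The main obstacle is evaluating the Shapley sum in $\bigO(\cardM)$ total rather than $\bigO(\cardM^2)$, since recomputing each binomial and factorial from scratch costs $\bigO(\cardM)$ per term. There are two ways around it, both in $\bigO(\cardM)$ time.
\begin{itemize}
\item Precompute tables of factorials $0!,\dots,\cardM!$ in $\bigO(\cardM)$ time and $\bigO(\cardM)$ memory. Every binomial and factorial ratio is then an $\bigO(1)$ lookup-and-divide.
\item Alternatively, update each term incrementally from $h$ to $h+1$ via ratios. For example,
\[
\binom{\cardM_\WorC}{h+1}=\binom{\cardM_\WorC}{h}\frac{\cardM_\WorC-h}{h+1},
\]
and the factorial ratio changes by $\frac{(2h)(2h+1)}{(\cardM-2h-2)(\cardM-2h-1)}$. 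This also gives $\bigO(1)$ per term. One must take care at boundary cases (e.g.\ $\cardM-2h-1=0$, where the corresponding term is treated via the convention in the statement) so that no division by zero occurs.
\end{itemize}

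Either way the cost per point is $\bigO(\cardM)$, giving $\bigO(|\dataSet|\cardM)$ overall under the unit-cost arithmetic assumption. Memory is $\bigO(\cardM)$: the accumulators, the factorial table if used, and $\bigO(1)$ working variables, with no per-point storage retained.
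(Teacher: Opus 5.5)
Your proposal is correct and follows essentially the paper's argument: linearity reduces the problem to the point-wise games, symmetry means only two values (one for correct, one for incorrect classifiers) are computed per point, and the Shapley sum is evaluated in $\bigO(\cardM)$ via term-to-term ratios, which is exactly the recursion the paper uses in its implementation appendix. The differences are minor: the paper evaluates only the correct classifiers' sum and gets the incorrect classifiers' common value by efficiency (subtracting $\corrPred$ times the correct value from $\charBin(\modelSet)$ and dividing by $\wrongPred$) instead of evaluating $\shapTermWrong$ directly, and it bounds memory via a per-vote-profile cache with $|\unique|\le\cardM+1$ rather than via accumulators and a factorial table; both routes give $\bigO(|\dataSet|\cardM)$ time and $\bigO(\cardM)$ memory.
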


\subsection{Multi\-class classification}\label{subsec:multiclass}

\para{Intuition.} 
Let $\model_i$ be a correct classifier; 
an analogous argument will apply to an incorrect classifier.
Consider all coalitions $\coal\subseteq[\cardM]$ containing $i$ 
where $\model_i$ yields non-zero marginal contribution $\charMC(\coal)-\charMC(\coal-i)$.
These are the only coalitions that contribute to the power indices $\shap_i$ or $\banz_i$. 
We characterize the structure of these coalitions and the exact marginal gain 
introduced by $\model_i$ in~\Cref{lemma:marginalMCUnweighted}.
Since the multi\-class setting introduces a much more complex structure, 
we cannot count these coalitions directly. 
Instead, we leverage ordinary generating functions \citep{Flajolet2013BookComb,kakimura2026computing} (see \Cref{def:ogf}).

Fix an integer $h\ge 1$ and consider coalitions of \emph{incorrect} classifiers where no incorrect label receives more than $h$ votes.
We design a polynomial $P_h(\gfvar)$ with variable $\gfvar$, 
where the $k$-th coefficient
$[\gfvar^k]P_h(\gfvar)$ is the number of coalitions in $\coalset(k,h)$, i.e., 
coalitions where exactly $k$ incorrect labels receive $h$ votes, and all remaining labels receive fewer than $h$ votes (\Cref{lemma:genFunction}).
To form a coalition in $\coalset(k,h)$, we select, independently for each incorrect label $\y_j$, $j\in[2,\totLabs]$, either: 
($a$) fewer than $h$ classifiers; or
($b$) exactly $h$ of the $\cardM_j$ classifiers voting for $\y_j$; and case ($b$) holds for exactly $k$ labels.
There are $A_j(h)=\sum_{q=0}^{h-1}\binom{\cardM_j}{q}$ ways to select classifiers for case ($a$) and $B_j(h)=\binom{\cardM_j}{h}$ ways to select classifiers for case ($b$).
We multiply $B_j(h)$ by $\gfvar$, such that the exponent $k\ge0$ of $\gfvar^k$ is the number of incorrect labels for case~($b$). 
This yields $P_h(\gfvar)=\prod_{j=2}^{\totLabs}(A_j(h)+B_j(h)\gfvar)$ of degree at most $\totLabs-1$, with $[\gfvar^k]P_h(\gfvar) = |\coalset(k,h)|$.
We then show that the marginal gain of a correct classifier on a coalition formed by correct classifiers and a coalition in $\coalset(k,h)$ depends only on $k$ and on whether there are $h-1$ or $h$ other correct classifiers (\Cref{lemma:marginalMCUnweighted}).
We use this result to obtain the Banzhaf value $\banz_i$ for each correct classifier $i$ by summing the marginal gains weighted by $[\gfvar^k]P_h(\gfvar)$ for $h\in[\cardM_1]$ and $k\in[\totLabs-1]$  (\Cref{theo:BanzMC}).
Dividing $P_h(\gfvar)$ by the factor $(A_j(h)+B_j(h)\gfvar)$ of label $\y_j$ yields $\banz_i$ for all classifiers voting for $\y_j$, $j\in[2,\totLabs]$ (\Cref{cor:genFunctionMinusJ}).
Each $P_h(\gfvar)$ can be computed in $\bigO(\totLabs^2)$ time.

On the other hand,
the Shapley formula weights each coalition $\coal$ by a function of its size $|\coal|$,
while each coefficient $[\gfvar^k]P_h(\gfvar)$ counts coalitions of different sizes together.
To fix this issue we rely on a bi\-variate polynomial $P_h(\gfvar,u)$ whose coefficient $[\gfvar^k u^m]P_h(\gfvar,u)$
is the number of coalitions in $\coalset(k,h)$ with exactly $m$ classifiers (\Cref{lemma:bivariateGenFunction}).
To do so, we track the size of the coalitions through a second variable $u$:
\begin{inparaenum}[($i$)]
	\item in case ($a$) above, selecting $q<h$ classifiers voting for $\y_j$ contributes $u^q$; while
	\item in case ($b$), selecting $h$ classifiers contributes $\gfvar u^h$.
\end{inparaenum}
The degrees of $P_h(\gfvar,u)$ in $\gfvar$ and $u$ are at most $\totLabs-1$ and $\cardM$, respectively.
We compute $P_h(\gfvar,u)$ by dynamic programming, and obtain the Shapley values by combining the coefficients of these polynomials over $h\ge1$ as in the Banzhaf case.
\ifarxiv
We also develop an algorithm based on fast polynomial multiplication in \Cref{appsubsec:fft}.
\fi

Finally, for the computation of both Banzhaf and Shapley  values, 
we heavily exploit symmetries to avoid redundant computations, 
achieving a highly-optimized algorithm in practice. We proceed to state our technical results; the proofs are given in~\Cref{app:proofs}.

\subsubsection{Computation of Banzhaf values in the multi\-class setting}\label{subsubsec:MCBZ}

We first introduce the uni\-variate polynomial we use to compute efficiently 
the Banzhaf values.

\begin{lemma}\label{lemma:genFunction}
	Let $\evz=(\xp,\y)$ be a test data point. 
	For any $h\ge 1$, consider the generating function
	\[
	P_h(\gfvar) = \prod_{j=2}^\totLabs (A_j(h) + B_j(h) \gfvar)\, , 
  \text{~~where~~} 
	A_j(h) = \sum_{q=0}^{h-1} \binom{\cardM_j}{q} 
  \text{~~and~~} 
  B_j(h) = \binom{\cardM_j}{h} \,.
	\] 
	Fix $h\in [\cardM]$ and $k\in [\totLabs-1]$. 
  Let $\coalset(k,h)$ be the set of coalitions $\coal \subseteq \modelSet'$ such that:
	\begin{inparaenum}[($i$)]
		\item $\modelSet' = \modelSet \setminus \{\model \in \modelSet :\allowbreak \model(\xp) = \y\}$;
		\item $|\tieSet(\coal)| = k$; 
		\item $\VotVecCoord_j(\coal) = h$ for each $j \in \tieSet(\coal)$.
	\end{inparaenum}
	Then
	$|\coalset(k,h)| = [\gfvar^k]P_h(\gfvar)\,.$
\end{lemma}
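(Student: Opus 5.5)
The plan is a direct bijective counting argument: decompose a coalition of incorrect classifiers label by label, and read off the exponent of $\gfvar$ as the number of labels attaining the maximum $h$. One convention first. Since $\coal\subseteq\modelSet'$ contains no classifier voting for $\y_1$, I interpret $\tieSet(\coal)$ as ranging over the incorrect labels $\y_2,\dots,\y_\totLabs$ only. This is the reading under which the statement makes sense; otherwise $\y_1$, with $0$ votes, would tie whenever $\coal=\emptyset$. I will state this explicitly at the start of the proof.

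\textbf{Step 1 (product decomposition).} Every coalition $\coal\subseteq\modelSet'$ is uniquely determined by the tuple $(\coal_2,\dots,\coal_\totLabs)$, where $\coal_j\subseteq\modelSet_j$ and $\modelSet_j$ is the set of the $\cardM_j$ classifiers predicting $\y_j$. The count $\VotVecCoord_j(\coal)=|\coal_j|$ depends only on $\coal_j$. I then characterize membership. A coalition $\coal$ lies in $\coalset(k,h)$ if and only if every $j\in[2,\totLabs]$ has $|\coal_j|\le h$, and exactly $k$ indices have $|\coal_j|=h$. For the forward direction, conditions (ii)--(iii) say the maximum vote count equals $h$ and is attained by exactly $k$ labels. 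For the converse, the assumption $k\ge1$ guarantees that the maximum is exactly $h$, so $\tieSet(\coal)$ is precisely the set of labels with $|\coal_j|=h$. This equivalence is the only step needing any care, and the role of $k\ge1$ (which rules out all counts being below $h$) is the main subtlety.

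\textbf{Step 2 (per-label weights).} For a fixed label $j$, the number of choices of $\coal_j$ with $|\coal_j|<h$ is $\sum_{q=0}^{h-1}\binom{\cardM_j}{q}=A_j(h)$. The number with $|\coal_j|=h$ is $\binom{\cardM_j}{h}=B_j(h)$, which is $0$ if $h>\cardM_j$, matching the convention of \Cref{theo:Shap01}. I assign weight $1$ to the first kind of choice and weight $\gfvar$ to the second. Then $A_j(h)+B_j(h)\gfvar$ is the generating function, over admissible $\coal_j$, of $\gfvar^{[|\coal_j|=h]}$.

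\textbf{Step 3 (expand the product).} Expanding $P_h(\gfvar)=\prod_{j=2}^\totLabs(A_j(h)+B_j(h)\gfvar)$ means choosing, for each $j$, either the $A_j$ term or the $B_j\gfvar$ term. The total is a sum over subsets $T\subseteq[2,\totLabs]$ of $\gfvar^{|T|}\prod_{j\in T}B_j(h)\prod_{j\notin T}A_j(h)$. By Steps 1--2, the term for a fixed $T$ counts exactly the coalitions with $|\coal_j|=h$ for $j\in T$ and $|\coal_j|<h$ otherwise. Collecting the terms with $|T|=k$ and applying the equivalence of Step 1 gives $[\gfvar^k]P_h(\gfvar)=|\coalset(k,h)|$.
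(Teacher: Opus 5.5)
Your proposal is correct and follows essentially the paper's proof. Both expand $[\gfvar^k]P_h(\gfvar)$ as a sum over $k$-subsets of incorrect labels that receive exactly $h$ votes, give every other incorrect label fewer than $h$ votes, and match each term of the expansion to the coalitions with that voting pattern. Your two-way equivalence in Step~1 is more careful than the paper, which only argues the forward direction. Your convention restricting $\tieSet$ to incorrect labels is harmless but unnecessary: for $h\ge 1$, condition (iii) already excludes $\y_1$ and $\coal=\emptyset$.
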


In other words, the coefficients of $P_h(\gfvar)$ correspond to the number of sets $|\coalset(k,h)|$
that are used to obtain a closed-form solution for $\banz_i(\gameMC_{\evz})$, 
for all correct classifiers $\model_i$. 
To obtain $\banz(\gameMC_{\evz})$ 
for all $\model_i$ voting for an incorrect label $\y_j\neq \y_1$,
we need to remove the contribution of classifiers voting for $\y_j$ 
from the sets in $\coalset(k,h)$ as specified by the next corollary. 

\begin{corollary}
\label{cor:genFunctionMinusJ}
Under the setting of~\Cref{lemma:genFunction}, fix $j \in \{2, \dots, \totLabs\}$,  
and let $\coalset_{-j}(k, h)$ be defined analogously to $\coalset(k,h)$ 
for coalitions $\coal \subseteq \modelSet' \setminus \{\model \in \modelSet : \model(\xp) = \y_j\}$. 
That is, no coalition $\coal$ from $\coalset_{-j}(k, h)$ contains a classifier voting for class $\y_j$. 
Then it holds that
\[
	|\coalset_{-j}(k,h)| = [\gfvar^k]\frac{P_h(\gfvar)}{A_j(h) + B_j(h)\gfvar} \,  .
\]
\end{corollary}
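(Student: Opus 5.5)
The plan is to reuse the argument behind \Cref{lemma:genFunction}, applied to a restricted ensemble instead of the full one. Consider the ensemble $\modelSet'' = \modelSet' \setminus \{\model \in \modelSet : \model(\xp) = \y_j\}$. Its incorrect labels are $\y_2,\dots,\y_\totLabs$, and the vote count of label $\y_j$ is now $0$ while every other count $\cardM_{j'}$ is unchanged. One subtlety is that $\y_j$ could, in principle, belong to $\tieSet(\coal)$ with value $0$. Since $h\ge 1$, condition ($iii$) of \Cref{lemma:genFunction} forces every label in $\tieSet(\coal)$ to have exactly $h\ge1$ votes. So $\y_j$, which has $0$ votes, never counts as a tied label. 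The same applies to the correct label $\y_1$, exactly as in \Cref{lemma:genFunction}.

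The next step is to rerun the bijective counting from \Cref{lemma:genFunction} on this instance. A coalition in $\coalset_{-j}(k,h)$ is obtained by independently choosing, for each $j'\in\{2,\dots,\totLabs\}\setminus\{j\}$, either fewer than $h$ of the $\cardM_{j'}$ classifiers, which can be done in $A_{j'}(h)$ ways, or exactly $h$ of them, which can be done in $B_{j'}(h)$ ways. The second option must be chosen for exactly $k$ labels. The label $\y_j$ contributes no classifiers at all. Equivalently, one can apply \Cref{lemma:genFunction} directly with $\cardM_j$ replaced by $0$. This substitution gives $A_j(h)=\binom{0}{0}=1$ and $B_j(h)=\binom{0}{h}=0$, so the $j$-th factor becomes the constant $1$. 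Either way, $|\coalset_{-j}(k,h)| = [\gfvar^k]\prod_{j'\neq 1,j}(A_{j'}(h)+B_{j'}(h)\gfvar)$.

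The last step is to observe that this product is exactly $P_h(\gfvar)$ with the factor $(A_j(h)+B_j(h)\gfvar)$ removed, so it equals the quotient $P_h(\gfvar)/(A_j(h)+B_j(h)\gfvar)$ as polynomials. I expect the only real point of care to be the well-definedness of this division. The quotient is exact because the divisor is one of the factors of $P_h$, and the divisor is nonzero since $A_j(h)\ge\binom{\cardM_j}{0}=1$. The identity therefore holds in $\R[\gfvar]$, and comparing the coefficients of $\gfvar^k$ concludes the proof. For the algorithm one may remark that this coefficient can be recovered by synthetic division in $\bigO(\totLabs)$ time, since $A_j(h)\neq 0$, but this is not part of the claim.
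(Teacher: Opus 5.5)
Your proposal is correct and is essentially the paper's proof. You restrict to the ensemble without the classifiers voting for $\y_j$, rerun the counting of \Cref{lemma:genFunction} (where the factor of $\y_j$ becomes $1$), and identify the resulting product with the exact quotient $P_h(\gfvar)/(A_j(h)+B_j(h)\gfvar)$; your check that $\y_j$ cannot enter $\tieSet(\coal)$, by condition ($iii$) and $h\ge1$, is a slightly more careful version of the paper's ``drop $\y_j$ from the label set''.

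One point is worth stating explicitly: the correspondence ``chosen labels $=\tieSet(\coal)$'' needs $k\ge 1$, as in the lemma's setting. For $k=0$ the literal set $\coalset_{-j}(0,h)$ is empty, since a tie set is never empty, while the coefficient is $\prod_{j'\neq 1,j}A_{j'}(h)\ge 1$. So at $k=0$, which \Cref{theo:BanzMC} uses, the coefficient must be read as counting coalitions in which every remaining label receives fewer than $h$ votes.
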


We show how to use the coefficients of $P_h(\gfvar)$ 
to obtain a closed-form solution for $\banz_i$'s. 

\begin{theorem}
\label{theo:BanzMC} 
Let $\evz=(\xp,\y_1) \in  \dataSet$ be a test data point. 
Let $f_1(k, h) = k{\cardM_1-1 \choose h}  + {\cardM_1-1 \choose h-1}$ for any $k, h \ge 0$ and 
$f_j(k, h) = \binom{\cardM_1}{h}\left[{\cardM_j-1 \choose h}  + \frac{1}{k+1}{\cardM_j-1 \choose h-1}\right]$
for any $k, h \ge 0$ and $j\in\{2,\ldots,\totLabs\}$. 
For any correct classifier $\model_i$ with $\model_i(\xp) = \y_1$ and 
any incorrect classifier $i'$ with $\model_{i'}(\xp) = \y_j$ it holds
\[
	\banz_i =  
  \tfrac{1}{2^{\cardM-1}} \left(1+ \sum_{h=1}^{\cardM_1}\sum_{k=1}^{\totLabs-1} \tfrac{f_1(k, h)}{k+1}  [\gfvar^k]P_h(\gfvar) \right) 
  \text{~~and~~} 
  \banz_{i'} = -\tfrac{1}{2^{\cardM-1}} \sum_{h=1}^{\cardM_j}\sum_{k=0}^{\totLabs-2} \tfrac{f_j(k, h)}{k+2}[\gfvar^k]\tfrac{P_h(\gfvar)}{A_j(h)+B_j(h)\gfvar}  \, .
	\]
\end{theorem}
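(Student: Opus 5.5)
The plan is to prove both formulas directly from \Cref{def:shapAndBanz}. For a classifier $i$ we write each coalition $\coal\ni i$ as $\coal=T+i$, with $T\subseteq\modelSet\setminus\{\model_i\}$. Since $\banz_i=2^{-(\cardM-1)}\sum_T(\charMC(T+i)-\charMC(T))$, it suffices to classify the sets $T$ by their vote profile, find the marginal gain in each class, and count the sets $T$ in each class using \Cref{lemma:genFunction} and \Cref{cor:genFunctionMinusJ}. The key observation, which is the multiclass analogue of \Cref{lemma:marginalMCUnweighted}, is that the gain depends only on three quantities: the number $a$ of correct votes in $T$, the maximum number $h'$ of votes received by an incorrect label, and the number $k$ of incorrect labels attaining $h'$.

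\textbf{Correct classifier $\model_i$.} Write $T=T_1\cup T'$, where $T_1$ contains the $a$ correct classifiers other than $i$ and $T'\subseteq\modelSet'$ contains the incorrect ones.
\begin{itemize}
\item If $T=\emptyset$, the gain is $1-0=1$. This gives the additive $1$ in the formula.
\item If $T'=\emptyset$ and $a\ge1$, both values equal $1$, so the gain is $0$.
\item If $T'\neq\emptyset$, let $T'$ have maximum incorrect count $h'=h\ge1$, attained by exactly $k$ labels, i.e.\ $T'\in\coalset(k,h)$.
\begin{itemize}
\item If $a=h-1$: $\charMC(T)=0$ and $\charMC(T+i)=\frac{1}{k+1}$, so the gain is $\frac1{k+1}$.
\item If $a=h$: $\charMC(T)=\frac1{k+1}$ and $\charMC(T+i)=1$, so the gain is $\frac{k}{k+1}$.
\item If $a\le h-2$, then $\y_1\notin\tieSet$ before or after, so the gain is $0$. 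If $a>h$, $\y_1$ is the unique winner before and after, so the gain is again $0$.
\end{itemize}
\end{itemize}
There are $\binom{\cardM_1-1}{h-1}$ choices of $T_1$ with $a=h-1$ and $\binom{\cardM_1-1}{h}$ with $a=h$. These choices are independent of $T'$, and \Cref{lemma:genFunction} counts the $T'$ as $[\gfvar^k]P_h(\gfvar)$. Summing over $h$ and $k$ therefore gives
\[
\sum_{h,k}\tfrac{1}{k+1}\Bigl(\binom{\cardM_1-1}{h-1}+k\binom{\cardM_1-1}{h}\Bigr)[\gfvar^k]P_h(\gfvar)=\sum_{h,k}\tfrac{f_1(k,h)}{k+1}[\gfvar^k]P_h(\gfvar).
\]
The range $h\le\cardM_1$ follows from $a=h-1\le\cardM_1-1$.

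\textbf{Incorrect classifier $i'$ voting $\y_j$.} Adding a vote for $\y_j$ never increases the share of $\y_1$, so all gains are $\le0$. Let $T$ contain $a=h$ correct classifiers and $c$ other classifiers voting $\y_j$. A nonzero loss requires $\y_1\in\tieSet(T)$ and $h\ge1$; the case $h=0$ gives $0$ before and after. So every other incorrect label has at most $h$ votes, with exactly $k\ge0$ of them at $h$. The part of $T$ on labels other than $\y_1,\y_j$ is then counted by $[\gfvar^k]\frac{P_h(\gfvar)}{A_j(h)+B_j(h)\gfvar}$ (\Cref{cor:genFunctionMinusJ}). The coefficient of $\gfvar^0$ correctly counts the configurations where all those labels are below $h$.
\begin{itemize}
\item If $c=h$: the value drops from $\frac1{k+2}$ to $0$. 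There are $\binom{\cardM_j-1}{h}$ such choices.
\item If $c=h-1$: the value drops from $\frac1{k+1}$ to $\frac1{k+2}$, a loss of $\frac1{(k+1)(k+2)}$. There are $\binom{\cardM_j-1}{h-1}$ such choices.
\item If $c\le h-2$: the tie set is unchanged, so the loss is $0$. The case $c>h$ is excluded because then $\y_1\notin\tieSet(T)$.
\end{itemize}
Multiplying by the $\binom{\cardM_1}{h}$ choices of correct classifiers and factoring out $\frac1{k+2}$ reproduces $f_j(k,h)/(k+2)$. Here $k$ ranges over $0,\dots,\totLabs-2$, and $h\le\cardM_j$ because $c\ge h-1$ must hold with $c\le\cardM_j-1$.

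The main obstacle is the exhaustiveness of the case analysis. In particular, the ranges $k\ge1$ versus $k\ge0$ must be handled correctly, and the degenerate cases $T'=\emptyset$ and $h=0$ must be shown to contribute exactly the isolated $1$ (respectively nothing). I would also check that $\coalset(k,h)$ and $\coalset_{-j}(k,h)$ partition the relevant sets $T'$ without double counting across different $h$. This holds because $h$ is determined as the maximum incorrect count in the first case and as the correct count $a$ in the second. Dividing by $2^{\cardM-1}$ then completes the proof.
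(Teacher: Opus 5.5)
Your proposal is correct and follows essentially the same route as the paper: you classify each $\coal-i$ by its vote profile, read off the marginal gain (the content of \Cref{lemma:marginalMCUnweighted}), and count the coalitions as $\binom{\cardM_1-1}{h-1}$ or $\binom{\cardM_1-1}{h}$ (respectively $\binom{\cardM_1}{h}\binom{\cardM_j-1}{c}$) times the coefficients of $P_h(\gfvar)$ (respectively of $P_h(\gfvar)$ with the $\y_j$ factor divided out). The only difference is bookkeeping. You index directly by the maximum incorrect count $h$ and the number $k$ of incorrect labels attaining it, which avoids the paper's re-indexing from $|\tieSet(\coal-i)|$ and states explicitly what the $k=0$ coefficient counts: all other incorrect labels strictly below $h$.
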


\Cref{theo:BanzMC} has strong algorithmic implications: 
by computing the coefficients of polynomials $P_h(\gfvar)$ and $P_h(\gfvar)/(A_j+B_j(\gfvar))$, 
for $j\in\{2,\ldots,\totLabs\}$ and $h\ge 1$, 
we can efficiently compute $\banz_i(\gameMC_{\evz})$ 
for all $i\in[\cardM]$ and $\evz\in\dataSet$. 
We detail the entire procedure in~\Cref{alg:BanzMC}, which, 
in addition to the closed-form solutions of~\Cref{theo:BanzMC}, 
exploits \emph{symmetries} to further reduce the time complexity.

\begin{theorem}
\label{theo:runtimeMCBZ}
\Cref{alg:BanzMC} computes the exact Banzhaf values $\banz_{i}(\sumGameMC)$ 
of the base classifiers of an ensemble $\modelSet$
in the multi\-class setting in time $\bigO(|\unique| \cardM \totLabs^2+|\dataSet|\cardM)$. 
The memory required is $\bigO(\totLabs (\cardM+|\unique|))$.
\end{theorem}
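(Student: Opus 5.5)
The plan is to prove \Cref{theo:runtimeMCBZ} in two parts. First comes correctness, which follows from \Cref{theo:BanzMC} together with \Cref{lemma:genFunction}, \Cref{cor:genFunctionMinusJ}, and the linearity of the Banzhaf value. Second comes a careful accounting of the cost of computing the polynomial coefficients, organized by unique vote profile.

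\textbf{Correctness.} By linearity, $\banz_i(\sumGameMC)=\frac{1}{|\dataSet|}\sum_{\evz\in\dataSet}\banz_i(\gameMC_\evz)$. Within a single game $\gameMC_\evz$, symmetry makes $\banz_i(\gameMC_\evz)$ depend only on the label that $\model_i$ votes for. Moreover, \Cref{theo:BanzMC} shows that this value depends only on the vote profile $(\cardM_1,\dots,\cardM_\totLabs)$, and is invariant under permuting the incorrect labels provided the index $j$ is permuted along with them.

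The algorithm can therefore proceed in three steps.
\begin{enumerate}
\item Sort the incorrect counts of each point to obtain its canonical profile in $\unique$, remembering the permutation.
\item For each $u\in\unique$, compute one value per label class, namely $\banz^{(u)}_{1},\banz^{(u)}_{2},\dots,\banz^{(u)}_{\totLabs}$, using the formulas of \Cref{theo:BanzMC}.
\item For each point and each classifier $i$, add $\frac{1}{|\dataSet|}\banz^{(u)}_{j(i)}$, where $j(i)$ is the canonical index of $\model_i$'s label.
\end{enumerate}
Correctness of the coefficients $[\gfvar^k]P_h(\gfvar)$ and $[\gfvar^k]P_h/(A_j+B_j\gfvar)$ is exactly \Cref{lemma:genFunction} and \Cref{cor:genFunctionMinusJ}.

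\textbf{Running time per profile.} Binomials $\binom{m}{q}$ for $m,q\le\cardM$ can be obtained in $\bigO(1)$ each after $\bigO(\cardM)$ precomputation of factorials, or by incremental updates. Likewise, $A_j(h)=A_j(h-1)+\binom{\cardM_j}{h-1}$ is updated in $\bigO(1)$ as $h$ increases. For fixed $h$, we build $P_h(\gfvar)$ by multiplying the $\totLabs-1$ linear factors one at a time, which costs $\bigO(\totLabs^2)$.

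For each $j$, the quotient $P_h/(A_j+B_j\gfvar)$ is obtained by synthetic division in $\bigO(\totLabs)$ time, so all $j$ together cost $\bigO(\totLabs^2)$. The division is exact, but it needs a nonzero leading or trailing coefficient. When $A_j(h)\neq 0$ we divide from the low-order end. If $A_j(h)=0$, then $h=0$, which does not occur since $h\ge1$ gives $A_j\ge1$. Factors with $B_j(h)=0$ are trivial constants, so no instability arises. The double sums of \Cref{theo:BanzMC} for fixed $h$ then cost $\bigO(\totLabs)$ per label, or $\bigO(\totLabs^2)$ in total.

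Since $h$ ranges over at most $\cardM$ values, the total per profile is $\bigO(\cardM\totLabs^2)$. Summed over $\unique$, this gives $\bigO(|\unique|\cardM\totLabs^2)$. Computing the profiles and distributing the values back costs $\bigO(\cardM)$ per point, because sorting the counts can be done by counting sort over values in $[0,\cardM]$ in $\bigO(\cardM+\totLabs)$. Assuming $\totLabs\le\cardM$ (or absorbing the difference), this gives the additive $\bigO(|\dataSet|\cardM)$ term.

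\textbf{Memory and main obstacle.} We store $\bigO(\totLabs)$ values per unique profile, the current polynomial and its quotient of size $\bigO(\totLabs)$, and the factorial tables and $\psi$ of size $\bigO(\cardM)$. This totals $\bigO(\totLabs(\cardM+|\unique|))$.

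The main obstacle I expect is justifying that all $\totLabs-1$ quotient polynomials for a fixed $h$ can be obtained in $\bigO(\totLabs^2)$ rather than $\bigO(\totLabs^3)$. I would handle this by the exact-division argument above; prefix–suffix products would be a fallback. A second delicate point is the bookkeeping that maps canonical profiles in $\unique$ back to each point's labels within the $\bigO(|\dataSet|\cardM)$ budget.
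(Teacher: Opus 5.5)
Your proposal is correct and follows essentially the same route as the paper's proof. Like the paper, you evaluate \Cref{theo:BanzMC} once per unique vote profile, build $P_h(\gfvar)$ in $\bigO(\totLabs^2)$ for each $h\le\cardM$, obtain each quotient $P_h(\gfvar)/(A_j(h)+B_j(h)\gfvar)$ by exact division in $\bigO(\totLabs)$, and pay $\bigO(\cardM)$ per point for predictions and redistribution; your remark that $A_j(h)\ge 1$ for $h\ge 1$ makes the low-order division well defined, which the paper leaves implicit.

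The one loose end is your per-point counting sort: its $\bigO(\cardM+\totLabs)$ cost cannot be absorbed when $\totLabs>\cardM$ and $|\dataSet|\gg|\unique|$. It disappears if you key profiles on the at most $\cardM$ nonzero incorrect counts, since a label with $\cardM_j=0$ contributes the trivial factor $A_j(h)+B_j(h)\gfvar=1$.
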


We can further reduce the complexity of computing $P_h(\gfvar)$ 
from  $\bigO(\totLabs^2)$  to $\bigO(\totLabs\log\totLabs)$ by 
performing polynomial multiplication via the fast Fourier transform (FFT) algorithm, 
at the expense of higher constants and a more complex implementation. 
In practice, we find that our $\bigO(\totLabs^2)$ solution is highly efficient, 
as shown in our experimental evaluation (e.g., see~\Cref{tab:rtMC}).

\subsubsection{Computation of Shapley values in the multi\-class setting}\label{subsubsec:MCSHAP}

As already mentioned, 
we cannot use $P_h(\gfvar)$ to obtain the values of $\shap_i(\gameMC_{\evz})$, 
since the $k$-th coefficient $[\gfvar^k]P_h(\gfvar)$
accounts for sets of different cardinality. 
To fix this issue, we design a bi\-variate ordinary generating function 
allowing us to keep track of the size of each set contributing to $\coalset(k,h)$ 
by partitioning it as $\coalset(k,h) = \cup_{m\ge 1} \coalset(k,h,m)$ and 
requiring each set of $\coalset(k,h,m)$ to have cardinality~$m$. 
We introduce our main result below, 
while we defer the detailed technical results to~\Cref{appsubsec:shap}. 

For $h\ge1$ and $j\in[2,\totLabs]$, let
$A_j(h;u)\doteq\sum_{q=0}^{h-1}\binom{\cardM_j}{q}u^q$, $B_j(h;u)\doteq\binom{\cardM_j}{h}u^h$, and
$P_h(\gfvar,u)\doteq\prod_{j=2}^{\totLabs}(A_j(h;u)+B_j(h;u)\gfvar)$.
The exponent of $u$ will count the classifiers in a coalition, i.e., $[\gfvar^ku^m]P_h(\gfvar,u)=|\coalset(k,h,m)|$ (see \Cref{lemma:bivariateGenFunction}).
When $u=1$ we recover $A_j(h)$, $B_j(h)$ and $P_h(\gfvar)$ from \Cref{lemma:genFunction}.
For ease of notation let $\eta(a) \doteq \frac{(\cardM- a)!(a-1)!}{\cardM!}$, for $1\le a\le \cardM$.
Furthermore, let
\begin{align*}
f_1(k, h, m) & \doteq \frac{1}{k+1} \left( k\binom{\cardM_1-1}{h}\eta(m+h+1) + \binom{\cardM_1-1}{h-1}\eta(m+h) \right) \text{ and  }  \\
f_j(k, h, m) & \doteq \frac{1}{k+2}\binom{\cardM_1}{h} \left[\binom{\cardM_j-1}{h}\eta(m+2h+1)+\frac{1}{k+1}\binom{\cardM_j-1}{h-1}\eta(m+2h)\right]\,, \, j\in[2,\totLabs] \,.
\end{align*}
By convention, a term with a zero binomial is $0$, irrespective of the argument of $\eta(\cdot)$.
\begin{theorem}
\label{theo:ShapleyMCCorrect}
	Let $\evz=(\xp,\y) \in \dataSet$ be a test data point. %
	For any classifier $\model_i$ making a correct prediction, 
  i.e., $\model_i(\xp) = \y_1$, it holds that its Shapley value over $\gameMC_{\evz}$ is given by
	\[
	\shap_i(\gameMC_{\evz}) = 
    \frac{1}{\cardM} + \sum_{h=1}^{\cardM_1}\sum_{k=1}^{\totLabs-1}\sum_{m=kh}^{\cardM-\cardM_1}  
    f_1(k, h, m) [\gfvar^k u^m]P_{h}(\gfvar, u)  \,  .
	\]
	Furthermore,
	for any classifier $\model_i$ predicting an incorrect class $\model_i(\xp)=\y_j$, 
  $j\in\{2,\ldots,\totLabs\}$ it holds 
	\[
	-\shap_i(\gameMC_{\evz})  =  \sum_{h=1}^{\cardM_j}\sum_{k=0}^{\totLabs-2} \sum_{m=hk}^{\cardM-(\cardM_1+\cardM_j)} f_j(k,h,m) [\gfvar^ku^m]\frac{P_h(\gfvar,u)}{A_j(h;u)+B_j(h;u)\gfvar} \,  .
	\]
\end{theorem}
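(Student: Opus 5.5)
The proof will mirror that of \Cref{theo:BanzMC}, replacing the uniform weight $\tfrac{1}{2^{\cardM-1}}$ with the size-dependent weight $\eta(|\coal|)$. The count $[\gfvar^k]P_h(\gfvar)$ will be replaced by its size-refined version $[\gfvar^ku^m]P_h(\gfvar,u)=|\coalset(k,h,m)|$.

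\textbf{Correct classifiers.} Fix a correct classifier $\model_i$ and write each coalition $\coal\ni i$ as $\coal=\{i\}\cup R\cup W$. Here $R$ consists of $r$ other correct classifiers and $W$ of incorrect ones.

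The coalition $\coal=\{i\}$, i.e.\ $R=W=\emptyset$, has marginal gain $1$ and weight $\eta(1)=\tfrac{1}{\cardM}$. This gives the leading term $\tfrac{1}{\cardM}$.

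For all other coalitions I will invoke \Cref{lemma:marginalMCUnweighted}. It states that the marginal gain of $i$ is nonzero only if $W\in\coalset(k,h)$ for some $k\ge1$ with $r\in\{h-1,h\}$; the case $W=\emptyset$, $r\ge1$ gives zero gain.
\begin{itemize}
\item If $r=h-1$, adding $i$ turns a $k$-way maximum among the incorrect labels into a strict win. The gain is $1-0=1$. There are $\binom{\cardM_1-1}{h-1}$ choices of $R$, and $|\coal|=m+h$.
\item If $r=h$, adding $i$ breaks a $(k+1)$-way tie. The gain is $1-\tfrac{1}{k+1}=\tfrac{k}{k+1}$. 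There are $\binom{\cardM_1-1}{h}$ choices of $R$, and $|\coal|=m+h+1$.
\end{itemize}
Grouping coalitions by $(k,h,m)$ and using $|\coalset(k,h,m)|=[\gfvar^ku^m]P_h(\gfvar,u)$ (\Cref{lemma:bivariateGenFunction}) yields exactly $f_1(k,h,m)$ as the coefficient.

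The ranges of summation are as follows. We have $h\le \cardM_1$ because $r\le\cardM_1-1$ forces $h-1\le\cardM_1-1$. We have $k\le\totLabs-1$, and $kh\le m\le\cardM-\cardM_1$ because $k$ labels each carry exactly $h$ votes inside $W$. Terms with a vanishing binomial are zero, which justifies the stated convention on $\eta$.

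\textbf{Incorrect classifiers.} Fix $\model_i$ voting $\y_j$ and write $\coal=\{i\}\cup R\cup Q\cup W'$. Here $R$ holds correct classifiers, $Q$ holds the other $\y_j$-voters, and $W'$ holds classifiers voting for labels other than $\y_1,\y_j$.

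The marginal gain is nonzero only when $\y_1$ is among the maximal labels. By the incorrect-classifier case of \Cref{lemma:marginalMCUnweighted}, this requires $|R|=h$ with $W'\in\coalset_{-j}(k,h)$, where $k\ge0$ counts the other labels tied at $h$, and $|Q|\in\{h,h-1\}$.
\begin{itemize}
\item If $|Q|=h$, adding $i$ makes $\y_j$ the strict winner. The value drops from $\tfrac{1}{k+2}$ to $0$. The size is $m+2h+1$.
\item If $|Q|=h-1$, adding $i$ creates a $(k+2)$-way tie. The value drops from $\tfrac{1}{k+1}$ to $\tfrac{1}{k+2}$, i.e.\ by $\tfrac{1}{(k+1)(k+2)}$. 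The size is $m+2h$.
\end{itemize}
Multiplying by $\binom{\cardM_1}{h}$ choices of $R$ and by $\binom{\cardM_j-1}{h}$ or $\binom{\cardM_j-1}{h-1}$ choices of $Q$ gives $f_j(k,h,m)$.

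The coalitions of type $W'$ are counted by the bivariate analogue of \Cref{cor:genFunctionMinusJ}. The factor for label $j$ in $P_h(\gfvar,u)$ is a nonzero polynomial, so dividing it out is exact and yields $[\gfvar^ku^m]\,P_h(\gfvar,u)/(A_j(h;u)+B_j(h;u)\gfvar)$ as the count. The ranges are $k\le \totLabs-2$, $hk\le m\le \cardM-\cardM_1-\cardM_j$, and $h\le\cardM_j$ (needed for $Q$ of size $h-1$ to exist). There is no singleton term here, since $\charMC(\{i\})=0=\charMC(\emptyset)$.

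\textbf{Main obstacle.} The main obstacle is the exhaustive case analysis of marginal gains, which I take from \Cref{lemma:marginalMCUnweighted}. Beyond that, the care needed is in the bookkeeping of coalition sizes, so that each argument of $\eta$ is correct (the $+1$ accounts for $i$ itself), and in checking the boundary cases.
\begin{itemize}
\item The case $k=0$ for incorrect classifiers requires $P_h/(\cdot)$ to have constant term $\prod A_\ell$, which counts $W'$ with all other labels below $h$.
\item For $h$ larger than all $\cardM_\ell$, the requirement $k\ge1$ in the correct case correctly excludes configurations in which no incorrect label reaches $h$. There the gain is zero when $r\ge1$.
\end{itemize}
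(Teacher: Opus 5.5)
Your route is the same as the paper's proof. You split the coalitions with nonzero marginal gain via \Cref{lemma:marginalMCUnweighted}, count them with the product rule, weight them by $\eta$ of the coalition size, and read off $|\coalset(k,h,m)|$ and $|\coalset_{-j}(k,h,m)|$ from $P_h(\gfvar,u)$ and its quotient. Your two correct-classifier cases $r=h$ and $r=h-1$ are the paper's sets $\coalset_1$ and $\coalset_2$, and your two incorrect-classifier cases are $\coalset_3$ and $\coalset_4$. The incorrect-classifier part is correct: the gains $\frac{1}{k+2}$ and $\frac{1}{(k+1)(k+2)}$, the sizes $m+2h+1$ and $m+2h$, and the counts all match. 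The summation ranges and the $\eta$ convention are also correct.

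There is, however, a wrong step in the correct-classifier case $r=h-1$. There $\VotVecCoord_1(\coal-i)=h-1$ and $k$ incorrect labels have exactly $h$ votes. Adding $i$ raises $\VotVecCoord_1$ only to $h$, so $\y_1$ ties with those $k$ labels rather than winning outright. Hence $|\tieSet(\coal)|=k+1$ and $\charMC(\coal)=\frac{1}{k+1}$, and the gain is $\frac{1}{k+1}$, not $1-0=1$. This is exactly the first case of \Cref{lemma:marginalMCUnweighted} with $|\tieSet(\coal-i)|=k$.

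With the gain of $1$ you wrote, grouping by $(k,h,m)$ gives the coefficient $\frac{k}{k+1}\binom{\cardM_1-1}{h}\eta(m+h+1)+\binom{\cardM_1-1}{h-1}\eta(m+h)$. Its second term is $k+1$ times too large, so your claim that grouping yields exactly $f_1(k,h,m)$ does not follow from your case analysis. Once the gain is corrected to $\frac{1}{k+1}$, the coefficient is exactly $f_1(k,h,m)$, and your argument becomes the paper's proof.
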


The above result implies that 
computing the coefficients of the polynomials $P_h(\gfvar,u)$, 
with degrees bounded by $\totLabs-1$ in $\gfvar$ and $\cardM-\cardM_1$ in $u$, 
enables a closed-form solution for $\shap_i$. 
To compute the coefficients of $P_h(\gfvar,u)$ we develop a 
dynamic-programming algorithm presented in~\Cref{app:missingCodes}.

\begin{theorem}
\label{theo:runtimeMCSHAP}
\Cref{alg:ShapMC} computes the exact Shapley values $\shap_{i}(\sumGameMC)$ 
of the base classifiers of an ensemble $\modelSet$
in the multi\-class setting in time $\bigO(|\unique| \cardM^3 \totLabs+|\dataSet|\cardM)$. 
The memory required is $\bigO(\totLabs (\cardM+|\unique|))$.
\end{theorem}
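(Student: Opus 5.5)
The proof splits into correctness, running time, and memory. Correctness follows from \Cref{theo:ShapleyMCCorrect} together with linearity of the Shapley value. Since $\charMC(\dataSet,\cdot)$ is the average of the point-wise functions $\charMC(\evz,\cdot)$, we have $\shap_i(\sumGameMC)=\frac{1}{|\dataSet|}\sum_{\evz\in\dataSet}\shap_i(\gameMC_\evz)$. By symmetry, $\shap_i(\gameMC_\evz)$ depends only on the label voted by $\model_i$ and on the vote profile $\VotVec(\xp,\modelSet)$; the ordering $\cardM_2\le\dots\le\cardM_\totLabs$ is fixed by relabeling the incorrect classes.

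The algorithm therefore proceeds in three phases. First, one pass over $\dataSet$ evaluates all classifiers, builds each profile and maps it to its canonical sorted form in $\unique$. Each point also records its multiplicity and its permutation of incorrect labels, at cost $\bigO(\cardM)$ per point; sorting $\totLabs\le\cardM$ counts can be done by counting sort in $\bigO(\cardM)$. Second, for each $U\in\unique$, the algorithm computes one Shapley value per label class $j\in[\totLabs]$, using the formulas of \Cref{theo:ShapleyMCCorrect}. Third, a final pass over $\dataSet$ adds to each $\shap_i$ the value of the class voted by $\model_i$ under the recorded permutation, again in $\bigO(\cardM)$ per point. This gives the $\bigO(|\dataSet|\cardM)$ term.

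The main work is the per-profile cost, which must be $\bigO(\cardM^3\totLabs)$. For fixed $h$, the factor $A_j(h;u)+B_j(h;u)\gfvar$ has at most $h+1\le\cardM+1$ terms, and $P_h(\gfvar,u)$ has $\bigO(\totLabs\cardM)$ coefficients. A naive dynamic program multiplies in factors one at a time, at cost $\bigO(\totLabs\cdot h\cdot\totLabs\cardM)$, which is too expensive. Two savings are needed.

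\emph{Restricting the labels.} Only labels with $\cardM_j\ge h$ can have $B_j(h)\neq0$, so the $\gfvar$-degree is bounded by $|\{j:\cardM_j\ge h\}|$. Labels with $\cardM_j<h$ contribute the fixed polynomial $(1+u)^{\cardM_j}$ in $u$ alone. Their combined product is $(1+u)^{s_h}$, where $s_h=\sum_{j:\cardM_j<h}\cardM_j$, so it needs no convolution at all. The sorted order lets us maintain the set of labels with $\cardM_j\ge h$ as a shrinking suffix.

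\emph{Using $\sum_j\cardM_j\le\cardM$.} Multiplying by the $u$-factor of label $j$ has cost proportional to $\min(h,\cardM_j)$ times the current $u$-degree times the $\gfvar$-degree. Summing over $j$ bounds this by $\bigO(\cardM\cdot\cardM\cdot\totLabs)$ for each $h$, and summing over $h\le\cardM_1$ seems to give $\bigO(\cardM^3\totLabs)$ only when the $h$-sum is also controlled. Here I would argue that $h$ ranges only up to $\max_j \cardM_j$ for the labels involved, and charge the cost of label $j$ at level $h$ to the pairs $(h,q)$ with $q<\min(h,\cardM_j)$, so the total over $h$ and $j$ is $\sum_j\cardM_j\cdot\cardM\le\cardM^2$ per unit of $u$-degree.

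The quotients $P_h/(A_j(h;u)+B_j(h;u)\gfvar)$ are needed for each incorrect label $j$. Rather than recomputing products, I would obtain them by polynomial division. The divisor has constant term $A_j(h;0)=1$ in $u$, so division by it is well defined coefficient-wise as power-series division in $u$, exact because the quotient is a polynomial. Its cost is $\bigO(\totLabs\cdot\cardM\cdot h)$, which is the same order as one multiplication. Alternatively, labels with equal $\cardM_j$ give identical quotients, so each distinct value $\cardM_j$ is processed once; there are at most $\bigO(\sqrt{\cardM})$ distinct values. Evaluating the triple sums of \Cref{theo:ShapleyMCCorrect} costs $\bigO(\cardM\totLabs)$ per $h$ and label class once the factorials for $\eta(\cdot)$ and the binomials are precomputed in $\bigO(\cardM)$.

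I expect the main obstacle to be the precise amortization that brings the total over $h$ and over the quotient computations to $\bigO(\cardM^3\totLabs)$ rather than $\bigO(\cardM^3\totLabs^2)$. This is where I would lean on division and on the bound $\sum_j\cardM_j\le\cardM$.

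For memory, at any time the algorithm stores one polynomial of size $\bigO(\totLabs\cardM)$ plus a working copy for the division. It keeps only the per-class Shapley values for each profile, $\bigO(\totLabs|\unique|)$ in total, and the accumulators, $\bigO(\cardM)$. Factorial tables add $\bigO(\cardM)$. This stays within the stated $\bigO(\totLabs(\cardM+|\unique|))$ bound, provided each polynomial is discarded after its level $h$ is processed.
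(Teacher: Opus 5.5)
Your correctness argument is sound, and so is your cost analysis for building $P_h(t,u)$. The analysis is in fact simpler than you fear: it costs $\mathcal{O}(\ell n \sum_j \min\{h,n_j\})=\mathcal{O}(\ell n^2)$ per $h$, and multiplying by $n_1\le n$ values of $h$ already gives $\mathcal{O}(\ell n^3)$, so no charging argument over $h$ is needed. The gap is in the quotient step, which you yourself leave open. As you state it, each division of $P_h(t,u)$ by $A_j(h;u)+B_j(h;u)t$ costs $\mathcal{O}(\ell n h)$, and the triple sum for each label class costs $\mathcal{O}(\ell n)$. Over up to $\ell-1$ incorrect labels and $n_1\le n$ values of $h$, this totals $\mathcal{O}(\ell^2 n^3)$, which is not the claimed bound. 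Processing only the $\mathcal{O}(\sqrt{n})$ distinct values of $n_j$ does not rescue it with that per-division cost: it still gives $\mathcal{O}(\ell n^{7/2})$.

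The missing step is the same observation you already used for multiplication. $A_j(h;u)$ has at most $\min\{h-1,n_j\}+1$ nonzero coefficients. Hence the in-place division (running the multiplication recursion backwards, valid because the divisor has constant term $1$) and the later re-multiplication each cost $\mathcal{O}(\ell n\,(\min\{h-1,n_j\}+1))=\mathcal{O}(\ell n n_j)$ for $n_j\ge1$. After discarding labels with $n_j=0$ (their factor is $1$), the $\mathcal{O}(\ell n)$ evaluation for label $j$ is also $\mathcal{O}(\ell n n_j)$. Summing over $j$ with $\sum_{j\ge2}n_j=n-n_1$ gives $\mathcal{O}(\ell n^2)$ per $h$, hence $\mathcal{O}(\ell n^3)$ per unique profile. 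This is exactly the paper's argument. Your $(1+u)^{s_h}$ lumping and your restriction of the $t$-degree are valid but not needed for this bound.

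Two smaller problems remain. First, your counting-sort step assumes $\ell\le n$, which fails in general; the CLIP experiments have $\ell=200$ and $n=22$. You only need to sort the at most $n$ nonzero counts. Second, and more relevant to the memory claim, your three-phase scheme stores a permutation (of size up to $n$) for every data point. That is auxiliary memory growing with $|\mathcal{D}|$, which violates the $\mathcal{O}(\ell(n+|\mathcal{U}|))$ bound you assert. You can fix this in either of two ways:
\begin{itemize}
\item recompute the votes in your final pass instead of storing them; or
\item as the paper does, process $\mathcal{D}$ in a single pass: compute a profile's per-class values on first encounter, cache them keyed by the profile, and immediately add them to the per-classifier accumulators.
\end{itemize}
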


We further introduce \suiteShapImp, improving the above time complexity bound in \Cref{app:refinedComplexity}, which we evaluate in \Cref{appsubsec:synth}. 
The underlying idea of such method is to reduce the size of the dynamic programming matrices used to evaluate \Cref{theo:ShapleyMCCorrect}, we defer to \Cref{app:refinedComplexity} for more details.

\section{Experiments} 
\label{sec:exps}

\newcounter{qcounter}
\stepcounter{qcounter}
We consider the following research questions:
\begin{inparaitem}[\textbf{(Q\arabic{qcounter})}]
	\item evaluate the efficiency and scalability of \suite;\stepcounter{qcounter}
	\item compare the rankings produced by \suite and existing methods;\stepcounter{qcounter}
	\item investigate the use of the rankings of \suite for ensemble pruning;\stepcounter{qcounter} and
	\item use \suite for prompt ensembling of vision models on a zero-shot learning classification task.
\end{inparaitem}
We defer to \Cref{appsubsec:synth} for synthetic experiments validating the tightness of our time complexity bounds.

\para{Algorithms.}
Our algorithms for exact computation are denoted by $\suite$-\shap and $\suite$-\banz. 

\emph{Binary classification.} We consider the following baselines, 
described in detail in \Cref{appsubsec:appsetup}:
\texttt{ISCG} \citep{Ykhlef2017GraphShapEnsamble}; 
\texttt{Tr}-\texttt{MLE}\ \citep{Owen1972MLE}; 
\texttt{Tr}-\texttt{PS} \citep{Maleki2013Shap}; and
\texttt{WMG} \citep{Uno2012WMG}.
These baselines are associated with games defined in prior work.
We also consider 
\texttt{MC}-\shap and \texttt{MC}-\banz: Monte-Carlo approximations 
with sample size set to $\cardM^2$.%

\emph{Multiclass classification.} 
We only show the scalability of \suite (see~\Cref{tab:rtMC}), as all other methods, 
with the exception of \texttt{ISCG}, 
are not applicable to the multiclass setting, or have very high runtime.
We show an application of the output of \suite for the multi\-class setting in~\Cref{subsec:CLIPexp}.
For the prompt-ensembling  of~\Cref{subsec:CLIPexp} we also consider \texttt{Tr}-\texttt{MLE} and \texttt{Tr}-\texttt{PS}, which we apply through a ``correct-versus-rest'' reduction detailed in \Cref{appsec:bl}.

\para{Setup.} 
Details on our experimental setup are in \Cref{app:reprod}.\footnote{The code to reproduce our results is available at~\codeURL.}
on six popular datasets from the UCI repository (see~\Cref{appsubsec:appsetup}). 
For each dataset we obtain a 75/15/10 split of the data, 
where 75\% is used for training, 
15\% for computing the power indices, and 
10\% as test data for the experiments on sub-ensemble selection, see also \Cref{app:reprod}.
Baselines \texttt{Tr}-\texttt{MLE} and \texttt{Tr}-\texttt{PS} are from 
\cite{Rozemberczki2021ShapEnsemble}.
Unless otherwise stated we considered five random (seeded) splits of each dataset, processed as detailed above, and our metrics are averaged over such instances.
We then execute each method once per split.
Except for \texttt{MC} and \texttt{Tr}-\texttt{PS} all other methods are  deterministic.

\para{Ensembles.} We consider three different ensembles
\begin{inparaenum}[($i$)]
	\item \ensRF: a random forest with $\cardM=50$ trees;
	\item \ensAB: an ensemble built with AdaBoost containing $\cardM=150$ models; and
	\item \ensHET: and heterogeneous ensemble containing $\cardM=30$ models,
  10 decision trees and 5 of each of the remaining types: SVMs, logistic regressors,
  gradient boosting, and multi layer perceptrons. See~\Cref{app:reprod} for more details.
\end{inparaenum}

\begin{figure}[t]
	\centering
	\includegraphics[width=0.9\textwidth]{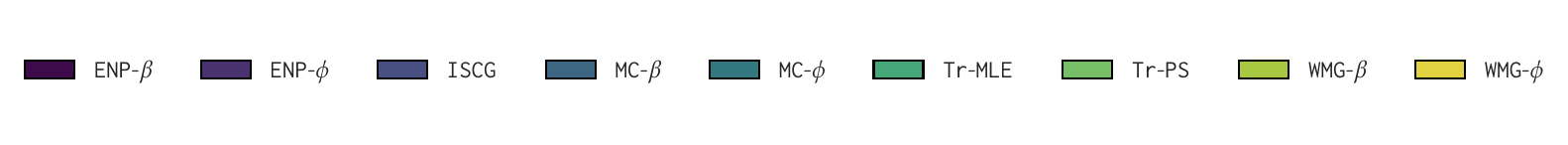}\\
	\begin{tabular}{ccc}
		\includegraphics[width=0.27\textwidth]{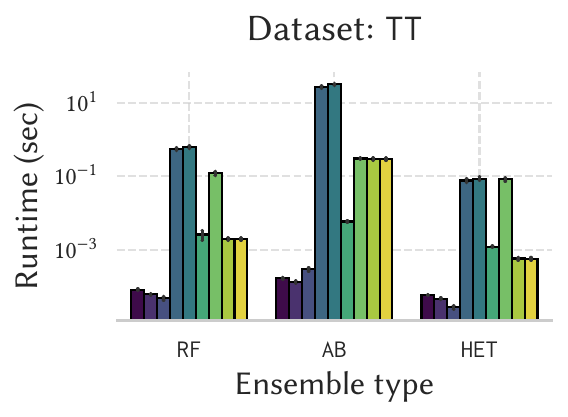}& \includegraphics[width=0.27\textwidth]{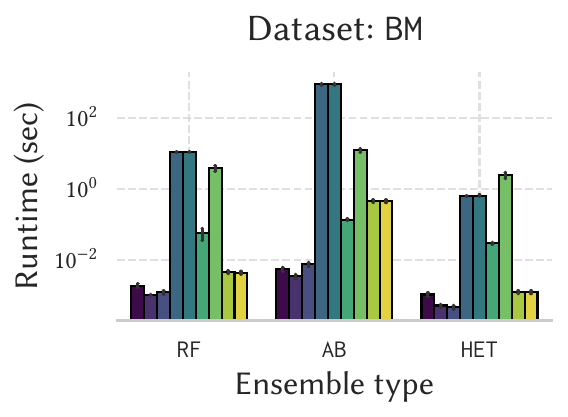} &
		\includegraphics[width=0.27\textwidth]{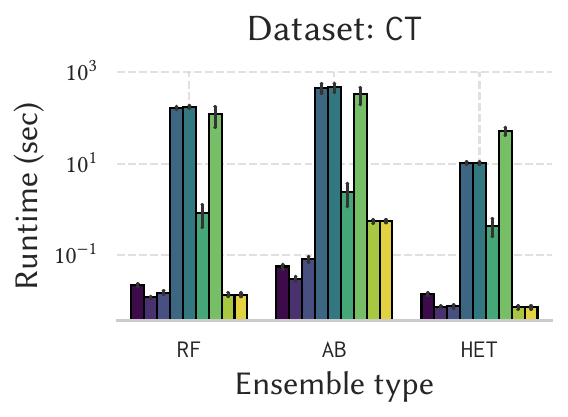}
	\end{tabular}
	\caption{Runtime (mean and standard deviation) on our largest datasets, binary classification.}
	\label{fig:rt1}
\end{figure}

\subsection{Efficiency and scalability}
\label{subsec:expsEff}
We start with answering \textbf{Q1}. 
The results are reported in~\Cref{fig:rt1} and~\Cref{fig:rt2}.
\suite methods are among the fastest for binary classification.
Even on the largest datasets, where the sampling methods \texttt{Tr}-\texttt{PS} and \texttt{MC} require up to hundreds and thousands of seconds, respectively,
\suite methods terminate in less than one second, as captured by~\Cref{theo:runtimeBin}.
The exact method \texttt{WMG} is also very efficient, however it requires between one and three orders of magnitude more time than \suite on \ensAB, note that such configuration has the largest number of base classifiers ($\cardM=150$).
In fact, the dynamic program of~\citet{Uno2012WMG} has a cost at least quadratic in $\cardM$ for each game of $\unique$, while~\Cref{alg:mainAlg01} evaluates a closed form.
\texttt{ISCG} has a comparable runtime to \suite, however it does not faithfully capture plurality voting ensembles (see \Cref{subsec:rankingComp} and \Cref{appsec:bl}).
Finally, \texttt{Tr}-\texttt{MLE} is more than one order of magnitude slower compared to \suite on our largest datasets, while providing only approximate solutions.
The runtimes of \suite for the multi\-class setting are reported in~\Cref{tab:rtMC}, where we observe a high runtime for computing Shapley values on our largest dataset (\texttt{CT}) on ensemble \texttt{AB} with $n=150$ due to the $\bigO(\cardM^3)$ term in the complexity. We would like to remind, however,  
that our method is the first specifically tailored to solve the multiclass~case.

\begin{figure}[t]
	\centering
	\begin{tabular}{cc}
		\includegraphics[width=0.49\textwidth]{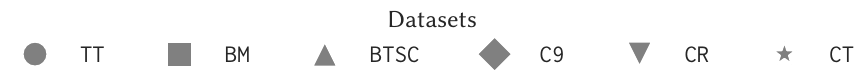}
		\includegraphics[width=0.45\textwidth]{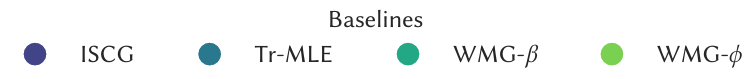} &%
	\end{tabular}
	\includegraphics[width=0.8\textwidth]{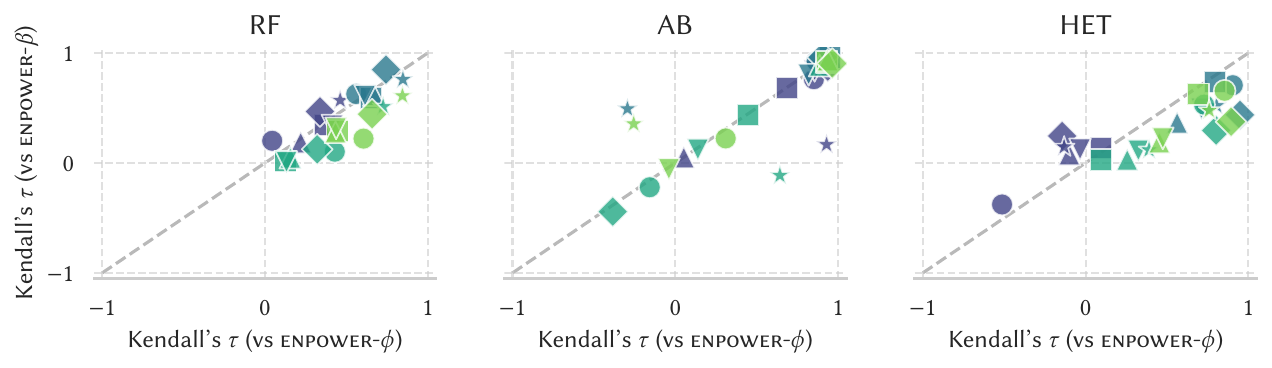} 
	\caption{For each pair (dataset, baseline) encoded by a pair (shape, color), we display the Kendall's $\tau$ with respect to the exact rankings obtained by $\suite$-$\shap$ and $\suite$-$\beta$ on the $x$-axis, and $y$-axis respectively, averaged over five splits. Each plot corresponds to a different ensemble.}
	\label{fig:corr}
\end{figure}

\subsection{Comparison of rankings of base classifiers}
\label{subsec:rankingComp}

Next, we consider \textbf{Q2}. 
We compare the rankings of the base classifiers produced by \suite and the baselines 
using Kendall's $\tau$. 
We recall, that rankings are used for different applications, including ensemble pruning, data validation, and interpretability. 
We report the results %
in~\Cref{fig:corr}.
We first observe that \texttt{ISCG} has low %
ranking correlation  with \suite on \ensRF and \ensHET, and a high correlation on \ensAB, except on \texttt{BTSC}.
Similarly, except for few instances,
the correlations of \texttt{WMG}-\banz and \suite-\banz do not exceed~0.5.
Interestingly, \texttt{WMG}-\shap exhibits slightly higher correlations with  \suite-\shap.
Finally,
there is quite high correlation ($>\!0.5$) among the rankings of \suite-\shap and
\texttt{Tr}-\texttt{MLE}, despite the latter being approximate and on a different game, with the exception of \texttt{CT} on \ensAB.

We note that the correlation values are particularly dependent on the baseline considered.
\texttt{Tr}-\texttt{MLE} has $\tau>0.5$ with \suite-\shap on most configurations except \texttt{CT} on \ensAB.
In contrast, baselines \texttt{WMG} and \texttt{ISCG} range from negative correlation to values above $0.9$ for a fixed ensemble and different datasets.
For example, the value of $\tau$ between  \texttt{WMG}-\banz and \suite-\shap on \ensAB is below $-0.3$ and above $0.8$, on datasets \texttt{C9} and \texttt{BTSC}, respectively.
In summary, we observe that \suite provides rankings that are not fully captured by previous methods, providing a unique characterization of plurality-voting~ensembles with random tie-breaking.

\begin{figure}[t]
	\centering
	\includegraphics[width=0.9\textwidth]{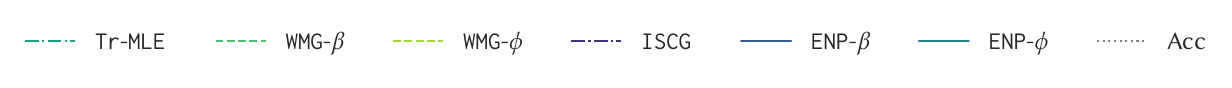}\\
	\begin{tabular}{cc}
		\includegraphics[width=0.45\textwidth]{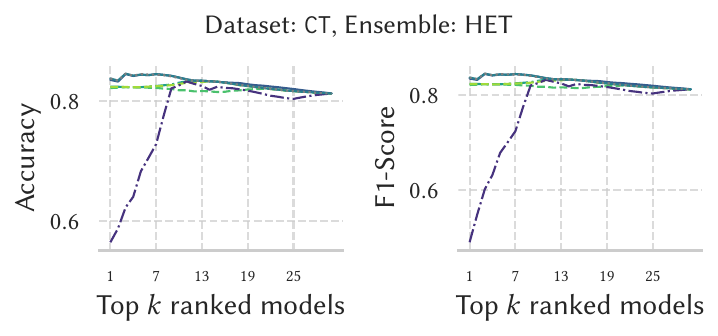}& \includegraphics[width=0.45\textwidth]{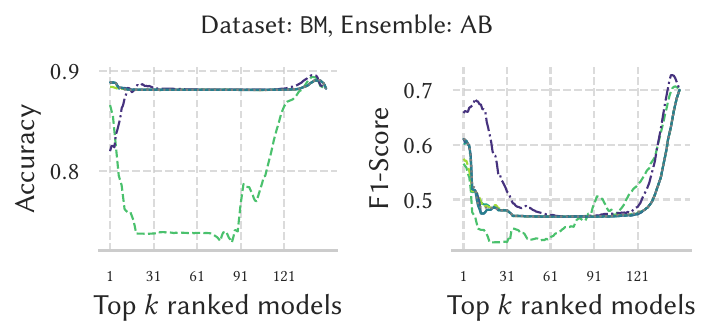} \\
	\end{tabular}
	\caption{Generalization metrics for sub-ensemble selection, averaged over five random splits.}
	\label{fig:gener1}
\end{figure}
\subsection{Sub-ensemble selection}
\label{sec:subens}
We now consider \textbf{Q3}.
Given an ensemble $\modelSet$ and a ranking %
of its base classifiers, 
we assess the generalization of the sub-ensemble obtained by retaining the top-$k$ classifiers.
We use rankings according to \suite and the baselines.
We additionally include a baseline ranking base classifiers by their individual accuracy on $\dataSetVal$, denoted as \textsc{Acc}.
For each sub-ensemble we evaluate its accuracy and (macro) $F_1$-score.
We compute the rankings of all methods on $\dataSetVal$, and both metrics on $\dataSetTra$, of the same split.
Classifiers with equal values are ordered by increasing index $\model_i, i\in[\cardM]$.
We vary the parameter $k$ in $[\cardM]$.
The predictions of a sub-ensemble are obtained by plurality voting with random tie-breaking, i.e., the voting rule of $\charBin$.
That is, for a test point where the two classes receive the same number of votes we draw the prediction uniformly at random.
We compute both metrics on the resulting predictions, and report their mean over the five random splits.

We show some representative results in~\Cref{fig:gener1} (see~\Cref{subsec:expsSubApp} for an extended discussion). 
We observe that sub-ensembles produced with rankings by~\suite are often among those with the largest accuracy and $F_1$-score.
For example, on \texttt{CT} with \ensHET, \suite matches or outperforms all baselines for all values of $k$; on \texttt{BM} with \ensAB, \suite matches \textsc{Acc} for all values of $k$, while the accuracy of \texttt{WMG}-\banz is much lower than that of all other methods for intermediate values of $k$.
Interestingly, on the same configuration, \texttt{ISCG} attains a higher $F_1$-score for small values of $k$.
To answer \textbf{Q3}, while our methods are not directly optimized for sub-ensemble selection, 
they can provide a valuable contribution in addition to existing methods, given their high practical applicability.

\subsection{Large vision model prompt ensembling}
\label{subsec:CLIPexp}

\begin{figure}[t]
	\centering
	\includegraphics[width=0.55\textwidth]{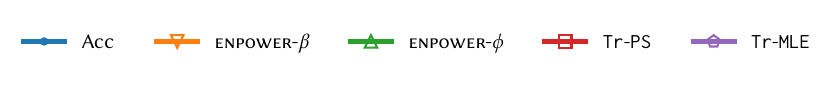}\\
	\begin{tabular}{cc}
		\includegraphics[width=0.47\textwidth]{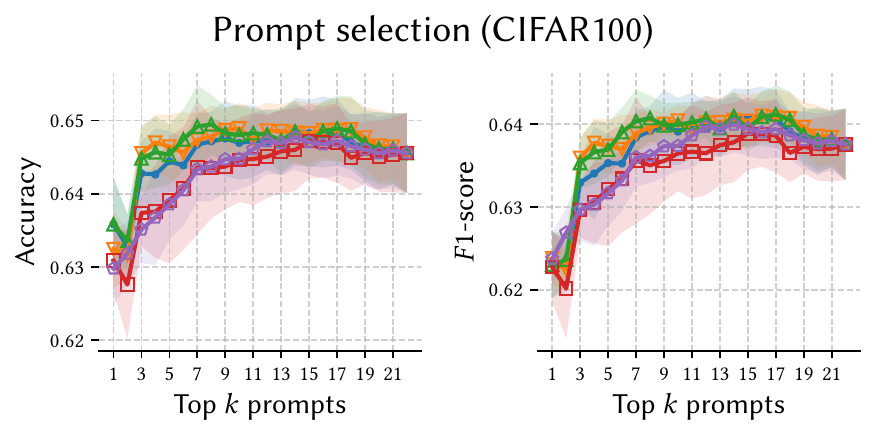}& 		
    \includegraphics[width=0.47\textwidth]{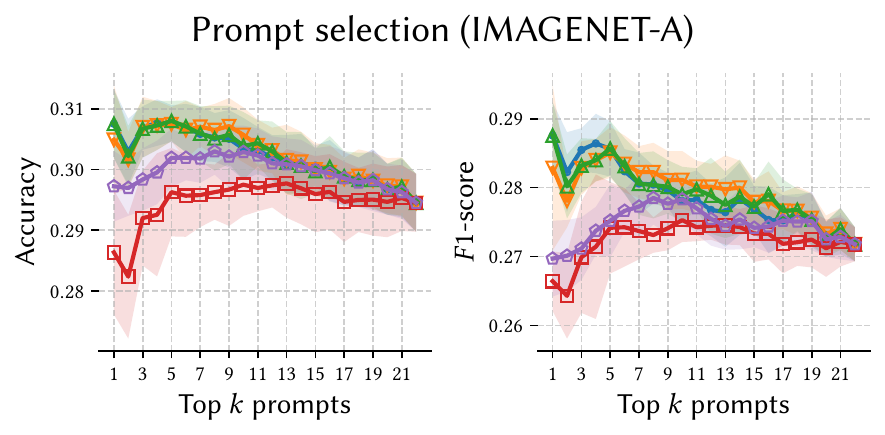}
	\end{tabular}
	\caption{Accuracy and $F_1$-score test metrics for the prompt ensembling of CLIP on $\dataSetTra$ (see~\Cref{subsec:CLIPexp}), for varying $k$. We report mean and standard deviation (over multiple splits).}
	\label{fig:accf1CLIP}
\end{figure}

Last, we consider \textbf{Q4}. 
Recall that prompt ensembling is an LLM-engineering technique used 
to improve performance on tasks such as zero-shot image classification~\citep{radford2021learningCLIP}. 
Here, we select $\cardM=22$ general-purpose prompt templates, 
which act as our models;
see~\Cref{tab:results_imagenet-a}. 
For each image in the input dataset we first compute its normalized embedding using CLIP, 
obtaining $\bm{x}_{\text{img}}$. Then we proceed as follows:
\begin{inparaenum}[($i$)]
	\item we instantiate each prompt template $i\in[\cardM]$ with the $\totLabs$ possible classes, obtaining prompts $S^i_1,\dots,S^i_\totLabs$;
	\item we use CLIP to compute the normalized embedding of each of the prompts
  obtaining vectors $\bm{v}_{1}^i, \dots, \bm{v}_{\totLabs}^i$;
	\item we compute the dot products $\bm{x}_{\text{img}}\cdot \bm{v}_{ j}^i$;
	\item we select the class $j^{*}(i)= \arg\max_j\bm{x}_{\text{img}}\cdot \bm{v}_{j}^i$ 
  as vote for model %
  $i\in[n]$;%
  \footnote{Compared to the procedure of CLIP \citep{radford2021learningCLIP} each model selects a class, not a probability distribution.}
	\item the final predictions of the ensemble are evaluated according to plurality voting with random tie-breaking. 
\end{inparaenum} 
We split each dataset in half obtaining $\dataSet_{\text{val}}$ and $\dataSetTra$ (over five random splits).
We use $\dataSet_{\text{val}}$ to compute the rankings of the models according to three different algorithms: 
\textsc{Acc}: ranking prompts by raw accuracy, a common practice for LLM engineering; \suite-$\banz$ and \suite-$\shap$: ranking using our definitions of $\banz$ and $\shap$ for the multi\-class setting.
We also rank prompts with \texttt{Tr}-\texttt{MLE} and \texttt{Tr}-\texttt{PS} (see~\Cref{appsec:bl}).
The resulting rankings are reported in~\Cref{tab:results_cifar100,tab:results_pets,tab:results_imagenet-a}.
Finally, we use $\dataSetTra$ of each split to evaluate the accuracy and $F_1$-score
on the sub-ensemble obtained by retaining only the top-$k$ models,
for $k\in\{1,\dots,\cardM \}$.

We report the result %
for the CIFAR and IMAGENET-A datasets  in~\Cref{fig:accf1CLIP}, the missing dataset is in~\Cref{subsec:CLIPpromptApp}, and the best and average metrics over $k$ in~\Cref{tab:clipSummary}.
We observe that the sub-ensembles obtained through the rankings computed by~\suite
attain a higher accuracy and $F_1$-score than the ones of \texttt{Tr}-\texttt{MLE} and \texttt{Tr}-\texttt{PS}, for the best $k$ and on average over all $k\in[\cardM]$, on all three datasets.
Importantly, \texttt{Tr}-\texttt{MLE} and \texttt{Tr}-\texttt{PS} not accounting for how  incorrect models for the $\totLabs-1$ incorrect labels, differently from our game $\charMC$ (see \Cref{appsec:bl,sec:RWExtended}).
Furthermore the selection of \suite matches or improves also over the baseline \textsc{Acc}.
We also note that in general, the best performing ensembles ($k\in [1,17]$, see~\Cref{tab:clipSummary}) have smaller sizes than the full ensemble of $\cardM=22$ prompts.
Highlighting that using a larger set of prompts to classify challenging images
often yields less accurate predictions.
Our results suggest that the Banzhaf and Shapley values computed by~\suite 
can be used to improve the predictive power of vision models under prompt ensembling. 
Another interesting application is enhancing vision model interpretabilty by comparing the rankings of the prompts across different methods, 
as shown in~\Cref{tab:results_cifar100,tab:results_pets,tab:results_imagenet-a}, for which we defer to the discussion in~\Cref{subsec:CLIPpromptApp}.

\section{Conclusions}
We introduce new characteristic function games modeling the predictions of models 
in plurality-voting ensembles with random tie-breaking, in binary and multi\-class settings.
We then design new \emph{exact} polynomial-time algorithms, named collectively as \suite, 
to compute the Banzhaf and Shapley values of the models in the ensemble. 
Our algorithms are based on non-trivial combinatorial identities, and the use of ordinary generating functions for the multi\-class case.
We show in practice that the indices obtained by \suite:
\begin{inparaenum}[($i$)]
	\item  are extremely efficient to compute and are not captured by previous methods;
	\item can be applied to ensemble pruning and LLM prompt ensembling with remarkable results.
\end{inparaenum}
Interesting future directions include:
\begin{inparaenum}[($i$)]
	\item use our indices for data validation;
	\item study other popular power indices under our new game.
\end{inparaenum}

\begin{ack}
The authors declare no competing interests that could create a conflict of interest.
This research was supported by the Swedish Research Council project ExCLUS (2024-05603), the Marie Curie Doctoral Training Network ARMADA (101168951), and the Wallenberg AI, Autonomous Systems and Software Program (WASP) funded by the Knut and Alice Wallenberg Foundation. Computational resources were provided on the Berzelius system funded by the Knut and Alice Wallenberg foundation and operated by NAISS.
\end{ack}

\cleardoublepage

\bibliographystyle{plainnat}
\bibliography{biblio}

\cleardoublepage
\section*{Limitations}
We now list the major limitations and assumptions of our work.
\begin{itemize}
	\item our methods are only applicable to ensembles constituting of base models that provide a hard classification;
	\item there are inputs for which our methods do not match baseline methods; we describe such behavior in~\Cref{sec:exps} and discuss the related trade-offs. For example, on ensemble sub-selection, our methods may achieve high accuracy but low $F_1$-score, or vice-versa. To support our findings we provide additional results in~\Cref{sec:additionalExps}.
	\item we force each base classifier to provide a hard classification for our experiment regarding prompt ensembling in~\Cref{subsec:CLIPexp}, to fit our framework. We remark that the original work of~\cite{radford2021learningCLIP} used soft votes.
\end{itemize}

\section*{LLM usage}
Large language models (LLMs) were used to assist portions of this research and manuscript preparation. In particular, LLMs were used to:
\begin{itemize}
	\item assist with proofreading and improving presentation;
	\item help explore proof ideas and check edge cases;
	\item assist with implementation and debugging;
	\item support data analysis and visualization;
	\item assist with literature exploration.
\end{itemize}

All technical results, proofs, experiments, and conclusions were verified by the authors, who assume full responsibility for the content of the manuscript.

\appendix
\clearpage
\etocdepthtag.toc{mtappendix}
\etocsettagdepth{mtmain}{none}
\etocsettagdepth{mtappendix}{subsection}
\etocsettocstyle{\section*{Appendix content}}{}
\tableofcontents
\clearpage

\section{Extended discussion on related work}
\label{sec:RWExtended}

The generalization of an ensemble is characterized by a \emph{bias-variance-diversity} complexity trade-off that must be considered during learning~\citep{Dietterich2000Ensembles,Kuncheva2003DiversityMetrics}. 
\emph{Diversity}, which informally captures correlations among the models' predictions,  
requires a careful balancing as it directly impacts bias and variance~\citep{Wood2023Diversity}. 
Diversity motivates the need for ensemble \emph{pruning}~\citep{Li2012Pruning,Lu2010PruningScoreDiv}, 
an active area of research~\citep{Kulkarni2012Pruning,zhang2006ensemble}, 
which aims at selecting a sub-ensemble with better generalization performance, 
reducing significantly the size of the full ensemble and controlling the bias-variance-diversity trade-off.
Our methods %
provide a valuable addition to existing pruning techniques (as we show in~\Cref{sec:exps}), 
especially given the efficiency of \suite and the fact that we our methods are model-agnostic.
We defer to dedicated textbooks and surveys for more details~\citep{Sagi2018EnsSurv,Zhou2025EnsembleMethods}.

Weighted majority games (WMGs) have been studied extensively in the literature, 
and under unitary weights, 
power indices for WMGs can be computed in polynomial time~\citep{Uno2012WMG,kakimura2026computing}.
However, as we discuss below, WMGs do not capture our setting.
First, we recall that a WMG is a simple game, i.e., a monotone game in which coalitions are rewarded with values that are either 0 or 1, and that its quota, which is necessary for achieving a majority,
is \emph{fixed in advance} (see~\Cref{def:WMG}).
Consider an ensemble $\modelSet$ with unitary weights, as in our setting, and a quota $q\in[\cardM]$.
Given a model $i\in[\cardM]$ and a coalition $\coal$ such that $i\notin\coal$, the marginal contribution $\chF(\coal+i)-\chF(\coal)$ is nonzero if and only if $|\coal|=q-1$. 
That is, the Shapley and Banzhaf values of model $i\in[\cardM]$ in a WMG depend only on the coalitions of size $q-1$.
For example, under the majority quota $q=\lfloor\cardM/2\rfloor+1$, no coalition with fewer than $\lfloor\cardM/2\rfloor$ classifiers contributes to the power indices of the models. 
In our experiments, we rank the models by their power indices and retain the top-$k$ models, for every $k\in[\cardM]$ (see~\Cref{sec:exps}).
When $k<\lfloor\cardM/2\rfloor$, the power indices of a WMG ignore every coalition of size at most $k$.
In contrast, under our game, the power indices can depend on coalitions of any size (see~\Cref{lemma:marginalBinaryUnweighted}).

Furthermore, most of the existing game formulations on which prior work is based, 
including the works by~\citet{Rozemberczki2021ShapEnsemble, Uno2012WMG} and 
\citet{Ykhlef2017BanzWeighted} cannot be adapted to the multi\-class setting 
without incurring high computational cost and loss in information. 
The reason is that typical approaches define $\bigO(\totLabs)$ binary games and 
aggregate the indices over all pairs retaining the correct label, or all possible pairs solving $\bigO(\totLabs^2)$ binary games. 
Despite being inefficient, this approach does not capture how the predictions 
for different classes interact with each other, as it is done by our definition of $\charMC$. 
Note that the work by \citet{Ykhlef2017GraphShapEnsamble} is the only method applicable for the multi\-class setting, which however does not model accurately how the base models interact over the final ensembles' predictions. 
That is, the work of~\cite{Ykhlef2017GraphShapEnsamble} assumes that classifiers correspond to nodes on a clique $Q$, and the characteristic function for a subset $\coal$ of models is proportional to the weighted volume of the induced subgraph $Q[\coal]$, using the classical result from \citet{Deng1994WMG} (see also \Cref{appsec:bl}).
This differs significantly from aggregation mechanisms used in ensemble models.

Finally, the rule underlying our characteristic function $\charMC$, i.e., plurality voting with tie-breaking, has been studied in social choice.
\citet{Bolger1990} and \citet{Amer1998} extend the Banzhaf index to games with $r$ alternatives (here $r=\totLabs$). 
Their characteristic function is defined on the partitions of the players into $r$ groups, one per alternative.
The marginal contribution of a player $p$ corresponds to the variation in the value of the characteristic function when the vote of $p$ moves from one group to another.
The resulting power index evaluates the marginal contributions over the partitions of $[\cardM]$, of which there are up to $\totLabs^{\cardM}$, instead of over the $2^{\cardM}$ coalitions considered in our work.
Note that such indices range over all possible votes, thus in an unweighted setting all players are symmetric, and every player receives the same value, i.e., the indices cannot distinguish the classifiers.
\citet{Kurz2020,Mayer2021} characterize the weight vectors that yield the same outcome for every voting profile under plurality and other voting rules,
and \citet{Kurz2021} define new power indices for weighted committees.
\citet{Chua2002} compare power indices for weighted plurality, modeled as a simple game.
We remark that none of the above works considers a notion of correctness for the votes of the players.
In contrast, our power indices are computed for the votes of the classifiers, and vary across classifiers with different predictions even though the votes are unweighted, as shown both by our theoretical results (see \Cref{theo:BanzMC,theo:ShapleyMCCorrect}) and in practice (see \Cref{sec:exps}).

\section{Background}
\label{appendix:section:background}

\subsection{Cooperative game theory}
\label{appendix:subsection:CGT}

Both the Banzhaf and Shapley values are grounded in well-established axiomatic foundations 
from cooperative game theory~\citep{Chalkiadakis2012ComputationalBook}, which we recall below. 
The Shapley values is uniquely characterized by the four properties of
dummy player, symmetry, linearity, and efficiency, 
whereas the Banzhaf value satisfies the properties of dummy player, symmetry and linearity. 

\begin{property}[Dummy player]
\label{property:dummy}
Given a game $G= (N,v)$ a player $i\in N$ is said to be a \emph{dummy player} 
if it holds that $v(S+i) = v(S)$ for all coalitions $S\subseteq[N]$ that do not contain $i$.
The value $\psi_i(v)$ of a dummy player in a game $G$ should equal to 0. 
\end{property}

\begin{property}[Symmetry]
\label{property:symmetry}
If $i$ and $j$ are two players in a game $G= (N,v)$ 
who are equivalent, in the sense that 
$v(S+i)=v(S+j)$ for any coalition $S\subseteq N$ that contains neither $i$ nor $j$, 
then it should be $\psi_i(v)=\psi_j(v)$.
\end{property}

\begin{property}[Linearity]
\label{property:linearity}
If two coalition games $G_v= (N,v)$ and $G_w= (N,w)$ are combined, 
then their combined values should correspond to the  values derived from $v$ 
and the values derived from~$w$, that is, it should be
$\psi_i(v+w) = \psi_i(v)+\psi_i(w)$, for every $i\in N$
Also, for any real number $\alpha$, it should be 
$\psi_i(\alpha\,v) = \alpha\,\psi_i(v)$, for every $i\in N$.
\end{property}

\begin{property}[Efficiency]
\label{property:efficiency}
The sum of the values of all players in a game $G= (N,v)$ 
should be equal to the value of the grand coalition, i.e., 
$\sum_{i\in N} \psi_i(v) = v(N)$. 
\end{property}

A \emph{sum game} is the cooperative game whose characteristic function is 
the point\-wise sum of those of a set of games defined over the same set of players.

\begin{definition}[Sum game]
\label{def:sumGame}
	Given an integer $k\ge 2$, and characteristic function games $G_1= (N,v_1), \dots, G_k= (N,v_k)$ the 
  \emph{sum} of $G_1, \dots, G_k$ denoted with $G_+ = G_1+\dots + G_k = \sum_i G_i$ 
  is a game $G_+=(N,v_+)$ such that for any coalition 
  $S\subseteq N$ it holds $v_+(S) = \sum_{i=1}^k v_i(S)$.
\end{definition}

\emph{Weighted majority games} (WMGs) are a classical model of voting systems 
in which each player carries a weight and a coalition wins 
precisely when its total weight reaches a fixed quota
\citep{Chalkiadakis2012ComputationalBook,Matsui2000WMG}.

\begin{definition}[Weighted majority game (WMG)]
\label{def:WMG}
	Let $N = \{1,\dots,n\}$ be a finite set of players. A \emph{weighted majority game} is a characteristic function game $(N,v)$ for which given weights $w_1,\dots,w_n \in \mathbb{R}_{\ge 0}$ and a quota $q > 0$ such that, for every coalition $S \subseteq N$,
	\[
	v(S) =
	\begin{cases}
		1 & \text{if } \sum_{i \in S} w_i \ge q, \\
		0 & \text{otherwise} \,  .
	\end{cases}
	\]
\end{definition}

\subsection{Combinatorics}

We recall some standard combinatorial results used in our analysis.

\begin{lemma}[Pascal's rule]
\label{lemma:auxNcK}
	For any $n,k\in \mathbb{N}_{>0}$ it holds
	\[
	\binom{n-1}{k}
	+
	{n-1 \choose k-1} = {n \choose k} \, ,
	\]
	where we assume ${n \choose k} = 0$ whenever $n < k$.
\end{lemma}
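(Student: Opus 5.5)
This is Pascal's rule, so I would argue combinatorially, by counting $k$-element subsets of an $n$-element set, with the convention $\binom{n}{k}=0$ for $n<k$. Fix $n,k\in\mathbb{N}_{>0}$ and the ground set $[n]=\{1,\dots,n\}$. I would split the $k$-subsets of $[n]$ according to whether they contain the distinguished element $n$. The two classes are disjoint and together exhaust all $k$-subsets.

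First, the $k$-subsets that avoid $n$ are exactly the $k$-subsets of $[n-1]$, so there are $\binom{n-1}{k}$ of them. Second, the $k$-subsets that contain $n$ correspond bijectively to the $(k-1)$-subsets of $[n-1]$ via $T\mapsto T\setminus\{n\}$, whose inverse is $T'\mapsto T'\cup\{n\}$. Hence there are $\binom{n-1}{k-1}$ of them. Adding the two counts gives $\binom{n}{k}$.

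The only step needing care is checking the boundary cases against the zero convention, which I expect to be routine.
\begin{itemize}
\item If $k>n$, both classes are empty, and $n-1<k$ and $n-1<k-1$ hold, so all three binomials are $0$.
\item If $k=n$, the first class is empty, matching $\binom{n-1}{n}=0$, and the second class has one element, matching $\binom{n-1}{n-1}=1=\binom{n}{n}$.
\item If $k=1$, the term $\binom{n-1}{0}=1$ counts the singleton $\{n\}$.
\end{itemize}

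Since $n\ge 1$, the set $[n-1]$ is well-defined, possibly empty. In every case the combinatorial counts coincide with the stated binomial values, which proves the identity.
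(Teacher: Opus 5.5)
Your double-counting argument is correct and complete. The paper does not prove this lemma at all: it recalls Pascal's rule as a standard combinatorial fact in its background section, so there is no competing route to compare against.

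You read $\binom{a}{b}$ as the number of $b$-element subsets of an $a$-element set, so the convention $\binom{a}{b}=0$ for $a<b$ is already built into the counting. Your boundary-case checks are therefore a sanity check rather than a logically required step.

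The same argument also works verbatim for $k=0$: no $0$-subset contains $n$, which matches $\binom{n-1}{-1}=0$. This is worth noting because the paper uses exactly that case in the proof of \Cref{theo:Banz01} when $n_-=0$ or $n_+=0$. That is slightly outside the lemma's stated range $k\in\mathbb{N}_{>0}$.
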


\begin{lemma}[Vandermonde's identity]
\label{lemma:vand}
	For any non-negative integers $a,b,c \in \mathbb{N}_{>0}$ it holds
	\[
	\sum_{j=0}^c {a\choose j}{b \choose c-j} = {a+b\choose c}\,  .
	\]
\end{lemma}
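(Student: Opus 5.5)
The statement to prove is Vandermonde's identity (\Cref{lemma:vand}). I would give a double-counting argument. Fix disjoint sets $X$ and $Y$ with $|X|=a$ and $|Y|=b$. The right-hand side $\binom{a+b}{c}$ counts the $c$-element subsets $T\subseteq X\cup Y$.

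The first step is to partition these subsets by $j \doteq |T\cap X|$, which ranges over $0,\dots,c$. Once $j$ is fixed, the set $T$ is determined uniquely by the pair $(T\cap X,\, T\cap Y)$. The first component is a $j$-subset of $X$ and the second is a $(c-j)$-subset of $Y$. Conversely, every such pair yields a distinct $c$-subset $T$ with $|T\cap X|=j$, since $X$ and $Y$ are disjoint. This bijection gives exactly $\binom{a}{j}\binom{b}{c-j}$ subsets in the class indexed by $j$, and summing over $j$ gives the identity.

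The only care needed is the boundary convention: $\binom{m}{k}=0$ when $k>m$ or $k<0$, as fixed in \Cref{theo:Shap01} and \Cref{lemma:auxNcK}. With this convention, the terms with $j>a$ or $c-j>b$ vanish, which matches the fact that those classes are empty. The identity therefore holds for all nonnegative integers $a,b,c$ and not only positive ones. If $c>a+b$, both sides are zero.

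As an independent check, I would compare the coefficient of $x^c$ on both sides of $(1+x)^a(1+x)^b=(1+x)^{a+b}$ using the binomial theorem. This is the ordinary generating function view used throughout \Cref{subsec:multiclass}.

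I expect no real obstacle here. The only point to state explicitly is the zero-binomial convention at the summation limits.
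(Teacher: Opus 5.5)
Your double-counting argument is correct and complete. The paper gives no proof to compare against: it only recalls this identity as a standard fact in its background section, and your argument (partitioning the $c$-subsets of $X\cup Y$ by $|T\cap X|$, cross-checked via the coefficient of $x^c$ in $(1+x)^a(1+x)^b=(1+x)^{a+b}$) is the usual textbook proof.

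Keep your remark that the identity holds for all non-negative $a,b,c$, not only for the $\mathbb{N}_{>0}$ the statement literally says, because the paper needs that generality. The proof of \Cref{theo:Banz01} applies the identity with $a=\corrPred-1$ and $b=\wrongPred$, either of which can be $0$. It also applies it with $c=\wrongPred-1$, which equals $-1$ when $\wrongPred=0$; there both sides vanish under the zero-binomial convention, since the left sum is empty and $\binom{a+b}{-1}=0$.
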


\begin{definition}[\citet{Flajolet2013BookComb}, Chapter 1]\label{def:ogf}
	The \emph{ordinary generating function}  of a sequence $\{A_j\}$ is the power series 
	\[
	P(x)=\sum_{j=0}^\infty A_j x^j\, .
	\]
	Variable $x$ is denoted as the ordinary generating function \emph{indeterminate}. We denote with $[x^\ell]P(x)$ the operation of extracting the coefficient of $x^\ell$ in $P(x)$.
\end{definition}

\section{Proofs of main results}
\label{app:proofs}

\subsection{Auxiliary technical results}
\begin{lemma}\label{lemma:marginalBinaryUnweighted}
	Let  $(\xp, \y)$ be a data point and consider the characteristic function $\charBin$.
  Without loss of generality let $y=\y_1\in \labelSet$. 
  For any classifier $\model_i\in \modelSet$, and any coalition 
  $\coal \subseteq \modelSet$ with $i\in\coal$ and $\coal-i\neq \emptyset$, 
  the marginal gain $\charBinS{\coal}-\charBinS{\coal-i}$ takes values
	\begin{equation*}
    \charBinS{\coal}-\charBinS{\coal-i} = 
		\begin{cases}
			\frac{1}{2}, & \text{if } \model_i(\xp) = \y, (\VotVecCoord_{1}(\coal-i) = \VotVecCoord_{2}(\coal-i) ) \text{ or } (\VotVecCoord_1(\coal-i) = \VotVecCoord_2(\coal-i)-1)\, ,\\
			-\frac{1}{2}, & \text{if } \model_i(\xp) \neq \y, (\VotVecCoord_1(\coal-i) = \VotVecCoord_2(\coal-i)) \text{ or } (\VotVecCoord_1(\coal-i)-1 = \VotVecCoord_2(\coal-i))\, ,\\
			0, & \text{otherwise} \, .
		\end{cases}
	\end{equation*}
	If $\coal-i = \emptyset$ then $\charBinS{\coal}-\charBinS{\coal-i}$ is 1 if $\model_i(\xp) = \y$, and 0 if $\model_i(\xp) \neq \y$.
\end{lemma}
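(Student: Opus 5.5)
The plan is a direct case analysis on the vote profile of $\coal-i$. In the binary setting, write $a=\VotVecCoord_1(\coal-i)$ and $b=\VotVecCoord_2(\coal-i)$. From \Cref{eq:charBin}, $\charBin$ depends only on $(a,b)$ through the sign of $a-b$: it is $1$ if $a>b$, $\frac12$ if $a=b$, and $0$ if $a<b$. This holds as long as the coalition is non-empty. Adding $\model_i$ changes exactly one coordinate by $+1$: it increments $a$ if $\model_i(\xp)=\y$, and $b$ otherwise. So the marginal gain depends only on $d=a-b$ before and after the increment, and we can tabulate it.

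\textbf{Correct classifier.} Here $d\mapsto d+1$, and the value changes only when the sign of $d$ changes. Three cases arise.
\begin{itemize}
\item If $d=-1$, the value moves from $0$ to $\frac12$; this is the case $\VotVecCoord_1(\coal-i)=\VotVecCoord_2(\coal-i)-1$.
\item If $d=0$, the value moves from $\frac12$ to $1$.
\item If $d\ge 1$ or $d\le -2$, the sign is preserved and the gain is $0$.
\end{itemize}

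\textbf{Incorrect classifier.} Symmetrically, $d\mapsto d-1$. Three cases arise again.
\begin{itemize}
\item If $d=1$, the value moves from $1$ to $\frac12$, i.e.\ $\VotVecCoord_1(\coal-i)-1=\VotVecCoord_2(\coal-i)$.
\item If $d=0$, the value moves from $\frac12$ to $0$, so the gain is $-\frac12$.
\item Otherwise the gain is $0$.
\end{itemize}
The two correct-classifier conditions are mutually exclusive, as are the two incorrect-classifier conditions, so the case split in the statement is well defined.

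\textbf{Empty coalition.} The only delicate point is that the convention $\charBin(\emptyset)=0$ breaks the rule "value determined by $\operatorname{sign}(a-b)$": $\emptyset$ has $a=b=0$ but value $0$ rather than $\frac12$. This is why the hypothesis $\coal-i\neq\emptyset$ is needed. When $\coal-i=\emptyset$, we have $\coal=\{\model_i\}$. Then $\charBin(\coal)=1$ if $\model_i$ is correct, since $\tieSet=\{\y\}$. It is $0$ otherwise, since $\tieSet$ is the other label alone. Subtracting $\charBin(\emptyset)=0$ gives gains of $1$ and $0$, as claimed.

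No step is a real obstacle. The only care needed is to check that every non-empty coalition falls into exactly one of the three value cases of \Cref{eq:charBin}, which holds because with two labels $|\tieSet|\in\{1,2\}$.
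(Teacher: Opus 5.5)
Your proof is correct and follows essentially the same route as the paper. The paper also disposes of $\coal-i=\emptyset$ via the convention $\charBin(\emptyset)=0$, then case-splits on $\charBin(\coal-i)\in\{0,\frac{1}{2},1\}$, which is your split on the sign of $d=\VotVecCoord_1(\coal-i)-\VotVecCoord_2(\coal-i)$ refined by whether $d=\pm 1$; your shift $d\mapsto d\pm 1$ is just a more compact presentation of that argument.
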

\begin{proof}[Proof of~\Cref{lemma:marginalBinaryUnweighted}]
	We fix a data point $(\xp, \y=\y_1)$ and a classifier $\model_i \in \modelSet$.
  To compute the marginal gain $\charBinS{\coal}-\charBinS{\coal-i}$, 
  for any $\coal \subseteq \modelSet$ with $i\in \coal$, 
	first note that since $\charBin(\emptyset)=0$, 
  it follows that if $\coal-i = \emptyset$ then
  $\charBinS{\coal}-\charBinS{\coal-i}$ is $1$ when $\model_i(\xp) = \y$, and
  it is $0$ when $\model_i(\xp) \neq \y$.
  Hence we focus on the case $\coal-i\neq \emptyset$.
	
	\textbf{Correct classifier.}  ($\model_i(\xp) = \y$)
	We distinguish the following cases, which follow from the definition of~$\charBin(\cdot)$, see~\Cref{eq:charBin}:
	\begin{compactenum}[($i$)]
		\item $\charBinS{\coal-i} = 0$, hence $\VotVecCoord_1(\coal-i) < \VotVecCoord_2(\coal-i)$;
		\item $\charBinS{\coal-i} = \frac{1}{2}$, hence $\VotVecCoord_1({\coal-i}) = \VotVecCoord_2(\coal-i)$; and
		\item $\charBinS{\coal-i} = 1$, hence $\VotVecCoord_1(\coal-i) > \VotVecCoord_2(\coal-i)$.
	\end{compactenum} 
	Clearly for case ($iii$) it follows that $\charBinS{\coal} = 1$, and 
  therefore $\charBinS{\coal}-\charBinS{\coal-i}=0$. 
	For case ($i$) it holds that if $\VotVecCoord_1(\coal-i) = \VotVecCoord_2(\coal-i)-1$ 
	then $\charBinS{\coal} = \frac{1}{2}$, otherwise 
  if $\VotVecCoord_1(\coal-i) < \VotVecCoord_2(\coal-i)-1$ then clearly $\charBinS{\coal}=0$. 
  It follows that for case ($i$) 
  it is $\charBinS{\coal}-\charBinS{\coal-i}=\frac{1}{2}$ 
  if and only if $\VotVecCoord_1(\coal-i) = \VotVecCoord_2(\coal-i)-1$.
	Finally, for case ($ii$) if $\charBinS{\coal-i} = \frac{1}{2}$ it holds that $\charBinS{\coal} = 1$, proving the first part of the claim.
	
	\textbf{Incorrect classifier.} ($\model_i(\xp) \neq \y$) 
  The claim follows by a symmetrical argument 
  noting that any classifier $i\in \modelSet$ making an incorrect prediction 
  can only decrease the value of $\charBin(\coal-i)$, for any $\coal-i\subseteq\modelSet$. 
	Specifically,
	\begin{compactenum}[($i$)]
		\item if $\charBinS{\coal-i} = 1$ and $\VotVecCoord_1(\coal-i) - 1 = \VotVecCoord_2(\coal-i)$, then $\charBinS{\coal} = \frac{1}{2}$, thus $\charBinS{\coal}-\charBinS{\coal-i}=-\frac{1}{2}$.
		\item If $\charBinS{\coal-i} = \frac{1}{2}$ then $\VotVecCoord_1(\coal-i)=\VotVecCoord_2(\coal-i)$.
    Hence, a classifier $i$ making an incorrect prediction breaks the tie negatively, 
    so $\charBinS{\coal} = 0$, yielding a marginal contribution of $-\frac{1}{2}$.
		\item In all other cases, the characteristic function value does not change, yielding a marginal contribution of $0$.
	\end{compactenum}
\end{proof}

\begin{lemma}\label{lemma:marginalMCUnweighted}
	Let $(\xp, \y)$ be a data point and consider the characteristic function $\charMC$. 
  Without loss of generality let $\y=\y_1\in \labelSet$. 
	For any classifier $\model_i\in \modelSet$ with $\model_i(\xp) = \y$, 
  and any coalition $\coal \subseteq \modelSet$ with $i\in\coal$ and $\coal-i\neq \emptyset$, 
  the marginal gain $\charMC(\coal)-\charMC(\coal-i)$ takes values
	\[
	\begin{cases}
		\frac{1}{|\tieSet(\coal-i)|+1}, & \text{if } |\tieSet(\coal-i)|>0 \text{ and } \VotVecCoord_1(\coal-i) = \VotVecCoord_j(\coal-i)-1, \text{ for each } j\in\tieSet(\coal-i)\, ,\\
		\frac{|\tieSet(\coal-i)|-1}{|\tieSet(\coal-i)|}, & \text{if } |\tieSet(\coal-i)|>1 \text{ and } \VotVecCoord_1(\coal-i) = \VotVecCoord_j(\coal-i), \text{ for each } j\in\tieSet(\coal-i)\, ,\\
		0, & \text{otherwise} \, .
	\end{cases}
	\]
	On the other hand, 
  for any $\model_i\in \modelSet$ with $\model_i(\xp) = \y_j$, 
  where $\y_j\neq \y_1$, and any coalition $\coal \subseteq \modelSet$ with $i\in\coal$ and $\coal-i\neq \emptyset$, the marginal gain $\charMC(\coal)-\charMC(\coal-i)$ takes values
	\[
	\begin{cases}
		-\frac{1}{|\tieSet(\coal-i)|}, & \text{if } \VotVecCoord_{j}(\coal-i) = \VotVecCoord_{j'}(\coal-i), \text{ for each } j'\in\tieSet(\coal-i) \text{ and } 1 \in \tieSet(\coal-i)\, ,\\
		-\frac{1}{|\tieSet(\coal-i)|(|\tieSet(\coal-i)|+1)}, & \text{if } \VotVecCoord_{j}(\coal-i) = \VotVecCoord_{j'}(\coal-i)-1, \text{ for each } j'\in\tieSet(\coal-i) \text{ and } 1 \in \tieSet(\coal-i)\, ,\\
		0, & \text{otherwise} \, .
	\end{cases}
	\]
	Finally, if $\coal-i = \emptyset$, 
  then $\charMC(\coal)-\charMC(\coal-i)$ is $1$ if $\model_i(\xp) = \y$, and 
  it is $0$ if $\model_i(\xp) \neq \y$.
\end{lemma}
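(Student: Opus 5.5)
The plan is a direct case analysis on $\charMC(\coal-i)$, carried out separately for a correct classifier and an incorrect one. Write $T=\coal-i\neq\emptyset$, $\VotVec=\VotVec(T)$, and $M=\max_{j}\VotVecCoord_j$. Adding classifier $i$ increments exactly one coordinate of $\VotVec$. The proof reduces to tracking how the majority set $\tieSet$ changes under that increment, and then reading off $\charMC$ from \Cref{eq:charMC}. The case $T=\emptyset$ is immediate, since $\charMC(\emptyset)=0$ and $\charMC(\{i\})$ equals $1$ exactly when $i$ votes correctly.

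\textbf{Correct classifier.} Adding $i$ raises $\VotVecCoord_1$ to $\VotVecCoord_1+1$, and I split into three cases.
\begin{compactenum}[($i$)]
\item If $\VotVecCoord_1\le M-2$, label $\y_1$ remains outside the majority set both before and after, so the gain is $0$.
\item If $\VotVecCoord_1=M-1$, then $\y_1\notin\tieSet(T)$ and $\charMC(T)=0$. After the increment, the labels with $M$ votes are unchanged and $\y_1$ joins them. So $\tieSet(\coal)=\tieSet(T)\cup\{\y_1\}$, $\charMC(\coal)=1/(|\tieSet(T)|+1)$, and this is the first case of the statement.
\item If $\VotVecCoord_1=M$, then $\y_1\in\tieSet(T)$ and $\charMC(T)=1/|\tieSet(T)|$. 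After the increment $\y_1$ is the unique maximum, so $\charMC(\coal)=1$ and the gain is $1-1/|\tieSet(T)|=(|\tieSet(T)|-1)/|\tieSet(T)|$. This is nonzero only when $|\tieSet(T)|>1$, which matches the second case.
\end{compactenum}
The statement phrases the conditions using $\tieSet(T)$ counted over the incorrect labels in the tie. I will make this convention explicit so that the indexing agrees: in case ($ii$) $\tieSet(T)$ consists of the labels with $\VotVecCoord_j=\VotVecCoord_1+1$, and in case ($iii$) it includes $\y_1$ itself.

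\textbf{Incorrect classifier voting $\y_j$.} Adding $i$ raises $\VotVecCoord_j$. If $\y_1\notin\tieSet(T)$, then $\VotVecCoord_1<M$ still holds after the increment, so both values are $0$ and the gain is $0$. Otherwise $\y_1\in\tieSet(T)$, $\charMC(T)=1/|\tieSet(T)|$, and I split into three subcases.
\begin{compactenum}[($a$)]
\item If $\VotVecCoord_j=M$, then $\y_j$ becomes the unique maximum, $\charMC(\coal)=0$, and the gain is $-1/|\tieSet(T)|$.
\item If $\VotVecCoord_j=M-1$, then $\y_j$ joins the tie, $\charMC(\coal)=1/(|\tieSet(T)|+1)$, and the gain is $1/(|\tieSet(T)|+1)-1/|\tieSet(T)|=-1/(|\tieSet(T)|(|\tieSet(T)|+1))$.
\item If $\VotVecCoord_j\le M-2$, the majority set is unchanged and the gain is $0$.
\end{compactenum}
Subcases ($a$) and ($b$) are exactly the conditions "$\VotVecCoord_j=\VotVecCoord_{j'}$ for all $j'\in\tieSet$" and "$\VotVecCoord_j=\VotVecCoord_{j'}-1$ for all $j'\in\tieSet$", each together with $1\in\tieSet(T)$.

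The computations themselves are routine. The only delicate point, which I expect to be the main obstacle, is reconciling the indexing of $\tieSet(T)$ in the correct-classifier cases. The first case's condition $\VotVecCoord_1=\VotVecCoord_j-1$ for $j\in\tieSet(T)$ forces $\y_1\notin\tieSet(T)$. The second case requires $\y_1\in\tieSet(T)$ with $|\tieSet(T)|>1$, meaning $\y_1$ is tied with at least one incorrect label. I will check that these cases are exhaustive and mutually exclusive under the literal definition of $\tieSet$. The proof closes by observing that every configuration not covered by these conditions falls into a zero-gain case.
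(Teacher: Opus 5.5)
Your case analysis is correct and is essentially the paper's proof, which also splits on whether $\charMC(\coal-i)$ equals $0$ or $1/|\tieSet(\coal-i)|$ and then on how far the incremented coordinate lies below the maximum (zero, one, or at least two votes), reading off the new value from the majority set. The ``convention'' you plan to introduce is unnecessary: the statement uses the literal majority set, and your cases already agree with it, since the first condition forces $1\notin\tieSet(\coal-i)$ and the second forces $1\in\tieSet(\coal-i)$ with $|\tieSet(\coal-i)|>1$.
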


\begin{proof}[Proof of~\Cref{lemma:marginalMCUnweighted}]
	We fix a data point $(\xp, \y=\y_1)$ and a classifier $\model_i \in \modelSet$.
  We compute the marginal gain $\charMC(\coal)-\charMC(\coal-i)$ 
  for any $\coal \subseteq \modelSet$ with $i\in \coal$. 
	First note that since $\charMC(\emptyset)=0$, if $\coal-i = \emptyset$ then
  $\charMC(\coal)-\charMC(\coal-i)$ is $1$ when $\model_i(\xp) = \y$, and
  it is $0$ when $\model_i(\xp) \neq \y$.
  Hence we focus on the case $\coal-i\neq \emptyset$.
	
	We distinguish the following cases, which follow from the definition of~$\charMC(\cdot)$, 
  see~\Cref{eq:charMC}:
	\begin{compactenum}[($i$)]
		\item $\charMC(\coal-i) = 0$, hence $\VotVecCoord_1(\coal-i) < \VotVecCoord_j(\coal-i)$ for $j\in \tieSet(\coal-i)$;
		\item $\charMC(\coal-i) = 1/|\tieSet(\coal-i)|$, hence $\VotVecCoord_1({\coal-i}) = \VotVecCoord_j(\coal-i)$ for $j\in \tieSet(\coal-i)$, $|\tieSet(\coal-i)|\ge1$;
	\end{compactenum} 
	 
  \textbf{Correct classifier.} ($\model_i(\xp) = \y$)
We first consider case ($i$) for which $1\notin \tieSet(\coal-i)$. 
Note that $\VotVecCoord_1(\coal) = \VotVecCoord_1(\coal-i)+1$ and hence if 
  $\VotVecCoord_1(\coal-i)+1 < \VotVecCoord_j(\coal-i) = \VotVecCoord_j(\coal)$ then 
  $\charMC(\coal) = 0$ and therefore $\charMC(\coal)-\charMC(\coal-i)=0$. 
  Thus, it holds $\charMC(\coal)-\charMC(\coal-i)\neq 0$ if and only if 
  $\VotVecCoord_1(\coal-i)+1 = \VotVecCoord_j(\coal-i)$ for each $j\in \tieSet(\coal-i)$. 
  Therefore $\charMC(\coal) = 1/(|\tieSet(\coal-i)|+1)$ and in turn 
  $\charMC(\coal)-\charMC(\coal-i)= 1/(|\tieSet(\coal-i)|+1)$. 
	
  For case ($ii$) note first that $\VotVecCoord_j(\coal)=\VotVecCoord_j(\coal-i)$ for each $j\ge 2$,
  since $\model_i(\xp) = \y$.
  Hence, for each $j\in \tieSet(\coal-i), j\ge 2$, it holds
  $\VotVecCoord_1(\coal) = \VotVecCoord_1(\coal-i)+1 = \VotVecCoord_j(\coal-i) + 1> \VotVecCoord_j(\coal)$,
  and this also holds for each $j\notin \tieSet(\coal-i), j\ge 2$, that is
  $\VotVecCoord_j(\coal-i) < \VotVecCoord_1(\coal-i)$.
  Therefore $\tieSet(\coal) = \{1\}$ and $\charMC(\coal)=1$.
Note that it must be that $|\tieSet(\coal-i)|>1$ otherwise if $|\tieSet(\coal-i)|=1$ then $\charMC(\coal) = \charMC(\coal-i)$, yielding no marginal gain for the addition of $i$ to $\coal-i$.
Therefore, we have that $\charMC(\coal)-\charMC(\coal-i) = 1 - 1/|\tieSet(\coal-i)|$. 
This yields the claim for correct classifiers.

  \textbf{Incorrect classifier.}  ($\model_i(\xp) = \y_j, j\ge 2$)
	For case ($i$) we have that $\VotVecCoord_1(\coal) = \VotVecCoord_1(\coal-i)$,
  since $\model_i(\xp) \neq \y_1$, while
  $\max_{j'\in[\totLabs]}\VotVecCoord_{j'}(\coal) \ge \max_{j'\in[\totLabs]}\VotVecCoord_{j'}(\coal-i) > \VotVecCoord_1(\coal-i)$,
  where the last inequality is the defining property of case ($i$).
  Hence $1\notin \tieSet(\coal)$, and therefore $\charMC(\coal) = 0 = \charMC(\coal-i)$.
  That is, for case ($i$) it holds that $\charMC(\coal)-\charMC(\coal-i) = 0$.
	
  For case ($ii$) we distinguish the following two sub-cases
	\begin{compactenum}
		\item[($ii$.$a$)] $\VotVecCoord_j(\coal-i) = \VotVecCoord_{j'}(\coal-i)$ for each $j'\in\tieSet(\coal-i)$ and $1\in \tieSet(\coal-i)$;
		\item[($ii$.$b$)]  $\VotVecCoord_j(\coal-i) = \VotVecCoord_{j'}(\coal-i)-1$ for each $j'\in\tieSet(\coal-i)$ and $1\in \tieSet(\coal-i)$.
	\end{compactenum}
	For ($ii$.$a$) first note that $\charMC(\coal-i)=1/(|\tieSet(\coal-i)|)$ by~\Cref{eq:charMC}. 
  Thus, $\VotVecCoord_j(\coal) = \VotVecCoord_j(\coal-i)+1 > \VotVecCoord_{j'}(\coal-i)$ 
  for each $j'\in\tieSet(\coal-i)$,
  implying $\charMC(\coal) = 0$ and therefore 
  $\charMC(\coal) -\charMC(\coal-i) = 0-1/(|\tieSet(\coal-i)|) = -1/|\tieSet(\coal-i)|$. 

  For ($ii$.$b$) it holds $\charMC(\coal-i) = 1/(|\tieSet(\coal-i)|)$. 
	Since $\VotVecCoord_j(\coal-i) = \VotVecCoord_{j'}(\coal-i)-1$ we have that $\VotVecCoord_j(\coal) = \VotVecCoord_j(\coal-i)+1= \VotVecCoord_{j'}(\coal-i)$ and therefore  
  $j\notin \tieSet(\coal-i)$ while $j\in\tieSet(\coal)$. 
	This implies that $\charMC(\coal) = 1/(1+|\tieSet(\coal-i)|)$ and therefore 
  $\charMC(\coal) - \charMC(\coal-i) 
      = 1/(1+|\tieSet(\coal-i)|) - 1/(|\tieSet(\coal-i)|) 
      = -1/[(|\tieSet(\coal-i)|)(1+|\tieSet(\coal-i)|)]$. 

	Finally, it is sufficient to observe that in all other cases where $1\notin \tieSet(\coal-i)$ the addition of $i$ can never alter the value of $\charMC(\coal-i)$. Similarly, if $1\in\tieSet(\coal-i)$ and $\VotVecCoord_{j'}(\coal-i) - \VotVecCoord_j(\coal-i) \ge 2$ for $j' \in \tieSet(\coal-i)$, then $\VotVecCoord_j(\coal) = \VotVecCoord_j(\coal-i)+1 < \VotVecCoord_{j'}(\coal-i)$, thus $\tieSet(\coal)=\tieSet(\coal-i)$ and the addition of $i$ yields $\charMC(\coal)=\charMC(\coal-i)$. 
	Since $\VotVecCoord_{j'}(\coal-i) = \max_{h\in[\totLabs]}\VotVecCoord_h(\coal-i)$ for $j'\in\tieSet(\coal-i)$, our case split for $\VotVecCoord_{j'}(\coal-i) - \VotVecCoord_j(\coal-i)$ in $\{0,1, \ge 2\}$ is exhaustive, which proves the claim for an incorrect classifier and concludes the proof.
\end{proof}

\subsection{Proofs of~\Cref{sec:prelim}}

\begin{lemma}[Restatement of \Cref{lemma:charNonProp}]
  \label{lemma:no-super-no-mono}
	The characteristic function $\charBin: 2^\modelSet \to \{0,\frac{1}{2},1\}$ is  
	\begin{compactenum}
		\item[($i$)] neither supermodular;
		\item[($ii$)] nor superadditive;
		\item[($iii$)] nor monotone.
	\end{compactenum}
\end{lemma}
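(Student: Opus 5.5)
Each of the three properties is a universal statement over coalitions, so it suffices to exhibit a small explicit counterexample for each. Fix a data point $\evz=(\xp,\y)$ with $\y=\y_1$, and use \Cref{lemma:marginalBinaryUnweighted} (or directly \Cref{eq:charBin}) to evaluate $\charBin$ on coalitions described only by their numbers of correct and incorrect classifiers. Throughout, assume the ensemble contains at least two correct classifiers $\model_a,\model_b$ and at least one incorrect classifier $\model_c$. This is needed since, for instance, if all classifiers are correct then $\charBin$ is $1$ on every nonempty coalition.

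\textbf{Monotonicity ($iii$).} Take $\coal=\{\model_a\}\subseteq \coal'=\{\model_a,\model_c\}$. Then $\charBin(\coal)=1$ but $\charBin(\coal')=\frac12$ because of the tie. Hence $\charBin(\coal')<\charBin(\coal)$ although $\coal\subseteq\coal'$. (Already $\{\model_c\}$ versus $\emptyset$ gives $0\ge 0$ only, so the first example is the one to use.)

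\textbf{Superadditivity ($ii$).} We need disjoint $\coal,T$ with $\charBin(\coal\cup T)<\charBin(\coal)+\charBin(T)$. Take $\coal=\{\model_a\}$ and $T=\{\model_b\}$. Each has value $1$, while $\coal\cup T=\{\model_a,\model_b\}$ has value $1<2$. This works because $\charBin\le 1$, so any two disjoint coalitions that are both correct by majority violate superadditivity. Alternatively, $\{\model_a\}$ and $\{\model_c\}$ give $\frac12<1+0$, which needs only one correct and one incorrect classifier.

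\textbf{Supermodularity ($i$).} We need $\charBin(\coal\cup T)+\charBin(\coal\cap T)<\charBin(\coal)+\charBin(T)$. The same pair $\coal=\{\model_a\}$, $T=\{\model_b\}$ gives $1+\charBin(\emptyset)=1<2$, using the convention $\charBin(\emptyset)=0$. If one prefers a witness that avoids the empty set, use the equivalent increasing-marginals formulation: the marginal gain of $\model_a$ on $\emptyset$ is $1$, while on $\{\model_b\}$ it is $0$, so marginals are not increasing.

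The only real subtlety, and the main step to get right, is stating the nondegeneracy assumption and checking tie values carefully against \Cref{eq:charBin}. The claim fails for degenerate ensembles (e.g.\ $\cardM=1$), so the lemma should be read as holding for generic ensembles, namely those with at least two correct and one incorrect classifier. All the computations themselves are immediate.
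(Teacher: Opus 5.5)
Your proof is correct and follows the same route as the paper: explicit counterexamples evaluated directly from \Cref{eq:charBin}. The paper uses a single five-classifier instance (three correct, two incorrect) whose witnesses avoid the empty coalition and show the violations arising from ties, whereas your witnesses are smaller and your supermodularity witness relies on the convention $\charBin(\emptyset)=0$, which is legitimate. Your non-degeneracy remark is accurate (the paper reads the lemma existentially), and one correct and one incorrect classifier already suffice for all three parts; for instance, $\{\model_a\}$ and $\{\model_c\}$ give $\tfrac12+0<1+0$ for supermodularity as well.
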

\begin{proof}%
  We prove the lemma by providing counterexamples. 
	Consider an ensemble~$\modelSet = \{\model_1, \dots, \model_5\}$ with 5 classifiers. 
  Fix a data-point $z=(x,1)$ and let $\model_1(x)=\model_2(x)=\model_5(x)=1$ and $\model_3(x)=\model_4(x)=0$. Fix the coalition $\coal=\{\model_1,\dots,\model_4\}$. 
  Clearly, $\charBinS{\coal} = \frac{1}{2}$. 
  Let $\coal_1 = \{\model_1,\model_2\}$ and $\coal_2 = \{\model_3,\model_4\}$, 
  and note that $\coal_1\cap \coal_2 = \emptyset$. 
	Further, let $ \{\model_3\} = \coal_3 \subseteq  \coal_4 =\{\model_1,\model_2,\model_3\}$. 
  We have:
	\begin{compactenum}
		\item[($i$)] $1-1=\charBinS{\coal_4 \cup \{\model_5\}} - \charBinS{\coal_4} < \charBinS{\coal_3 \cup \{\model_5\}} - \charBinS{\coal_3} = \frac{1}{2}-0\, ;$
		\item[($ii$)] $\frac{1}{2} = \charBinS{\coal}= \charBinS{\coal_1\cup\coal_2} < \charBinS{\coal_1} + \charBinS{\coal_2} = 1 + 0\,  ;$
		\item[($iii$)] $\frac{1}{2} = \charBinS{\coal} < \charBinS{\coal_1} = 1 \, .$
	\end{compactenum}
\end{proof}

\begin{corollary}\label{corol:nonConvex}
	The characteristic function game $G=(\modelSet, \charBin)$ is non-convex.
\end{corollary}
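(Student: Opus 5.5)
The plan is to go through the standard definition of a convex game: $G=(\modelSet,\charBin)$ is \emph{convex} if and only if its characteristic function is supermodular. Equivalently, for all $\coal\subseteq\coal'\subseteq\modelSet$ and every player $i\notin\coal'$, we need $\charBin(\coal'\cup\{i\})-\charBin(\coal')\ge\charBin(\coal\cup\{i\})-\charBin(\coal)$. This is increasing marginal contributions, or equivalently $\chF(A\cup B)+\chF(A\cap B)\ge\chF(A)+\chF(B)$. I would first state this definition explicitly, so that non-convexity becomes exactly the failure of supermodularity.

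I would then invoke part ($i$) of \Cref{lemma:no-super-no-mono}, which already exhibits a violation of supermodularity. The setting is the five-classifier ensemble with the point $z=(x,1)$. We take $\coal_3=\{\model_3\}\subseteq\coal_4=\{\model_1,\model_2,\model_3\}$ and the player $\model_5\notin\coal_4$. Then the marginal gain of $\model_5$ on the larger set is $\charBinS{\coal_4\cup\{\model_5\}}-\charBinS{\coal_4}=1-1=0$. On the smaller set it is $\charBinS{\coal_3\cup\{\model_5\}}-\charBinS{\coal_3}=\tfrac12-0$. So the increasing-marginal-contributions inequality fails, and the game is not convex. For completeness I would briefly re-verify these values from \Cref{eq:charBin}:
\begin{itemize}
\item $\{\model_3\}$ votes only for the wrong label, so $\charBin=0$;
\item $\{\model_3,\model_5\}$ is a $1$--$1$ tie, so $\charBin=\tfrac12$;
\item $\coal_4$ has two correct votes against one, so $\charBin=1$;
\item adding $\model_5$ to $\coal_4$ gives three against one, so $\charBin$ stays $1$.
\end{itemize}

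As an alternative, the non-superadditivity in part ($ii$) also rules out convexity. A convex game with $\chF(\emptyset)=0$ is superadditive, since supermodularity applied to disjoint $\coal_1,\coal_2$ gives $\chF(\coal_1\cup\coal_2)\ge\chF(\coal_1)+\chF(\coal_2)$. I would mention this as a second route.

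There is no real obstacle. The only point needing care is to fix which (standard) definition of convexity is used, namely supermodularity of $\chF$, so that the corollary follows immediately from the lemma.
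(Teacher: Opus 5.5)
Your proposal is correct and follows the paper's argument. The paper's proof likewise identifies convexity with supermodularity of $\charBin$ (citing Shapley's characterization) and concludes from part ($i$) of \Cref{lemma:no-super-no-mono}, whose counterexample you re-verify correctly; your second route via non-superadditivity is also valid, since $\charBin(\emptyset)=0$ makes every supermodular game superadditive.
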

\begin{proof}[Proof of \Cref{corol:nonConvex}]
	It is a well-known result in cooperative game theory that 
  a game $G=(N, v)$ is convex if and only if 
  its characteristic function $v$ is supermodular~\citep{shapley1971cores}.
\end{proof}

\subsection{Proofs of~\Cref{subsec:binClass}}
\label{appendix:proofs-binary-case}

\begin{proof}[Proof of~\Cref{theo:Banz01}]
	Fix a test data point $\evz=(\xp,\y)\in\dataSet$ and, without loss of generality, let $\y=\y_1$.
	Expressing the summation in~\Cref{def:shapAndBanz} over the subsets of $\modelSet-i$,
  	the Banzhaf value of a classifier $i\in[\cardM]$ corresponds to 
	\[
	\banz_i(\game_{\evz})
	= \frac{1}{2^{\cardM-1}}\sum_{\coal\subseteq \modelSet-i}\left(\charBin(\coal+i)-\charBin(\coal)\right)
	= \E_{\coal}\left[\charBin(\coal+i)-\charBin(\coal)\right] \enspace ,
	\]
	where $\coal$ is a uniformly random subset of $\modelSet-i$. 
	That is, each classifier $j\in \modelSet-i$ is included in $\coal$ independently with probability $1/2$.
	Let $\corrPred(\coal) \doteq |\{j\in\coal: \model_j(\xp) = \y\}|$ and
  	$\wrongPred(\coal) \doteq |\{j\in\coal: \model_j(\xp) \neq \y\}|$
  	be the number of correct and of incorrect classifiers in $\coal$.
  	Recall $\corrPred$ and $\wrongPred$ corresponds to the number of correct and incorrect classifiers in $\modelSet$.
  	Furthermore, we have $\VotVecCoord_1(\coal) = \corrPred(\coal)$ and $\VotVecCoord_2(\coal) = \wrongPred(\coal)$.
	Note that the random variables $\corrPred(\coal)$ and $\wrongPred(\coal)$ are independent, since a model is either correct or not.
	Recall that we use the convention ${m \choose k} = 0$ for $k<0$ or $k>m$.

	\textbf{Correct classifier.} ($\model_i(\xp) = \y$)
	The set $\modelSet-i$ contains $\corrPred-1$ correct and $\wrongPred$ incorrect classifiers,
  	thus $\corrPred(\coal)\sim \Bin(\corrPred-1,1/2)$ and $\wrongPred(\coal) \sim \Bin(\wrongPred,1/2)$.
	By~\Cref{lemma:marginalBinaryUnweighted}, for $\coal\neq\emptyset$ the marginal gain of $i$ is $\frac{1}{2}$
  	if $\wrongPred(\coal) - \corrPred(\coal) \in \{0,1\}$ and it is $0$ otherwise, while for $\coal=\emptyset$ it is $1$.
	Since, conditioning on $\coal = \emptyset$ it holds $\wrongPred(\coal) - \corrPred(\coal) = 0$, we have that for every $\coal\subseteq\modelSet-i$,
	\[
	\charBin(\coal+i)-\charBin(\coal)
      = \frac{1}{2}\ind[\wrongPred(\coal) - \corrPred(\coal) \in \{0,1\}] + \frac{1}{2}\ind[\coal = \emptyset] \enspace.
	\]
	Thus, 
	\begin{equation}
	\label{eq:banzExpectationCorrect}
	\banz_i(\game_{\evz}) = \frac{1}{2}\Pr[\wrongPred(\coal) = \corrPred(\coal)]
      + \frac{1}{2}\Pr[\wrongPred(\coal) = \corrPred(\coal)+1] + \frac{1}{2}\Pr[\coal = \emptyset] \enspace,
	\end{equation}
	by taking the expectation over $\coal$.
	We now compute the three terms separately.
	\begin{align*}
	\Pr[\wrongPred(\coal) = \corrPred(\coal)]
      &= \sum_{k\ge 0}\Pr[\corrPred(\coal) = k]\Pr[\wrongPred(\coal) = k]
      = \sum_{k\ge 0}{\corrPred-1 \choose k}{\wrongPred \choose k}
        \left(\frac{1}{2}\right)^{\corrPred-1+\wrongPred}\\
      &= \frac{1}{2^{\cardM-1}}\sum_{k\ge 0}{\corrPred-1 \choose k}{\wrongPred \choose \wrongPred-k}
      = \frac{1}{2^{\cardM-1}}{\cardM-1 \choose \wrongPred} \enspace .
	\end{align*}
	The first equality uses the independence of $\corrPred(\coal)$ and $\wrongPred(\coal)$.
	The second one uses that both $\corrPred(\coal)$ and $\wrongPred(\coal)$ are binomial random variables with parameter $1/2$.
	The third equality follows by $\corrPred+\wrongPred=\cardM$ and the symmetry of the binomial coefficients, i.e., ${a \choose k} = {a \choose a-k}$.
	Finally, the last equality uses the Vandermonde's identity~(\Cref{lemma:vand}) with $a = \corrPred-1$, $b=\wrongPred$ and $c = \wrongPred$.
	The second term of~\Cref{eq:banzExpectationCorrect} is obtained analogously,
	\begin{align*}
	\Pr[\wrongPred(\coal) = \corrPred(\coal)+1]
      &= \sum_{k\ge 0}\Pr[\corrPred(\coal) = k]\Pr[\wrongPred(\coal) = k+1]\\
      &= \frac{1}{2^{\cardM-1}}\sum_{k\ge 0}{\corrPred-1 \choose k}{\wrongPred \choose k+1}\\
      &= \frac{1}{2^{\cardM-1}}\sum_{k\ge 0}{\corrPred-1 \choose k}{\wrongPred \choose \wrongPred-1-k}
      = \frac{1}{2^{\cardM-1}}{\cardM-1 \choose \wrongPred-1} \enspace ,
	\end{align*}
	where the last equality follows by Vandermonde's identity with $a=\corrPred-1$, $b=\wrongPred$ and $c=\wrongPred-1$.
	The third term of \Cref{eq:banzExpectationCorrect} corresponds to $\Pr[\coal=\emptyset] = 2^{-(\cardM-1)}$,
 	 since $\coal$ is a uniformly random subset of the $\cardM-1$ classifiers in $\modelSet-i$.
	Combining all three terms above with~\Cref{eq:banzExpectationCorrect} we have
	\[
	\banz_i(\game_{\evz})
      = \frac{1}{2^{\cardM}}\left({\cardM-1 \choose \wrongPred} + {\cardM-1 \choose \wrongPred-1}\right)
        + \frac{1}{2^{\cardM}}
      = \frac{1}{2^{\cardM}}\left({\cardM \choose \wrongPred} + 1\right) \enspace,
	\]
	where we used Pascal's rule~(\Cref{lemma:auxNcK}), which yields the first part of the claim.

	\textbf{Incorrect classifier.} ($\model_i(\xp) \neq \y$)
	The set $\modelSet-i$ contains $\corrPred$ correct and $\wrongPred-1$ incorrect classifiers, hence $\corrPred(\coal)\sim \Bin(\corrPred,1/2)$ and $\wrongPred(\coal)\sim\Bin(\wrongPred-1,1/2)$.
	By~\Cref{lemma:marginalBinaryUnweighted}, for $\coal \neq \emptyset$ the marginal gain of $i$ is $-\frac{1}{2}$ if $\corrPred(\coal) - \wrongPred(\coal) \in \{0,1\}$ and it is $0$ otherwise, while for $\coal = \emptyset$ it is $0$.
	Hence, for every $\coal \subseteq \modelSet-i$,
	\[
	\charBin(\coal+i)-\charBin(\coal)
      = -\frac{1}{2}\ind[\corrPred(\coal) - \wrongPred(\coal) \in \{0,1\}] + \frac{1}{2}\ind[\coal = \emptyset] \enspace ,
	\]
	where the last term accounts for the correction of the marginal gain over $\coal=\emptyset$.
	Thus
	\[
	\banz_i(\game_{\evz}) = -\frac{1}{2}\Pr[\corrPred(\coal) = \wrongPred(\coal)] - \frac{1}{2}\Pr[\corrPred(\coal) = \wrongPred(\coal)+1]
      + \frac{1}{2^{\cardM}} \enspace ,
	\]
	by taking the expectation over $\coal$.
	The two probabilities in the above sum follow from the computations of the previous case, by simply exchanging the roles of correct and of incorrect classifiers. 
	That is, it is sufficient to replace $\corrPred-1$ with $\wrongPred-1$ and $\wrongPred$ with $\corrPred$. Therefore
	\[
	\Pr[\corrPred(\coal) = \wrongPred(\coal)] = \frac{1}{2^{\cardM-1}}{\cardM-1 \choose \corrPred} \, ,
	\qquad
	\Pr[\corrPred(\coal) = \wrongPred(\coal)+1] = \frac{1}{2^{\cardM-1}}{\cardM-1 \choose \corrPred-1} \enspace .
	\]
	Combining the above we have
	\[
	\banz_i(\game_{\evz}) = -\frac{1}{2^{\cardM}}{\cardM \choose \corrPred} + \frac{1}{2^{\cardM}}
      = -\frac{1}{2^{\cardM}}\left({\cardM \choose \corrPred} - 1\right) \enspace ,
	\]
	by Pascal's rule~(\Cref{lemma:auxNcK}), which yields the claim and concludes the proof.
\end{proof}

\begin{proof}[Proof of~\Cref{theo:Shap01}]
	Similar to the proof of~\Cref{theo:Banz01} we prove the claim by distinguishing between correct and incorrect classifiers.
	
	\textbf{Correct classifier.} ($\model_i(\xp) =\y$). 
	Without loss of generality we fix $\y = \y_1$. Hence, $\corrPred =\VotVecCoord_1(\modelSet)$. From~\Cref{def:shapAndBanz} and~\Cref{lemma:marginalBinaryUnweighted} it follows that the coalitions $i\in \coal, \coal\subseteq\modelSet$ yielding a non-zero term for the difference $\charBin(\coal)-\charBin(\coal-i)$ in the computation of the Shapley value $\shap_i$ are such that,
	\begin{inparaenum}[(1)]
		\item $\coal-i = \emptyset$;
		\item $\VotVecCoord_1(\coal-i) = \VotVecCoord_2(\coal-i)$; and 
		\item $\VotVecCoord_1(\coal-i) = \VotVecCoord_2(\coal-i)-1$.
	\end{inparaenum}
	Clearly in case (1)  $\charBin(\{i\})-\charBin(\emptyset)=1$. Next, in cases (2) and (3) it holds that $\charBin(\coal)-\charBin(\coal-i) = \frac{1}{2}$, since $\model_i(\xp) = y$. Recall that we let $\corrPred = |\{\model \in \modelSet: \model(\xp)=y\}|$, and $\wrongPred = \cardM-\corrPred$.
	We can therefore express the Shapley values of~\Cref{eq:shap} as
	\begin{align*}
		\shap_i &= \frac{1}{n} +\frac{1}{2} \left(T_1 + T_2\right), %
\end{align*}
where
\begin{align*}
	T_1 &=\sum_{h=1}^{\min\{\corrPred-1, \wrongPred\}} \frac{(\cardM-(2h+1))!(2h)!}{\cardM!} {\corrPred-1 \choose h} {\wrongPred \choose h} 
\end{align*}
corresponds to the contribution for the Shapley value $\shap_i$ provided by coalitions of the type $\coal-i$ for which it holds $\VotVecCoord_1(\coal-i) = \VotVecCoord_2(\coal-i)$. Furthermore,
\begin{align*}
	T_2 &= \sum_{h=1}^{\min\{\corrPred, \wrongPred\}} \frac{(\cardM-2h)!(2h-1)!}{\cardM!} {\corrPred-1 \choose h-1} {\wrongPred \choose h}
\end{align*}
corresponds to the contribution for the Shapley value $\shap_i$ provided by coalitions $\coal-i$ such that $\VotVecCoord_1(\coal-i) = \VotVecCoord_2(\coal-i)-1$. Finally, the term $1/n$ accounts for $\coal-i=\emptyset$. Hence,
\begin{align*}
	T_1 + T_ 2 &= \frac{(\cardM-2\corrPred)!(2\corrPred-1)!}{\cardM!}{\wrongPred \choose \corrPred}\\
	&+ \sum_{h=1}^{\min\{\corrPred-1, \wrongPred\}} {\wrongPred \choose h} \frac{(\cardM-2h-1)!(2h-1)!}{n!}\left[
	(\cardM-2h) {\corrPred-1 \choose h-1} +2h {\corrPred-1 \choose h}  \right]
\end{align*}
By simple manipulations we obtain the first part of the claim
\begin{align*}
	(\cardM-2h) {\corrPred-1 \choose h-1} +2h {\corrPred-1 \choose h} &\stackrel{(a)}{=} (\cardM-2h) {\corrPred-1 \choose h-1} +2(\corrPred - h) {\corrPred-1 \choose h-1}\\
	&= {\corrPred-1 \choose h-1} (\cardM + 2\corrPred-4h)\, ,
\end{align*}
where in $(a)$ we used the known identity $k\binom{n}{k} = n\binom{n}{k-1}$.

\textbf{Incorrect classifier.} ($\model_i(\xp) \neq\y$). Following the analysis of the case above, by noticing the analogy in the argument for incorrect classifiers, with respect to correct classifiers, we can express 
\begin{align*}
	\shap_i &= -\frac{1}{2}\Bigg(
	\sum_{h=1}^{\min\{\wrongPred-1, \corrPred\}} \frac{(\cardM-(2h+1))!(2h)!}{\cardM!} {\wrongPred-1 \choose h} {\corrPred \choose h}\\ 
	&+  \sum_{h=1}^{\min\{\wrongPred, \corrPred\}} \frac{(\cardM-2h)!(2h-1)!}{\cardM!} {\wrongPred-1 \choose h-1} {\corrPred \choose h}
	\Bigg)\, .
\end{align*}
Following the same manipulations for the case of correct classifiers we obtain the claim.
\end{proof}

\begin{proof}[Proof of~\Cref{theo:runtimeBin}]
	First we note that by the \emph{symmetry} property, it is a trivial fact that $\shap_i =\shap_{i'}$ and $\banz_i=\banz_{i'}$ for any two classifiers $i\neq i'$ and $i,i'\in [\cardM]$ such that $\model_i = \model_{i'}$. Furthermore, recall that symmetry holds for both $\shap$ and $\banz$. 
	This property allows us to compute the closed-form solution of $\shap_i$ and $\banz_i$ only \emph{once} for each game $\game_{\evz}$, and assign such values to all classifier according to their predictions, as detailed below.
	
	\para{Banzhaf power index.} 
	The values of $\banz_i, i\in[\cardM]$ for all classifiers can be computed by evaluating the closed-form solution of~\Cref{theo:Banz01}, in $\bigO(1)$ time. 
	Furthermore, note that the value of $\banz_i(\game_{\evz}), i\in[\cardM]$ will be equal for all correct classifiers, and incorrect classifiers respectively over a fixed $(\xp,\y)\in \dataSet$ by symmetry. 
	Hence, we can compute the two terms of~\Cref{theo:Banz01}, a single time for each test data point. 
	Therefore, the complexity of obtaining $\banz_i(\game_{(\xp, \y)})$ is bounded by $\sum_{i=1}^\cardM \bigO(1)=\bigO(\cardM)$. 
	To obtain $\banz_i(\sumGame)$ we use the linearity property of the Banzhaf power index, performing $\bigO(|\dataSet|)$ computations, one on each test data point in $\dataSet$.  
	Yielding as final time complexity 
	\[
	\sum_{(\xp,\y) \in \dataSet} \sum_{i=1}^\cardM \bigO(1) = \bigO(|\dataSet|\cardM)\,  ,
	\]
	which yields the first part of the claim.
	
	\textbf{Shapley power index.} By~\Cref{theo:Shap01} the values of $\shap_i, i\in[\cardM]$ for all classifiers can be computed through the evaluation of the summation from~\Cref{eq:closed01Shap}.
	The summation can be evaluated once per each game $\game_{\evz}$ efficiently in linear time $\bigO(\cardM)$
  (see~\Cref{app:practicalImpl}). 
	Similar to the Banzhaf case above, using the \emph{symmetry} of Shapley values for a game $\game_{\evz}$ then all correct classifiers will have the same Shapley value. 
	Similarly, all incorrect classifiers will also attain the same Shapley value, hence we only need to compute the summation in~\Cref{eq:closed01Shap} at one time.
	That is we have to obtain $\shap_i$ for a classifier $i\in[\cardM]$ such that $\model_i(\xp) = \y$.
	Then we know, thanks to the \emph{efficiency} property, that
	\[
	\charBin([\cardM]) = \sum_{i\in[\cardM]} \shap_i = \sum_{i: \model_i(\xp)=\y} \shap_i + \sum_{i: \model_i(\xp)\neq\y} \shap_i\, .
	\]
	It follows that the value $\shap_i$, for any $i$ such that $\model_i(\xp)\neq\y$,
  can be computed by $(\charBin([\cardM])-\sum_{i: \model_i(\xp)=\y} \shap_i)/\wrongPred$, where $\wrongPred = |\{i:\model_i(\xp)\neq\y\}|$ provided that $\wrongPred>0$.
	Therefore, the final complexity of the algorithm is bounded by 
	\[
	\sum_{(\xp,\y) \in \dataSet} \left(\bigO(\cardM) + \sum_{i=1}^{\cardM} \bigO(1) \right) = \bigO(|\dataSet|\cardM)\, .
	\]
	
	Concerning the memory complexity, note that by symmetry, we have to compute the values of $\banz_i$ (or $\shap_i$) only once per vote profile $\VotVec \in \unique$. Storing the values of the power indices (for all vote profiles) requires $\bigO(|\unique|)$ memory and retaining the final output uses $\bigO(\cardM)$ memory. Since $|\unique|\le\cardM+1$ in the binary classification setting, the total memory is $\bigO(|\unique|+\cardM) = \bigO(\cardM)$, concluding the proof.
\end{proof}

\subsection{Proofs of~\Cref{subsubsec:MCBZ}}
\label{appendix:proofs-multiclass-case}

\begin{proof}[Proof of~\Cref{lemma:genFunction}]
	Consider the expansion of the $k$-th order coefficient in $P_h(\gfvar)$ for $k\in[1,\totLabs-1]$
	\begin{align*}
		[\gfvar^k]P_h(\gfvar) &= [\gfvar^k]\prod_{j=2}^\totLabs (A_j(h) + B_j(h) \gfvar) \\
		&= \sum_{\substack{I\subseteq\{2,\dots,\totLabs\} \\ |I|=k}} \left(\prod_{j\in I}B_j(h) \prod_{j\notin I} A_j(h)\right) \\
		&= \sum_{\substack{I\subseteq\{2,\dots,\totLabs\} \\ |I|=k}} \left(\prod_{j\in I}\binom{\cardM_j}{h} \prod_{j\notin I} \sum_{q=0}^{h-1}\binom{\cardM_j}{q}\right)\, .
	\end{align*}
	The first equality follows from the definition of $P_h(\gfvar)$, the second from the expansion of a product of binomial terms, and the third by the definitions of $A_j(h)$ and $B_j(h)$.
	
	We now show that $|\coalset(k,h)|$ equals the above term via a combinatorial argument. 
	By definition, $\coalset(k,h)$ contains subsets of classifiers $\coal \subseteq \modelSet'$ that exclude the correct label $\y_1$. 
	For any $\coal \in \coalset(k,h)$, exactly $k$ labels must have $h$ votes, 
	while all other labels $j \in \{2, \dots, \totLabs\}$ must have strictly fewer than $h$ votes.
	
	We can construct such coalitions by first selecting a subset of labels $I \subseteq \{2, \dots, \totLabs\}$ of size $|I|=k$ to attain $h$ votes. 
	For a fixed $I$, we calculate the number of choices $T_I$,
	\begin{itemize}
		\item \textbf{forcing $\VotVecCoord_j(\coal)=h$ for $j \in I$:} we must select exactly $h$ classifiers for each label in $I$. 
		There are $\prod_{j\in I} \binom{\cardM_j}{h}$ ways to do this;
		\item \textbf{forcing $\VotVecCoord_j(\coal) < h$ for $j \notin I$:} for each label $j \in \{2, \dots, \totLabs\} \setminus I$, we must select $q \in \{0, \dots, h-1\}$ classifiers. 
		Summing over possible values of $q$ for a single label $j$ gives $\sum_{q=0}^{h-1} \binom{\cardM_j}{q} = A_j(h)$. 
		Applying the product rule across all labels not in $I$ gives $\prod_{j\notin I} A_j(h)$.
	\end{itemize}
	Thus, $T_I = \prod_{j\in I}\binom{\cardM_j}{h} \prod_{j\notin I} A_j(h)$. 
	Summing $T_I$ over all possible subsets $I$ of size $k$ yields,
	\[
	|\coalset(k,h)| = \sum_{\substack{I\subseteq\{2,\dots,\totLabs\} \\ |I|=k}} \left(\prod_{j\in I}\binom{\cardM_j}{h} \prod_{j\notin I} \sum_{q=0}^{h-1}\binom{\cardM_j}{q}\right) = [\gfvar^k] P_h(\gfvar) \, ,
	\]
	which concludes the proof.
\end{proof}

\begin{proof}[Proof of~\Cref{cor:genFunctionMinusJ}]
	By the definition of $P_h(\gfvar)$, factoring out the term for label $\y_j$ yields
	\[
	\frac{P_h(\gfvar)}{A_j(h) + B_j(h)\gfvar} = \prod_{\substack{j'=2 \\ j' \neq j}}^\totLabs (A_{j'}(h) + B_{j'}(h)\gfvar) \,  .
	\]
	This expression is exactly the generating function from~\Cref{lemma:genFunction} applied to the ensemble $\modelSet' \setminus \{\model \in \modelSet : \model(\xp) = \y_j\}$ and the label set $\labelSet \setminus \{\y_1, \y_j\}$. 
	Since coalitions $\coal \in \coalset_{-j}(k,h)$ contain no classifiers with predictions for $\y_j$, it follows that $\VotVecCoord_j(\coal) = 0$. 
	Applying an identical combinatorial counting argument as in~\Cref{lemma:genFunction} to the new product above yields the claim.
\end{proof}

\begin{proof}[Proof of~\Cref{theo:BanzMC}]
	Let us first prove the claim for a correct classifier.
	
	\para{Correct classifier $(\model_i(\xp)=\y)$.} 
	Let $\coalset_1 =\{\coal\subseteq[\cardM]: i\in \coal, \coal-i \neq \emptyset, |\tieSet(\coal-i)|>1, \VotVecCoord_1(\coal-i)=\VotVecCoord_j(\coal-i), j\in\tieSet(\coal-i)\}$, and let
	$\coalset_2 =\{\coal\subseteq[\cardM]: i\in \coal, \coal-i \neq \emptyset, |\tieSet(\coal-i)|>0, \VotVecCoord_1(\coal-i)=\VotVecCoord_j(\coal-i)-1, j\in\tieSet(\coal-i)\}$. 
	Furthermore let $\coalset_1(k) = \{\coal\in\coalset_1: |\tieSet(\coal-i)| = k\}$ for $k\in[2,\totLabs]$ and $\coalset_2(k) = \{\coal\in\coalset_2: |\tieSet(\coal-i)| = k\}$ for $k\in[1,\totLabs-1]$. 
	By~\Cref{lemma:marginalMCUnweighted} we have that
	\begin{align}
		2^{\cardM-1}\banz_i &= \sum_{i\in\coal\subseteq[\cardM]} \charMC(\coal) - \charMC(\coal-i)\label{eqproofcorrMC:1}\\
		&= 1+\sum_{\coal \in \coalset_1} \frac{|\tieSet(\coal-i)|-1}{|\tieSet(\coal-i)|}  + \sum_{\coal\in\coalset_2} \frac{1}{|\tieSet(\coal-i)|+1}\label{eqproofcorrMC:2}\\
		&= 1+\sum_{k=2}^{\totLabs} \sum_{\coal\in \coalset_1(k)} \frac{k-1}{k} + \sum_{k=1}^{\totLabs-1} \sum_{\coal\in\coalset_2(k)} \frac{1}{k+1}\label{eqproofcorrMC:3}\\
		&= 1+\sum_{k=2}^{\totLabs} \sum_{h=1}^{\cardM_1 -1}  \frac{k-1}{k} {\cardM_1-1 \choose h} |\coalset(k-1,h)| + \sum_{k=1}^{\totLabs-1}\sum_{h=0}^{\cardM_1-1} \frac{1}{k+1} {\cardM_1-1 \choose h} |\coalset(k,h+1)|\label{eqproofcorrMC:4}\\
		&= 1+\sum_{h=1}^{\cardM_1-1}{\cardM_1-1 \choose h} \sum_{k=1}^{\totLabs-1} \frac{k}{k+1} |\coalset(k,h)| + \sum_{h=1}^{\cardM_1}{\cardM_1-1 \choose h-1}\sum_{k=1}^{\totLabs-1}\frac{1}{k+1}  |\coalset(k,h)|\label{eqproofcorrMC:5}\\
		&= 1+\sum_{h=1}^{\cardM_1-1}\sum_{k=1}^{\totLabs-1} \frac{|\coalset(k,h)|}{k+1} \left[k{\cardM_1-1 \choose h}  + {\cardM_1-1 \choose h-1}\right] +\sum_{k=1}^{\totLabs-1} \frac{1}{k+1}|\coalset(k,\cardM_1)|\label{eqproofcorrMC:6}\\
		&= 1+\sum_{h=1}^{\cardM_1-1}\sum_{k=1}^{\totLabs-1} \frac{[x^k]P_{h}(x)}{k+1} \left[k{\cardM_1-1 \choose h}  + {\cardM_1-1 \choose h-1}\right]+\sum_{k=1}^{\totLabs-1} \frac{[x^k]P_{\cardM_1}(x)}{k+1}\label{eqproofcorrMC:7}\\
		&=
		1+\sum_{h=1}^{\cardM_1}\sum_{k=1}^{\totLabs-1} \frac{[x^k]P_{h}(x)}{k+1} \left[k{\cardM_1-1 \choose h}  + {\cardM_1-1 \choose h-1}\right]\label{eqproofcorrMC:8}\,  .
	\end{align}

  In the above derivation, \Cref{eqproofcorrMC:1} holds is by definition of Banzhaf value and 
  \Cref{eqproofcorrMC:2} follows by~\Cref{lemma:marginalMCUnweighted}. 
  \Cref{eqproofcorrMC:3} holds by the definition of $\coalset_1(k)$ and $\coalset_2(k)$, 
  noting that they form a partition of $\coalset_1$ and $\coalset_2$, respectively. 
	\Cref{eqproofcorrMC:4} follows from a combinatorial selection to count sets from $\coalset_1(k)$ and $\coalset_2(k)$: we first choose the number of correct classifiers from $\{\model : \model(\xp) = \y\}$ and combine it with the definition of~$\coalset(k,h)$. 
	\Cref{eqproofcorrMC:5,eqproofcorrMC:6,,eqproofcorrMC:7} follow from simple manipulations, including re-indexing and carrying out boundary terms. 
	Finally, the claim in~\Cref{eqproofcorrMC:8} follows from the fact that $f(k, \cardM_1) = 1$ since $\binom{\cardM_1-1}{\cardM_1} = 0$. %
	
	We now prove the remaining part of the claim for an incorrect classifier.
	
  \medskip
	\para{Incorrect classifier ($\model_i(\xp) = \y_j, j\neq 1$).} We now let $\coalset_3 =\{\coal\subseteq[\cardM]: i\in \coal, \coal-i \neq \emptyset,  |\tieSet(\coal-i)|\ge 2, \VotVecCoord_1(\coal-i)=\VotVecCoord_j(\coal-i)=\VotVecCoord_{j'}(\coal-i), {j'}\in\tieSet(\coal-i)\}$,
	and $\coalset_4 =\{\coal\subseteq[\cardM]: i\in \coal, \coal-i \neq \emptyset, |\tieSet(\coal-i)|\ge1, \VotVecCoord_j(\coal-i)+1=\VotVecCoord_1(\coal-i)=\VotVecCoord_{j'}(\coal-i), j'\in\tieSet(\coal-i)\}$. 
	In analogy with the previous case we partition the two sets above as $\coalset_3(k) = \{\coal\in\coalset_3: |\tieSet(\coal-i)| = k\}$ for $k\in[2,\totLabs]$ and $\coalset_4(k) = \{\coal\in\coalset_4: |\tieSet(\coal-i)| = k\}$ for $k\in[1,\totLabs-1]$. We can compute the Banzhaf value as
	\begin{align}
		-2^{\cardM-1}\banz_i &= -\sum_{i\in\coal\subseteq[\cardM]} \charMC(\coal) - \charMC(\coal-i) \label{eqproofwrongMC:1}\\
		&= \sum_{\coal \in \coalset_3} \frac{1}{|\tieSet(\coal-i)|}  +  \sum_{\coal\in\coalset_4}\frac{1}{|\tieSet(\coal-i)|(|\tieSet(\coal-i)|+1)}\label{eqproofwrongMC:2}\\
		&= \sum_{k=2}^{\totLabs}\sum_{\coal \in \coalset_3(k)} \frac{1}{k}  + \sum_{k=1}^{\totLabs-1} \sum_{\coal\in\coalset_4(k)} \frac{1}{k(k+1)}\label{eqproofwrongMC:3}\\
		&=
		\sum_{k=2}^{\totLabs} \sum_{h=1}^{\cardM_j-1} \frac{1}{k} \binom{\cardM_j-1}{h}\binom{\cardM_1}{h} |\coalset_{-j}(k-2,h)| \nonumber\\
		&+
		\sum_{k=1}^{\totLabs-1}\sum_{h=0}^{\cardM_j-1} {\cardM_j-1 \choose h} {\cardM_1\choose h+1}|\coalset_{-j}(k-1, h+1)| \frac{1}{k(k+1)} \label{eqproofwrongMC:4}\\
		&=
		\sum_{h=1}^{\cardM_j} \sum_{k=0}^{\totLabs-2} \binom{\cardM_j-1}{h}\binom{\cardM_1}{h} \frac{|\coalset_{-j}(k,h)|}{k+2}  +
		\sum_{h=1}^{\cardM_j}\sum_{k=0}^{\totLabs-2}{\cardM_j-1 \choose h-1}  {\cardM_1\choose h}  \frac{|\coalset_{-j}(k, h)|}{(k+1)(k+2)} \label{eqproofwrongMC:5}\\
		&=
		\sum_{h=1}^{\cardM_j}\sum_{k=0}^{\totLabs-2}\frac{|\coalset_{-j}(k,h)|}{k+2}\binom{\cardM_1}{h} \left[\binom{\cardM_j-1}{h}+\frac{1}{k+1}\binom{\cardM_j-1}{h-1}\right]\label{eqproofwrongMC:6}\\
		&=
		\sum_{h=1}^{\cardM_j}\sum_{k=0}^{\totLabs-2}\frac{f_j(k, h)}{k+2}[\gfvar^k]\frac{P_h(\gfvar)}{A_j(h)+B_j(h)\gfvar}\label{eqproofwrongMC:7}\,  .
	\end{align}
	In the above derivations,~\Cref{eqproofwrongMC:1} follows by~\Cref{def:shapAndBanz}.
  \Cref{eqproofwrongMC:2} follows by~\Cref{lemma:marginalMCUnweighted}.
	\Cref{eqproofwrongMC:3} follows by noting that $\coalset_3(k)$ and $\coalset_4(k)$ form a partition of $\coalset_3$ and $\coalset_4$ respectively. 
  \Cref{eqproofwrongMC:4} follows by counting the sets in $\coalset_3(k)$ and $\coalset_4(k)$  with a combinatorial argument. 
	We now discuss how to build sets from $\coalset_3(k)$. That is, \begin{inparaenum}[($i$)]
		\item we first select the $h\ge 1$ classifiers voting for $\y_j$ for which we have $\binom{\cardM_j-1}{h}$ possible choice;
		\item we then select the number of correct classifiers to match the $h$ classifiers in $\coal-i$ selected, for which we have $\binom{\cardM_1}{h}$ possible choices 
		\item for any $k\ge 2$ we have to fix $k-2$ labels to tie the $h$ votes for $\y_j$ and $\y_1$, and all remaining labels (with possibly no votes) to have strictly less than $h$ votes. This corresponds exactly to the definition of $|\coalset_{-j}(k-2,h)|$.
	\end{inparaenum}
	Summing over all possible values of $h$ for fixed $k$ yields the first part of~\Cref{eqproofwrongMC:4}.
	The sets of $\coalset_4(k)$ follow an analogous construction yielding \Cref{eqproofwrongMC:4}.
	Finally, \Cref{eqproofwrongMC:5,eqproofwrongMC:6} follow by simple algebraic manipulations and summation re-indexing. 
  Finally,~\Cref{eqproofwrongMC:7} follows by the definition of $f_j(k,h)$.
  This concludes the proof.
\end{proof}

\begin{proof}[Proof of~\Cref{theo:runtimeMCBZ}]
	Consider two data points $d_1= (\xp_1,\y_1)$ and $d_2=(\xp_2,\y_2)$ with identical vote profiles. By the symmetry axiom, and as captured by our \Cref{theo:BanzMC}, the values of $\banz_i(\gameMC_{d_1})$ and $\banz_i(\gameMC_{d_2})$ depend strictly on the values $\cardM_j$ for $j\in [\totLabs]$. 
	That is, under two identical vote profiles $\VotVec^{d_1}= \VotVec^{d_2}$, it holds that $\banz_i(\gameMC_{d_1})=\banz_{i'}(\gameMC_{d_2})$ if either both $i$ and $i'$ vote for the correct label, or if $\model_i(\xp_1) = y_j$ and $\model_{i'}(\xp_2) = y_{j'}$ where $j,j'\neq 1$ and $\cardM_j=\cardM_{j'}$. 
	This observation allows us to compute the Banzhaf values exactly once per unique vote profile, caching the result and reusing it in $\bigO(1)$ time for any two points with the same vote profile.
	
	For a unique vote profile of $(\xp, \y)$, the algorithm first obtains $\cardM_j, j\in[\totLabs]$ in $\bigO(\cardM)$ time. 
	Then, the algorithm iterates over $h \in [\cardM_1]$. 
	Inside this loop, it computes the coefficients of the polynomial $P_h(\gfvar)$ in $\bigO(\totLabs^2)$ time. 
	The contribution to $\banz_i$ for any correct classifier provided by $P_h(\gfvar)$ is computed in $\bigO(\totLabs)$ time. 
	Next, for each incorrect class $j\in[2,\totLabs]$, the algorithm obtains the polynomial $Q_h^j(\gfvar)\doteq P_h(\gfvar)/(A_j(h) + B_j(h)\gfvar)$ in $\bigO(\totLabs)$ time. 
	Evaluating the contribution to $\banz_i$ for any classifier voting for $\y_j$ in $\bigO(\totLabs)$ time. 
	By caching $P_h(\gfvar)$ the algorithm can process all incorrect classes in $\sum_{j=2}^\totLabs \bigO(\totLabs)$ time.
	After processing each unique profile of $\unique$, we iterate through all classifiers correcting for the factor $2^{\cardM-1}$ as shown by \Cref{theo:BanzMC}, which takes $\bigO(\cardM)$ per vote profile. 
	Finally, the output corrects for the factor $|\dataSet|$, using the linearity property of power indices.
	
	The overall time complexity is bounded by
	\[
	\sum_{u=1}^{|\unique|} \left( \bigO(\cardM) + \sum_{h=1}^{\cardM_1} \left[ \bigO(\totLabs^2) + \bigO(\totLabs) + \sum_{j=2}^\totLabs \bigO(\totLabs) \right] \right) + \bigO(|\dataSet|\cardM) = \bigO(|\unique| \cardM \totLabs^2 + |\dataSet|\cardM) \, ,
	\]
	where the $\bigO(|\dataSet|\cardM)$ is required to evaluate the predictions of all classifiers of $\modelSet$. 
  If predictions are provided in input, e.g., through a preprocessing, then such term can be dropped.
	
	Finally, the algorithm requires no more than $\bigO(|\unique|\totLabs + \cardM + \totLabs \cardM) = \bigO(\totLabs (\cardM+|\unique|))$ auxiliary space. That is, 
	\begin{inparaenum}
		\item $\bigO(|\unique|\totLabs)$ is required to store the dictionaries of unique vote profiles and the Banzhaf values of their game;
		\item $\bigO(\cardM)$ is necessary to accumulate the global output for the ensemble;
		\item finally $\bigO(\totLabs)$ is required to store the coefficients of the polynomial $P_h(\gfvar)$ for any iteration $h\ge1$,
	\end{inparaenum} which concludes the proof.
\end{proof}

\subsection{Proofs of~\Cref{subsubsec:MCSHAP}}\label{appsubsec:shap}

\begin{lemma}\label{lemma:bivariateGenFunction}
	Let $(\xp,\y)\in \dataSet$ be a test data point. Without loss of generality, let $\y = \y_1 \in \labelSet$. For any $h \ge 1$ and $j\in[2,\totLabs]$, we define the two polynomials
	\[
	A_j(h; u) = \sum_{q=0}^{h-1} \binom{\cardM_j}{q} u^q \quad \text{and} \quad B_j(h; u) = \binom{\cardM_j}{h} u^h \, .
	\]
	Consider the ordinary generating function $P_h(\gfvar, u) = \prod_{j=2}^\totLabs (A_j(h; u) + \gfvar B_j(h; u))$. 
	
	Fix $h\in [\cardM]$, $k\in \{1, \dots, \totLabs-1\}$, and $m \ge 0$. Let $\coalset(k,h,m)$ be the set of coalitions $\coal \subseteq \modelSet'$ such that:
	\begin{inparaenum}[($i$)]
		\item $\modelSet' = \modelSet \setminus \{\model \in \modelSet : \model(\xp) = \y\}$;
		\item $|\tieSet(\coal)| = k$; 
		\item $\VotVecCoord_j(\coal) = h$ for each $j \in \tieSet(\coal)$; and
		\item the number of classifiers in $\coal$ is exactly $|\coal| = m$.
	\end{inparaenum}
	Then it holds that
	\[
	|\coalset(k,h,m)| = [\gfvar^k u^m]P_h(\gfvar, u) \, .
	\]
\end{lemma}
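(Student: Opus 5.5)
The plan is to refine the combinatorial argument of the proof of \Cref{lemma:genFunction} so that it also tracks coalition size. First I will expand the coefficient of $\gfvar^k$ in $P_h(\gfvar,u)$. Exactly as in that proof, the product over $j\in\{2,\dots,\totLabs\}$ gives
\[
[\gfvar^k]P_h(\gfvar,u)=\sum_{\substack{I\subseteq\{2,\dots,\totLabs\}\\ |I|=k}}\ \prod_{j\in I}\binom{\cardM_j}{h}u^h\prod_{j\notin I}\sum_{q=0}^{h-1}\binom{\cardM_j}{q}u^{q}.
\]
Expanding the inner products, the polynomial attached to a fixed $I$ becomes a sum over tuples $(q_j)_{j\notin I}$ with $0\le q_j\le h-1$. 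Each tuple contributes $\prod_{j\in I}\binom{\cardM_j}{h}\prod_{j\notin I}\binom{\cardM_j}{q_j}$ multiplied by $u^{kh+\sum_{j\notin I}q_j}$.

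Next I will set up a bijection between $\coalset(k,h,m)$ and the set of triples consisting of a label set $I$ with $|I|=k$, a tuple $(q_j)_{j\notin I}$, and a choice of $h$ classifiers from each class in $I$ and $q_j$ classifiers from each class $j\notin I$.
\begin{itemize}
\item In the forward direction, a coalition $\coal\subseteq\modelSet'$ determines $q_j=\VotVecCoord_j(\coal)$ and its set of maximal labels $\tieSet(\coal)$. Conditions (ii)--(iii) say that $I=\tieSet(\coal)$ has size $k$, that every $j\in I$ has exactly $h$ votes, and that all other incorrect labels have strictly fewer than $h$ votes.
\item Since $h\ge1$ exceeds $\VotVecCoord_1(\coal)=0$, the correct label is not in $\tieSet(\coal)$.
\item Conversely, any such triple yields a coalition whose majority set is exactly $I$.
\item The classes are disjoint, so the classifier choices are independent and the counts multiply.
\end{itemize}
This recovers the counting of \Cref{lemma:genFunction}. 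The extra observation is that $|\coal|=\sum_{j}\VotVecCoord_j(\coal)=kh+\sum_{j\notin I}q_j$, which is precisely the exponent of $u$ in the corresponding term.

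Finally, I will extract the coefficient of $u^m$. Grouping terms by the $u$-exponent shows that $[\gfvar^ku^m]P_h(\gfvar,u)$ equals the number of triples whose coalition has size $m$. By the bijection this number is $|\coalset(k,h,m)|$.

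There is no real obstacle. The only point needing care is the well-definedness of the bijection: distinct triples must give distinct coalitions, which holds because $I$ and the $q_j$ are recovered from $\coal$. One must also treat the edge cases $h>\cardM_j$, where the binomial vanishes, and $m<kh$, where both sides are $0$. The remaining work is bookkeeping, and as a sanity check, setting $u=1$ recovers \Cref{lemma:genFunction}.
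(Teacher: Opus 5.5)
Your proposal is correct and follows essentially the same route as the paper. Both arguments expand $[\gfvar^k]P_h(\gfvar,u)$ as a sum over label sets $I\subseteq\{2,\dots,\totLabs\}$ with $|I|=k$. Each term of $[\gfvar^k u^m]P_h(\gfvar,u)$ is then matched to the coalitions obtained by choosing exactly $h$ classifiers for each label in $I$ and $q_j\le h-1$ for each other incorrect label, with $kh+\sum_{j\notin I}q_j=m$. Your explicit bijection, including the observation that $\y_1\notin\tieSet(\coal)$ because $h\ge 1$, states a little more carefully what the paper leaves implicit in its constructive counting argument.
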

\begin{proof}
	We now expand $P_h(\gfvar, u)$. To do so we first extract the $k$-th coefficient in $\gfvar$. That is, analogously to the proof of~\Cref{lemma:genFunction} it holds that
	\begin{align*}
		[\gfvar^k u^m]P_h(\gfvar, u) &= [u^m] \sum_{\substack{I\subseteq\{2,\dots,\totLabs\} \\ |I|=k}} \left(\prod_{j\in I}B_j(h; u) \prod_{j\notin I} A_j(h; u)\right) \\
		&= [u^m] \sum_{\substack{I\subseteq\{2,\dots,\totLabs\} \\ |I|=k}} \left(\prod_{j\in I}\binom{\cardM_j}{h} u^h \prod_{j\notin I} \sum_{q_j=0}^{h-1}\binom{\cardM_j}{q_j} u^{q_j}\right)\\
		&= \sum_{\substack{I\subseteq\{2,\dots,\totLabs\} \\ |I|=k}} \left(\prod_{j\in I}\binom{\cardM_j}{h} [u^{m-kh}]  \prod_{j\notin I} \sum_{q_j=0}^{h-1}\binom{\cardM_j}{q_j} u^{q_j}\right)\\
		&= \sum_{\substack{I\subseteq\{2,\dots,\totLabs\} \\ |I|=k}} \left(\prod_{j\in I}\binom{\cardM_j}{h} \sum_{\substack{q_j \in \{0, \dots, h-1\} \text{ for all } j \notin I \\ \sum_{j \notin I} q_j = m-kh}} \prod_{j\notin I} \binom{\cardM_j}{q_j}\right) \,  .
	\end{align*}
	The first equality extracts the coefficient of $[\gfvar^k]$, and the second equality expands terms $A$ and $B$ by their definition. 
	The third equality follows by noting that $\prod_{j\in I} u^h = u^{kh}$. Therefore, the extraction of $[u^m]$ corresponds to extracting $[u^{m-kh}]$ from the polynomial $\prod_{j\notin I}\sum_{q_j=0}^{h-1}\binom{n_j}{q_j}u^{q_j}$.
	Finally, the last equality follows by the definition of polynomial multiplication (Cauchy product).
	We now show that $|\coalset(k,h,m)|$ corresponds to the above term by a combinatorial argument. 
	By definition, $\coalset(k,h,m)$ contains subsets $\coal \subseteq \modelSet'$, where classifiers voting for the correct label are excluded from $\modelSet'$. 
	For any $\coal \in \coalset(k,h,m)$, exactly $k$ labels must receive exactly $h$ votes, all other incorrect labels must receive strictly fewer than $h$ votes, and the total number of classifiers in $\coal$ must sum to $m$.
	
	We now construct a coalition from $\coalset(k,h,m)$ as follows. 
	We first select a subset of labels $I \subseteq \{2, \dots, \totLabs\}$ of size $|I|=k$ corresponding to the $k$ labels receiving exactly $h$ votes. 
	For a fixed $I$, we compute the number of sets that belong to $\coalset(k,h,m)$ as follows
	\begin{itemize}
		\item \textbf{forcing $\VotVecCoord_j(\coal)=h$ for $j \in I$:} we select exactly $h$ classifiers for each label in $I$. This yields $\prod_{j\in I} \binom{\cardM_j}{h}$ possible combinations. Note that, the selection of such classifiers collectively contributes to $kh$ classifiers in the final set $\coal \in\coalset(k,h,m)$.
		\item \textbf{forcing $\VotVecCoord_j(\coal) < h$ for $j \notin I$:} for each remaining incorrect label $j \in \{2, \dots, \totLabs\} \setminus I$, we must select $q_j \in \{0, \dots, h-1\}$ classifiers. 
		For a specific sequence of choices $(q_j)_{j \notin I}$, there are $\prod_{j\notin I} \binom{\cardM_j}{q_j}$ ways to select the classifiers.
		Note that, given the additional constrain of $|\coal|=m$, then it must hold that a valid selection yields $\sum_{j\notin I} q_j = m-\sum_{j\in I}k = m-kh$.
	\end{itemize}
	Therefore, by summing over all possible subsets $I$ we get
	\[
	|\coalset(k,h,m)| =  \sum_{\substack{I\subseteq\{2,\dots,\totLabs\} \\ |I|=k}} \left(\prod_{j\in I}\binom{\cardM_j}{h} \sum_{\substack{q_j \in \{0, \dots, h-1\} \text{ for all } j \notin I \\ \sum_{j \notin I} q_j = m-kh}} \prod_{j\notin I} \binom{\cardM_j}{q_j}\right)= [\gfvar^k u^m]  P_h(\gfvar, u) \, ,
	\]
	concluding the proof.
\end{proof}

\begin{corollary}\label{cor:bivariateGenFunctionMinusJ}
	Under the setting of~\Cref{lemma:bivariateGenFunction}, fix $j \in \{2, \dots, \totLabs\}$. 
	Let $\coalset_{-j}(k, h,m)$ be defined analogously to $\coalset(k,h,m)$ for coalitions $\coal \subseteq \modelSet' \setminus \{\model \in \modelSet : \model(\xp) = \y_j\}$. 
	That is, a coalition $\coal$ from $\coalset_{-j}(k, h,m)$ contains no classifier voting for class $\y_j$. Then it holds that
	\[
	|\coalset_{-j}(k,h,m)| = [\gfvar^ku^m]\frac{P_h(\gfvar,u)}{A_j(h;u) + B_j(h;u)\gfvar} \,  .
	\]
\end{corollary}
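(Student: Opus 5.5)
The plan mirrors the proof of \Cref{cor:genFunctionMinusJ}, now tracking the extra variable $u$. First I would establish the polynomial identity
\[
\frac{P_h(\gfvar,u)}{A_j(h;u)+B_j(h;u)\gfvar} \;=\; \prod_{\substack{j'=2\\ j'\neq j}}^{\totLabs}\bigl(A_{j'}(h;u)+B_{j'}(h;u)\gfvar\bigr).
\]
This holds because $P_h(\gfvar,u)$ is by definition the product over $j'\in\{2,\dots,\totLabs\}$, and the factor indexed by $j$ appears exactly once. That factor is a nonzero polynomial, since $A_j(h;u)$ has constant term $\binom{\cardM_j}{0}=1$ for $h\ge1$. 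So the quotient is an exact polynomial division in $\mathbb{Z}[\gfvar,u]$, with no power-series subtleties.

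Next I would observe that the right-hand side is precisely the bivariate generating function of \Cref{lemma:bivariateGenFunction}, instantiated for the reduced instance. This instance has ensemble $\modelSet'\setminus\{\model\in\modelSet:\model(\xp)=\y_j\}$ and incorrect label set $\{\y_2,\dots,\y_\totLabs\}\setminus\{\y_j\}$, with the counts $\cardM_{j'}$ unchanged for $j'\neq j$. The combinatorial argument in the proof of \Cref{lemma:bivariateGenFunction} then carries over verbatim:
\begin{itemize}
\item choose $I\subseteq\{2,\dots,\totLabs\}\setminus\{j\}$ with $|I|=k$;
\item select exactly $h$ classifiers for each label in $I$, contributing $u^{kh}$;
\item select $q_{j'}<h$ classifiers for each remaining label, with $\sum q_{j'}=m-kh$.
\end{itemize}
The coefficient extraction $[\gfvar^k u^m]$ of the product again matches this count term by term, via the Cauchy product.

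The one point needing care is that the reduced counting describes exactly $\coalset_{-j}(k,h,m)$. Coalitions in $\coalset_{-j}(k,h,m)$ contain no classifier voting for $\y_j$, so $\VotVecCoord_j(\coal)=0$. Since $h\ge1$, label $\y_j$ can never belong to $\tieSet(\coal)$ at level $h$, so it imposes no constraint and contributes neither to the $k$ tied labels nor to $|\coal|=m$. It would be harmless to keep $\y_j$ with a forced choice of $0$ classifiers, which contributes the factor $1$.

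One degenerate case should be noted. When $\tieSet(\coal)$ also contains labels with zero votes, for example the empty coalition, the convention must match that of \Cref{lemma:bivariateGenFunction}: $\tieSet$ is restricted to labels attaining the maximum $h\ge1$ among the incorrect labels considered. Under this convention the identification is exact. I expect this bookkeeping, rather than any computation, to be the only (minor) obstacle, and the argument is otherwise immediate from \Cref{lemma:bivariateGenFunction}.
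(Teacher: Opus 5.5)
Your proposal is correct and follows the paper's proof essentially step for step. You factor out the $\y_j$ term, identify the remaining product with the generating function of \Cref{lemma:bivariateGenFunction} for the reduced ensemble and label set, and use $\VotVecCoord_j(\coal)=0$ with $h\ge 1$ to see that $\y_j$ imposes no constraint. Your extra remarks are details the paper leaves implicit: the division is exact because $A_j(h;u)$ has constant term $1$, and $\coalset_{-j}(k,h,m)$ must be read as ``exactly $k$ labels with $h$ votes, all others fewer,'' which is what the $k=0$ terms used in \Cref{theo:ShapleyMCCorrect} require.
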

\begin{proof}
	The proof follows an analogous argument to that of~\Cref{cor:genFunctionMinusJ}.
	By the definition of $P_h(\gfvar,u)$, factoring out the term for label $\y_j, j\in[2,\totLabs]$ yields
	\[
	\frac{P_h(\gfvar,u)}{A_j(h;u) + B_j(h;u)\gfvar} = \prod_{\substack{j'=2 \\ j' \neq j}}^\totLabs (A_{j'}(h;u) + B_{j'}(h;u)\gfvar) \,  .
	\]
	This expression is exactly the generating function from~\Cref{lemma:bivariateGenFunction} applied to the ensemble $\modelSet' \setminus \{\model \in \modelSet : \model(\xp) = \y_j\}$ and the label set $\labelSet \setminus \{\y_1, \y_j\}$. 
	Since coalitions $\coal \in \coalset_{-j}(k,h,m)$ contain no classifiers with predictions for $\y_j$, it follows that $\VotVecCoord_j(\coal) = 0$. 
	Applying an identical combinatorial counting argument as in~\Cref{lemma:bivariateGenFunction} to the new product above yields the claim.
\end{proof}

\begin{proof}[Proof of~\Cref{theo:ShapleyMCCorrect}]
	Let us first prove the claim for a correct classifier.
	
	\para{Correct classifier $(\model_i(\xp)=\y)$.} 
	Similarly to the proof of~\Cref{theo:BanzMC}. Let $\coalset_1 =\{\coal\subseteq[\cardM]: i\in \coal, \coal-i \neq \emptyset, |\tieSet(\coal-i)|>1, \VotVecCoord_1(\coal-i)=\VotVecCoord_j(\coal-i), j\in\tieSet(\coal-i)\}$, and let
	$\coalset_2 =\{\coal\subseteq[\cardM]: i\in \coal, \coal-i \neq \emptyset, |\tieSet(\coal-i)|>0, \VotVecCoord_1(\coal-i)=\VotVecCoord_j(\coal-i)-1, j\in\tieSet(\coal-i)\}$. 
	We have that,
	\begin{align}
		\shap_i &= \sum_{i\in \coal \subseteq \modelSet} \eta(|\coal|)(\chF(\coal) - \chF(\coal - i))\label{eqproofcorrMCS:1}\\
		&= \frac{1}{\cardM} + \sum_{\coal \in \coalset_1} \eta(|\coal|)\frac{|\tieSet(\coal-i)|-1}{|\tieSet(\coal-i)|} + \sum_{\coal \in \coalset_2} \eta(|\coal|)\frac{1}{|\tieSet(\coal-i)|+1}\label{eqproofcorrMCS:2}\\
		&= \frac{1}{\cardM} + \sum_{k=2}^\totLabs \sum_{h=1}^{\cardM_1-1} \sum_{m=(k-1)h}^{\cardM-\cardM_1} \frac{k-1}{k}\eta(m+h+1) \binom{\cardM_1-1}{h} |\coalset(k-1,h,m)|\nonumber\\
		&\quad + \sum_{k=1}^{\totLabs-1} \sum_{h=0}^{\cardM_1-1} \sum_{m=k(h+1)}^{\cardM-\cardM_1} \frac{1}{k+1}\eta(m+h+1) \binom{\cardM_1-1}{h} |\coalset(k,h+1,m)|\label{eqproofcorrMCS:3}\\
		&= \frac{1}{\cardM} + \sum_{k=1}^{\totLabs-1} \sum_{h=1}^{\cardM_1-1} \sum_{m=kh}^{\cardM-\cardM_1} \frac{k}{k+1}\eta(m+h+1) \binom{\cardM_1-1}{h} |\coalset(k,h,m)|\nonumber\\
		&\quad + \sum_{k=1}^{\totLabs-1} \sum_{h=1}^{\cardM_1} \sum_{m=kh}^{\cardM-\cardM_1} \frac{1}{k+1}\eta(m+h) \binom{\cardM_1-1}{h-1} |\coalset(k,h,m)|\label{eqproofcorrMCS:4}\\
		&= \frac{1}{\cardM} + \sum_{k=1}^{\totLabs-1} \sum_{h=1}^{\cardM_1} \sum_{m=kh}^{\cardM-\cardM_1} \frac{1}{k+1} \left( k\eta(m+h+1)\binom{\cardM_1-1}{h} + \eta(m+h)\binom{\cardM_1-1}{h-1} \right) |\coalset(k,h,m)|\label{eqproofcorrMCS:5}\\
		&= \frac{1}{\cardM} + \sum_{k=1}^{\totLabs-1} \sum_{h=1}^{\cardM_1} \sum_{m=kh}^{\cardM-\cardM_1} f_1(k,h,m) [\gfvar^k u^m]P_h(\gfvar,u) \label{eqproofcorrMCS:6}\,  ,
	\end{align}
	where~\Cref{eqproofcorrMCS:1} follows by the definition of Shapley (\Cref{eq:shap}), 
  \Cref{eqproofcorrMCS:2} follow by~\Cref{lemma:marginalMCUnweighted}, and the definition of $\coalset_1$ and $\coalset_2$. 
  \Cref{eqproofcorrMCS:3} follows by summing over all sets of $\coalset_1$ and $\coalset_2$. 
	We show how to account for the summation over the sets in $\coalset_1$, the summation accounting for sets in $\coalset_2$ follows an analogous formulation. 
	That is,
	\begin{inparaenum}[($i$)]
		\item first we fix the number of classes in $\tieSet(\coal-i)$ that vote for an incorrect label (i.e., $k$);
		\item for each incorrect label we fix the cardinality of the classifiers voting for it (i.e., $h$)
		\item we then weight the number of sets $ \coal\in \coalset_1$ with $i\in \coal$ with the product rule, by first selecting $\binom{\cardM_1-1}{h}$ classifiers voting for the correct label, excluding $i$ and multiplying it by $|\coalset(k-1,h,m)|$. 
	\end{inparaenum}
	Therefore, this corresponds to sets $\coal\in \coalset_1$ where there are $h$ classifiers voting for the correct label, $k$ labels attaining $h$ votes over $\VotVec(\coal)$ and all remaining labels having strictly less than $h$ votes. Finally, by the definition of $|\coalset(k-1,h,m)|$ each set has exactly cardinality $(m+h+1)$. Combining all the above each set must be weighted $\eta(m+h+1) (k-1)/k$ to retain correctness.
	\Cref{eqproofcorrMCS:4} follows by re-indexing the summations. 
	\Cref{eqproofcorrMCS:5} follows by grouping the two summations.
	Finally, \Cref{eqproofcorrMCS:6} follows by the definition of $f_1(k,h,m)$ combined with~\Cref{lemma:bivariateGenFunction}.
	
	\para{Incorrect classifier ($\model_i(\xp) = \y_j, j\neq 1$).} We now let $\coalset_3 =\{\coal\subseteq[\cardM]: i\in \coal, \coal-i \neq \emptyset,  |\tieSet(\coal-i)|\ge 2, \VotVecCoord_1(\coal-i)=\VotVecCoord_j(\coal-i)=\VotVecCoord_{j'}(\coal-i), {j'}\in\tieSet(\coal-i)\}$,
	and $\coalset_4 =\{\coal\subseteq[\cardM]: i\in \coal, \coal-i \neq \emptyset, |\tieSet(\coal-i)|\ge1, \VotVecCoord_j(\coal-i)+1=\VotVecCoord_1(\coal-i)=\VotVecCoord_{j'}(\coal-i), j'\in\tieSet(\coal-i)\}$. 
	\begin{align}
		-\shap_i &= -\sum_{i\in \coal \subseteq \modelSet} \eta(|\coal|)(\chF(\coal) - \chF(\coal - i))\label{eqproofwrMCS:1}\\
		&= \sum_{\coal \in \coalset_3} \frac{\eta(|\coal|)}{|\tieSet(\coal-i)|}  +  \sum_{\coal\in\coalset_4}\frac{\eta(|\coal|)}{|\tieSet(\coal-i)|(|\tieSet(\coal-i)|+1)}\label{eqproofwrMCS:2}\\
		&= \sum_{k=2}^\totLabs \sum_{h=1}^{\cardM_j-1} \sum_{m=h(k-2)}^{\cardM-(\cardM_1+\cardM_j)} \frac{1}{k}\eta(m+2h+1) \binom{\cardM_1}{h}\binom{\cardM_j-1}{h} |\coalset_{-j}(k-2,h,m)|\nonumber\\
		&\quad + \sum_{k=1}^{\totLabs-1} \sum_{h=0}^{\cardM_j-1} \sum_{m=(k-1)(h+1)}^{\cardM-(\cardM_1+\cardM_j)} \frac{1}{k(k+1)}\eta(m+2(h+1)) \binom{\cardM_1}{h+1}\binom{\cardM_j-1}{h} |\coalset_{-j}(k-1,h+1,m)|\label{eqproofwrMCS:3}\\
		&= \sum_{k=0}^{\totLabs-2} \sum_{h=1}^{\cardM_j-1} \sum_{m=hk}^{\cardM-(\cardM_1+\cardM_j)} \frac{1}{k+2}\eta(m+2h+1) \binom{\cardM_1}{h}\binom{\cardM_j-1}{h} |\coalset_{-j}(k,h,m)|\nonumber\\
		&\quad + \sum_{k=0}^{\totLabs-2} \sum_{h=1}^{\cardM_j} \sum_{m=hk}^{\cardM-(\cardM_1+\cardM_j)} \frac{1}{(k+1)(k+2)}\eta(m+2h) \binom{\cardM_1}{h}\binom{\cardM_j-1}{h-1} |\coalset_{-j}(k,h,m)|\label{eqproofwrMCS:4}\\
		&= \sum_{k=0}^{\totLabs-2} \sum_{h=1}^{\cardM_j} \sum_{m=hk}^{\cardM-(\cardM_1+\cardM_j)} f_j(k,h,m) [\gfvar^ku^m]\frac{P_h(\gfvar,u)}{A_j(h;u)+B_j(h;u)\gfvar},\label{eqproofwrMCS:5}\\
	\end{align}
	where~\Cref{eqproofwrMCS:1} follows by~\Cref{eq:shap}. 
  \Cref{eqproofwrMCS:2} follows by~\Cref{lemma:marginalMCUnweighted} and the definitions of $\coalset_3$ and $\coalset_4$. \Cref{eqproofwrMCS:3} follows by a combinatorial selection of the sets in $\coalset_3$ and $\coalset_4$, with an analogous argument detailed for the case of correct classifiers, combined with the definition of $|\coalset_{-j}(k,h,m)|$.  \Cref{eqproofwrMCS:4} follows by re-indexing the summations. Finally, \Cref{eqproofwrMCS:5} follows by grouping the two summations noting that $\binom{\cardM_j-1}{\cardM_j} =0$, the definition of $f_j, j\in[2,\totLabs]$ and~\Cref{cor:bivariateGenFunctionMinusJ}.
\end{proof}

\begin{proof}[Proof of~\Cref{theo:runtimeMCSHAP}]
	As shown in the proof of~\Cref{theo:runtimeMCBZ} we only need to compute the values of $\shap_i$ for all classifiers $i\in[\cardM]$ yielding a unique vote profile $\VotVec\in\unique$, by symmetry.
	
	Fix a data point $(\xp,\y)$ the our~\Cref{alg:ShapMC} first evaluates the predictions of all classifiers in $\bigO(\cardM)$ time, and applies the mapping such that $\y=\y_1$ in $\bigO(\totLabs)$ time.
	Note that any label $\y_j$ with $\cardM_j=0$ can be ignored, since $A_j(h;u)=1$ and $B_j(h;u)=0$ for $h\ge1$, i.e., its factor in $P_h(\gfvar,u)$ is $1$. Thus, we assume $\cardM_j\ge1$ for $j\in[2,\totLabs]$.
	\Cref{alg:ShapMC} then iterates over $h\in[\cardM_1]$ initializing a matrix $P_h$ of size $\totLabs\times(\cardM-\cardM_1+1)$,  which contains the coefficient of the polynomial $P_h(\gfvar,u)$.
	All matrices $P_h, h\ge 1$ can be computed within a time complexity bounded by 
	\begin{align*}
		\sum_{h=1}^{\cardM_1}\sum_{j=2}^{\totLabs} \sum_{i=0}^{j-2} \sum_{m=0}^{\sum_{j'=2}^{j-1} \cardM_{j'}} \bigO(\min(\cardM_j, h)) 
		&\le \sum_{h=1}^{\cardM_1} \sum_{j=2}^{\totLabs} \bigO(\totLabs (\cardM-\cardM_1) \cardM_j)\\
		&= \bigO(\cardM_1 \totLabs (\cardM-\cardM_1)) \sum_{j=2}^{\totLabs} \cardM_j \\
		&= \bigO(\cardM_1 \totLabs (\cardM-\cardM_1)^2) = \bigO(\totLabs \cardM^3) \,  ,
	\end{align*}
	where the first step follows by the independence of the summations on $m$ and $i$, and by the facts that $j-2\le \totLabs$; $\sum_{j'=2}^{j-1}\cardM_{j'}$ $\le \sum_{j'=2}^{\totLabs} \cardM_{j'}= \cardM-\cardM_1$; and that $\min(a,b)\le a$. 
	The first equality follows by the linearity of the sum, and the second equality follows again by $\sum_{j'=2}^{\totLabs} \cardM_{j'}= \cardM-\cardM_1$. 
	Clearly the last step bounds $(\cardM-\cardM_1)\le \cardM$ and $\cardM_1\le \cardM$.
	For each matrix $P_h$ the algorithm computes the Shapley values of a correct classifier in $\bigO(\totLabs\cardM)$ time, using the result of~\Cref{theo:ShapleyMCCorrect}.
	For each incorrect label, the algorithm computes the coefficients of the polynomial $Q_j(\gfvar,u)\doteq P_h(\gfvar,u)/(A_j(h;u)+B_j(h;u)\gfvar)$ updating the matrix $P_h$ in place with a time complexity bounded by $\bigO(\totLabs(\cardM-\cardM_1)\cardM_j)$, it then computes the related Shapley value in $\bigO(\totLabs\cardM)=\bigO(\totLabs\cardM\cardM_j)$ time, and it then restores the matrix $P_h$ adding class $j\in[2,\totLabs]$ in $\bigO(\totLabs(\cardM-\cardM_1)\cardM_j)$ time. 
	Therefore, summing over $j\in[2,\totLabs]$ and using $\sum_{j=2}^{\totLabs}\cardM_j=\cardM-\cardM_1$, we obtain a time complexity bounded by $\bigO(\totLabs\cardM^2)$ time for each $h$, and $\bigO(\totLabs\cardM^3)$ over all $h\in[\cardM_1]$. 
	Therefore, the time complexity of~\Cref{alg:ShapMC} is bounded by $\bigO(|\unique|\cardM^3\totLabs+|\dataSet|\cardM)$.
	The memory complexity is bounded by $\bigO(|\unique|\totLabs+ \totLabs\cardM ) = \bigO(\totLabs(|\unique|+\cardM))$.
\end{proof}

\setcounter{algorithm}{1} %

\begin{algorithm2e}[t]
	\caption{\suiteBin: exact computation of Banzhaf and Shapley values for binary classifiers}
	\setcounter{AlgoLine}{0} %
	\PrintSemicolon          %
	\label{alg:mainAlg01}
	\KwIn{$\powerIdx\in \{\banz, \shap\}$; 
		ensemble $\modelSet=\{\model_1,\dots, \model_\cardM\}$;
		dataset $\dataSet$
	}
	\lFor{$i \in [\cardM]$}{
		$\powerIdx_i(\sumGame) \gets 0$\label{alglineSB01:initIdx}
	}
	\ForEach{$(\xp, \y) \in \dataSet$\label{alglineSB01:loopPoints}}{
		$\corrPred \gets \sum_{\model \in \modelSet} \ind[\model(\xp)=\y]$\label{alglineSB01:getCorrect}\,\;
		
		\For{$i \gets 1$ \KwTo $\cardM$\label{alglineSB01:loopModels}}{
			$\powerIdx_i(\sumGame) \gets \powerIdx_i(\sumGame)
			+ \getpsi(\cardM, \corrPred, \model_i(\xp), \y)$\label{alglineSB01:subroutine}\,\;
		}
	}
	\Return{
		$\left\{
		\frac{\powerIdx_i(\sumGame)}{|\dataSet|}
		\right\}_{i \in [\cardM]}$\label{alglineSB01:return}
	}
\end{algorithm2e}

\begin{algorithm2e}[t]
	\setcounter{AlgoLine}{0}   
	\SetEndCharOfAlgoLine{;}
	\caption{$\computeBANZ(n, \corrPred, \model_i(\xp), \y)$ and 
		$\computeSHAP(n, \corrPred, \model_i(\xp), \y)$}
	\label{alg:computeBANZ01}
	
	\KwIn{number of classifiers $\cardM$, number of correct predictions $\corrPred$, 
		prediction $\model_i(\xp)$, label $\y$}
	
	\SetKwProg{Fn}{Procedure}{}{end}
	
	\Fn{$\computeBANZ(n, \corrPred, \model_i(\xp), \y)$}{
		$\wrongPred \gets \cardM - \corrPred$\;
		
		\If{$\model_i(\xp) = \y$}{
			\Return $\left(\binom{\cardM}{\wrongPred}+1\right)2^{-\cardM}$\;
		}
		\Else{
			\Return $-\left(\binom{\cardM}{\corrPred}-1\right)2^{-\cardM}$\;
		}
	}
	
	\BlankLine
	\hrule
	\BlankLine
	
	\Fn{$\computeSHAP(n, \corrPred, \model_i(\xp), \y)$}{
		$a \gets 0$; $b \gets 0$; $\shap_i \gets 0$\;
		
		\If{$\model_i(\xp) = \y$}{
			$a\gets \dfrac{(\cardM-2\corrPred)!(2\corrPred-1)!}{\cardM!}
			\binom{\wrongPred}{\corrPred}$\;
			
			$b\gets \min\{\wrongPred, \corrPred-1\}$\;

			$\shap_i \gets \dfrac{1}{n}
			+ \dfrac{1}{2}\left(
			a + \sum_{h=1}^{b}
			\binom{\corrPred-1}{h-1}
			\binom{\wrongPred}{h}
			\dfrac{(\cardM-2h-1)!(2h-1)!}{\cardM!}
			(2\corrPred + \cardM - 4h)
			\right)$\;
		}
		\Else{
			$a\gets \dfrac{(\cardM-2\wrongPred)!(2\wrongPred-1)!}{\cardM!}
			\binom{\corrPred}{\wrongPred}$\;
			
			$b\gets \min\{\corrPred, \wrongPred-1\}$\;

			$\shap_i \gets -\dfrac{1}{2}\left(
			a + \sum_{h=1}^{b}
			\binom{\wrongPred-1}{h-1}
			\binom{\corrPred}{h}
			\dfrac{(\cardM-2h-1)!(2h-1)!}{\cardM!}
			(2\wrongPred + \cardM - 4h)
			\right)$\;
		}
		
		\Return $\shap_i$\;
	}
\end{algorithm2e}

\section{Pseudocode}
\label{app:missingCodes}

\subsection{Binary classification}

\para{Computation of Banzhaf and Shapley values.}
Pseudocode for our main algorithm, which is presented in~\Cref{subsec:binClass},
is shown in~\Cref{alg:mainAlg01}, 

We detail the missing subroutine \texttt{Get-}$\psi$ of~\Cref{alg:mainAlg01} in~\Cref{alg:computeBANZ01}. 
That is, \Cref{alg:computeBANZ01} simply computes the Banzhaf (when $\powerIdx=\banz$) or Shapley (when $\powerIdx=\shap$) according to~\Cref{theo:Banz01} and~\Cref{theo:Shap01} respectively.

We remark that in practice we do not execute the loop in~\Cref{alglineSB01:loopModels} of~\Cref{alg:mainAlg01}.
That is, we need to execute \texttt{Get-}$\psi$ exactly twice for any point $(\xp,\y)\in \dataSet$ of the dataset $\dataSet$: the first execution is to obtain the values of the power index on $\charBin(\xp)$ for all correct models and the second execution is to obtain the remaining values. 
That is, by symmetry all correct models will have the same power index on the game $\game = (\modelSet, \charBin(\xp))$, and similarly for all the incorrect models.
We discuss how to implement efficiently our method in~\Cref{app:practicalImpl}.

\subsection{Multiclass classification}

\para{Banzhaf computation over $\charMC$ (\suiteMC-$\banz$).} 
We detail the pseudocode of our approach that computes the values of $\banz_i(\sumGameMC)$ in~\Cref{alg:BanzMC}. 
The algorithm proceeds by processing each element independently in the loop from~\Cref{alglineBZMC:2}. It applies the relabeling discussed in~\Cref{subsec:multiclass} from \Cref{alglineBZMC:4} to~\Cref{alglineBZMC:5}, setting class 1 as the correct one and all the remaining classes as incorrect ones, we use $D[\cdot]$ to denote a dictionary data structure.
The main loop of the algorithm from~\Cref{alglineBZMC:8} simply computes the power indices as dictated by~\Cref{theo:BanzMC}.
Note that, while obtaining the coefficients of $P_h$, with our approach, requires $\bigO(\totLabs^2)$ (\Cref{alglineBZMC:10}), we compute the term $A_j(h)$ (\Cref{alglineBZMC:9}) using $A_{j-1}(h)$, i.e., avoiding to recompute the summation for each value of $h$.
The loop in~\Cref{alglineBZMC:12} computes the Banzhaf values for all incorrect models, according to the class they voted to.
We then update the Banzhaf values of each model in~\Cref{alglineBZMC:14} according to the class for which each model voted. 
The final output in~\Cref{alglineBZMC:return} reports the average of the values as captured by the definition of $\game_{\dataSet}$.

Note that our actual implementation does not compute the power indices $\banz_i$ all points in $\dataSet$ as we apply the symmetry property to process only the points in $\dataSet$ yileding \emph{unique} vote profiles as discussed in~\Cref{sec:prelim}. 
We detail how to process only the unique vote profiles in the pseudocode of \suiteMC-$\shap$ that we will discuss next.

\para{Shapley computation over $\charMC$ (\suiteMC-$\shap$).} \Cref{alg:ShapMC} displays the pseudocode of \suiteMC-$\shap$ computing the Shapley values for the multiclass case implementing our theoretical results of~\Cref{theo:ShapleyMCCorrect}.

The algorithm is similar to~\Cref{alg:BanzMC}, while slightly more complex due to the introduction of the bivariate polynomial from~\Cref{theo:ShapleyMCCorrect}. 
We now detail the main differences with respect to~\Cref{alg:BanzMC}. 
First, as mentioned previously, and as captured by our theoretical results we need to compute the Shapley values only for the points yileding unique vote profiles. To do so, we use a dictionary $J_{\text{{unique}}}$ (\Cref{alglineSHMC:5}) which caches the vote profiles seen so far and the Shapley values for the specific game (\Cref{alglineSHMC:missingDict3}). 
If the vote profile has been observed we can simply use the cached values, and update the global Shapley values of all points (\Cref{alglineSHMC:missingDict1} and \Cref{alglineSHMC:26}).
Otherwise the algorithm processes the vote profile, by implementing~\Cref{theo:ShapleyMCCorrect}.
In~\Cref{alglineSHMC:13} we compute $P_h$ with a dynamic programming approach storing the coefficients of the bivariate polynomial necessary to compute the Shapley values for all models. 

The dynamic programming approach is detailed in~\Cref{alg:computeBivariatePoly}.
Such approach computes a matrix
$P_h[k,m]$ containing all coefficients of the bivariate polynomial $P_h(\gfvar,u)$.
That is, each row $k\in[0,\totLabs-1]$ will correspond to the coefficients of the polynomial in $u$ after the extraction of the $k$-th coefficient in $\gfvar$ (i.e., $[\gfvar^k]P_h(\gfvar,u)$).
In other words, the entry $P_h[k,m]$ contains the counts of all coalitions of $\coalset(k,h,m)$.
The algorithm processes one incorrect class $j\in[2,\totLabs]$ at a time, computing the number of sets that such class contributes towards the coefficients of $P_h(\gfvar,u)$.
To detail the recursion used by the algorithm, for $j\in[\totLabs]$ let\footnote{We use the convention that a product $\prod_{i=a}^b x_i$ for $b<a$ is empty and evaluates to 1.}
\[
P^{(j)}_h(\gfvar,u) \doteq \prod_{j'=2}^{j}\left(A_{j'}(h;u) + B_{j'}(h;u)\gfvar\right) \enspace,
\]
be the partial product over the first $j-1$ incorrect classes (arbitrarily labelled). 
We define $P^{(1)}_h(\gfvar,u) = 1$ to correspond to the empty product. 
Note that $P^{(\totLabs)}_h(\gfvar,u) = P_h(\gfvar,u)$.
We also let $\cardM^{(j)} \doteq \sum_{j'=2}^{j}\cardM_{j'}$ be the number of classifiers voting for one of such classes.
The next lemma formalizes the recursion used in~\Cref{alg:computeBivariatePoly}, and proves its correctness.

\begin{lemma}\label{lemma:dpRecursion}
	Fix $h \ge 1$, for $j\in[2,\totLabs]$, $k\in[0,\totLabs-1]$ and $m\in[0,\cardM^{(j)}]$, let
	\begin{equation}\label{eq:dpRecursion}
	P^{(j)}_h[k,m] = \sum_{q=0}^{\min(\cardM_j, h-1,m)} \binom{\cardM_j}{q} P^{(j-1)}_h[k,m-q] + \ind[k\ge 1 \land m\ge h]\binom{\cardM_j}{h}P^{(j-1)}_h[k-1, m-h] \enspace,
	\end{equation}
	where $P^{(1)}_h[k,m] \doteq \ind[k = 0 \land m = 0]$, and $P^{(j)}_h[k,m] \doteq 0$ for $m > \cardM^{(j)}$, as $P^{(j)}_h(\gfvar,u)$ has degree at most $\cardM^{(j)}$ in $u$.
	Then, for each $j=1,\dots,\totLabs$, $k\in[0,\totLabs-1]$ and $m\in[0,\cardM^{(j)}]$ it holds that
	\[
	P^{(j)}_h[k,m] = [\gfvar^k u^m]P^{(j)}_h(\gfvar,u) \enspace.
	\]
\end{lemma}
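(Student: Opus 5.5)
The plan is induction on $j$, using the factorization $P^{(j)}_h(\gfvar,u) = P^{(j-1)}_h(\gfvar,u)\,\bigl(A_j(h;u)+B_j(h;u)\gfvar\bigr)$, which holds directly from the definition of the partial products. For the base case $j=1$, $P^{(1)}_h(\gfvar,u)=1$ is the empty product. Its only nonzero coefficient is $[\gfvar^0u^0]=1$, which matches $P^{(1)}_h[k,m]=\ind[k=0\land m=0]$. Here $\cardM^{(1)}=0$, so only $m=0$ has to be checked.

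For the inductive step, fix $j\ge 2$ and assume $P^{(j-1)}_h[k',m']=[\gfvar^{k'}u^{m'}]P^{(j-1)}_h(\gfvar,u)$ for all $k',m'$. This includes the convention that entries with $m'>\cardM^{(j-1)}$ (or negative indices) are $0$, consistent with the degree bound of $P^{(j-1)}_h$. Expanding the factor, I write $A_j(h;u)=\sum_{q=0}^{h-1}\binom{\cardM_j}{q}u^q$. The terms with $q>\cardM_j$ vanish, so the upper limit may be taken as $\min(\cardM_j,h-1)$. Also $B_j(h;u)\gfvar=\binom{\cardM_j}{h}\gfvar u^h$. Extracting $[\gfvar^ku^m]$ from the product by the Cauchy product rule gives two contributions:
\begin{equation*}
\sum_{q=0}^{\min(\cardM_j,h-1)}\binom{\cardM_j}{q}[\gfvar^ku^{m-q}]P^{(j-1)}_h \;+\; \binom{\cardM_j}{h}[\gfvar^{k-1}u^{m-h}]P^{(j-1)}_h .
\end{equation*}
The coefficient at a negative $u$-exponent is zero, so the first sum may be truncated at $q\le m$. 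The second term is nonzero only if $k\ge1$ and $m\ge h$, which is exactly the indicator in \Cref{eq:dpRecursion}. Substituting the induction hypothesis turns the expression into the right-hand side of \Cref{eq:dpRecursion}.

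Finally I need the degree bound $\deg_u P^{(j)}_h\le \cardM^{(j)}$. This follows because $\deg_u(A_j+B_j\gfvar)\le \min(h,\cardM_j)\le \cardM_j$, since $\binom{\cardM_j}{h}=0$ when $h>\cardM_j$. This justifies setting entries beyond $\cardM^{(j)}$ to zero and makes the recursion's range $m\in[0,\cardM^{(j)}]$ complete. The degree in $\gfvar$ is at most $j-1\le\totLabs-1$, so $k\in[0,\totLabs-1]$ also covers all coefficients.

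The main obstacle is only bookkeeping at the boundaries. One must check that the truncations $q\le\min(\cardM_j,h-1,m)$ and the indicator lose no nonzero term. This rests on the binomial convention $\binom{a}{b}=0$ for $b>a$ and on the zero-padding of $P^{(j-1)}_h$, which I will state explicitly in the induction hypothesis.
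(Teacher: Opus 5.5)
Your proposal is correct and follows the paper's proof essentially step for step. The paper also argues by induction on $j$ through the factorization $P^{(j)}_h = P^{(j-1)}_h\,(A_j(h;u)+B_j(h;u)\gfvar)$, extracts $[\gfvar^k u^m]$ by the Cauchy product, and drops vanishing terms using $\binom{\cardM_j}{q}=0$ for $q>\cardM_j$ and zero-padding of $P^{(j-1)}_h$. Your explicit check of $\deg_u P^{(j)}_h \le \cardM^{(j)}$ is a small addition; the paper only asserts this bound in the statement.
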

\begin{proof}
	We proceed by induction on $j\ge 1$, i.e., the number of incorrect classes evaluated by the recursion.

	\textbf{Base case ($j=1$).} By definition $P^{(1)}_h(\gfvar,u)$ is the empty product, i.e., $P^{(1)}_h(\gfvar,u) = 1 = \gfvar^0u^0$. Therefore, for each $k\in[0,\totLabs-1]$ and $m\ge 0$ it holds that $[\gfvar^ku^m]P^{(1)}_h(\gfvar,u) = \ind[k=0 \land m=0] = P^{(1)}_h[k,m]$.

	\textbf{Inductive step.} 
	Fix $j\in[2,\totLabs]$ and assume that $P^{(j-1)}_h[k,m] = [\gfvar^ku^m]P^{(j-1)}_h(\gfvar,u)$ holds for each $k\in[0,\totLabs-1]$ and $m\ge 0$, where both sides equal $0$ for $m>\cardM^{(j-1)}$, since $P^{(j-1)}_h(\gfvar,u)$ has degree at most $\cardM^{(j-1)}$ in $u$.
	By definition of $P^{(j)}_h(\gfvar,u)$ we have $P^{(j)}_h(\gfvar,u) = P^{(j-1)}_h(\gfvar,u)\left(A_j(h;u) + B_j(h;u)\gfvar\right)$.
	Note that terms $A_j(h;u)$ and $B_j(h;u)$ do not depend on $\gfvar$, thus for $k\in[0,\totLabs-1]$ extracting the coefficient of $\gfvar^k$ yields
	\[
	[\gfvar^k]P^{(j)}_h(\gfvar,u) = A_j(h;u)\,[\gfvar^{k}]P^{(j-1)}_h(\gfvar,u) + B_j(h;u)\,[\gfvar^{k-1}]P^{(j-1)}_h(\gfvar,u) \enspace,
	\]
	where the second term is 0 when $k=0$.
	First note that each of the above terms is a polynomial in $u$.
	Thus, let $m\in[0,\cardM^{(j)}]$, we now extract the coefficient of $u^m$ from each of the two terms above.
	Considering the first term, we have
	\begin{align*}
	[u^m]\left(A_j(h;u)[\gfvar^{k}]P^{(j-1)}_h(\gfvar,u)\right) &= \sum_{q=0}^{h-1}\binom{\cardM_j}{q}[\gfvar^ku^{m-q}]P^{(j-1)}_h(\gfvar,u) \\
	&= \sum_{q=0}^{\min(\cardM_j,h-1,m)}\binom{\cardM_j}{q}P^{(j-1)}_h[k,m-q] \enspace,
	\end{align*}
	the first equality follows by the definition of $A_j(h;u)$ and the Cauchy product.
	The last equality uses the inductive hypothesis and ignores zero terms, 
	that is:
	\begin{inparaenum}[($a$)]
		\item terms with $q>m$, since $m-q<0$ and the polynomial $P^{(j-1)}_h(\gfvar,u)$ has only monomials of non-negative degree in $u$; and
		\item terms with $q > \cardM_j$, since $\binom{\cardM_j}{q}=0$.
	\end{inparaenum}
	For the second term, since $B_j(h;u) = \binom{\cardM_j}{h}u^h$, the multiplication by $u^h$ increases by exactly $h$ the degree in $u$ of each monomial of $P^{(j-1)}_h(\gfvar,u)$. Therefore, the extraction of $[u^m]$ corresponds to extracting $[u^{m-h}]$ from $P^{(j-1)}_h(\gfvar,u)$, hence
	\begin{align*}
	[u^m]\left(B_j(h;u)[\gfvar^{k-1}]P^{(j-1)}_h(\gfvar,u)\right) &= \binom{\cardM_j}{h}[\gfvar^{k-1}u^{m-h}]P^{(j-1)}_h(\gfvar,u) \\
	&= \binom{\cardM_j}{h}P^{(j-1)}_h[k-1,m-h] \enspace,
	\end{align*}
	using the inductive hypothesis.
	Note that the above term equals 0 for $m < h$, and for $\cardM_j < h$ since $\binom{\cardM_j}{h}=0$.
	Summing the two terms yields the right-hand side of~\Cref{eq:dpRecursion}, that is
	\[
	[\gfvar^ku^m]P^{(j)}_h(\gfvar,u) = P^{(j)}_h[k,m] \enspace,
	\]
	concluding the induction, and yielding the claim.
\end{proof}

Note that by the above result we have  $P^{(\totLabs)}_h[k,m] = |\coalset(k,h,m)|$, since $P^{(\totLabs)}_h(\gfvar,u) = P_h(\gfvar,u)$ and by~\Cref{lemma:bivariateGenFunction}.

\Cref{alg:add_class} implements~\Cref{eq:dpRecursion} in place, i.e., using a single matrix $P_h$, initialized to $P^{(1)}_h$ by~\Cref{alg:computeBivariatePoly}.
The term for $q=0$ in~\Cref{eq:dpRecursion} equals $P^{(j-1)}_h[k,m]$, i.e., the current value of the entry, which is maintained in place.
By~\Cref{eq:dpRecursion}, the value of an entry $[k,m]$ is obtained from the entries $[k,m-q]$ for $q\in[1,\min(\cardM_j,h-1)]$ and $[k-1,m-h]$, i.e., from entries in the same row with a smaller column, or in a smaller row.
Since the algorithm processes both rows and columns in decreasing order, these entries are processed after $[k,m]$, and thus each entry (when processed) holds its value for $P^{(j-1)}_h$ when read, preserving the correctness of our result.
Note that by reversing the recursion (e.g., see~\citep{Uno2012WMG}) we can remove any class in-place, as we do in \Cref{alg:remove_class}.
We update the Shapley values $\shap_i$ to be reported in output in~\Cref{alglineSHMC:28}, and finally output the average values over $\dataSet$ (\Cref{alglineSHMC:return}). 
That is upon the computation of the coefficients of the polynomial $P_h(\gfvar,u)$ our~\Cref{alg:ShapMC} simply implements the result from~\Cref{theo:ShapleyMCCorrect}. 

Note that in the loop computing the Shapley values for the incorrect classifiers (\Cref{alglineSHMC:18}), we modify in place the matrix $P_{h}[k,m]$, for a space efficient implementation, by safely removing classes as mentioned above using~\Cref{alg:remove_class}.

We discuss in~\Cref{app:practicalImpl} how to implement our methods without incurring in numerical issues due to the presence of large binomial and factorial terms.

\begin{algorithm2e}[t]
	\setcounter{AlgoLine}{0} 
	\PrintSemicolon          
	\SetEndCharOfAlgoLine{;}
	
	\caption{\suiteMC-$\banz$: exact computation of Banzhaf values for multi\-class classification}
	\label{alg:BanzMC}
	\KwIn{dataset $\dataSet$, ensemble $\modelSet=\{\model_1,\dots, \model_\cardM\}$, labels $\labelSet$.}
	
	$\banz_i \gets 0$ for $i\in[\cardM]$\label{alglineBZMC:1}\;
	
	\For{$(\xp, \y)\in \dataSet$\label{alglineBZMC:2}}{
		$\cardM_1 \gets \sum_{\model \in \modelSet} \ind[\model = \y]$\label{alglineBZMC:3}\;
		$D[\y] \gets 1$; $j\gets 2$\label{alglineBZMC:4}\;
		\For{$r\gets 1$ \KwTo $\totLabs$}{
			\If{$\y_{r}\neq\y$}{
				$\cardM_j\gets\sum_{\model \in \modelSet} \ind[\model = \y_{r}]$; $D[\y_r] \gets j$; $j\gets j+1$\label{alglineBZMC:5}\;
			}
		}   
		
		\lFor{$j\gets 1$ \KwTo $\totLabs$}{$\banz'_{j} \gets 0$\label{alglineBZMC:6}}
		
		$\banz'_1\gets 1$\label{alglineBZMC:7}\;
		
		\For{$h \gets 1$ \KwTo $\cardM_1$\label{alglineBZMC:8}}{
			
			\lFor{$j\gets 2$ \KwTo $\totLabs$}{$A_j(h) \gets \sum_{q=0}^{h-1}{\cardM_j \choose q}$, $B_j(h) \gets {\cardM_j \choose h}$\label{alglineBZMC:9}}   
			
			$P_h(\gfvar) \gets \prod_{j=2}^{\totLabs}(A_j(h) + B_j(h)\gfvar)$\label{alglineBZMC:10}\;
			
			$\banz'_{1} \gets \banz'_{1} + \sum_{k=1}^{\totLabs-1}  \frac{1}{k+1} \left[k{\cardM_1-1 \choose h}  + {\cardM_1-1 \choose h-1}\right] [\gfvar^k]P_{h}(\gfvar)$\label{alglineBZMC:11}\;
			
			\For{$j \gets 2$ \KwTo $\totLabs$\label{alglineBZMC:12}}{
				$\banz'_{j} \gets \banz'_{j} - \sum_{k=0}^{\totLabs-2}\frac{1}{k+2}\binom{\cardM_1}{h}\left[{\cardM_j-1 \choose h}  + \frac{1}{k+1}{\cardM_j-1 \choose h-1}\right] [\gfvar^k]\frac{P_h(\gfvar)}{A_j(h) + B_j(h)\gfvar}$\label{alglineBZMC:13}\;
			}
		}
		
		\For{$i\gets 1$ \KwTo $\cardM$\label{alglineBZMC:14}}{
			$\y_\model \gets \model_i(\xp)$\;
			$j \gets D[\y_\model]$\;
			$\banz_i \gets \banz_i + \banz'_j/2^{\cardM-1}$\label{alglineBZMC:15}\;
		}
	}
	
	\Return{ $\left\{ \banz_i/|\dataSet| \right\}_{i \in [\cardM]}$ }\label{alglineBZMC:return}\;
\end{algorithm2e}

\begin{algorithm2e}[t]
	\setcounter{AlgoLine}{0} 
	\DontPrintSemicolon          
	\SetEndCharOfAlgoLine{;}
	
	\caption{\suiteMC-$\shap$: exact computation of Shapley values for multi\-class classification}
	\label{alg:ShapMC}
	\KwIn{dataset $\dataSet$, ensemble $\modelSet=\{\model_1,\dots, \model_\cardM\}$, labels $\labelSet$.}
	
	$\shap_i \gets 0$ for $i\in[\cardM]$\label{alglineSHMC:1}\;
	
	$J_{\text{{unique}}} \gets$ empty dictionary\label{alglineSHMC:5}\;
	\For{$(\xp, \y)\in \dataSet$\label{alglineSHMC:2}}{
		$\cardM_1 \gets \sum_{\model \in \modelSet} \ind[\model = \y]$\label{alglineSHMC:3}\;
		$D[\y] \gets 1$; $j\gets 2$\label{alglineSHMC:4}\;
		$\VotVec \gets (\cardM_1,0,\dots,0)\in \mathbb{R}^\totLabs$\label{alglineSHMC:missingDict}\;
		\For{$r\gets 1$ \KwTo $\totLabs$\label{alglineSHMC:6}}{
			\If{$\y_{r}\neq\y$\label{alglineSHMC:7}}{
				$\cardM_j\gets\sum_{\model \in \modelSet} \ind[\model = \y_{r}]$\label{alglineSHMC:8}; $\VotVecCoord_{j} \gets \cardM_j$; $D[\y_r] \gets j$; $j\gets j+1$\label{alglineSHMC:9}\;
			}
		}
		\If{$\VotVec\in J_{\text{{unique}}}$\label{alglineSHMC:missingDict1}}{
			$(\shap_1',\dots,\shap_\totLabs') \gets J_{\text{{unique}}}[\VotVec]$ \label{alglineSHMC:missingDict2}\;
			}
		\Else{
		
		\lFor{$j\gets 1$ \KwTo $\totLabs$}{$\shap'_{j} \gets 0$\label{alglineSHMC:10}}
		
		$\shap'_1\gets 1/\cardM$\label{alglineSHMC:11}\;
		
		\For{$h \gets 1$ \KwTo $\cardM_1$\label{alglineSHMC:12}}{
			
			$P_h \gets $\texttt{ComputeCoeffsDP}($\cardM_j $ for $j\in[2,\totLabs], h$)\label{alglineSHMC:13}\;
			
			\For{$k\gets 1$ \KwTo $\totLabs-1$\label{alglineSHMC:14}}
			{
				\For{$m\gets kh$ \KwTo $\cardM-\cardM_1$\label{alglineSHMC:15}}
				{
					$\shap_1'\gets \shap_1' + f_1(k, h, m) P_h[k,m]$\label{alglineSHMC:16}\;
				}
			}
			
			\For{$\cardM_j \in$ $J_{\text{\emph{unique}}}$.\emph{keys()}\label{alglineSHMC:17}}
			{
				$P_h \gets$ \texttt{RemoveClass}($P_h, \cardM_j, h$)\label{alglineSHMC:18}\;
				
				$\shap_{\text{curr}} \gets 0$\label{alglineSHMC:19}\; 
				\For{$k\gets 0$ \KwTo $\totLabs-2$\label{alglineSHMC:20}}
				{
					\For{$m\gets kh$ \KwTo $\cardM-(\cardM_1+\cardM_j)$\label{alglineSHMC:21}}
					{
						$\shap_{\text{curr}}\gets \shap_{\text{curr}} + f_j(k, h, m) P_h[k,m]$\label{alglineSHMC:22}\;
					}
				}
				
				$P_h \gets$ \texttt{AddClass}($P_h, \cardM_j, h, \totLabs-2, \sum_{j'\ge 2} \cardM_{j'} - \cardM_j$)\label{alglineSHMC:25}\;
			}
			$J_{\text{{unique}}}[\VotVec] \gets (\shap_1',\dots, \shap_\totLabs')$\label{alglineSHMC:missingDict3}\;
		}}
		
		\For{$i\gets 1$ \KwTo $\cardM$\label{alglineSHMC:26}}{
			$\y_\model \gets \model_i(\xp)$\;
			$j \gets D[\y_\model]$\label{alglineSHMC:27}\;
			$\shap_i \gets \shap_i + \shap'_j$\label{alglineSHMC:28}\;
		}
	}
	
	\Return{ $\left\{ \shap_i/|\dataSet| \right\}_{i \in [\cardM]}$ }\label{alglineSHMC:return}\;
\end{algorithm2e}

\begin{algorithm2e}[tb]
	\setcounter{AlgoLine}{0}   
	\SetEndCharOfAlgoLine{;}
	\caption{\texttt{ComputeCoeffsDP}($n_j$ for $j\in[2,\totLabs], h$)}
	\label{alg:computeBivariatePoly}
	\KwIn{Number of classifiers $\cardM_j$ for each incorrect class $j\in[2, \totLabs]$, threshold $h$}
	$\cardM' \gets 0$;
	$P_h \gets \bm{0}\in \mathbb{R}^{\totLabs\times(\sum_{j\ge 2}\cardM_j+1)}$\;
	$P_h[0,0]\gets 1$\; 
	\For{$j\gets 2$ \KwTo $\totLabs$}
	{
		$P_h \gets $ \texttt{AddClass}($P_h, \cardM_{j}, h, j-2, n'$)\;
		$\cardM' \gets \cardM' + \cardM_j$\;
	}
	\KwRet{$P_h$}\label{alglineBivatUp:red}\;
\end{algorithm2e}

\begin{algorithm2e}[tb]
	\setcounter{AlgoLine}{0} 
	\DontPrintSemicolon          
	\SetEndCharOfAlgoLine{;}
	
	\caption{\texttt{AddClass}($P_h, \cardM_j, h, u, m'$)}
	\label{alg:add_class}
	\KwIn{Matrix $P_h$, counts $\cardM_j$, threshold $h$, max current label ties $u$, max incorrect votes $m'$}
	
	$q_{\max} \gets \min(\cardM_j, h-1)$\;
	
	\For{$k \gets u$ \textbf{\emph{downto}} $0$} 
	{
		\For{$m \gets m'$ \textbf{\emph{downto}} $0$}
		{
			\If{$P_h[k, m] > 0$}
			{
				\If{$\cardM_j \ge h$}
				{
					$P_h[k+1, m+h] \gets P_h[k+1, m+h] + P_h[k, m] \cdot \binom{\cardM_j}{h}$;\;
				}
				
				\For{$q \gets q_{\max}$ \textbf{\emph{downto}} $1$}
				{
					$P_h[k, m+q] \gets P_h[k, m+q] + P_h[k, m] \cdot \binom{\cardM_j}{q}$;\;
				}
			}
		}
	}
	\Return{$P_h$};\;
\end{algorithm2e}

\begin{algorithm2e}[tb]
	\setcounter{AlgoLine}{0} 
	\DontPrintSemicolon          
	\SetEndCharOfAlgoLine{;}
	
	\caption{\texttt{RemoveClass}($P_h, n_j, h$)}
	\label{alg:remove_class}
	\KwIn{Matrix $P_h$, classifier count $\cardM_j$, threshold $h$}
	
	$q_{\max} \gets \min(\cardM_j, h-1)$\;
	$\cardM' \gets \sum_{i=2}^\totLabs \cardM_i$\;
	
	\For{$k \gets 0$ \KwTo $\totLabs - 2$} 
	{
		\For{$m \gets 0$ \KwTo $\cardM'$}
		{
			\If{$\cardM_j \ge h$ \textbf{\emph{and}} $k \ge 1$  \textbf{\emph{and}} $m \ge h$}
			{
				$P_h[k, m] \gets P_h[k, m]  - P_h[k-1, m-h] \cdot \binom{\cardM_j}{h}$;\;
			}
			
			\For{$q \gets 1$ \KwTo $q_{\max}$}
			{
				\If{$m \ge q$}
				{
					$P_h[k, m] \gets P_h[k, m] - P_h[k, m-q] \cdot \binom{\cardM_j}{q}$;\;
				}
			}
		}
	}
	\Return{$P_h$};\;
\end{algorithm2e}

\para{Monte-Carlo sampling baseline.} We present our baseline $\texttt{MC}$ in~\Cref{alg:mc}. The algorithm takes in input a sample size $s$, the dataset $\dataSet$ and the ensemble $\modelSet$. 
The algorithm process each model $i\in[\cardM]$ independently by sampling $s$ coalitions $\coal$ such that $\mathbb{P}[\coal] = 1/2^{\cardM-1}$ for the Banzhaf value and $\mathbb{P}[\coal] = \frac{(\cardM-|\coal|)!(|\coal|-1)!}{\cardM!}$ for the Shapley index. 
That is, such probability accounts for the weight of each marginal gain $\chF(\coal+i) - \chF(\coal)$ in the respective power index, that is, the estimate is unbiased.
The output averages across samples, and over the entire dataset $\dataSet$ the marginal contribution of $\chF(\coal+i) - \chF(\coal)$ for all the $s$ samples over each point.
Since evaluating $\chF(\coal)$ on a point requires computing all the votes of the sampled coalition with size bounded by $\cardM$, the final complexity is $\bigO(s\cardM^2|\dataSet|)$.
We remark that more advanced and sample efficient techniques exists, e.g.,~\citep{mitchell2022sampling}. 
However, our baseline serves to show that a simple sampling algorithm is significantly more inaccurate and inefficient compared to our algorithms in \suite.

\begin{algorithm2e}[tb]
	\setcounter{AlgoLine}{0} 
	\SetEndCharOfAlgoLine{;}
	
	\caption{Monte-Carlo estimation of $\psi\in\{\shap,\banz\}$}
	\label{alg:mc}
	\KwIn{Dataset $\dataSet$, ensemble $\modelSet$, sample size $s$.}
	$\hat{\psi}_i \gets 0, i \in[\cardM]$\;
	\For{$i\gets 1$ \KwTo $\cardM$}
	{
		\For{$k\gets 1$ \KwTo $s$}
		{
			$\coal\gets \emptyset$ \;
			\If{$\psi = \banz$}
			{
				$\coal \gets \{j \neq i : \mathtt{FairCoin}_j = \mathtt{H}\}$.
			}
			\Else
			{
				$\sigma\gets \mathtt{RandomPermutation}(\cardM)$\;
				$\coal\gets \{j: \sigma_j<\sigma_i \}$\;
			}
			\lFor{$(\xp,\y)\in \dataSet$}{$\hat{\psi}_i \gets \hat{\psi}_i + (\chF(\coal+i) - \chF(\coal))$}
		}
		\KwRet $\left\{\hat{\psi}_i/(s|\dataSet|)\right\}_{i\in[\cardM]}$\;
	}\end{algorithm2e}

\section{Efficient and numerically stable implementation}\label{app:practicalImpl}

In this section we briefly details about our practical implementation of \suite.
We highlight how we addressed specific issues regarding numerical stability and efficiency.

\textbf{Banzhaf power index ($\suite_{\mathsf{B}}$).} For simplicity fix $i\in [\cardM]$ to be a correct classifier, an analogous discussion applies to incorrect classifiers. From~\Cref{theo:Banz01}  we can express the Banzhaf index as ${\cardM \choose \wrongPred}/2^\cardM + (\frac{1}{2}^\cardM)$. 
Therefore, computing the Banzhaf index evaluating each individual term will yield to numerical issues, given by the large values attained by $2^\cardM$ and by ${\cardM \choose \wrongPred}$.
However, we observe that ${\cardM \choose \wrongPred}/2^\cardM$ corresponds to the probability that a Binomial random variable $\mathrm{Bin}(\cardM,\frac{1}{2})$ equals $\wrongPred$. Therefore, we employ such simple results to directly compute the ratio ${\cardM \choose \wrongPred}/2^\cardM$, and we account for the missing term in $\banz_i$ using a numerically stable implementation to evaluate the value $\frac{1}{2}^\cardM$. %

\textbf{Shapley power index ($\suite_{\mathsf{B}}$).} The computation of~\Cref{eq:closed01Shap} by its definition has several drawbacks. 
First it is not numerically stable, due to the large values of the terms involved in the summation, leading to numerical issues. 
Second, there are several repeated calculations, involving factorial and binomial terms. 
To address the above issues we evaluate $\shap_i, i\in[\cardM]$ through a recursive formula.
That is, if we set $w_0 = (\cardM+2\cardM_{\CorW}-4)$, $T_0 = \frac{\cardM_{\WorC}}{\cardM (\cardM-1)(\cardM-2)}$ and 
\[
w_j = w_{j-1} -4 \quad \text{ and } \quad T_h = \frac{2 (\cardM_{\CorW}-h) (\cardM_{\WorC}-h)(2h + 1)}{(h+1)(\cardM-2h -1 )(\cardM-2h -2)} T_{h-1} \, ,
\]
for $h\ge 1$, then we can express $\shap_i$ with the summation
\[
\shap_i = \sum_{h\ge0} T_h w_h \, .
\]
The above expression allows us to compute the Shapley values from~\Cref{eq:closed01Shap} efficiently in linear time $\bigO(\cardM)$. 
That is, we use a constant number of operations to evaluate each term $T_hw_h$. 
Moreover, the above summation is numerically stable since it avoids evaluating factorials and binomials.

\textbf{Banzhaf and Shapley power indices for the multiclass case ($\suite_{\mathsf{MC}}$).}
We note that a naive implementation of~\Cref{alg:BanzMC,alg:ShapMC} would incur in numerical issues (such as integer overflows) due to the computation of large binomial and factorials.
To fix such issue our C++ implementation represents the coefficients of the dynamic programming matrices as exact integers.
Specifically, for each vote profile we use the smallest fixed-width unsigned integer to represent the numbers used in the evaluations of \Cref{theo:BanzMC} and \Cref{theo:ShapleyMCCorrect}, ranging the length from $128$ up to $1024$ bits, which supports ensembles with $\cardM \le 1023$.
We then only evaluate in floating point the final weighted sum corresponding to the power indices.
Note that long integers increase the cost of arithmetical operations such as multiplications, which is however necessary to enable the practical usability of our methods, i.e., retaining the accuracy of the exact computation of our methods.
As a concrete example, the above discussion explains the high runtime of \suite-$\shap$ on the \texttt{CT} dataset and the AdaBoost ensemble from~\Cref{tab:rtMC}.

\begin{table}[t]
	\centering
	\caption{Datasets used for binary and multi\-class classification settings in~\Cref{sec:exps}. 
		Classes ($\totLabs$), denotes the original number of labels, which are also retained for the multi\-class experiments. 
		The last two columns display the number of points in $\dataSetVal\cup\dataSetTra$ 
		with label $y=1$ and $y=0$ after binarization.}
	\label{tab:tabular_datasets}
	\resizebox{0.8\textwidth}{!}{
		\begin{tabular}{l rrrr}
			\toprule
			{Dataset (Label)} & $|\dataSetVal|+|\dataSetTra|$ & {Classes} ($\totLabs$) & $y=1$ & $y=0$ \\
			\midrule
			Blood Transfusion (\texttt{BTSC}) & 187 & 2 & 143 & 44 \\
			Credit-G (\texttt{CR}) & 250 & 2 & 179 & 71 \\
			CNAE-9 (\texttt{C9}) & 270 & 9 & 38 & 232 \\
			Titanic (\texttt{TT}) & 328 & 2 & 210 & 118 \\
			Bank Marketing (\texttt{BM}) & 11\,303 & 2 & 9\,994 & 1\,309 \\
			Covertype (\texttt{CT}) & 145\,253 & 7 & 70\,628 & 74\,625\\
			\bottomrule
		\end{tabular}
	}
\end{table}

\begin{table}[t]
\centering
\caption{Dataset statistics for the zero-shot multiclass classification experiment. For each dataset we display its total number of images, and the number of possible classes. We also show $|\dataSet_{\text{val}}|$, used to compute the ranking metrics, and $|\dataSet_{\text{test}}|$ used to evaluate the generalization of model selection. Column $|U|$ reports the  number of unique vote profiles observed on $\dataSet_{\text{val}}$ (the ensemble has $\cardM=22$ base predictors), averaged over five splits. Other columns do not depend on the split.}
\label{tab:clip_datasets}
\resizebox{0.8\textwidth}{!}{\begin{tabular}{lccccc}
\toprule
Dataset & Total images & Classes (\totLabs) & $|\dataSet_{\text{val}}|$ & $|\dataSet_{\text{test}}|$ & $|U|$ \\
\midrule
CIFAR (Cifar-100) & 10\,000 & 100 & 5\,000 & 5\,000 & $675.2 \pm 6.6$ \\
PETS (Oxford-IIIT Pets) & 3\,669 & 37 & 1\,834 & 1\,835 & $123.6 \pm 5.4$ \\
IMAGENET-A (ImageNet-A) & 7\,500 & 200 & 3\,750 & 3\,750 & $861.0 \pm 11.6$ \\
\bottomrule
\end{tabular}
}
\end{table}

\begin{table}[t]
\centering
\caption{Class-imbalance statistics for the multiclass datasets of~\Cref{subsec:CLIPexp} (computed on $\dataSet_{\text{val}}\cup\dataSet_{\text{test}}$, hence equal on every split). $\totLabs$ is the number of classes. ``$\min$/$\max$'' are the smallest/largest number of points (images) with the same class; $IR =\max/\min$ is the imbalance ratio; $CV$ is the coefficient of variation computed over the cardinality of all classes; and $H_{\text{norm}}\in(0,1]$ is the normalized Shannon entropy of the class distribution ($1$ corresponds to a uniform distribution).}
\label{tab:clip_imbalance}
\resizebox{0.8\textwidth}{!}{\begin{tabular}{lcccccc}
\toprule
Dataset & Classes ($\totLabs$) & $\min$ & $\max$ & $IR$ & CV (\%) & $H_{\text{norm}}$ \\
\midrule
CIFAR (Cifar-100) & 100 & 100 & 100 & 1.00 & 0.0 & 1.000 \\
PETS (Oxford-IIIT Pets) & 37 & 88 & 100 & 1.14 & 2.6 & 1.000 \\
IMAGENET-A (ImageNet-A) & 200 & 3 & 100 & 33.33 & 78.2 & 0.946 \\
\bottomrule
\end{tabular}
}
\end{table}

\begin{figure}[t]
	\centering
	\begin{subfigure}[b]{0.32\textwidth}
		\centering
		\includegraphics[width=0.7\textwidth]{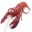}
		\caption{CIFAR}
	\end{subfigure}
	\hfill %
	\begin{subfigure}[b]{0.32\textwidth}
		\centering
		\includegraphics[width=0.45\textwidth]{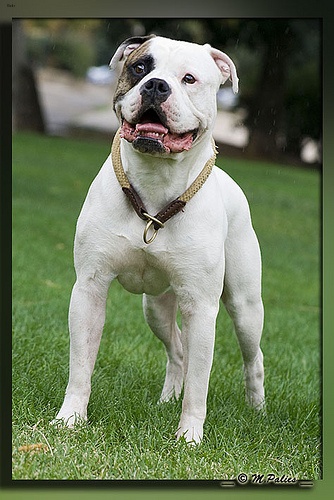}
		\caption{PETS}
	\end{subfigure}
	\hfill
	\begin{subfigure}[b]{0.32\textwidth}
		\centering
		\includegraphics[width=0.45\textwidth]{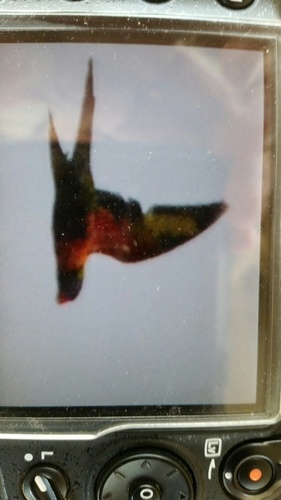}
		\caption{IMAGENET-A}
	\end{subfigure}
	
	\caption{Sample images from the three datasets considered in~\Cref{subsec:CLIPexp}. CIFAR consists of tiny, low resolution (32$\times$32 pixel) images. 
		PETS consists of high quality images of different pets with fine-grained classes, and IMAGENET-A is a notorious challenging dataset for image classification.}
	\label{fig:example_images}
\end{figure}

\section{Reproducibility}\label{app:reprod}

\subsection{Setup}\label{appsubsec:appsetup}

\para{Environment.} The code was executed on a machine with four Intel Xeon Gold 6252N CPU @ 2.30/3.60GHz and 3TB of RAM memory.
All experiments are executed in the same container based on Debian Linux, obtained through Apptainer,\footnote{\url{https://apptainer.org/ }} and that we provide with our source code.\footnote{\codeURL}
Our container builds the necessary environment specifying: the C++ compiler and the Python versions and packages.
Each run is sequential: it uses a single thread on a dedicated CPU core.
All our reported time measurements do not include the training of the ensembles or the evaluation of the predictions of base models.

All our methods use either Python or C++:
\begin{inparaenum}[($i$)]
	\item we use C++17, compiled with \texttt{g++} and optimization flags for all algorithms of \suite, i.e., \suite-$\banz$ and \suite-$\shap$ for the binary and multiclass settings, and \suiteShapImp; and our baselines \texttt{WMG} and \texttt{ISCG}, and the sampling baseline \texttt{MC}.
	\item we use Python 3.10 for the baselines \texttt{Tr}-\texttt{MLE} and \texttt{Tr}-\texttt{PS}, i.e., using the code of~\citet{Rozemberczki2021ShapEnsemble}.\footnote{\url{https://github.com/benedekrozemberczki/shapley}}
\end{inparaenum}
We also use Python to train all ensembles through Scikit-learn, and computing the predictions of CLIP by using PyTorch.

\para{CLIP.} We use CLIP's version ViT-B/32.\footnote{\url{https://github.com/openai/CLIP}}

\para{Ensembles training.} As mentioned in~\Cref{sec:exps} for our experiments comparing with sota (state-of-the-art) approaches we use the $75\%$  of data points of each dataset to train three different ensembles. 
More in detail we use the default classifiers available from Scikit-learn.
In particular,
\begin{compactenum}
	\item For both~\ensRF and~\ensAB we do not set any custom parameter other than the number of classifiers required;
	\item For~\ensHET we train each classifier with slightly different parameters to enforce heterogeneity. 
\end{compactenum}
All the details of the parameter settings are easily accessed in our code repository~\codeURL.

\subsection{Datasets}\label{appsubsec:appdata}
We describe the datasets used in our experiments of~\Cref{sec:exps}. 
The scripts to replicate the processing of our datasets is in our code repository~\codeURL.

\textbf{Binary and multi\-class classification}. The datasets used in our experiments for the binary and multi\-class classification experiments of~\Cref{sec:exps} are reported in~\Cref{tab:tabular_datasets}.
We additionally report the distribution of the classes of each dataset in~\Cref{tab:tabular_datasets}.
All datasets are publicly available.\footnote{\url{https://archive.ics.uci.edu/}.}
Each dataset was processed with a standard pipeline as follows,
\begin{itemize}
	\item for the binary classification scenario we remapped multiple classes by retaining the most frequent class as positive ($y=1$) and all other classes collectively as negative ($y=0$);
	\item numerical features were standardized to zero mean and unit variance;
	\item missing values were set to the median for the respective feature;
	\item categorical features were mapped to integers;
	\item we split each dataset with a 75/15/10 split.  
  The 75\% split of data points is used for training. 
	The 15\% spit constitutes $\dataSetVal$ used to compute the rankings of the power indices according to the various methods. 
	Finally, the remaining 10\% split is used for testing, constituting $\dataSetTra$.
	We first draw the 75\% training split at random, and then split the remaining 25\% of the points at random into $\dataSetVal$ (60\% of the remaining) and $\dataSetTra$ (40\% of remaining points).
\end{itemize}

\textbf{LLM prompt ensembling.}
For our experiments of~\Cref{subsec:CLIPexp} we rely on publicly available datasets for image classification. Their statistics are reported in~\Cref{tab:clip_datasets}, and some sample images are show in~\Cref{fig:example_images}. 
We further report the statistics for class distribution in~\Cref{tab:clip_imbalance}. 
We observe that our datasets are heterogeneous: CIFAR is perfectly balanced (imbalance ratio (IR) $1.0$), PETS is nearly balanced (IR, $1.14$), and IMAGENET-A is highly imbalanced (IR, $33.3$), showing that our methods are not affected by class distribution as captured by our theoretical results.
Each dataset was split retaining 50\% of its points for $\dataSet_{\text{val}}$ and the rest for $\dataSetTra$. 

\subsection{Baselines}\label{appsec:bl}

\para{Baselines for binary classification.} 
Here we provide more details for the baselines of our experiments. 

\texttt{ISCG}~\citep{Ykhlef2017GraphShapEnsamble} computes the Shapley values of each base classifier
through a game on a graph capturing pairwise relationship between base classifier (see~\Cref{sec:related}) and their diversity across predictions.
Note that the game of \texttt{ISCG} does \emph{not} capture how the actual ensemble aggregates their votes.
The game is an induced subgraph game~\citep{Deng1994WMG} on the complete graph where each vertex corresponds to a base classifier, with the addition of self-loops.
An edge $\{i,j\}$ has weight $w_{ij}$, and the self-loop at vertex $i$ has weight $w_{ii}$.
The value $v$ of a coalition $\coal$ corresponds to the sum of the weights of the edges and self-loops with both endpoints in $\coal$.
The Shapley value of $\model_i$ corresponds to $w_{ii}+\frac{1}{2}\sum_{j\neq i}w_{ij}$~\citep{Deng1994WMG}.
The weights depend only on which classifiers are correct.
For a point drawn uniformly at random from $\dataSetVal$, let $Z_i=1$ if $\model_i$ classifies the point correctly, and $Z_i=0$ otherwise.
Following~\citet[Eq.~(5) to (7)]{Ykhlef2017GraphShapEnsamble}, we set
\[
	w_{ii}=\Pr[Z_i=1], \qquad
	w_{ij}=\tfrac{1}{2}\big(\Pr[Z_i=1\mid Z_j=0]+\Pr[Z_j=1\mid Z_i=0]\big) \quad \text{for } i\neq j.
\]
Hence $w_{ii}$ corresponds to the accuracy of $\model_i$ on $\dataSetVal$.
While the term $\Pr[Z_i=1\mid Z_j=0]$ corresponds to the fraction of the points misclassified by $\model_j$ that $\model_i$ classifies correctly.
When $\model_j$ performs no classifications, we set the fraction to $0$.
We compute the weights on $\dataSetVal$, as for all other methods.

\texttt{Tr}-\texttt{MLE} and \texttt{Tr}-\texttt{PS} implement the algorithms 
of~\citet{Owen1972MLE} and \citet{Maleki2013Shap}, respectively.
Both methods compute the Shapley values under the definition of game introduced in \citet{Rozemberczki2021ShapEnsemble}, and both methods are approximate.
In our setting, for each point $\evz=(\xp,\y)$ in $\dataSetVal$ \texttt{Troupe} considers a simple weighted majority game, in which base classifier $\model_i$ has weight $w_i(\xp)=\frac{1}{\cardM}\ind[\model_i(\xp)=\y]$.
And for a fixed coalition $\coal\subseteq \modelSet$ it holds
\[
	v_{\mathsf{Tr}}(\evz,\coal)=\ind\Big[\sum_{i\in\coal} w_i(\xp) > q\Big], \qquad q=\tfrac{1}{2}\enspace.
\]
That is, a coalition $\coal$ attains a value of 1 for the game over $\evz$ it contains at least $\cardM/2$ base classifiers that are correct on $\xp$.
Note that the quota is relative to the size $\cardM$ of the ensemble, not to $|\coal|$, and must be fixed in advance.
The power-index that we report for $\model_i, i\in[\cardM]$ corresponds to average of its Shapley value over the games of the points in $\dataSetVal$.
We use the code of~\citet{Rozemberczki2021ShapEnsemble} with the above weights and quota.
\texttt{Tr}-\texttt{PS} samples $1\,000$ permutations, and we execute it five times.
While, \texttt{Tr}-\texttt{MLE} is deterministic.
We note that on a point with at most $\cardM/2$ correct base classifiers the grand coalition has value $0$, and every Shapley value of that game is $0$.
The code of the authors does not return $0$ on such a point, and we use their code unaltered.\footnote{\url{https://github.com/benedekrozemberczki/shapley}}

\texttt{WMG} implements the algorithms of~\cite{Uno2012WMG} 
for computing the Banzhaf and Shapley values under WMGs, 
for which we set the weight of a classifier on a point to 1 if it predicts the label of the point correctly and to 0 otherwise, and the quota to $q=\lfloor\cardM/2\rfloor + 1$.
Each point of $\dataSetVal$ thus defines one WMG.
For each model $\model_i$, its power index Banzhaf (resp.\ Shapley) value that we report under \texttt{WMG} is obtained as the average over the points of $\dataSetVal$.
Note that in a WMG for a fixed point, the power index depends only on the number of correct base classifiers, i.e., $\corrPred$.
We therefore run the dynamic program of~\citet{Uno2012WMG} once per distinct value of $\corrPred$, and reuse its output for all points with the same  values of $\corrPred$.

We have also implemented two simple Monte-Carlo algorithms for our game, 
denoted as \texttt{MC}-\shap and \texttt{MC}-\banz. 
Given that these algorithms require a high runtime to converge, and 
have poor estimation quality (see~\Cref{tab:mc_ERR}) 
we did not consider such baselines in our discussion. On every configuration, we draw $\cardM^2$ samples for each base classifier. We remark that obtaining efficient sampling algorithms for our formulation remains an interesting and challenging open direction, especially for the multiclass case.

\para{Baselines for multi\-class classification.}
All our baselines based on WMGs do not directly extend to the multiclass setting (see~\Cref{sec:RWExtended}).
Possible ways to adapt such baselines for a fixed point $\evz=(\xp,\y)$ are to average the power indices obtained over
\begin{inparaenum}[($i$)]
	\item one binary game for each pair of labels, i.e., $\bigO(\totLabs^2)$ games for each point; or
	\item one binary game between $\y$ and each of the $\totLabs-1$ incorrect labels, i.e., $\bigO(\totLabs)$ games for each point.
\end{inparaenum}
The only method admitting a multiclass formulation, by~\citet{Ykhlef2017GraphShapEnsamble}, has no public implementation, and it already correlates poorly with our indices in the binary case in our implementation (specifically on \ensRF and \ensHET, see \Cref{subsec:rankingComp}).
We therefore consider \texttt{Tr}-\texttt{MLE} and \texttt{Tr}-\texttt{PS} as our multiclass baselines, with a simpler reduction than ($i$) and ($ii$): a single binary game for each point, between $\y$ and all other incorrect labels.
The game $v_{\mathsf{Tr}}$ of \texttt{Troupe} is a simple weighted game where a coalition (i.e., a sub-ensemble) either wins or loses.
Hence, such formulation cannot capture which of the $\totLabs-1$ incorrect classes an incorrect model votes for.
For each point $\evz=(\xp,\y)$ of $\dataSetVal$ we evaluate \texttt{Troupe} as follows: we retain only the correctness of the predictions of model $\model_i$ on the data point $\xp$, i.e., $\ind[\model_i(\xp)=\y]$; and we assign to each base classifier a weight $w_i(\xp)=\frac{1}{\cardM}\ind[\model_i(\xp)=\y]$, as for the binary case.
That is, a coalition $\coal$ wins, i.e., $v_{\mathsf{Tr}}(\evz,\coal)=1$, if it contains more than $\cardM/2$ correct base classifiers (quota $q=\frac12$). 
We then execute \texttt{Tr}-\texttt{MLE} and \texttt{Tr}-\texttt{PS} on such game with the parameters of the binary case, and average the values of each $\model_i, i\in[\cardM]$ over the points of $\dataSetVal$.
The above game $v_{\mathsf{Tr}}$ differs substantially from $\charMC$:
\begin{inparaenum}[($i$)]
	\item each incorrect base classifier has weight $0$, i.e., models voting for different incorrect labels cannot be distinguished;
	\item $v_{\mathsf{Tr}}$ requires more than $\cardM/2$ correct classifiers to evaluate to 1, while $\charMC$ does not require a fixed quota; and
	\item $v_{\mathsf{Tr}}$ does not account for ties (see~\Cref{eq:charMC}), as it requires a strict majority.
\end{inparaenum}
For example, let $\cardM=22$, and consider a point $\evz$ with $8$ correct votes and $2$ votes on each of the remaining $7$ incorrect labels.
According to the plurality voting rule, the ensemble classifies $\xp$ correctly, thus $\charMC(\evz,\modelSet)=1$, while $v_{\mathsf{Tr}}(\evz,\modelSet)=0$, and every Shapley value of $v_{\mathsf{Tr}}$ for the game over $\evz$ is $0$, since $v_{\mathsf{Tr}}(\evz,\coal)=0$ for every $\coal\subseteq\modelSet$.
Data points with less than $\cardM/2$ base correct classifiers are common in our experiments, e.g., on IMAGENET-A (see~\Cref{tab:results_imagenet-a}).
 
\begin{table}[t]
\setlength{\aboverulesep}{0.6pt}
\setlength{\belowrulesep}{0.6pt}
\setlength{\extrarowheight}{1pt}
\centering
\caption{Approximation error of the \texttt{MC} baseline.
For each dataset and power index we report the maximum absolute $(\varepsilon_{\text{abs}})$ and the maximum relative approximation error $(\varepsilon_{\text{rel}})$,
over all models from $\modelSet$.
We further report the Kendall's ($\tau$) and Spearman ($\rho$) ranking coefficients, with the exact scores computed by \suite.  The results are the average and standard deviation over five random splits.}
\label{tab:mc_ERR}
\resizebox{0.8\textwidth}{!}{\begin{tabular}{lll cccc}
\toprule
Dataset & Ensemble & Index & $\varepsilon_{\text{abs}}$ & $\varepsilon_{\text{rel}}$ & Kendall's ($\tau$) & Spearman ($\rho$) \\
\midrule
\multirow{6}{*}{\texttt{BTSC}} & \multirow{2}{*}{\texttt{RF}} & $\shap$ & $0.0045 \pm 0.0007$ & $0.42 \pm 0.15$ & $0.720 \pm 0.036$ & $0.889 \pm 0.027$ \\
 &  & $\banz$ & $0.0003 \pm 0.0000$ & $3.34 \pm 4.51$ & $0.967 \pm 0.004$ & $0.997 \pm 0.001$ \\
\cmidrule{2-7}
 & \multirow{2}{*}{\texttt{AB}} & $\shap$ & $0.0013 \pm 0.0002$ & $0.26 \pm 0.18$ & $0.819 \pm 0.030$ & $0.930 \pm 0.017$ \\
 &  & $\banz$ & $0.0007 \pm 0.0001$ & $0.07 \pm 0.02$ & $0.794 \pm 0.036$ & $0.917 \pm 0.023$ \\
\cmidrule{2-7}
 & \multirow{2}{*}{\texttt{HET}} & $\shap$ & $0.0111 \pm 0.0021$ & $0.51 \pm 0.10$ & $0.515 \pm 0.094$ & $0.676 \pm 0.105$ \\
 &  & $\banz$ & $0.0005 \pm 0.0002$ & $13.72 \pm 25.07$ & $0.886 \pm 0.073$ & $0.964 \pm 0.034$ \\
\midrule
\multirow{6}{*}{\texttt{CR}} & \multirow{2}{*}{\texttt{RF}} & $\shap$ & $0.0043 \pm 0.0005$ & $0.38 \pm 0.06$ & $0.780 \pm 0.026$ & $0.930 \pm 0.015$ \\
 &  & $\banz$ & $0.0004 \pm 0.0001$ & $1.61 \pm 0.96$ & $0.968 \pm 0.005$ & $0.997 \pm 0.001$ \\
\cmidrule{2-7}
 & \multirow{2}{*}{\texttt{AB}} & $\shap$ & $0.0010 \pm 0.0002$ & $0.28 \pm 0.20$ & $0.882 \pm 0.014$ & $0.964 \pm 0.006$ \\
 &  & $\banz$ & $0.0004 \pm 0.0001$ & $0.07 \pm 0.08$ & $0.863 \pm 0.021$ & $0.953 \pm 0.010$ \\
\cmidrule{2-7}
 & \multirow{2}{*}{\texttt{HET}} & $\shap$ & $0.0106 \pm 0.0022$ & $0.46 \pm 0.14$ & $0.558 \pm 0.170$ & $0.720 \pm 0.194$ \\
 &  & $\banz$ & $0.0007 \pm 0.0001$ & $0.96 \pm 0.87$ & $0.931 \pm 0.037$ & $0.986 \pm 0.010$ \\
\midrule
\multirow{6}{*}{\texttt{C9}} & \multirow{2}{*}{\texttt{RF}} & $\shap$ & $0.0063 \pm 0.0009$ & $0.31 \pm 0.04$ & $0.288 \pm 0.089$ & $0.406 \pm 0.112$ \\
 &  & $\banz$ & $0.0001 \pm 0.0000$ & $0.60 \pm 0.35$ & $0.791 \pm 0.043$ & $0.938 \pm 0.021$ \\
\cmidrule{2-7}
 & \multirow{2}{*}{\texttt{AB}} & $\shap$ & $0.0023 \pm 0.0002$ & $0.10 \pm 0.05$ & $0.800 \pm 0.017$ & $0.929 \pm 0.008$ \\
 &  & $\banz$ & $0.0015 \pm 0.0003$ & $0.06 \pm 0.03$ & $0.764 \pm 0.041$ & $0.910 \pm 0.022$ \\
\cmidrule{2-7}
 & \multirow{2}{*}{\texttt{HET}} & $\shap$ & $0.0148 \pm 0.0029$ & $0.42 \pm 0.09$ & $0.457 \pm 0.139$ & $0.609 \pm 0.152$ \\
 &  & $\banz$ & $0.0003 \pm 0.0001$ & $3.42 \pm 2.97$ & $0.816 \pm 0.065$ & $0.935 \pm 0.032$ \\
\midrule
\multirow{6}{*}{\texttt{TT}} & \multirow{2}{*}{\texttt{RF}} & $\shap$ & $0.0056 \pm 0.0009$ & $0.30 \pm 0.03$ & $0.444 \pm 0.070$ & $0.620 \pm 0.082$ \\
 &  & $\banz$ & $0.0001 \pm 0.0000$ & $5.38 \pm 3.58$ & $0.963 \pm 0.008$ & $0.996 \pm 0.001$ \\
\cmidrule{2-7}
 & \multirow{2}{*}{\texttt{AB}} & $\shap$ & $0.0010 \pm 0.0000$ & $0.12 \pm 0.06$ & $0.936 \pm 0.015$ & $0.989 \pm 0.005$ \\
 &  & $\banz$ & $0.0006 \pm 0.0001$ & $0.06 \pm 0.02$ & $0.891 \pm 0.020$ & $0.970 \pm 0.008$ \\
\cmidrule{2-7}
 & \multirow{2}{*}{\texttt{HET}} & $\shap$ & $0.0136 \pm 0.0028$ & $0.51 \pm 0.08$ & $0.706 \pm 0.047$ & $0.872 \pm 0.032$ \\
 &  & $\banz$ & $0.0006 \pm 0.0001$ & $0.86 \pm 0.63$ & $0.913 \pm 0.022$ & $0.980 \pm 0.007$ \\
\midrule
\multirow{6}{*}{\texttt{BM}} & \multirow{2}{*}{\texttt{RF}} & $\shap$ & $0.0056 \pm 0.0007$ & $0.31 \pm 0.04$ & $0.081 \pm 0.075$ & $0.126 \pm 0.117$ \\
 &  & $\banz$ & $0.0000 \pm 0.0000$ & $3.14 \pm 2.79$ & $0.971 \pm 0.006$ & $0.997 \pm 0.001$ \\
\cmidrule{2-7}
 & \multirow{2}{*}{\texttt{AB}} & $\shap$ & $0.0015 \pm 0.0002$ & $0.90 \pm 0.47$ & $0.929 \pm 0.020$ & $0.984 \pm 0.009$ \\
 &  & $\banz$ & $0.0004 \pm 0.0002$ & $0.46 \pm 0.29$ & $0.941 \pm 0.029$ & $0.987 \pm 0.012$ \\
\cmidrule{2-7}
 & \multirow{2}{*}{\texttt{HET}} & $\shap$ & $0.0134 \pm 0.0026$ & $0.43 \pm 0.07$ & $0.187 \pm 0.107$ & $0.259 \pm 0.132$ \\
 &  & $\banz$ & $0.0001 \pm 0.0000$ & $0.51 \pm 0.47$ & $0.970 \pm 0.018$ & $0.996 \pm 0.003$ \\
\midrule
\multirow{6}{*}{\texttt{CT}} & \multirow{2}{*}{\texttt{RF}} & $\shap$ & $0.0056 \pm 0.0007$ & $0.29 \pm 0.04$ & $0.053 \pm 0.072$ & $0.077 \pm 0.097$ \\
 &  & $\banz$ & $0.0000 \pm 0.0000$ & $0.53 \pm 0.54$ & $0.970 \pm 0.008$ & $0.997 \pm 0.001$ \\
\cmidrule{2-7}
 & \multirow{2}{*}{\texttt{AB}} & $\shap$ & $0.0010 \pm 0.0002$ & $1.68 \pm 2.00$ & $0.843 \pm 0.020$ & $0.945 \pm 0.011$ \\
 &  & $\banz$ & $0.0004 \pm 0.0001$ & $6.28 \pm 4.13$ & $0.825 \pm 0.037$ & $0.929 \pm 0.031$ \\
\cmidrule{2-7}
 & \multirow{2}{*}{\texttt{HET}} & $\shap$ & $0.0119 \pm 0.0024$ & $0.40 \pm 0.05$ & $0.666 \pm 0.036$ & $0.848 \pm 0.026$ \\
 &  & $\banz$ & $0.0002 \pm 0.0001$ & $0.28 \pm 0.20$ & $0.980 \pm 0.005$ & $0.998 \pm 0.000$ \\
\bottomrule
\end{tabular}
}
\end{table}

\begin{table}[t]
	\centering
	\caption{Runtime (seconds) of \suite on $\dataSetVal$ in the multiclass setting. The results are the average and standard deviation over five random splits. \suiteShapImp corresponds to the refined algorithm of~\Cref{app:refinedComplexity}.}
	\label{tab:rtMC}
	\begin{tabular}{llrr}
\toprule
Ensemble & Method & \texttt{C9} & \texttt{CT} \\
\midrule
\multirow{3}{*}{\texttt{RF} ($\cardM=50$)} & \suite-$\banz$ & $(3.7 \pm 0.4) \cdot 10^{-3}$ & $0.10 \pm 0.01$ \\
 & \suite-$\shap$ & $0.14 \pm 0.02$ & $2.6 \pm 0.2$ \\
 & \suiteShapImp & $(2.8 \pm 0.4) \cdot 10^{-2}$ & $0.48 \pm 0.04$ \\
\midrule
\multirow{3}{*}{\texttt{AB} ($\cardM=150$)} & \suite-$\banz$ & $(5.0 \pm 0.9) \cdot 10^{-2}$ & $1.9 \pm 0.4$ \\
 & \suite-$\shap$ & $7.8 \pm 1.4$ & $265 \pm 50$ \\
 & \suiteShapImp & $2.7 \pm 0.5$ & $77 \pm 14$ \\
\midrule
\multirow{3}{*}{\texttt{HET} ($\cardM=30$)} & \suite-$\banz$ & $(1.5 \pm 0.2) \cdot 10^{-3}$ & $(4.0 \pm 0.1) \cdot 10^{-2}$ \\
 & \suite-$\shap$ & $(3.3 \pm 0.4) \cdot 10^{-2}$ & $0.42 \pm 0.01$ \\
 & \suiteShapImp & $(7.4 \pm 1.2) \cdot 10^{-3}$ & $(9.0 \pm 0.3) \cdot 10^{-2}$ \\
\bottomrule
\end{tabular}

\end{table}

\begin{figure}[t]
	\centering
	\includegraphics[width=\textwidth]{plots/runtime/rt_leg_credit-g}\\
	\begin{tabular}{ccc}
		\includegraphics[width=0.29\textwidth]{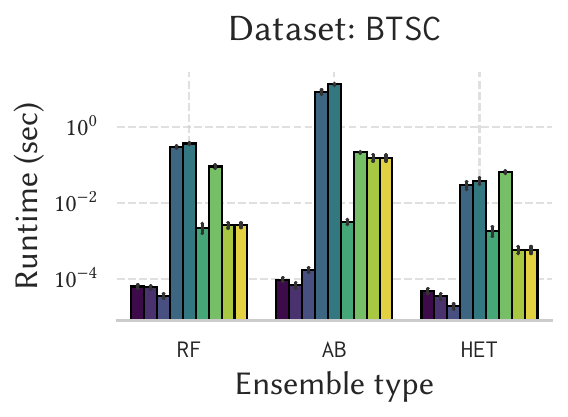}& \includegraphics[width=0.29\textwidth]{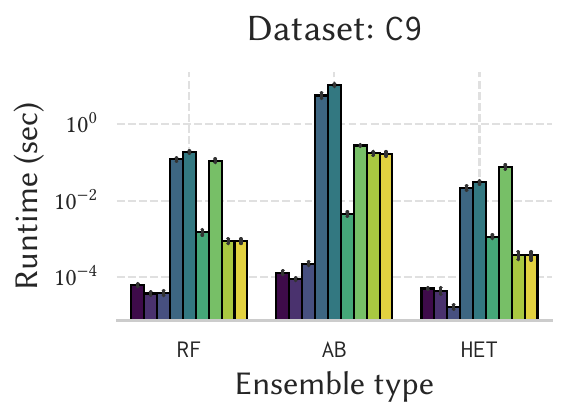} &
		\includegraphics[width=0.29\textwidth]{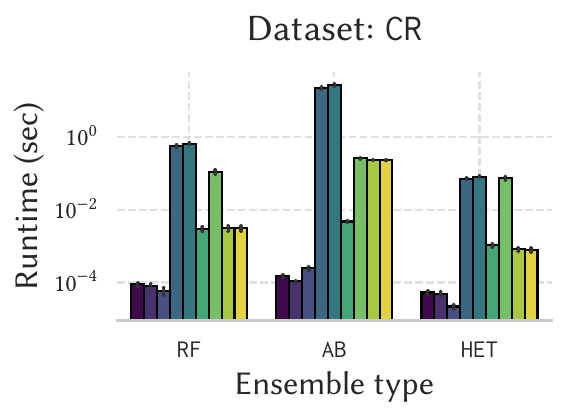}
	\end{tabular}
	\caption{Runtime comparison (mean and standard deviation) on the missing datasets from~\Cref{tab:tabular_datasets} not reported in~\Cref{sec:exps} due to space constraints.}
	\label{fig:rt2}
\end{figure}

\section{Additional experimental results}\label{sec:additionalExps}

\subsection{Monte-Carlo approximation}\label{subsec:MCappxApp}
We show the estimation quality achieved by our simple sampling baseline from~\Cref{alg:mc} in~\Cref{tab:mc_ERR}. We computed, for each model the absolute error $\varepsilon_{\text{abs}} = |\powerIdx_i - \hat{\powerIdx_i}|$ and the relative error $\varepsilon_{\text{rel}} = |\powerIdx_i - \hat{\powerIdx_i}|/|{\powerIdx_i}|$ where $\hat{\powerIdx_i}$ corresponds to the estimate provided by  \texttt{MC}-$\powerIdx$ for $\powerIdx\in\{\banz,\shap\}$. 
Finally we also report the Kendall's $\tau$ and Spearman's correlation metrics.
We recall that the results refer to the binary classification in~\Cref{subsec:binClass}, where the sample size is $s=\cardM^2$; note that the number of possible samples is exponential in $n$.

We observe from~\Cref{tab:mc_ERR}, that overall the performances of the sampling approach are quite remarkable in recovering most of the ranking metrics.
However, we also note that on some configurations (e.g., \texttt{BM} with \texttt{RF} and \texttt{HET}, and \texttt{CT} with \texttt{RF}) the ranking metrics with the exact Shapley values are below $0.3$. 
We also observe a non-negligible relative error in some configurations (e.g., \texttt{BTSC} with \texttt{HET}), likely due to the small values of the values of $\shap_i$ and $\banz_i$ for $i\in[\cardM]$. 
This connects to a well known phenomena for sampling algorithms, which require a very high sample size to provide high quality approximations of extremely small values~\citep{MitzenmacherUpfal2017}.

We remark that we leave as an open direction designing efficient sampling algorithms, especially for the multiclass case, e.g., by reusing samples across different models.

\subsection{Runtime of \suiteMC}
\Cref{tab:rtMC} reports the runtime of our methods on the two datasets of~\Cref{tab:tabular_datasets} with $\totLabs\ge3$ labels, i.e., datasets\texttt{C9} and \texttt{CT}.
\suite-$\banz$ requires at most 2 seconds in all configurations, including \texttt{CT}, our largest dataset.
\suite-$\shap$ requires at most 8 seconds on \texttt{C9}, and at most 3 seconds on \texttt{CT} with \texttt{RF} and \texttt{HET}.
The only exception corresponds to \texttt{CT} with \texttt{AB}, our largest ensemble ($\cardM=150$), where \suite-$\shap$ requires approximately 4 minutes, due to the $\bigO(\cardM^3)$ term in its complexity and to the exact integer arithmetic (see~\Cref{app:practicalImpl}); \suiteShapImp reduces this runtime to about 77 seconds.

\subsection{Accuracy of the base classifiers}\label{subsec:accSpreadApp}
\Cref{tab:accSpread} reports the mean and the standard deviation of the accuracy of the base classifiers on $\dataSetVal$, for each combination of dataset and ensemble.
\begin{table}[h]
	\centering
	\caption{Accuracy on $\dataSetVal$ of the base classifiers of each ensemble, binary classification. Each entry is the mean $\pm$ the standard deviation over the base classifiers of the ensemble, both averaged over five random splits.}
	\label{tab:accSpread}
	\resizebox{\textwidth}{!}{\begin{tabular}{lcccccc}
\toprule
Ensemble & \texttt{BTSC} & \texttt{CR} & \texttt{C9} & \texttt{TT} & \texttt{BM} & \texttt{CT} \\
\midrule
\texttt{RF} & $0.697 \pm 0.033$ & $0.648 \pm 0.035$ & $0.979 \pm 0.014$ & $0.884 \pm 0.032$ & $0.868 \pm 0.003$ & $0.886 \pm 0.005$ \\
\texttt{AB} & $0.510 \pm 0.251$ & $0.505 \pm 0.176$ & $0.534 \pm 0.376$ & $0.535 \pm 0.156$ & $0.545 \pm 0.311$ & $0.519 \pm 0.025$ \\
\texttt{HET} & $0.745 \pm 0.032$ & $0.700 \pm 0.036$ & $0.940 \pm 0.062$ & $0.834 \pm 0.121$ & $0.895 \pm 0.008$ & $0.768 \pm 0.068$ \\
\bottomrule
\end{tabular}
}
\end{table}

\subsection{Ensemble sub-selection}\label{subsec:expsSubApp}
We present additional experimental results for the sub-ensemble selection task in~\Cref{fig:gener2}.
To complement our discussion, we observe that a selection of the top-$k$ models with the rankings provided by \suite yields a sub-ensemble consistently ranking among the ones with best generalization for the accuracy and $F_1$-score metrics. There are, however few exceptions.
On \texttt{CT} with \ensAB, \textsc{Acc} attains higher accuracy and $F_1$-score for intermediate values of $k$.
On \texttt{CR} with \ensHET, \texttt{WMG}-\shap and \texttt{WMG}-\banz attain higher metrics for small values of $k$.
On both configurations the base classifiers have similar accuracy (see~\Cref{tab:accSpread}), thus the rankings of the methods depend on how each formulation weights the classifiers on points with tied votes.
The indices of \suite capture the contribution of each base classifier to the full ensemble with random tie-breaking, and such contributions may differ on a sub-ensemble of size $k< \cardM$. 
We believe that such behavior may also be related to other specific properties of the datasets and of the base models.
Connecting properties of the input dataset with the use of the indices of \suite for sub-ensemble selection is an interesting and important direction for our work.
We note that when performing sub-ensemble selection, a different approach is to recompute the indices of the remaining players after selecting a model, i.e., see \citep{Cohen2007}.
\begin{figure}[t]
	\centering
	\includegraphics[width=\textwidth]{plots/generalization/legend}\\
	\begin{tabular}{cc}
		\includegraphics[width=0.48\textwidth]{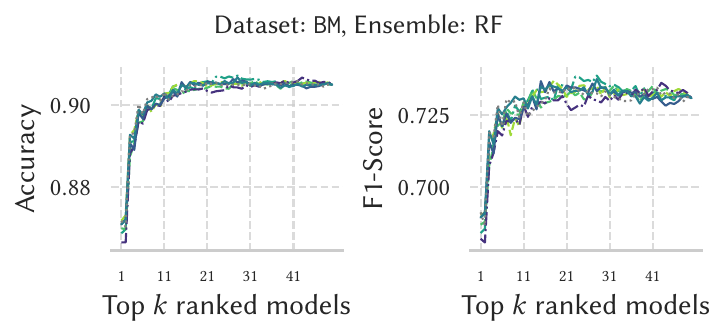}& \includegraphics[width=0.48\textwidth]{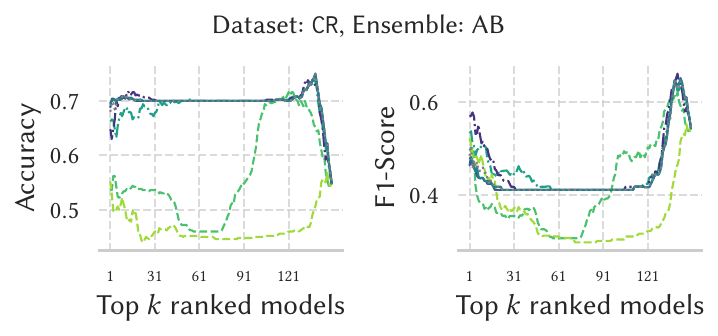} \\
		\includegraphics[width=0.48\textwidth]{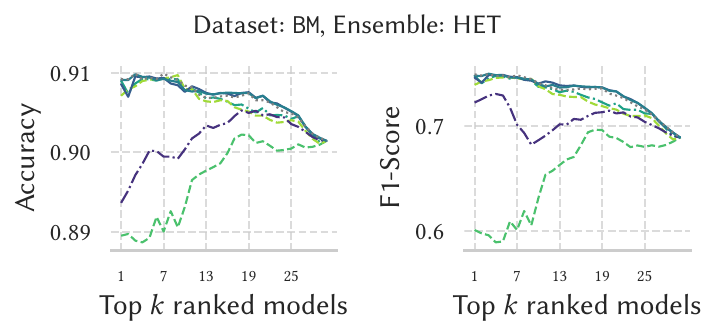} &
		\includegraphics[width=0.48\textwidth]{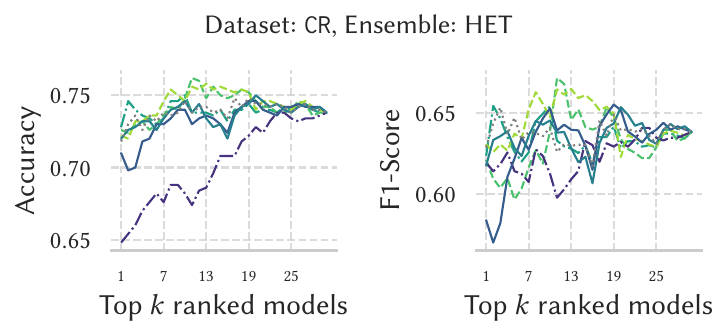} \\
		\includegraphics[width=0.48\textwidth]{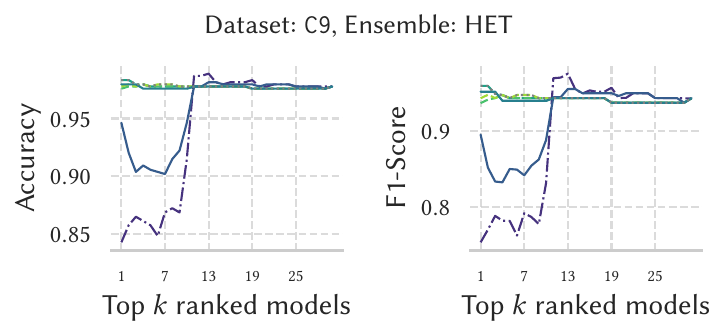} &
		\includegraphics[width=0.48\textwidth]{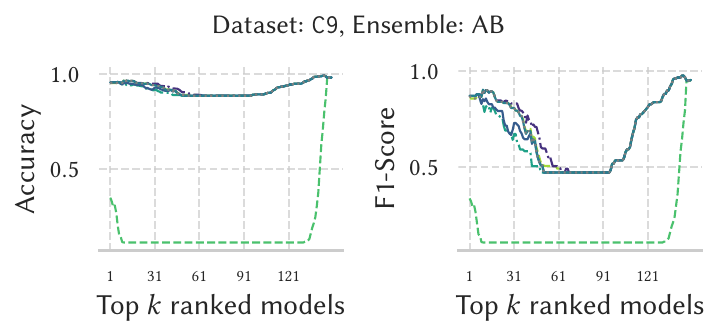}\\
		\includegraphics[width=0.48\textwidth]{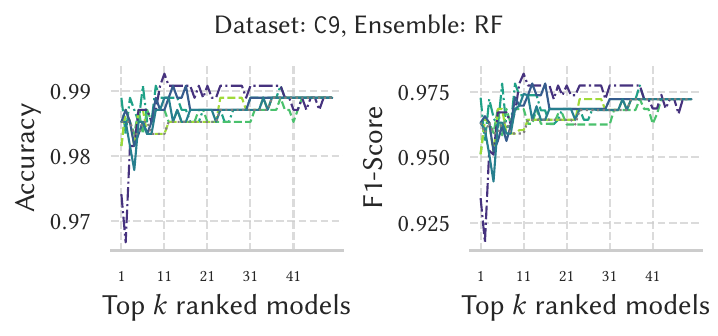} &
		\includegraphics[width=0.48\textwidth]{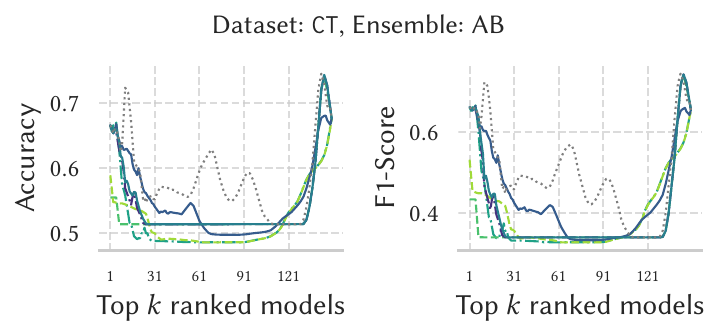}
	\end{tabular}
	\caption{Generalization metrics for sub-ensemble selection, missing results from \Cref{sec:subens}.}
	\label{fig:gener2}
\end{figure}

\subsection{CLIP prompt ensembling}\label{subsec:CLIPpromptApp}

The missing results of~\Cref{subsec:CLIPexp} on the PETS dataset are reported in~\Cref{fig:missingCLIP}.
\Cref{tab:clipSummary} reports, for each ranking, the best accuracy and $F_1$-score over $k$, the value of $k$ attaining it, and the average of each metric over $k\in[\cardM]$.
The best $k\in[\cardM]$ corresponds to the value that maximizes the mean metric over the five $\dataSetTra$ (i.e., chosen independently for the accuracy and $F_1$-score). 
Each entry of the table corresponds to the mean and its the standard deviation over the splits.
As discussed in our main~\Cref{subsec:CLIPexp}, a selection according to the power indices yields metrics comparable to \textsc{Acc} and higher than \texttt{Tr}-\texttt{MLE} and \texttt{Tr}-\texttt{PS}.

\Cref{tab:clipRuntime} reports the runtime of all methods on $\dataSetVal$.
\suite-$\banz$ terminates in less than $0.02$ seconds and \suite-$\shap$ in less than one second on all three datasets, while \texttt{Tr}-\texttt{PS} requires up to $1.3$ seconds.

In addition we also show the variation in the rankings produced by accuracy and our methods on a fixed split in~\Cref{tab:results_cifar100,tab:results_imagenet-a,tab:results_pets}.
An analysis of such tables provides some interesting results about the interaction between different prompts.
We note that, for example, on dataset  CIFAR the prompt ``A cropped photo of a \{\}.'', which is highly ranked by accuracy, has a poor ranking according to power indices. 
That is, such prompt is affecting the overall performances of the ensemble, as it can be seen by its negative Banzhaf value.

Other interesting trends can be observed over~\Cref{tab:results_cifar100,tab:results_imagenet-a,tab:results_pets} for example
\begin{compactitem}
	\item the rankings of prompt ``A macro shot of a \{\}" in~\Cref{tab:results_pets}. That is such prompt that has poor accuracy ranking but high synergy with the overall ensemble as captured by power indices.
	\item the rankings of prompt ``A photo of a \{\}, with a watermark.'' on the IMAGENET-A dataset in~\Cref{tab:results_imagenet-a}. That prompt is ranked high by accuracy, but is has significantly smaller ranks according to power indices.
\end{compactitem}

Overall our experiments for prompt ensembling show that \suite for multiclass classification can provide a valuable addition for LLM prompt engineering, and to enhance the interpretability of LLM predictions under different prompts.

\begin{figure}[t]
		\centering
	\includegraphics[width=0.8\textwidth]{plots/CLIP/legend}\\
	\includegraphics[width=0.7\textwidth]{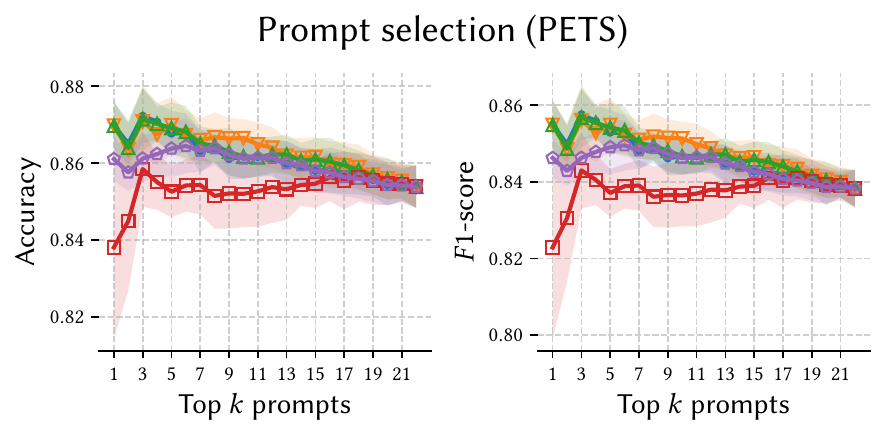}
	\caption{Accuracy and $F_1$-score test metrics for the prompt ensembling of CLIP on $\dataSetTra$ (see~\Cref{subsec:CLIPexp}), as a function of $k$. Missing results from~\Cref{subsec:CLIPexp}.}
	\label{fig:missingCLIP}
\end{figure}

\begin{table}[t]
	\centering
	\caption{Prompt ensembling of CLIP, metrics on $\dataSetTra$ of the sub-ensemble for varying $k$ (see \Cref{subsec:CLIPexp}). Columns report, best: the maximum over $k$ of the average over the five random splits, and the $k$ attaining it. Avg.: the average over $k\in[\cardM]$. Bold marks the best value for each configuration.}
	\label{tab:clipSummary}
	\resizebox{\textwidth}{!}{\begin{tabular}{llcccccc}
\toprule
Dataset & Ranking & best acc. & $k$ & avg.\ acc. & best $F_1$ & $k$ & avg.\ $F_1$ \\
\midrule
CIFAR100 & \textsc{Acc} & $0.648 \pm 0.005$ & 14 & $0.645 \pm 0.004$ & {\boldmath$0.641 \pm 0.004$} & 16 & $0.637 \pm 0.003$ \\
 & \textsc{enpower}-$\beta$ & {\boldmath$0.649 \pm 0.004$} & 10 & {\boldmath$0.646 \pm 0.004$} & {\boldmath$0.641 \pm 0.003$} & 16 & {\boldmath$0.638 \pm 0.003$} \\
 & \textsc{enpower}-$\phi$ & {\boldmath$0.649 \pm 0.004$} & 8 & {\boldmath$0.646 \pm 0.004$} & {\boldmath$0.641 \pm 0.004$} & 17 & {\boldmath$0.638 \pm 0.003$} \\
 & $\mathtt{Tr\text{-}PS}$ & $0.647 \pm 0.005$ & 15 & $0.643 \pm 0.006$ & $0.639 \pm 0.004$ & 15 & $0.635 \pm 0.004$ \\
 & $\mathtt{Tr\text{-}MLE}$ & $0.648 \pm 0.004$ & 14 & $0.643 \pm 0.005$ & $0.640 \pm 0.003$ & 14 & $0.636 \pm 0.003$ \\
\midrule
IMAGENET-A & \textsc{Acc} & {\boldmath$0.308 \pm 0.004$} & 4 & $0.302 \pm 0.004$ & {\boldmath$0.288 \pm 0.005$} & 1 & {\boldmath$0.279 \pm 0.003$} \\
 & \textsc{enpower}-$\beta$ & $0.307 \pm 0.003$ & 5 & {\boldmath$0.303 \pm 0.004$} & $0.285 \pm 0.004$ & 5 & {\boldmath$0.279 \pm 0.003$} \\
 & \textsc{enpower}-$\phi$ & {\boldmath$0.308 \pm 0.003$} & 5 & $0.302 \pm 0.004$ & {\boldmath$0.288 \pm 0.005$} & 1 & {\boldmath$0.279 \pm 0.003$} \\
 & $\mathtt{Tr\text{-}PS}$ & $0.298 \pm 0.006$ & 13 & $0.295 \pm 0.004$ & $0.275 \pm 0.003$ & 10 & $0.272 \pm 0.002$ \\
 & $\mathtt{Tr\text{-}MLE}$ & $0.303 \pm 0.004$ & 8 & $0.300 \pm 0.004$ & $0.279 \pm 0.004$ & 8 & $0.275 \pm 0.003$ \\
\midrule
PETS & \textsc{Acc} & {\boldmath$0.872 \pm 0.007$} & 3 & $0.862 \pm 0.005$ & {\boldmath$0.858 \pm 0.007$} & 3 & $0.846 \pm 0.004$ \\
 & \textsc{enpower}-$\beta$ & $0.871 \pm 0.009$ & 3 & {\boldmath$0.863 \pm 0.006$} & $0.856 \pm 0.009$ & 3 & {\boldmath$0.848 \pm 0.005$} \\
 & \textsc{enpower}-$\phi$ & $0.871 \pm 0.008$ & 3 & $0.862 \pm 0.005$ & $0.857 \pm 0.008$ & 3 & $0.847 \pm 0.005$ \\
 & $\mathtt{Tr\text{-}PS}$ & $0.858 \pm 0.010$ & 3 & $0.853 \pm 0.007$ & $0.843 \pm 0.010$ & 3 & $0.838 \pm 0.006$ \\
 & $\mathtt{Tr\text{-}MLE}$ & $0.864 \pm 0.007$ & 6 & $0.859 \pm 0.005$ & $0.850 \pm 0.007$ & 6 & $0.844 \pm 0.004$ \\
\bottomrule
\end{tabular}
}
\end{table}

\begin{table}[t]
	\centering
	\caption{Runtime (seconds) of the ranking methods on $\dataSetVal$ for the prompt ensembling of CLIP, average $\pm$ standard deviation over five random splits.}
	\label{tab:clipRuntime}
	\begin{tabular}{lccc}
\toprule
Method & CIFAR100 & IMAGENET-A & PETS \\
\midrule
\textsc{enpower}-$\beta$ & $(1.17 \pm 0.01) \cdot 10^{-2}$ & $(1.37 \pm 0.04) \cdot 10^{-2}$ & $(1.64 \pm 0.09) \cdot 10^{-3}$ \\
\textsc{enpower}-$\phi$ & $(3.61 \pm 0.07) \cdot 10^{-1}$ & $(5.14 \pm 0.18) \cdot 10^{-1}$ & $(3.45 \pm 0.20) \cdot 10^{-2}$ \\
$\mathtt{Tr\text{-}PS}$ & $(1.33 \pm 0.13) \cdot 10^{0}$ & $(8.64 \pm 0.44) \cdot 10^{-1}$ & $(3.77 \pm 0.04) \cdot 10^{-1}$ \\
$\mathtt{Tr\text{-}MLE}$ & $(2.37 \pm 0.05) \cdot 10^{-2}$ & $(1.84 \pm 0.08) \cdot 10^{-2}$ & $(1.26 \pm 0.02) \cdot 10^{-2}$ \\
\bottomrule
\end{tabular}

\end{table}

\begin{table}[t]
\setlength{\aboverulesep}{0pt}
\setlength{\belowrulesep}{0pt}
\setlength{\extrarowheight}{0.8pt}
\centering
\caption{Prompt ensemble evaluation for LLM zero-shot multiclass image classification (CIFAR100). Each prompt corresponds to a different model of $\mathcal{C}$. All values and rankings are computed on the validation split $\dataSetVal$. Columns $\Delta_{\banz}$ and $\Delta_{\shap}$ display the difference in the validation rankings induced by the power indices $\banz(G_{\mathsf{MC}})$ and $\shap(G_{\mathsf{MC}})$ with respect to validation accuracy.}
\label{tab:results_cifar100}
\definecolor{CBB}{HTML}{0072B2} %
\definecolor{CBG}{HTML}{009E73} %
\definecolor{CBR}{HTML}{D55E00} %
\definecolor{CBGy}{HTML}{999999} %
\resizebox{\textwidth}{!}{
\begin{tabular}{l rr rr rr cc}
\toprule
\multirow{2}{*}{\textbf{Prompt template}} & \multicolumn{2}{c}{\textbf{Accuracy}} & \multicolumn{2}{c}{\textbf{Banzhaf}} & \multicolumn{2}{c}{\textbf{Shapley}} & \multirow{2}{*}{$\Delta_{\banz}$} & \multirow{2}{*}{$\Delta_{\shap}$} \\
\cmidrule(lr){2-3} \cmidrule(lr){4-5} \cmidrule(lr){6-7}
& Value & Rank & Value & Rank & Value & Rank & & \\
\midrule
A pixelated photo of a \{\}. & \cellcolor{CBB!100} \color{white} 0.6338 & \cellcolor{CBB!100} \color{white} \textbf{1} & \cellcolor{CBB!100} \color{white} 0.00296 & \cellcolor{CBB!100} \color{white} \textbf{1} & \cellcolor{CBB!100} \color{white} 0.03285 & \cellcolor{CBB!100} \color{white} \textbf{1} & \textcolor{CBGy}{$-$} & \textcolor{CBGy}{$-$} \\
A heavily compressed JPEG photo of a \{\}. & \cellcolor{CBB!95} \color{white} 0.6308 & \cellcolor{CBB!95} \color{white} \textbf{2} & \cellcolor{CBB!95} \color{white} 0.00262 & \cellcolor{CBB!95} \color{white} \textbf{2} & \cellcolor{CBB!95} \color{white} 0.03226 & \cellcolor{CBB!95} \color{white} \textbf{2} & \textcolor{CBGy}{$-$} & \textcolor{CBGy}{$-$} \\
A high-resolution, crisp photo of a \{\}. & \cellcolor{CBB!91} \color{white} 0.6248 & \cellcolor{CBB!91} \color{white} \textbf{3} & \cellcolor{CBB!82} \color{white} 0.00113 & \cellcolor{CBB!82} \color{white} 5 & \cellcolor{CBB!87} \color{white} 0.03047 & \cellcolor{CBB!87} \color{white} 4 & \textcolor{CBR}{\textbf{$\downarrow$ 2}} & \textcolor{CBR}{\textbf{$\downarrow$ 1}} \\
A cropped photo of a \{\}. & \cellcolor{CBB!87} \color{white} 0.6242 & \cellcolor{CBB!87} \color{white} 4 & \cellcolor{CBB!40} \color{black} -0.00003 & \cellcolor{CBB!40} \color{black} 15 & \cellcolor{CBB!61} \color{white} 0.02941 & \cellcolor{CBB!61} \color{white} 10 & \textcolor{CBR}{\textbf{$\downarrow$ 11}} & \textcolor{CBR}{\textbf{$\downarrow$ 6}} \\
A photo of a \{\}, cluttered background. & \cellcolor{CBB!82} \color{white} 0.6238 & \cellcolor{CBB!82} \color{white} 5 & \cellcolor{CBB!91} \color{white} 0.00161 & \cellcolor{CBB!91} \color{white} \textbf{3} & \cellcolor{CBB!91} \color{white} 0.03077 & \cellcolor{CBB!91} \color{white} \textbf{3} & \textcolor{CBG}{\textbf{$\uparrow$ 2}} & \textcolor{CBG}{\textbf{$\uparrow$ 2}} \\
An unusual, hard to recognize photo of a \{\}. & \cellcolor{CBB!78} \color{white} 0.6236 & \cellcolor{CBB!78} \color{white} 6 & \cellcolor{CBB!70} \color{white} 0.00095 & \cellcolor{CBB!70} \color{white} 8 & \cellcolor{CBB!78} \color{white} 0.03024 & \cellcolor{CBB!78} \color{white} 6 & \textcolor{CBR}{\textbf{$\downarrow$ 2}} & \textcolor{CBGy}{$-$} \\
A photo of a \{\}, with a watermark. & \cellcolor{CBB!74} \color{white} 0.6232 & \cellcolor{CBB!74} \color{white} 7 & \cellcolor{CBB!65} \color{white} 0.00075 & \cellcolor{CBB!65} \color{white} 9 & \cellcolor{CBB!65} \color{white} 0.02991 & \cellcolor{CBB!65} \color{white} 9 & \textcolor{CBR}{\textbf{$\downarrow$ 2}} & \textcolor{CBR}{\textbf{$\downarrow$ 2}} \\
A hand-drawn illustration of a \{\}. & \cellcolor{CBB!70} \color{white} 0.6212 & \cellcolor{CBB!70} \color{white} 8 & \cellcolor{CBB!61} \color{white} 0.00070 & \cellcolor{CBB!61} \color{white} 10 & \cellcolor{CBB!74} \color{white} 0.02997 & \cellcolor{CBB!74} \color{white} 7 & \textcolor{CBR}{\textbf{$\downarrow$ 2}} & \textcolor{CBG}{\textbf{$\uparrow$ 1}} \\
A black and white photo of a \{\}. & \cellcolor{CBB!65} \color{white} 0.6206 & \cellcolor{CBB!65} \color{white} 9 & \cellcolor{CBB!87} \color{white} 0.00153 & \cellcolor{CBB!87} \color{white} 4 & \cellcolor{CBB!82} \color{white} 0.03046 & \cellcolor{CBB!82} \color{white} 5 & \textcolor{CBG}{\textbf{$\uparrow$ 5}} & \textcolor{CBG}{\textbf{$\uparrow$ 4}} \\
A painting of a \{\}. & \cellcolor{CBB!61} \color{white} 0.6198 & \cellcolor{CBB!61} \color{white} 10 & \cellcolor{CBB!74} \color{white} 0.00102 & \cellcolor{CBB!74} \color{white} 7 & \cellcolor{CBB!70} \color{white} 0.02992 & \cellcolor{CBB!70} \color{white} 8 & \textcolor{CBG}{\textbf{$\uparrow$ 3}} & \textcolor{CBG}{\textbf{$\uparrow$ 2}} \\
A blurry photo of a \{\}. & \cellcolor{CBB!57} \color{black} 0.6196 & \cellcolor{CBB!57} \color{black} 11 & \cellcolor{CBB!48} \color{black} 0.00032 & \cellcolor{CBB!48} \color{black} 13 & \cellcolor{CBB!57} \color{black} 0.02934 & \cellcolor{CBB!57} \color{black} 11 & \textcolor{CBR}{\textbf{$\downarrow$ 2}} & \textcolor{CBGy}{$-$} \\
A photo of a \{\}. & \cellcolor{CBB!52} \color{black} 0.6184 & \cellcolor{CBB!52} \color{black} 12 & \cellcolor{CBB!35} \color{black} -0.00006 & \cellcolor{CBB!35} \color{black} 16 & \cellcolor{CBB!52} \color{black} 0.02884 & \cellcolor{CBB!52} \color{black} 12 & \textcolor{CBR}{\textbf{$\downarrow$ 4}} & \textcolor{CBGy}{$-$} \\
A poor quality, grainy photo of a \{\}. & \cellcolor{CBB!48} \color{black} 0.6112 & \cellcolor{CBB!48} \color{black} 13 & \cellcolor{CBB!52} \color{black} 0.00036 & \cellcolor{CBB!52} \color{black} 12 & \cellcolor{CBB!44} \color{black} 0.02866 & \cellcolor{CBB!44} \color{black} 14 & \textcolor{CBG}{\textbf{$\uparrow$ 1}} & \textcolor{CBR}{\textbf{$\downarrow$ 1}} \\
A photo focusing on the shape of a \{\}. & \cellcolor{CBB!44} \color{black} 0.6100 & \cellcolor{CBB!44} \color{black} 14 & \cellcolor{CBB!27} \color{black} -0.00041 & \cellcolor{CBB!27} \color{black} 18 & \cellcolor{CBB!31} \color{black} 0.02776 & \cellcolor{CBB!31} \color{black} 17 & \textcolor{CBR}{\textbf{$\downarrow$ 4}} & \textcolor{CBR}{\textbf{$\downarrow$ 3}} \\
A photo focusing on the texture of a \{\}. & \cellcolor{CBB!40} \color{black} 0.6098 & \cellcolor{CBB!40} \color{black} 15 & \cellcolor{CBB!31} \color{black} -0.00027 & \cellcolor{CBB!31} \color{black} 17 & \cellcolor{CBB!35} \color{black} 0.02787 & \cellcolor{CBB!35} \color{black} 16 & \textcolor{CBR}{\textbf{$\downarrow$ 2}} & \textcolor{CBR}{\textbf{$\downarrow$ 1}} \\
A photo of a \{\}, with text over it. & \cellcolor{CBB!35} \color{black} 0.6076 & \cellcolor{CBB!35} \color{black} 16 & \cellcolor{CBB!57} \color{black} 0.00041 & \cellcolor{CBB!57} \color{black} 11 & \cellcolor{CBB!40} \color{black} 0.02847 & \cellcolor{CBB!40} \color{black} 15 & \textcolor{CBG}{\textbf{$\uparrow$ 5}} & \textcolor{CBG}{\textbf{$\uparrow$ 1}} \\
A zoomed-out photo of a \{\}. & \cellcolor{CBB!31} \color{black} 0.6046 & \cellcolor{CBB!31} \color{black} 17 & \cellcolor{CBB!18} \color{black} -0.00084 & \cellcolor{CBB!18} \color{black} 20 & \cellcolor{CBB!22} \color{black} 0.02699 & \cellcolor{CBB!22} \color{black} 19 & \textcolor{CBR}{\textbf{$\downarrow$ 3}} & \textcolor{CBR}{\textbf{$\downarrow$ 2}} \\
A photo of a \{\} in its natural environment. & \cellcolor{CBB!27} \color{black} 0.6046 & \cellcolor{CBB!27} \color{black} 18 & \cellcolor{CBB!78} \color{white} 0.00113 & \cellcolor{CBB!78} \color{white} 6 & \cellcolor{CBB!48} \color{black} 0.02871 & \cellcolor{CBB!48} \color{black} 13 & \textcolor{CBG}{\textbf{$\uparrow$ 12}} & \textcolor{CBG}{\textbf{$\uparrow$ 5}} \\
A photo focusing closely on the details of a \{\}. & \cellcolor{CBB!22} \color{black} 0.6032 & \cellcolor{CBB!22} \color{black} 19 & \cellcolor{CBB!22} \color{black} -0.00064 & \cellcolor{CBB!22} \color{black} 19 & \cellcolor{CBB!18} \color{black} 0.02688 & \cellcolor{CBB!18} \color{black} 20 & \textcolor{CBGy}{$-$} & \textcolor{CBR}{\textbf{$\downarrow$ 1}} \\
A close-up photo of a \{\}. & \cellcolor{CBB!18} \color{black} 0.6002 & \cellcolor{CBB!18} \color{black} 20 & \cellcolor{CBB!10} \color{black} -0.00107 & \cellcolor{CBB!10} \color{black} 22 & \cellcolor{CBB!14} \color{black} 0.02633 & \cellcolor{CBB!14} \color{black} 21 & \textcolor{CBR}{\textbf{$\downarrow$ 2}} & \textcolor{CBR}{\textbf{$\downarrow$ 1}} \\
A sketch of a \{\}. & \cellcolor{CBB!14} \color{black} 0.5992 & \cellcolor{CBB!14} \color{black} 21 & \cellcolor{CBB!44} \color{black} 0.00010 & \cellcolor{CBB!44} \color{black} 14 & \cellcolor{CBB!27} \color{black} 0.02741 & \cellcolor{CBB!27} \color{black} 18 & \textcolor{CBG}{\textbf{$\uparrow$ 7}} & \textcolor{CBG}{\textbf{$\uparrow$ 3}} \\
A macro shot of a \{\}. & \cellcolor{CBB!10} \color{black} 0.5872 & \cellcolor{CBB!10} \color{black} 22 & \cellcolor{CBB!14} \color{black} -0.00092 & \cellcolor{CBB!14} \color{black} 21 & \cellcolor{CBB!10} \color{black} 0.02536 & \cellcolor{CBB!10} \color{black} 22 & \textcolor{CBG}{\textbf{$\uparrow$ 1}} & \textcolor{CBGy}{$-$} \\
\bottomrule
\end{tabular}
}
\end{table}

\begin{table}[t]
\setlength{\aboverulesep}{0pt}
\setlength{\belowrulesep}{0pt}
\setlength{\extrarowheight}{0.8pt}
\centering
\caption{Prompt ensemble evaluation for LLM zero-shot multiclass image classification (PETS). Each prompt corresponds to a different model of $\mathcal{C}$. All values and rankings are computed on the validation split $\dataSetVal$. Columns $\Delta_{\banz}$ and $\Delta_{\shap}$ display the difference in the validation rankings induced by the power indices $\banz(G_{\mathsf{MC}})$ and $\shap(G_{\mathsf{MC}})$ with respect to validation accuracy.}
\label{tab:results_pets}
\definecolor{CBB}{HTML}{0072B2} %
\definecolor{CBG}{HTML}{009E73} %
\definecolor{CBR}{HTML}{D55E00} %
\definecolor{CBGy}{HTML}{999999} %
\resizebox{\textwidth}{!}{
\begin{tabular}{l rr rr rr cc}
\toprule
\multirow{2}{*}{\textbf{Prompt template}} & \multicolumn{2}{c}{\textbf{Accuracy}} & \multicolumn{2}{c}{\textbf{Banzhaf}} & \multicolumn{2}{c}{\textbf{Shapley}} & \multirow{2}{*}{$\Delta_{\banz}$} & \multirow{2}{*}{$\Delta_{\shap}$} \\
\cmidrule(lr){2-3} \cmidrule(lr){4-5} \cmidrule(lr){6-7}
& Value & Rank & Value & Rank & Value & Rank & & \\
\midrule
A poor quality, grainy photo of a \{\}. & \cellcolor{CBB!100} \color{white} 0.8648 & \cellcolor{CBB!100} \color{white} \textbf{1} & \cellcolor{CBB!100} \color{white} 0.00434 & \cellcolor{CBB!100} \color{white} \textbf{1} & \cellcolor{CBB!100} \color{white} 0.04529 & \cellcolor{CBB!100} \color{white} \textbf{1} & \textcolor{CBGy}{$-$} & \textcolor{CBGy}{$-$} \\
A blurry photo of a \{\}. & \cellcolor{CBB!95} \color{white} 0.8539 & \cellcolor{CBB!95} \color{white} \textbf{2} & \cellcolor{CBB!95} \color{white} 0.00328 & \cellcolor{CBB!95} \color{white} \textbf{2} & \cellcolor{CBB!95} \color{white} 0.04335 & \cellcolor{CBB!95} \color{white} \textbf{2} & \textcolor{CBGy}{$-$} & \textcolor{CBGy}{$-$} \\
A close-up photo of a \{\}. & \cellcolor{CBB!91} \color{white} 0.8522 & \cellcolor{CBB!91} \color{white} \textbf{3} & \cellcolor{CBB!91} \color{white} 0.00327 & \cellcolor{CBB!91} \color{white} \textbf{3} & \cellcolor{CBB!91} \color{white} 0.04312 & \cellcolor{CBB!91} \color{white} \textbf{3} & \textcolor{CBGy}{$-$} & \textcolor{CBGy}{$-$} \\
A cropped photo of a \{\}. & \cellcolor{CBB!87} \color{white} 0.8506 & \cellcolor{CBB!87} \color{white} 4 & \cellcolor{CBB!87} \color{white} 0.00188 & \cellcolor{CBB!87} \color{white} 4 & \cellcolor{CBB!87} \color{white} 0.04175 & \cellcolor{CBB!87} \color{white} 4 & \textcolor{CBGy}{$-$} & \textcolor{CBGy}{$-$} \\
A heavily compressed JPEG photo of a \{\}. & \cellcolor{CBB!82} \color{white} 0.8462 & \cellcolor{CBB!82} \color{white} 5 & \cellcolor{CBB!65} \color{white} 0.00095 & \cellcolor{CBB!65} \color{white} 9 & \cellcolor{CBB!74} \color{white} 0.04059 & \cellcolor{CBB!74} \color{white} 7 & \textcolor{CBR}{\textbf{$\downarrow$ 4}} & \textcolor{CBR}{\textbf{$\downarrow$ 2}} \\
A photo focusing on the texture of a \{\}. & \cellcolor{CBB!78} \color{white} 0.8451 & \cellcolor{CBB!78} \color{white} 6 & \cellcolor{CBB!78} \color{white} 0.00163 & \cellcolor{CBB!78} \color{white} 6 & \cellcolor{CBB!78} \color{white} 0.04088 & \cellcolor{CBB!78} \color{white} 6 & \textcolor{CBGy}{$-$} & \textcolor{CBGy}{$-$} \\
A photo of a \{\}. & \cellcolor{CBB!74} \color{white} 0.8446 & \cellcolor{CBB!74} \color{white} 7 & \cellcolor{CBB!61} \color{white} 0.00019 & \cellcolor{CBB!61} \color{white} 10 & \cellcolor{CBB!70} \color{white} 0.03962 & \cellcolor{CBB!70} \color{white} 8 & \textcolor{CBR}{\textbf{$\downarrow$ 3}} & \textcolor{CBR}{\textbf{$\downarrow$ 1}} \\
A pixelated photo of a \{\}. & \cellcolor{CBB!70} \color{white} 0.8435 & \cellcolor{CBB!70} \color{white} 8 & \cellcolor{CBB!82} \color{white} 0.00180 & \cellcolor{CBB!82} \color{white} 5 & \cellcolor{CBB!82} \color{white} 0.04099 & \cellcolor{CBB!82} \color{white} 5 & \textcolor{CBG}{\textbf{$\uparrow$ 3}} & \textcolor{CBG}{\textbf{$\uparrow$ 3}} \\
A high-resolution, crisp photo of a \{\}. & \cellcolor{CBB!65} \color{white} 0.8375 & \cellcolor{CBB!65} \color{white} 9 & \cellcolor{CBB!48} \color{black} -0.00002 & \cellcolor{CBB!48} \color{black} 13 & \cellcolor{CBB!65} \color{white} 0.03878 & \cellcolor{CBB!65} \color{white} 9 & \textcolor{CBR}{\textbf{$\downarrow$ 4}} & \textcolor{CBGy}{$-$} \\
A photo of a \{\}, with text over it. & \cellcolor{CBB!61} \color{white} 0.8337 & \cellcolor{CBB!61} \color{white} 10 & \cellcolor{CBB!40} \color{black} -0.00045 & \cellcolor{CBB!40} \color{black} 15 & \cellcolor{CBB!52} \color{black} 0.03811 & \cellcolor{CBB!52} \color{black} 12 & \textcolor{CBR}{\textbf{$\downarrow$ 5}} & \textcolor{CBR}{\textbf{$\downarrow$ 2}} \\
A zoomed-out photo of a \{\}. & \cellcolor{CBB!57} \color{black} 0.8310 & \cellcolor{CBB!57} \color{black} 11 & \cellcolor{CBB!57} \color{black} 0.00009 & \cellcolor{CBB!57} \color{black} 11 & \cellcolor{CBB!57} \color{black} 0.03839 & \cellcolor{CBB!57} \color{black} 11 & \textcolor{CBGy}{$-$} & \textcolor{CBGy}{$-$} \\
A photo of a \{\} in its natural environment. & \cellcolor{CBB!52} \color{black} 0.8261 & \cellcolor{CBB!52} \color{black} 12 & \cellcolor{CBB!44} \color{black} -0.00033 & \cellcolor{CBB!44} \color{black} 14 & \cellcolor{CBB!44} \color{black} 0.03759 & \cellcolor{CBB!44} \color{black} 14 & \textcolor{CBR}{\textbf{$\downarrow$ 2}} & \textcolor{CBR}{\textbf{$\downarrow$ 2}} \\
A photo of a \{\}, with a watermark. & \cellcolor{CBB!48} \color{black} 0.8250 & \cellcolor{CBB!48} \color{black} 13 & \cellcolor{CBB!31} \color{black} -0.00142 & \cellcolor{CBB!31} \color{black} 17 & \cellcolor{CBB!35} \color{black} 0.03657 & \cellcolor{CBB!35} \color{black} 16 & \textcolor{CBR}{\textbf{$\downarrow$ 4}} & \textcolor{CBR}{\textbf{$\downarrow$ 3}} \\
A sketch of a \{\}. & \cellcolor{CBB!44} \color{black} 0.8233 & \cellcolor{CBB!44} \color{black} 14 & \cellcolor{CBB!52} \color{black} -0.00001 & \cellcolor{CBB!52} \color{black} 12 & \cellcolor{CBB!48} \color{black} 0.03768 & \cellcolor{CBB!48} \color{black} 13 & \textcolor{CBG}{\textbf{$\uparrow$ 2}} & \textcolor{CBG}{\textbf{$\uparrow$ 1}} \\
A photo focusing closely on the details of a \{\}. & \cellcolor{CBB!40} \color{black} 0.8201 & \cellcolor{CBB!40} \color{black} 15 & \cellcolor{CBB!35} \color{black} -0.00092 & \cellcolor{CBB!35} \color{black} 16 & \cellcolor{CBB!31} \color{black} 0.03642 & \cellcolor{CBB!31} \color{black} 17 & \textcolor{CBR}{\textbf{$\downarrow$ 1}} & \textcolor{CBR}{\textbf{$\downarrow$ 2}} \\
A macro shot of a \{\}. & \cellcolor{CBB!35} \color{black} 0.8195 & \cellcolor{CBB!35} \color{black} 16 & \cellcolor{CBB!74} \color{white} 0.00126 & \cellcolor{CBB!74} \color{white} 7 & \cellcolor{CBB!61} \color{white} 0.03869 & \cellcolor{CBB!61} \color{white} 10 & \textcolor{CBG}{\textbf{$\uparrow$ 9}} & \textcolor{CBG}{\textbf{$\uparrow$ 6}} \\
A photo focusing on the shape of a \{\}. & \cellcolor{CBB!31} \color{black} 0.8168 & \cellcolor{CBB!31} \color{black} 17 & \cellcolor{CBB!22} \color{black} -0.00225 & \cellcolor{CBB!22} \color{black} 19 & \cellcolor{CBB!27} \color{black} 0.03494 & \cellcolor{CBB!27} \color{black} 18 & \textcolor{CBR}{\textbf{$\downarrow$ 2}} & \textcolor{CBR}{\textbf{$\downarrow$ 1}} \\
A painting of a \{\}. & \cellcolor{CBB!27} \color{black} 0.8141 & \cellcolor{CBB!27} \color{black} 18 & \cellcolor{CBB!14} \color{black} -0.00247 & \cellcolor{CBB!14} \color{black} 21 & \cellcolor{CBB!22} \color{black} 0.03464 & \cellcolor{CBB!22} \color{black} 19 & \textcolor{CBR}{\textbf{$\downarrow$ 3}} & \textcolor{CBR}{\textbf{$\downarrow$ 1}} \\
A photo of a \{\}, cluttered background. & \cellcolor{CBB!22} \color{black} 0.8108 & \cellcolor{CBB!22} \color{black} 19 & \cellcolor{CBB!18} \color{black} -0.00236 & \cellcolor{CBB!18} \color{black} 20 & \cellcolor{CBB!18} \color{black} 0.03443 & \cellcolor{CBB!18} \color{black} 20 & \textcolor{CBR}{\textbf{$\downarrow$ 1}} & \textcolor{CBR}{\textbf{$\downarrow$ 1}} \\
An unusual, hard to recognize photo of a \{\}. & \cellcolor{CBB!18} \color{black} 0.8086 & \cellcolor{CBB!18} \color{black} 20 & \cellcolor{CBB!10} \color{black} -0.00344 & \cellcolor{CBB!10} \color{black} 22 & \cellcolor{CBB!14} \color{black} 0.03313 & \cellcolor{CBB!14} \color{black} 21 & \textcolor{CBR}{\textbf{$\downarrow$ 2}} & \textcolor{CBR}{\textbf{$\downarrow$ 1}} \\
A black and white photo of a \{\}. & \cellcolor{CBB!14} \color{black} 0.7999 & \cellcolor{CBB!14} \color{black} 21 & \cellcolor{CBB!70} \color{white} 0.00097 & \cellcolor{CBB!70} \color{white} 8 & \cellcolor{CBB!40} \color{black} 0.03676 & \cellcolor{CBB!40} \color{black} 15 & \textcolor{CBG}{\textbf{$\uparrow$ 13}} & \textcolor{CBG}{\textbf{$\uparrow$ 6}} \\
A hand-drawn illustration of a \{\}. & \cellcolor{CBB!10} \color{black} 0.7863 & \cellcolor{CBB!10} \color{black} 22 & \cellcolor{CBB!27} \color{black} -0.00198 & \cellcolor{CBB!27} \color{black} 18 & \cellcolor{CBB!10} \color{black} 0.03286 & \cellcolor{CBB!10} \color{black} 22 & \textcolor{CBG}{\textbf{$\uparrow$ 4}} & \textcolor{CBGy}{$-$} \\
\bottomrule
\end{tabular}
}
\end{table}

\begin{table}[t]
\setlength{\aboverulesep}{0pt}
\setlength{\belowrulesep}{0pt}
\setlength{\extrarowheight}{0.8pt}
\centering
\caption{Prompt ensemble evaluation for LLM zero-shot multiclass image classification (IMAGENET-A). Each prompt corresponds to a different model of $\mathcal{C}$. All values and rankings are computed on the validation split $\dataSetVal$. Columns $\Delta_{\banz}$ and $\Delta_{\shap}$ display the difference in the validation rankings induced by the power indices $\banz(G_{\mathsf{MC}})$ and $\shap(G_{\mathsf{MC}})$ with respect to validation accuracy.}
\label{tab:results_imagenet-a}
\definecolor{CBB}{HTML}{0072B2} %
\definecolor{CBG}{HTML}{009E73} %
\definecolor{CBR}{HTML}{D55E00} %
\definecolor{CBGy}{HTML}{999999} %
\resizebox{\textwidth}{!}{
\begin{tabular}{l rr rr rr cc}
\toprule
\multirow{2}{*}{\textbf{Prompt template}} & \multicolumn{2}{c}{\textbf{Accuracy}} & \multicolumn{2}{c}{\textbf{Banzhaf}} & \multicolumn{2}{c}{\textbf{Shapley}} & \multirow{2}{*}{$\Delta_{\banz}$} & \multirow{2}{*}{$\Delta_{\shap}$} \\
\cmidrule(lr){2-3} \cmidrule(lr){4-5} \cmidrule(lr){6-7}
& Value & Rank & Value & Rank & Value & Rank & & \\
\midrule
A poor quality, grainy photo of a \{\}. & \cellcolor{CBB!100} \color{white} 0.3136 & \cellcolor{CBB!100} \color{white} \textbf{1} & \cellcolor{CBB!100} \color{white} 0.00359 & \cellcolor{CBB!100} \color{white} \textbf{1} & \cellcolor{CBB!100} \color{white} 0.01936 & \cellcolor{CBB!100} \color{white} \textbf{1} & \textcolor{CBGy}{$-$} & \textcolor{CBGy}{$-$} \\
A pixelated photo of a \{\}. & \cellcolor{CBB!95} \color{white} 0.3029 & \cellcolor{CBB!95} \color{white} \textbf{2} & \cellcolor{CBB!91} \color{white} 0.00242 & \cellcolor{CBB!91} \color{white} \textbf{3} & \cellcolor{CBB!91} \color{white} 0.01738 & \cellcolor{CBB!91} \color{white} \textbf{3} & \textcolor{CBR}{\textbf{$\downarrow$ 1}} & \textcolor{CBR}{\textbf{$\downarrow$ 1}} \\
A photo of a \{\}, with a watermark. & \cellcolor{CBB!91} \color{white} 0.2989 & \cellcolor{CBB!91} \color{white} \textbf{3} & \cellcolor{CBB!70} \color{white} 0.00096 & \cellcolor{CBB!70} \color{white} 8 & \cellcolor{CBB!78} \color{white} 0.01581 & \cellcolor{CBB!78} \color{white} 6 & \textcolor{CBR}{\textbf{$\downarrow$ 5}} & \textcolor{CBR}{\textbf{$\downarrow$ 3}} \\
A blurry photo of a \{\}. & \cellcolor{CBB!87} \color{white} 0.2989 & \cellcolor{CBB!87} \color{white} 4 & \cellcolor{CBB!87} \color{white} 0.00184 & \cellcolor{CBB!87} \color{white} 4 & \cellcolor{CBB!87} \color{white} 0.01656 & \cellcolor{CBB!87} \color{white} 4 & \textcolor{CBGy}{$-$} & \textcolor{CBGy}{$-$} \\
An unusual, hard to recognize photo of a \{\}. & \cellcolor{CBB!82} \color{white} 0.2981 & \cellcolor{CBB!82} \color{white} 5 & \cellcolor{CBB!95} \color{white} 0.00295 & \cellcolor{CBB!95} \color{white} \textbf{2} & \cellcolor{CBB!95} \color{white} 0.01750 & \cellcolor{CBB!95} \color{white} \textbf{2} & \textcolor{CBG}{\textbf{$\uparrow$ 3}} & \textcolor{CBG}{\textbf{$\uparrow$ 3}} \\
A heavily compressed JPEG photo of a \{\}. & \cellcolor{CBB!78} \color{white} 0.2949 & \cellcolor{CBB!78} \color{white} 6 & \cellcolor{CBB!82} \color{white} 0.00154 & \cellcolor{CBB!82} \color{white} 5 & \cellcolor{CBB!82} \color{white} 0.01583 & \cellcolor{CBB!82} \color{white} 5 & \textcolor{CBG}{\textbf{$\uparrow$ 1}} & \textcolor{CBG}{\textbf{$\uparrow$ 1}} \\
A zoomed-out photo of a \{\}. & \cellcolor{CBB!74} \color{white} 0.2893 & \cellcolor{CBB!74} \color{white} 7 & \cellcolor{CBB!61} \color{white} 0.00066 & \cellcolor{CBB!61} \color{white} 10 & \cellcolor{CBB!70} \color{white} 0.01468 & \cellcolor{CBB!70} \color{white} 8 & \textcolor{CBR}{\textbf{$\downarrow$ 3}} & \textcolor{CBR}{\textbf{$\downarrow$ 1}} \\
A cropped photo of a \{\}. & \cellcolor{CBB!70} \color{white} 0.2888 & \cellcolor{CBB!70} \color{white} 8 & \cellcolor{CBB!65} \color{white} 0.00071 & \cellcolor{CBB!65} \color{white} 9 & \cellcolor{CBB!65} \color{white} 0.01463 & \cellcolor{CBB!65} \color{white} 9 & \textcolor{CBR}{\textbf{$\downarrow$ 1}} & \textcolor{CBR}{\textbf{$\downarrow$ 1}} \\
A hand-drawn illustration of a \{\}. & \cellcolor{CBB!65} \color{white} 0.2872 & \cellcolor{CBB!65} \color{white} 9 & \cellcolor{CBB!78} \color{white} 0.00137 & \cellcolor{CBB!78} \color{white} 6 & \cellcolor{CBB!74} \color{white} 0.01501 & \cellcolor{CBB!74} \color{white} 7 & \textcolor{CBG}{\textbf{$\uparrow$ 3}} & \textcolor{CBG}{\textbf{$\uparrow$ 2}} \\
A photo of a \{\}, with text over it. & \cellcolor{CBB!61} \color{white} 0.2869 & \cellcolor{CBB!61} \color{white} 10 & \cellcolor{CBB!57} \color{black} 0.00049 & \cellcolor{CBB!57} \color{black} 11 & \cellcolor{CBB!61} \color{white} 0.01428 & \cellcolor{CBB!61} \color{white} 10 & \textcolor{CBR}{\textbf{$\downarrow$ 1}} & \textcolor{CBGy}{$-$} \\
A photo of a \{\}. & \cellcolor{CBB!57} \color{black} 0.2853 & \cellcolor{CBB!57} \color{black} 11 & \cellcolor{CBB!40} \color{black} -0.00027 & \cellcolor{CBB!40} \color{black} 15 & \cellcolor{CBB!52} \color{black} 0.01353 & \cellcolor{CBB!52} \color{black} 12 & \textcolor{CBR}{\textbf{$\downarrow$ 4}} & \textcolor{CBR}{\textbf{$\downarrow$ 1}} \\
A photo of a \{\}, cluttered background. & \cellcolor{CBB!52} \color{black} 0.2840 & \cellcolor{CBB!52} \color{black} 12 & \cellcolor{CBB!35} \color{black} -0.00035 & \cellcolor{CBB!35} \color{black} 16 & \cellcolor{CBB!48} \color{black} 0.01337 & \cellcolor{CBB!48} \color{black} 13 & \textcolor{CBR}{\textbf{$\downarrow$ 4}} & \textcolor{CBR}{\textbf{$\downarrow$ 1}} \\
A photo of a \{\} in its natural environment. & \cellcolor{CBB!48} \color{black} 0.2797 & \cellcolor{CBB!48} \color{black} 13 & \cellcolor{CBB!74} \color{white} 0.00116 & \cellcolor{CBB!74} \color{white} 7 & \cellcolor{CBB!57} \color{black} 0.01420 & \cellcolor{CBB!57} \color{black} 11 & \textcolor{CBG}{\textbf{$\uparrow$ 6}} & \textcolor{CBG}{\textbf{$\uparrow$ 2}} \\
A painting of a \{\}. & \cellcolor{CBB!44} \color{black} 0.2779 & \cellcolor{CBB!44} \color{black} 14 & \cellcolor{CBB!48} \color{black} -0.00000 & \cellcolor{CBB!48} \color{black} 13 & \cellcolor{CBB!44} \color{black} 0.01309 & \cellcolor{CBB!44} \color{black} 14 & \textcolor{CBG}{\textbf{$\uparrow$ 1}} & \textcolor{CBGy}{$-$} \\
A photo focusing on the shape of a \{\}. & \cellcolor{CBB!40} \color{black} 0.2776 & \cellcolor{CBB!40} \color{black} 15 & \cellcolor{CBB!31} \color{black} -0.00043 & \cellcolor{CBB!31} \color{black} 17 & \cellcolor{CBB!35} \color{black} 0.01261 & \cellcolor{CBB!35} \color{black} 16 & \textcolor{CBR}{\textbf{$\downarrow$ 2}} & \textcolor{CBR}{\textbf{$\downarrow$ 1}} \\
A high-resolution, crisp photo of a \{\}. & \cellcolor{CBB!35} \color{black} 0.2765 & \cellcolor{CBB!35} \color{black} 16 & \cellcolor{CBB!18} \color{black} -0.00136 & \cellcolor{CBB!18} \color{black} 20 & \cellcolor{CBB!27} \color{black} 0.01162 & \cellcolor{CBB!27} \color{black} 18 & \textcolor{CBR}{\textbf{$\downarrow$ 4}} & \textcolor{CBR}{\textbf{$\downarrow$ 2}} \\
A black and white photo of a \{\}. & \cellcolor{CBB!31} \color{black} 0.2739 & \cellcolor{CBB!31} \color{black} 17 & \cellcolor{CBB!52} \color{black} 0.00024 & \cellcolor{CBB!52} \color{black} 12 & \cellcolor{CBB!40} \color{black} 0.01289 & \cellcolor{CBB!40} \color{black} 15 & \textcolor{CBG}{\textbf{$\uparrow$ 5}} & \textcolor{CBG}{\textbf{$\uparrow$ 2}} \\
A sketch of a \{\}. & \cellcolor{CBB!27} \color{black} 0.2704 & \cellcolor{CBB!27} \color{black} 18 & \cellcolor{CBB!44} \color{black} -0.00004 & \cellcolor{CBB!44} \color{black} 14 & \cellcolor{CBB!31} \color{black} 0.01240 & \cellcolor{CBB!31} \color{black} 17 & \textcolor{CBG}{\textbf{$\uparrow$ 4}} & \textcolor{CBG}{\textbf{$\uparrow$ 1}} \\
A photo focusing on the texture of a \{\}. & \cellcolor{CBB!22} \color{black} 0.2579 & \cellcolor{CBB!22} \color{black} 19 & \cellcolor{CBB!27} \color{black} -0.00080 & \cellcolor{CBB!27} \color{black} 18 & \cellcolor{CBB!22} \color{black} 0.01048 & \cellcolor{CBB!22} \color{black} 19 & \textcolor{CBG}{\textbf{$\uparrow$ 1}} & \textcolor{CBGy}{$-$} \\
A photo focusing closely on the details of a \{\}. & \cellcolor{CBB!18} \color{black} 0.2544 & \cellcolor{CBB!18} \color{black} 20 & \cellcolor{CBB!14} \color{black} -0.00180 & \cellcolor{CBB!14} \color{black} 21 & \cellcolor{CBB!18} \color{black} 0.00934 & \cellcolor{CBB!18} \color{black} 20 & \textcolor{CBR}{\textbf{$\downarrow$ 1}} & \textcolor{CBGy}{$-$} \\
A close-up photo of a \{\}. & \cellcolor{CBB!14} \color{black} 0.2365 & \cellcolor{CBB!14} \color{black} 21 & \cellcolor{CBB!10} \color{black} -0.00218 & \cellcolor{CBB!10} \color{black} 22 & \cellcolor{CBB!10} \color{black} 0.00727 & \cellcolor{CBB!10} \color{black} 22 & \textcolor{CBR}{\textbf{$\downarrow$ 1}} & \textcolor{CBR}{\textbf{$\downarrow$ 1}} \\
A macro shot of a \{\}. & \cellcolor{CBB!10} \color{black} 0.2349 & \cellcolor{CBB!10} \color{black} 22 & \cellcolor{CBB!22} \color{black} -0.00134 & \cellcolor{CBB!22} \color{black} 19 & \cellcolor{CBB!14} \color{black} 0.00808 & \cellcolor{CBB!14} \color{black} 21 & \textcolor{CBG}{\textbf{$\uparrow$ 3}} & \textcolor{CBG}{\textbf{$\uparrow$ 1}} \\
\bottomrule
\end{tabular}
}
\end{table}

\subsection{Synthetic experiments}\label{appsubsec:synth}
Real data does not allow us to control the parameters that determine the runtime of \suite. We therefore generate synthetic instances in which we vary the number $|\unique|$ of unique vote profiles, the ensemble size $\cardM$ and the number of classes $\totLabs$. Recall that the runtime of our algorithms depends on $|\unique|$ and not on $|\dataSet|$. Finally, for the Shapley values we will also consider the runtime our refined algorithm \suiteShapImp from \Cref{app:refinedComplexity}.

\textbf{Data generation.} A vote profile is determined by two integers: the number $\cardM_1$ of classifiers predicting the correct class, and the number $r$ of incorrect labels receiving at least one vote.
We allocate the $\cardM-\cardM_1$ incorrect votes evenly over $r$ labels, i.e., $(\cardM-\cardM_1)\bmod r$ labels receive $\lceil(\cardM-\cardM_1)/r\rceil$ votes and the other ones receive $\lfloor(\cardM-\cardM_1)/r\rfloor$ votes.
There are $\totLabs-1$ incorrect labels and $\cardM-\cardM_1$ incorrect votes,
thus $r\le r_{\max}(\cardM_1)\doteq\min\{\totLabs-1,\cardM-\cardM_1\}$.
We consider $1\le r\le r_{\max}(\cardM_1)$ if $\cardM_1<\cardM$, and $r=0$ if $\cardM_1=\cardM$.
Each distinct pair $(\cardM_1,r)$ yields a distinct profile.
Given $p\in(0,1)$, we build an instance of $|\unique|$ profiles as follows.
\begin{compactenum}
	\item Let $\cardM_1^\star\doteq\lfloor p\cardM+1/2\rfloor$, i.e., $p\cardM$ rounded to the nearest integer.
	\item Order the values $\cardM_1\in\{0,\dots,\cardM\}$ as the sequence $\cardM_1^\star,\ \cardM_1^\star-1,\ \cardM_1^\star+1,\ \cardM_1^\star-2,\ \cardM_1^\star+2,\dots$, i.e., in increasing order of $|\cardM_1-\cardM_1^\star|$ and, breaking ties in favor of the smallest value.
	\item Build a list $\mathcal{U}$ of profiles by replacing each $\cardM_1$ in this sequence with the profiles $(\cardM_1,r)$ in decreasing order of $r$, i.e., $(\cardM_1,r_{\max}(\cardM_1)),(\cardM_1,r_{\max}(\cardM_1)-1),\dots,(\cardM_1,1)$. The value $\cardM_1=\cardM$ is replaced with the single profile $(\cardM,0)$.
	\item An instance consists of the first $|\unique|$ profiles of such a list (i.e., of $\mathcal{U}$).
\end{compactenum}
Hence the fraction of correct models $\cardM_1/\cardM$ is close to the parameter $p$. 
Furthermore, for each $\cardM_1$ the profiles that assign the incorrect votes over most of the incorrect labels are generated first.
Note that when the incorrect votes are distributed evenly, many coalitions result in a tie, increasing the runtime of our algorithms for $\charMC(\cdot)$.
For the binary case it holds $r_{\max}(\cardM_1)\le 1$, i.e., a profile is determined by $\cardM_1$ uniquely.
We vary $p\in\{0.1,0.5,0.9\}$, i.e., representing ensembles of base classifiers with low, medium and high accuracy.
In the binary case we vary $\cardM\in\{100, 300, 1\,000, 3\,000, 10\,000\}$ and $|\unique|\in\{10,20,40,80,160\}$.
In the multiclass case we vary $\cardM\in\{10,25,50,100\}$, $\totLabs\in\{2,5,10,20,50,100, 1000\}$ and $|\unique|\in\{10,10^2,10^3\}$.
We do not consider configurations with less than the fixed number $|\unique|$ of profiles.
We run each instance five times and divide its median runtime by $|\unique|$.
For each $(p,\cardM,\totLabs)$ we report the median of these values over $|\unique|$, which we denote as the runtime per unique profile.
The time complexity of our methods is linear in $|\unique|$, thus dividing by $|\unique|$ isolates the influence of all other parameters on the runtime.

\textbf{Results.} \Cref{fig:synth} reports the runtime per unique profile in log-log scale, where each  row corresponds to a different value of $p$.
In each panel the horizontal axis corresponds to the bound on the time complexity of the corresponding theorem (i.e., according to the setting and power index considered).
That is
\begin{compactitem}
	\item the left column reports $\suite$-$\banz$ and $\suite$-$\shap$ for the binary classification setting ($\totLabs=2$) as a function of $\cardM$ (\Cref{theo:runtimeBin});
	\item the middle column reports $\suite$-$\banz$ in the multiclass classification setting as a function of $\cardM\totLabs^2$ (\Cref{theo:runtimeMCBZ}); and
	\item the right column reports $\suite$-$\shap$ (circles, solid lines) and \suiteShapImp (triangles, dashed lines) as a function of $\cardM^3\totLabs$ (\Cref{theo:runtimeMCSHAP}).
\end{compactitem}
Regarding the figures concerning the multiclass case each line connects the values of $\cardM$ for a fixed $\totLabs$, colored by the value of $\totLabs$.
The dotted line in each panel corresponds to theoretical bound.
In the binary classification setting, the runtime per unique profile of \suite grows linearly with  $\cardM$ (with a constant slightly smaller than 1), as predicted by~\Cref{theo:runtimeBin}.
In the multiclass setting for $\suite$-$\banz$ for a fixed value of $\totLabs$ the runtime scales with $\cardM$ almost linearly as from the bound in \Cref{theo:runtimeMCBZ}.
The only setting where the runtime appears to be slightly larger than linear in $\cardM$, is for $p=0.1$ and a high value of $\totLabs$, where the number of incorrect labels approaches $\cardM$.
Interestingly, we observe that increasing $\totLabs$ for a fixed $\cardM$ does not result in the $\bigO(\totLabs^2)$ of our bound, that is our bounds can be likely improved.
Similarly, the theoretical runtime of $\suite$-$\shap$ and \suiteShapImp for large values of $\cardM^3\totLabs$ is likely pessimistic given that in practice our methods are much more efficient.
Finally, our method \suiteShapImp is always more efficient than $\suite$-$\shap$, as captured by our analysis (see \Cref{app:refinedComplexity}).

\begin{figure}[t]
	\centering
	\includegraphics[width=0.2\textwidth]{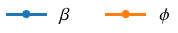}
	\includegraphics[width=0.5\textwidth]{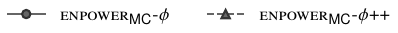}\\
	\includegraphics[width=0.9\textwidth]{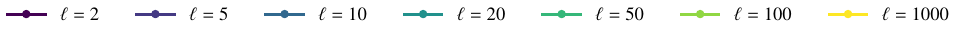}
	\\
	\includegraphics[width=\textwidth]{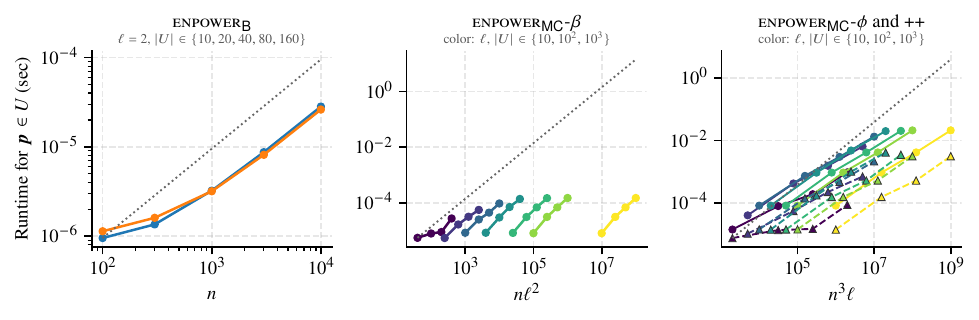}\\
	\includegraphics[width=\textwidth]{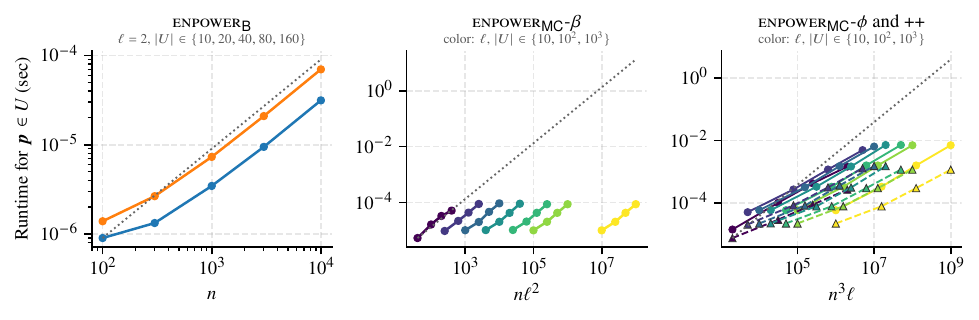}\\
	\includegraphics[width=\textwidth]{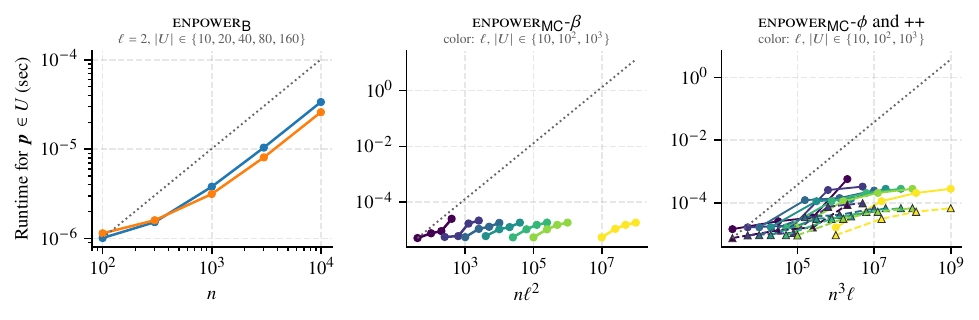}
	\caption{Runtime to process a unique vote profile on synthetic instances, where $p=0.1$ (top), $p=0.5$ (middle) and $p=0.9$ (bottom). The y-axis reports the measured time to compute the power indices under a vote profile $\VotVec\in\unique$ with $\cardM$ players and $\totLabs$ labels. See \Cref{appsubsec:synth}  for a detailed description of each panel.}
	\label{fig:synth}
\end{figure}

\clearpage
\section{A refined multiclass Shapley value computation}\label{app:refinedComplexity}

\Cref{alg:ShapMC} can be implemented efficiently, improving over the time complexity bound of \Cref{theo:runtimeMCSHAP}.
In this section we detail such a procedure.
The key technical idea is that, for any fixed $h\ge 1$, the matrix $P_h$ holding the coefficients of $P_h(\gfvar,u)$ (i.e., the matrix whose entry $P_h[k,m]$ corresponds to the coefficient $[\gfvar^ku^m]P_h(\gfvar,u)$), does not necessarily require to be of size of $\totLabs$ rows and $\cardM-\cardM_1+1$ columns as in \Cref{alg:ShapMC}.
The definition of $\coalset(k,h,m)$ yields $|\coalset(k,h,m)|=0$ for most of the triplets $k,h,m$, and by allocating a matrix that retains every non-zero coefficient of $P_h(\gfvar,u)$,
we will improve the time complexity of the resulting algorithm.
More precisely, fix $h\ge1$ and recall from~\Cref{lemma:bivariateGenFunction} that $P_h(\gfvar,u)=\prod_{j=2}^{\totLabs}(A_j(h;u)+B_j(h;u)\gfvar)$, where $B_j(h;u)=\binom{\cardM_j}{h}u^h$.
In each factor, the variable $\gfvar$ multiplies the single monomial $B_j(h;u)$, and each such monomial contains \emph{exactly} variable $u^h$.
A monomial $\gfvar^ku^m$ of $P_h(\gfvar,u)$ therefore collects a factor $u^h$ from each of its $k$ factors that have term $B_j(h;u)$ contributing to $\gfvar^k$, thus $[\gfvar^ku^m]P_h(\gfvar,u)=0$ when $m<kh$.
In other words, each monomial $\gfvar^ku^m$  of $P_h(\gfvar,u)$ with a non-zero coefficient, must have $m\ge kh$.
We use this insight to reduce the computation performed by $\suite_{\mathsf{MC}}$-$\shap$, as it guarantees that the entries $P_h[k,m]$ with $m<kh$ are never visited by the algorithm.
We formalize our argument in~\Cref{lemma:support}\ifarxiv, and in~\Cref{appsubsec:fft} we further exploit such insights to remove factor $u^h$ from $B_j(h;u)$, recovering the coefficients of $P_h(\gfvar,u)$ by their degree in $\gfvar$\fi.

We express our bounds as functions of the vote profile of a fixed data point from $\dataSet$.
Throughout we fix a test data point $\evz=(\xp,\y)\in\dataSet$ and we let $\y=\y_1$, as by our convention in~\Cref{subsubsec:MCSHAP}.
Furthermore, let $N\doteq\cardM-\cardM_1=|\modelSet'|$ be the number of incorrect classifiers from $\modelSet$, and we let
\[
\cardM_{(1)}\ge \cardM_{(2)}\ge \dots \ge \cardM_{(\totLabs-1)}
\]
denote the values $\{\cardM_j\}_{j\in[2,\totLabs]}$ sorted in non-increasing order.

\begin{definition}\label{def:profileFunctions}
	For $h\in \mathbb{Z}$ we let
	\[
	K(h)\doteq\big|\{j\in[2,\totLabs] : \cardM_j\ge h\}\big|,\ h\ge 1,
	\qquad\text{and}\qquad
	\Phi(h)\doteq\sum_{j=2}^{\totLabs}\min\{\cardM_j,h\},\ h\ge 0 \enspace.
	\]
	In particular, $\Phi(0)=0$.
	Furthermore, for each $j\in[2,\totLabs]$ and $h\ge1$ we let
	\[
	K_{-j}(h)\doteq K(h)-\ind[\cardM_j\ge h] \enspace .
	\]
\end{definition}

The value $K(h)$ is the number of incorrect labels receiving at least $h\ge1$ votes, while $\Phi(h)$ corresponds to the number of classifiers obtained by retaining, for each incorrect label, at most $h$ of the classifiers predicting it.
The value $K_{-j}(h)$ is the number of incorrect labels other than $\y_j$ receiving at least $h$ votes.
We have $K(h)\le \totLabs-1$ and $\Phi(h)\le N$.
For a fixed $h\ge1$ we compute the matrix $P_h$ with $K(h)+1$ rows and $\Phi(h)+1$ columns, in place of the $\totLabs$ rows and $\cardM-\cardM_1+1$ columns of~\Cref{alg:ShapMC}.
We will show that all the remaining entries are zero, thus the refined algorithm retains the correctness of \Cref{alg:ShapMC}.
Such a matrix will still contain the entries $P_h[k,m]$ with $m<kh$, which are zero as observed above, for simplicity of the analysis, simplifying our main result presented in \Cref{theo:refinedRuntimeMCSHAP}.\footnote{Our implementation avoids building the full matrix. That is we compute a matrix $P_h[k,m]$ by retaining, for the row $k$ the columns with indices $m\ge kh$. 
\ifarxiv We exploit such map of the indices in  \Cref{appsubsec:fft}, to reduce the size of the matrix $P_h$.\fi}

We let $H^\star\doteq\min\{\cardM_1,\cardM_{(1)}\}$ denote the upper bound on the range of $h$.
The following value will be fundamental for our analysis, recall that $\min\{\cardM_j, \cardM_{(1)}\} = \cardM_j$ for any $j\ge 2$, thus we let
\[
N^\star \doteq \Phi(H^\star)  
= \sum_{j=2}^{\totLabs}\min\{\cardM_j,\min\{\cardM_1, \cardM_{(1)}\}\}
=\sum_{j=2}^{\totLabs}\min\{\cardM_j,\cardM_1\}
\le \min \big\{\cardM-\cardM_1,(\totLabs-1)\cardM_1\big\} \enspace.
\]

We now show how to implement \Cref{alg:ShapMC} refining the time complexity of \Cref{theo:runtimeMCSHAP} to
\[
\bigO\big(|\unique|\big((N^\star)^3+\totLabs (N^\star)^2\big)+|\dataSet|\cardM\big) \enspace ,
\]
a result that we prove in \Cref{theo:refinedRuntimeMCSHAP} below.
The refined bound is never worse than the bound of \Cref{theo:runtimeMCSHAP}, for every vote profile.

Consider the following two settings:
\begin{inparaenum}[($a$)]
	\item when all but a constant number of classifiers are correct, i.e., $\cardM-\cardM_1 = \bigO(1)$, we have $N^\star \le N =\bigO(1)$, thus the time complexity reduces to $\bigO\big(|\unique| \totLabs+|\dataSet|\cardM\big)$;
	\item when the correct label receives a constant number of votes, i.e., $\cardM_1 = \bigO(1)$, we have $N^\star = \bigO(\min\{\cardM, \totLabs\})$, thus the time complexity reduces to $\bigO\big(|\unique| \totLabs \min\{\cardM, \totLabs\}^2+|\dataSet|\cardM\big)$, as $\min\{\cardM, \totLabs\}^3 \le \totLabs \min\{\cardM, \totLabs\}^2$,
	that is we do not incur a factor $\bigO(\cardM^3 \totLabs)$ as in \Cref{theo:runtimeMCSHAP}.
\end{inparaenum}

We now introduce the modifications to \Cref{alg:ShapMC}, restricting the size of $P_h$ and enabling our improved time complexity.

\subsection{\suiteShapImp: an improved algorithm}\label{appsubsec:modifiedAlg}

We now list the modifications to~\Cref{alg:ShapMC} to obtain our improved algorithm named \suiteShapImp.

First we need some definitions, for $k\in\{1,\dots,\totLabs-1\}$ and $h\ge 1$, we let
\[
\overline{m}(k,h) \doteq \min\big\{N, k+\Phi(h-1)\big\} \enspace.
\]
Similarly, for each $j\in\{2,\dots,\totLabs\}$, $k\in\{0,\dots, \totLabs-2\}$ and $h\ge1$ we let
\[
\overline{m}_{-j}(k,h) \doteq \min\{N-\cardM_j,k+\Phi(h-1)-\min\{\cardM_j,h-1\}\}\enspace,
\]
where $N-\cardM_j = \cardM-(\cardM_1+\cardM_j)$ is the number of classifiers predicting a label in $\labelSet-\{\y_1, \y_j\}$.

We are now ready to introduce \suiteShapImp.
The algorithm exploits \Cref{def:profileFunctions} to allocate a smaller matrix $P_h$ and to adjust the index boundaries of the loops performed.
The body of all methods, including \texttt{RemoveClass} and \texttt{AddClass}, remains unaltered, except for the modifications that we now list, each followed by the statement certifying its correctness.
\begin{compactitem}
	\item After~\Cref{alglineSHMC:9}, after computing the counts $\cardM_j$ for $j\in[2,\totLabs]$, compute $\cardM_{(1)}=\max_{j\in[2,\totLabs]}\cardM_j$, the value $H^\star=\min\{\cardM_1,\cardM_{(1)}\}$, and the values $K(h)$ and $\Phi(h)$ for $h\in[1,H^\star]$ from \Cref{def:profileFunctions}.
	The counts $\cardM_j$ are integers and the algorithm uses only $h\le H^\star$. Thus, we can use bucket sort to compute the array $c[v]\doteq|\{j\in[2,\totLabs] : \cardM_j = v\}|$ for $v\in[0,H^\star-1]$, and the count $c_{\ge}\doteq|\{j\in[2,\totLabs] : \cardM_j\ge H^\star\}|$, in $\bigO(\totLabs+H^\star)$ time.
	Compute $K(h)=c_{\ge}+\sum_{v=h}^{H^\star-1}c[v]$  for every $h\in[1,H^\star]$ by a single scan of $c$ from $v=H^\star-1$ down to $v=1$ accumulating the suffix sums. 
	Furthermore, compute $\Phi(h)=\Phi(h-1)+K(h)$ from $K(h)$ and $\Phi(0)=0$ by~\Cref{lemma:profileIdentities}~($i$).
	Since $H^\star\le N^\star$, the overall time complexity is dominated by $\bigO(\totLabs+N^\star)$ per vote profile.

	\item In~\Cref{alglineSHMC:12}, replace the upper limit $\cardM_1$ of the loop over $h$ with $H^\star=\min\{\cardM_1,\cardM_{(1)}\}$, by~\Cref{cor:restrictedRanges}~($i$).

	\item In~\Cref{alglineSHMC:13}, allocate $P_h$ with $K(h)+1$ rows and $\Phi(h)+1$ columns, instead of $\totLabs$ rows and $\cardM-\cardM_1+1$ columns, by~\Cref{lemma:profileIdentities}~($iii$) and~($iv$).

	\item In \Cref{alglineSHMC:14}, replace the upper limit $\totLabs-1$ of the loop over $k$ with $K(h)$, by \Cref{lemma:profileIdentities}~($iii$).

	\item In \Cref{alglineSHMC:15}, replace the upper limit $\cardM-\cardM_1$ of the loop over $m$ with $\min\{\overline{m}(k,h),\Phi(h)\}$, by \Cref{cor:restrictedRanges}~($ii$) and \Cref{lemma:profileIdentities}~($iv$).

	\item In \Cref{alglineSHMC:17}, do not perform the iteration for a label $\y_j$ with $\cardM_j<h$, that is, process $\y_j$ only for $h\le\min\{H^\star,\cardM_j\}$, by \Cref{cor:restrictedRanges}~($i$).
	By simply processing the keys of $J_{\text{\emph{unique}}}$ in non-increasing order of $\cardM_j$, we stop the loop at the first key with $\cardM_j<h$.
	Such an order is obtained in $\bigO(\totLabs+H^\star)$ time from the array $c$ computed above.

	\item In \Cref{alglineSHMC:20}, replace the upper limit $\totLabs-2$ of the loop over $k$ with $K_{-j}(h)$, by \Cref{lemma:profileIdentities}~($v$).

	\item In \Cref{alglineSHMC:21}, replace the upper limit $\cardM-(\cardM_1+\cardM_j)$ of the loop over $m$ with $\min\{\overline{m}_{-j}(k,h),\Phi(h)\}$, by \Cref{cor:restrictedRanges}~($iii$) and \Cref{lemma:profileIdentities}~($v$).
\end{compactitem}
All above modifications, avoid the iteration over entries of $P_h[k,m]$ (as used in~\Cref{alg:ShapMC}) that do not contribute towards the output of~\Cref{alg:ShapMC}, as we now show.

\begin{lemma}\label{lemma:support}
	Let $h\ge 1$ and let $P_h(\gfvar,u)$ be the generating function of~\Cref{lemma:bivariateGenFunction}.
	\begin{enumerate}[($i$)]
		\item For every $k\in\{1,\dots,\totLabs-1\}$ and every $m\ge0$,
		\[
		[\gfvar^ku^m]P_h(\gfvar,u) = 0
		\quad\text{whenever}\quad
		m < kh
		\ \text{ or }\
		m > \overline{m}(k,h) \enspace.
		\]
		\item For every $j\in[2,\totLabs]$, every $k\in\{0,\dots,\totLabs-2\}$ and every $m\ge0$,
		\[
		[\gfvar^ku^m]\frac{P_h(\gfvar,u)}{A_j(h;u)+B_j(h;u)\gfvar} = 0
		\quad\text{whenever}\quad
		m < kh
		\ \text{ or }\
		m > \overline{m}_{-j}(k,h) \enspace .
		\]
	\end{enumerate}
\end{lemma}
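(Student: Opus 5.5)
The plan is to argue directly from the expansion of $P_h(\gfvar,u)$ already computed in the proof of \Cref{lemma:bivariateGenFunction}:
\[
[\gfvar^k u^m]P_h(\gfvar,u) = \sum_{\substack{I\subseteq\{2,\dots,\totLabs\}\\ |I|=k}} \prod_{j\in I}\binom{\cardM_j}{h}\; [u^{m-kh}]\prod_{j\notin I} A_j(h;u).
\]
Both bounds are degree statements about a single summand indexed by $I$. I will show that every summand vanishes outside the stated range.

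\emph{Lower bound.} Fix $I$ with $|I|=k$. Each $A_j(h;u)$ is a polynomial in $u$ with only nonnegative exponents. Hence $[u^{m-kh}]\prod_{j\notin I}A_j(h;u)=0$ whenever $m-kh<0$. Summing over $I$ gives $[\gfvar^ku^m]P_h=0$ for $m<kh$.

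\emph{Upper bound.} I handle the two terms of the minimum separately.

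\begin{itemize}
\item \emph{The bound $m\le N$.} $P_h(\gfvar,u)$ has $u$-degree at most $\sum_{j\ge2}\cardM_j=N$. Each factor has $u$-degree at most $\cardM_j$, because $\binom{\cardM_j}{q}=0$ for $q>\cardM_j$. Equivalently, by \Cref{lemma:bivariateGenFunction}, no coalition in $\coalset(k,h,m)$ can have more than $N$ members.

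\item \emph{The bound $m\le k+\Phi(h-1)$.} A summand for $I$ is nonzero only if $\cardM_j\ge h$ for every $j\in I$, since otherwise $\binom{\cardM_j}{h}=0$. For $j\notin I$, the $u$-degree of $A_j(h;u)$ is at most $\min\{\cardM_j,h-1\}$. So any nonzero summand has $m \le kh+\sum_{j\notin I}\min\{\cardM_j,h-1\}$. Rewriting the sum over $j\notin I$ as $\Phi(h-1)-\sum_{j\in I}\min\{\cardM_j,h-1\}$, and using $\min\{\cardM_j,h-1\}=h-1$ for $j\in I$, this equals
\[
kh+\Phi(h-1)-k(h-1)=k+\Phi(h-1).
\]
\end{itemize}

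Combining the two gives vanishing for $m>\overline{m}(k,h)$. This cancellation, which needs $\cardM_j\ge h$ on $I$, is the only non-mechanical step.

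For part ($ii$), by the proof of \Cref{cor:bivariateGenFunctionMinusJ} the quotient equals $\prod_{j'\ne j}(A_{j'}(h;u)+B_{j'}(h;u)\gfvar)$. This is the same type of product, taken over the label set $\{2,\dots,\totLabs\}\setminus\{j\}$. I repeat the argument verbatim with $N$ replaced by $N-\cardM_j$ and $\Phi(h-1)$ replaced by $\Phi(h-1)-\min\{\cardM_j,h-1\}$. The case $k=0$ is included: $I=\emptyset$, the lower bound is trivial, and the upper bound reduces to the total $u$-degree of the $A$ factors. This yields $\overline{m}_{-j}(k,h)$ and completes the proof.
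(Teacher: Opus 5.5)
Your proof is correct and is essentially the paper's argument. The paper uses the combinatorial reading $[\gfvar^ku^m]P_h(\gfvar,u)=|\coalset(k,h,m)|$ and bounds coalition sizes: exactly $h$ votes on each of the $k$ tied labels, at most $\min\{\cardM_j,h-1\}$ on the others, with the same cancellation $\min\{\cardM_j,h-1\}=h-1$ on tied labels. You run the identical count as a $u$-degree bound on each summand of the expansion over $I$, which is marginally cleaner because it handles the $k=0$ case of part ($ii$) without the coalition interpretation, which \Cref{lemma:bivariateGenFunction} only states for $k\ge1$.
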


\begin{proof}
	($i$) Let $k\in\{1,\dots, \totLabs-1\}, h\ge 1$, by \Cref{lemma:bivariateGenFunction} it holds that $[\gfvar^ku^m]P_h(\gfvar,u)=|\coalset(k,h,m)|$, therefore it is sufficient to show that $\coalset(k,h,m)=\emptyset$ for $m< kh$ and $m>\overline{m}(k,h)$.

	Let $\coal\in\coalset(k,h,m)$, by definition we have $\coal\subseteq\modelSet'$ hence $\y_1\notin\tieSet(\coal)$, and $k =|\tieSet(\coal)|$.
	Each label of $\tieSet(\coal)$ receives exactly $h$ votes from $\coal$, thus $m\ge kh$ by the pigeonhole principle.
	Furthermore, each label $\y_j\in\tieSet(\coal)$ satisfies $\cardM_j\ge h$, hence $\min\{\cardM_j,h-1\}=h-1$, while each incorrect label $\y_j\notin\tieSet(\coal)$ receives at most $h-1$ votes and thus can have at most $\min\{\cardM_j,h-1\}$ classifiers in $\coal$, therefore
	\[
	m\le kh+\sum_{\y_j\notin\tieSet(\coal)}\min\{\cardM_j,h-1\} = kh+\Phi(h-1)-k(h-1)=k+\Phi(h-1) \enspace.
	\]
	Since $m=|\coal|\le|\modelSet'|=N$, any $\coal\in\coalset(k,h,m)$ must have $m\le \overline{m}(k,h)$.
	We have shown that $\coalset(k,h,m)=\emptyset$ for every $m$ outside $[kh,\overline{m}(k,h)]$, proving the first part of the claim.

	($ii$) Let $k\in\{0,\dots, \totLabs-2\}, h\ge 1$, and $j\in[2,\totLabs]$, the proof is analogous to the above, by replacing $\coalset(k,h,m)$ with the set $\coalset_{-j}(k,h,m)$ of~\Cref{cor:bivariateGenFunctionMinusJ}.
	Note in fact, that under such scenario $\coal\in\coalset_{-j}(k,h,m)$ can be formed by models voting for the $\totLabs-2$ incorrect labels other than $\y_j$, and thus the sum bounding $m$ above should be modified to not account for the term of label $\y_j$.
	Hence any $\coal\in\coalset_{-j}(k,h,m)$ must have $m\le N-\cardM_j$ and $m\le k+\Phi(h-1)-\min\{\cardM_j,h-1\}$, that is $m\le\overline{m}_{-j}(k,h)$ by the definition of $\overline{m}_{-j}(k,h)$, and $m\ge kh$ by the same pigeonhole argument, proving the statement.
\end{proof}

The above bounds are tight. 
Consider a vote profile with $\cardM_j\ge h$ for every $j\in[2,\totLabs]$, a coalition $\coal$ with exactly $h$ classifiers from each of the $k$ incorrect labels attains $m=kh$, now include $h-1$ classifiers voting for each remaining incorrect label, it holds $m=kh+(\totLabs-1-k)(h-1)$ and $\tieSet(\coal)$ remains unaltered, where $\Phi(h-1)=(\totLabs-1)(h-1)$ on such a profile, hence $m=k+\Phi(h-1)=\overline{m}(k,h)$.

When the incorrect predictions are focused on few classes, most coefficients $[\gfvar^ku^m]P_h(\gfvar,u)$ equal 0. 
This is captured formally in the next lemma.

\begin{lemma}\label{lemma:vanishingH}
	Let $k\in\{1,\dots,\totLabs-1\}$ and $h\ge1$.
	If $h>\cardM_{(k)}$, then $[\gfvar^ku^m]P_h(\gfvar,u)=0$ for every $m\ge0$.
\end{lemma}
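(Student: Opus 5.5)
The plan is to argue combinatorially through \Cref{lemma:bivariateGenFunction}, which gives $[\gfvar^ku^m]P_h(\gfvar,u)=|\coalset(k,h,m)|$. It therefore suffices to show that $\coalset(k,h,m)=\emptyset$ for every $m\ge 0$ whenever $h>\cardM_{(k)}$. Equivalently, summing over $m$, it suffices to show $\coalset(k,h)=\emptyset$, using \Cref{lemma:genFunction}. The two routes are interchangeable; I would take the bivariate one since the statement is phrased for $P_h(\gfvar,u)$.

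Fix $k$ and $h$ with $h>\cardM_{(k)}$, and suppose for contradiction that some $\coal\in\coalset(k,h,m)$ exists.
\begin{itemize}
\item By definition $|\tieSet(\coal)|=k$, and every label $\y_j\in\tieSet(\coal)$ satisfies $\VotVecCoord_j(\coal)=h$.
\item Since $\coal\subseteq\modelSet$, we have $\VotVecCoord_j(\coal)\le\cardM_j$, so each such label has $\cardM_j\ge h$.
\item Hence at least $k$ distinct incorrect labels satisfy $\cardM_j\ge h$, that is, $K(h)\ge k$ in the notation of \Cref{def:profileFunctions}.
\item Because $\cardM_{(1)}\ge\dots\ge\cardM_{(\totLabs-1)}$ is the non-increasing ordering, having $k$ labels with count at least $h$ forces $\cardM_{(k)}\ge h$. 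This contradicts $h>\cardM_{(k)}$.
\end{itemize}

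For a more algebraic alternative, one can read the claim directly off the product. A monomial with $\gfvar^k$ must select the $B_j(h;u)$ term from exactly $k$ factors. Since $B_j(h;u)=\binom{\cardM_j}{h}u^h$ vanishes whenever $\cardM_j<h$, at most $K(h)<k$ factors are nonzero in $\gfvar$. So $P_h(\gfvar,u)$ has degree less than $k$ in $\gfvar$, and all coefficients of $\gfvar^k$ vanish.

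There is no genuine obstacle here. The only point needing care is the order-statistic step: $|\{j:\cardM_j\ge h\}|\ge k$ holds if and only if $\cardM_{(k)}\ge h$. This follows directly from sorting the counts in non-increasing order, and ties in the ordering are harmless.
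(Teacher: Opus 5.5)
Your proposal is correct and follows the paper's proof essentially verbatim. You identify the coefficient with $|\coalset(k,h,m)|$ via \Cref{lemma:bivariateGenFunction}, observe that each of the $k$ tied labels requires $\cardM_j\ge h$, and conclude that $h>\cardM_{(k)}$ leaves at most $k-1$ such labels, so the set is empty. Your algebraic remark is an equally valid shortcut that bypasses the combinatorial interpretation: only factors with $\binom{\cardM_j}{h}\neq 0$ can contribute a power of $\gfvar$, so the $\gfvar$-degree of $P_h(\gfvar,u)$ is at most $K(h)<k$.
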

\begin{proof}
	Fix $k\in\{1,\dots,\totLabs-1\}$ and $h\ge1$. 
	By \Cref{lemma:bivariateGenFunction}, $[\gfvar^ku^m]P_h(\gfvar,u)=|\coalset(k,h,m)|$, and every $\coal\in\coalset(k,h,m)$ has $k$ incorrect labels receiving exactly $h$ votes from $\coal\subseteq\modelSet'$, thus $\cardM_j\ge h$ for each such label $\y_j$.
	If $h>\cardM_{(k)}$ then at most $k-1$ labels $j\in[2,\totLabs]$ satisfy $\cardM_j\ge h$, so no such $\coal$ exists and $\coalset(k,h,m)=\emptyset$.
\end{proof}

We now show that using the ranges of \Cref{lemma:support} and \Cref{lemma:vanishingH} our \Cref{theo:ShapleyMCCorrect} retains its correctness.

\begin{corollary}\label{cor:restrictedRanges}
	Let $H^\star=\min\{\cardM_1,\cardM_{(1)}\}$.
	The statement of \Cref{theo:ShapleyMCCorrect} holds with the following modifications.
	\begin{compactenum}[($i$)]
		\item Replace the upper limit of the summation over $h$, which is $\cardM_1$ in the first identity and $\cardM_j$ in the second, by $H^\star$ and by $\min\{H^\star,\cardM_j\}=\min\{\cardM_1,\cardM_j\}$, respectively.
		\item For a correct classifier replace the upper limit $\cardM-\cardM_1$ over $m$ with $\overline{m}(k,h)$.
		\item For an incorrect classifier predicting $\y_j$ replace the upper limit $\cardM-(\cardM_1+\cardM_j)$ over $m$ with $\overline{m}_{-j}(k,h)$.
	\end{compactenum}
\end{corollary}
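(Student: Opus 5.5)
The plan is to start from the exact identities of \Cref{theo:ShapleyMCCorrect} and show that every term we delete is zero. Each modification only shrinks an index range. It therefore suffices to show that, in each discarded term, either the coefficient $[\gfvar^ku^m]P_h(\gfvar,u)$ (resp.\ of the quotient by $A_j(h;u)+B_j(h;u)\gfvar$) vanishes, or the weight $f_1$ (resp.\ $f_j$) vanishes. The tools are \Cref{lemma:support}, \Cref{lemma:vanishingH}, and the convention that a term with a zero binomial is $0$.

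For (ii) and (iii), fix $h$ and $k$ in the summation ranges. Any $m>\overline{m}(k,h)$ has $[\gfvar^ku^m]P_h(\gfvar,u)=0$ by \Cref{lemma:support}(i). Likewise, any $m>\overline{m}_{-j}(k,h)$ has a vanishing coefficient of the quotient by \Cref{lemma:support}(ii). Also $\overline{m}(k,h)\le N=\cardM-\cardM_1$ and $\overline{m}_{-j}(k,h)\le N-\cardM_j$, so the new limits never exceed the old ones and no new terms are introduced. If $\overline{m}<kh$ the inner sum is empty, which is consistent because all those coefficients vanish anyway.

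For (i), I would treat the correct-classifier sum first. Every term with $h>\cardM_{(1)}$ must vanish for every $k\ge1$. Since $\cardM_{(k)}\le\cardM_{(1)}<h$, \Cref{lemma:vanishingH} gives $[\gfvar^ku^m]P_h=0$. Combined with the original limit $h\le\cardM_1$, this lets us cut the sum at $H^\star$.

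For the incorrect classifier voting for $\y_j$, the original upper limit is $\cardM_j$, and I need to kill the terms with $\cardM_1<h\le\cardM_j$. Here both binomials inside $f_j$ carry the factor $\binom{\cardM_1}{h}$, which is $0$ when $h>\cardM_1$, so those terms are zero by convention. Finally, $\min\{H^\star,\cardM_j\}=\min\{\cardM_1,\cardM_{(1)},\cardM_j\}=\min\{\cardM_1,\cardM_j\}$ because $\cardM_j\le\cardM_{(1)}$.

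The only delicate point is this last case. For $k=0$, \Cref{lemma:vanishingH} does not apply, since it needs $k\ge1$ and concerns the full product rather than the quotient. So the vanishing must come from the weight $\binom{\cardM_1}{h}$ and not from the coefficient. I would check that every summand of $f_j$ indeed contains this factor, which it does. Combining the three parts yields the modified identities.
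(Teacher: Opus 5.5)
Your proposal is correct and follows the paper's proof essentially step for step. The paper truncates $m$ via \Cref{lemma:support}, truncates $h$ for a correct classifier via \Cref{lemma:vanishingH} with $\cardM_{(k)}\le\cardM_{(1)}$, and truncates $h$ for an incorrect classifier via the vanishing factor $\binom{\cardM_1}{h}$ in $f_j$; you argue each of these the same way, including your observation that the $k=0$ quotient terms must be killed by the weight rather than by the coefficient.
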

\begin{proof}
	($i$) 
	Consider the two sums defining the Shapley value from \Cref{theo:ShapleyMCCorrect}. 
	The summation in $h$ for the Shapley value of a correct classifier has upper limit $\cardM_1$, and all terms for $h>\cardM_{(1)}$ are zero since $k\ge1$ and $\cardM_{(k)}\le\cardM_{(1)}$ thus by~\Cref{lemma:vanishingH} it holds $[\gfvar^ku^m]P_h(\gfvar,u)=0$.
	
	Similarly, the summation in $h$ for the Shapley value of an incorrect classifier has upper limit $\cardM_j\le\cardM_{(1)}$, and its terms with $h>\cardM_1$ are zero, since $f_j(k,h,m)$ evaluates to zero since $\binom{\cardM_1}{h}=0$ by the definition of $f_j$ in \Cref{subsubsec:MCSHAP}.
	Therefore the two summations are restricted to $h\le H^\star$ and to $h\le\min\{\cardM_1,\cardM_j\}$ respectively, both of which retain the correctness of \Cref{theo:ShapleyMCCorrect}, proving the first part of the claim.

	($ii$) and ($iii$) Both substitutions truncate the two sums, note $\overline{m}(k,h)\le\cardM-\cardM_1$ and $\overline{m}_{-j}(k,h)\le\cardM-(\cardM_1+\cardM_j)$ by definition.
	By \Cref{lemma:support}, every truncated term contributes a factor $[\gfvar^ku^m]P_h(\gfvar,u)$, respectively $[\gfvar^ku^m]P_h(\gfvar,u)/(A_j(h;u)+B_j(h;u)\gfvar)$, equal to zero. This concludes the proof.
\end{proof}

The modification of~\Cref{alglineSHMC:17} uses the bound $\min\{H^\star,\cardM_j\}$ of \Cref{cor:restrictedRanges}~($i$).
For simplicity, our analysis of \Cref{theo:refinedRuntimeMCSHAP} uses the weaker bound $H^\star$, which is common to all the incorrect labels and can be used as  common bound for the iterations of the loop of \Cref{alglineSHMC:12}.

As mentioned, \suiteShapImp builds at each iteration $h\in [1, H^\star]$ a matrix of size $(K(h)+1)\times(\Phi(h)+1)$ to retain the non-zero coefficients of $P_h(\gfvar, u)$, we now show the key combinatorial identities allowing such construction, and connect $K(h)$ and $\Phi(h)$. 

\begin{lemma}\label{lemma:profileIdentities}
	Let $H=\cardM_{(1)}$, $H^\star=\min\{\cardM_1,\cardM_{(1)}\}$ and fix $h\ge1$. The following hold.
	\begin{compactenum}[($i$)]
		\item $K(h)=\Phi(h)-\Phi(h-1)$.
		\item $\Phi(H)=N$ and $\Phi(h')\le N$ for every $h'\ge0$.
		\item $[\gfvar^ku^m]P_h(\gfvar,u)=0$ for every $k>K(h)$ and every $m\ge0$.
		\item $[\gfvar^ku^m]P_h(\gfvar,u)=0$ for every $m>\Phi(h)$ and every $k\ge0$.
		\item For every $j\in[2,\totLabs]$, with $K_{-j}(h)$ as in~\Cref{def:profileFunctions},
		\[
		[\gfvar^ku^m]\frac{P_h(\gfvar,u)}{A_j(h;u)+B_j(h;u)\gfvar}=0
		\]
		for every $k>K_{-j}(h)$ and every $m\ge0$, and for every $m>\Phi(h)-\min\{\cardM_j,h\}$ and every $k\ge0$.
	\end{compactenum}
\end{lemma}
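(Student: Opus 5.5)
The plan is to prove the five items in order, using only \Cref{def:profileFunctions}, the combinatorial interpretation of \Cref{lemma:bivariateGenFunction} and \Cref{cor:bivariateGenFunctionMinusJ}, and elementary counting. For ($i$), write
\[
\Phi(h)-\Phi(h-1)=\sum_{j=2}^{\totLabs}\big(\min\{\cardM_j,h\}-\min\{\cardM_j,h-1\}\big).
\]
Each summand is $1$ exactly when $\cardM_j\ge h$ and $0$ otherwise, so the sum equals $K(h)$. For ($ii$), note that $\min\{\cardM_j,h'\}\le\cardM_j$, hence $\Phi(h')\le\sum_{j\ge2}\cardM_j=N$. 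When $H=\cardM_{(1)}=\max_j\cardM_j$, each minimum equals $\cardM_j$, which gives equality.

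For ($iii$) and ($iv$), I would argue directly on the product rather than through $\coalset(k,h,m)$. This also covers $k=0$, and it avoids worrying about how $\tieSet$ is defined when a coalition is empty. Expanding $\prod_{j=2}^{\totLabs}(A_j(h;u)+B_j(h;u)\gfvar)$ as in the proof of \Cref{lemma:genFunction}, every monomial corresponds to a choice of $I\subseteq\{2,\dots,\totLabs\}$, the factors taking $B_j$, together with exponents $q_j\in[0,h-1]$ for $j\notin I$.

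\begin{itemize}
\item \textbf{Item ($iii$).} The coefficient $\binom{\cardM_j}{h}$ vanishes unless $\cardM_j\ge h$. So a nonzero term needs every $j\in I$ to satisfy $\cardM_j\ge h$, which forces $|I|=k\le K(h)$.
\item \textbf{Item ($iv$).} A nonzero term needs $\binom{\cardM_j}{q_j}\neq0$, so $q_j\le\min\{\cardM_j,h-1\}\le\min\{\cardM_j,h\}$. For $j\in I$ the exponent is $h=\min\{\cardM_j,h\}$, since $\cardM_j\ge h$ there. Summing the exponents of $u$ over all $j$ gives $m\le\Phi(h)$.
\end{itemize}

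Item ($v$) follows the same way. By \Cref{cor:bivariateGenFunctionMinusJ}, the quotient is the product over $j'\neq j$, i.e.\ the same expansion with the factor of $\y_j$ removed. Two things change. The number of admissible indices with $\cardM_{j'}\ge h$ becomes $K(h)-\ind[\cardM_j\ge h]=K_{-j}(h)$. The degree bound becomes $\sum_{j'\neq j}\min\{\cardM_{j'},h\}=\Phi(h)-\min\{\cardM_j,h\}$. Both claims of ($v$) then follow by repeating the two bullet arguments verbatim.

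I do not expect a real obstacle here. The only point needing care is to make sure that zero binomials really kill the terms, using the convention $\binom{a}{b}=0$ for $b>a$. The second point is to treat $k=0$ uniformly, which is the reason for arguing on the polynomial expansion rather than on tie sets.
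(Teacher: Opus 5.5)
Your proof is correct. Items ($i$) and ($ii$) match the paper's argument; in ($ii$) you bound $\min\{n_j,h'\}\le n_j$ termwise rather than using that $\Phi$ is non-decreasing, which makes no difference.

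For ($iii$)--($v$) your route differs. The paper works with coalitions: it identifies coefficients with $|\mathcal{S}(k,h,m)|$ and $|\mathcal{S}_{-j}(k,h,m)|$ via \Cref{lemma:bivariateGenFunction} and \Cref{cor:bivariateGenFunctionMinusJ}, and for ($iii$) it goes through \Cref{lemma:vanishingH} and the order statistics $n_{(k)}$. You instead read the vanishing directly off the term-by-term expansion of the product, where a zero binomial kills each offending term. The counting underneath is the same.

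Your version is self-contained and treats $k=0$ uniformly, and also $k>\ell-1$, where the sum is simply empty. That is a real, if small, advantage:
\begin{itemize}
\item Items ($iv$) and ($v$) are claimed for all $k\ge 0$.
\item The $k=0$ coefficients of the quotient appear in the incorrect-classifier sum of \Cref{theo:ShapleyMCCorrect}, whose ranges the refined algorithm truncates using ($v$).
\item Both \Cref{lemma:bivariateGenFunction} and \Cref{lemma:vanishingH} are stated only for $k\in\{1,\dots,\ell-1\}$.
\item The tie-set description cannot literally capture the $t^0$ coefficients, since $\mathcal{M}(S)$ is never empty while $[t^0u^0]P_h(t,u)=1$.
\end{itemize}

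In exchange, the paper's route reuses \Cref{lemma:vanishingH}, which it needs anyway for \Cref{cor:restrictedRanges}. It also keeps the coalition intuition that motivates the bounds in view.
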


\begin{proof}
	We prove ($i$).
	Let $a\ge0$ and $h\ge1$.
	If $a\ge h$ then $\min\{a,h\}-\min\{a,h-1\}=h-(h-1)=1$.
	If $a\le h-1$ then $\min\{a,h\}-\min\{a,h-1\}=a-a=0$.
	Therefore, $\min\{a,h\}-\min\{a,h-1\}=\ind[a\ge h]$.
	Hence,
	\[
	\Phi(h)-\Phi(h-1) = \sum_{j=2}^{\totLabs}\ind[\cardM_j\ge h] = K(h) \enspace ,
	\]
	where the first equality uses the above identity, and the second equality is from the definition of $K(h)$, proving ($i$).

	We prove ($ii$).
	By the definition of $H$ we have $\cardM_j\le H$ for every $j\in[2,\totLabs]$, hence $\min\{\cardM_j,H\}=\cardM_j$ and $\Phi(H)=\sum_{j=2}^{\totLabs}\cardM_j=N$.
	Furthermore, the function $\Phi(h)$ is non-decreasing in $h$, thus $\Phi(h)\le\Phi(H)=N$ for $h\le H$, and $\Phi(h)=N$ for $h>H$, proving ($ii$).

	We prove ($iii$).
	By the definition of $K(h)$, at most $K(h)$ labels $j\in[2,\totLabs]$ satisfy $\cardM_j\ge h$, thus $\cardM_{(k)}<h$ for every $k>K(h)$.
	The claim then follows from~\Cref{lemma:vanishingH}, proving ($iii$).

	We prove ($iv$).
	By~\Cref{lemma:bivariateGenFunction}, $[\gfvar^ku^m]P_h(\gfvar,u)=|\coalset(k,h,m)|$.
	Fix $k\ge 0, h\ge 1, m\ge0$ and $\coal\in\coalset(k,h,m)$, each label of $\tieSet(\coal)$ receives exactly $h$ votes from $\coal$ and each incorrect label not in $\tieSet(\coal)$ receives at most $h-1$ votes. Thus $\coal$ contains at most $\min\{\cardM_j,h\}$ classifiers predicting $\y_j$, for every $j\in[2,\totLabs]$.
	Summing over all incorrect labels yields $m=|\coal|\le\Phi(h)$, hence $\coalset(k,h,m)=\emptyset$ for every $m>\Phi(h)$, proving ($iv$).

	We prove ($v$).
	Fix $j\in[2,\totLabs]$, by~\Cref{cor:bivariateGenFunctionMinusJ} it holds $[\gfvar^ku^m]P_h(\gfvar,u)/(A_j(h;u)+B_j(h;u)\gfvar)=|\coalset_{-j}(k,h,m)|$, where a coalition $\coal\in\coalset_{-j}(k,h,m)$ contains no classifier predicting $\y_j$.
	Each label of $\tieSet(\coal)$ receives exactly $h$ votes from $\coal$, hence it corresponds to a label $\y_{j'}$ with $j'\ne j$ and $\cardM_{j'}\ge h$, for which there are $K_{-j}(h)$ such labels by~\Cref{def:profileFunctions}, thus $\coalset_{-j}(k,h,m)=\emptyset$ for every $k>K_{-j}(h)$.
	Furthermore, $\coal$ contains at most $\min\{\cardM_{j'},h\}$ classifiers predicting $\y_{j'}$ for every $j'\ne j$ by the argument of ($iv$), thus $m=|\coal|\le\sum_{j'\neq j}\min\{\cardM_{j'},h\}=\Phi(h)-\min\{\cardM_j,h\}$, proving ($v$) and concluding the proof.
\end{proof}

We now need a final technical lemma used to prove our main bound on the time complexity of \suiteShapImp.

\begin{lemma}\label{lemma:telescoping}
	Fix $H^\star=\min\{\cardM_1,\cardM_{(1)}\}$ and let $N^\star=\Phi(H^\star)$. Then
	\[
	\frac{(N^\star)^3}{3}\le \sum_{h=1}^{H^\star}K(h) \Phi(h)^2\le \frac{(N^\star)^3}{3}+2\totLabs (N^\star)^2 \enspace .
	\]
\end{lemma}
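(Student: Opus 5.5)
The plan is to read the sum as a right-endpoint Riemann sum for $\int_0^{N^\star} x^2\,dx$ and compare it, term by term, with telescoping cubes. By \Cref{lemma:profileIdentities}~($i$), $K(h)=\Phi(h)-\Phi(h-1)$, and by \Cref{def:profileFunctions}, $\Phi(0)=0$. Write $a_h\doteq\Phi(h-1)$, $b_h\doteq\Phi(h)$ and $d_h\doteq b_h-a_h=K(h)\ge 0$. Then each summand is $d_h b_h^2$, and $\sum_{h=1}^{H^\star}(b_h^3-a_h^3)=\Phi(H^\star)^3-\Phi(0)^3=(N^\star)^3$ telescopes.

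\textbf{Lower bound.} Since $x^2\le b_h^2$ on $[a_h,b_h]$, we have
\[
d_h b_h^2\ \ge\ \int_{a_h}^{b_h}x^2\,dx=\tfrac{1}{3}(b_h^3-a_h^3).
\]
Equivalently, $3d_hb_h^2-(b_h^3-a_h^3)=d_h(2b_h^2-a_hb_h-a_h^2)\ge0$ because $0\le a_h\le b_h$. Summing over $h\in[1,H^\star]$ and telescoping gives $\sum_h K(h)\Phi(h)^2\ge (N^\star)^3/3$.

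\textbf{Upper bound.} The same algebra gives the exact error term
\[
d_h b_h^2-\tfrac13(b_h^3-a_h^3)=\tfrac{d_h}{3}(b_h-a_h)(2b_h+a_h)=\tfrac{d_h^2}{3}(2b_h+a_h)\le d_h^2\, b_h .
\]
We bound $b_h=\Phi(h)\le\Phi(H^\star)=N^\star$, using that $\Phi$ is non-decreasing. We also bound one factor $d_h=K(h)\le\totLabs-1$. This gives
\[
\sum_{h=1}^{H^\star}d_h^2 b_h\le(\totLabs-1)N^\star\sum_{h=1}^{H^\star}K(h)=(\totLabs-1)N^\star\,\Phi(H^\star)=(\totLabs-1)(N^\star)^2,
\]
where the last sum telescopes again via \Cref{lemma:profileIdentities}~($i$). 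Adding the telescoped main term yields
\[
\sum_h K(h)\Phi(h)^2\le (N^\star)^3/3+\totLabs(N^\star)^2\le (N^\star)^3/3+2\totLabs(N^\star)^2 .
\]

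The only step requiring care is the factorization of the per-step error $d_hb_h^2-(b_h^3-a_h^3)/3$ into $d_h^2(2b_h+a_h)/3$. Once that identity is in hand, the rest is telescoping and the trivial bounds $K(h)\le\totLabs-1$ and $\Phi(h)\le N^\star$. So I do not expect a genuine obstacle; the stated constant $2\totLabs$ even leaves slack.
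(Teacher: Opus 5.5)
Your proof is correct, and it takes a genuinely different route from the paper's. The paper argues discretely. It partitions $\{1,\dots,N^\star\}$ into the blocks $X_h=\{x:\Phi(h-1)<x\le\Phi(h)\}$, each of size $K(h)$, and rewrites the sum as $\sum_{x=1}^{N^\star}\Phi(\mathrm{col}(x))^2$. It then compares each term with $x^2$. The lower bound uses $\Phi(\mathrm{col}(x))\ge x$ together with the closed form of $\sum_{x\le N^\star}x^2$. The upper bound uses $\Phi(\mathrm{col}(x))-x\le K(\mathrm{col}(x))-1$, giving an error of at most $2N^\star\sum_h K(h)(K(h)-1)\le 2(\ell-2)(N^\star)^2$, plus an extra $(N^\star)^2$ from $\sum_{x\le N^\star}x^2\le (N^\star)^3/3+(N^\star)^2$.

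You instead compare each block with $\int_{\Phi(h-1)}^{\Phi(h)}x^2\,dx$. The main term then telescopes exactly to $(N^\star)^3/3$, and the per-block error has the exact form $\tfrac{K(h)^2}{3}\bigl(2\Phi(h)+\Phi(h-1)\bigr)$; your factorization $2b^2-ab-a^2=(b-a)(2b+a)$ checks out. This avoids sum-of-powers formulas altogether and gives the sharper constant $(\ell-1)(N^\star)^2$ instead of $2\ell(N^\star)^2$. The paper's lower bound, $\sum_{x\le N^\star}x^2$, is marginally stronger than yours, but that is irrelevant to the statement.

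What the paper's discrete version buys is the indexing device $\Phi(\mathrm{col}(x)-1)<x\le\Phi(\mathrm{col}(x))$. It reuses this in the fast-multiplication analysis to show $\sum_h K(h)(\Phi(h-1)+1)\le (N^\star)^2$. Your approach handles that sum equally well via the left-endpoint estimate $K(h)\Phi(h-1)\le\int_{\Phi(h-1)}^{\Phi(h)}x\,dx$. That gives $(N^\star)^2/2+N^\star$, which suffices inside the $\mathcal{O}(\cdot)$ where it is used.
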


\begin{proof}
	First note that the function $\Phi(h)$ is non-decreasing in $h$ and $\Phi(0)=0$, furthermore $\Phi(H^\star)=N^\star$ by the definition of $N^\star$.
	Now consider the sets
	\[
	X_h \doteq \{x\in\mathbb{Z} : \Phi(h-1)<x\le\Phi(h)\} \enspace,
	\]
	for $h\in[1,H^\star]$.
	Sets $X_h$ form a partition of the set $Z=\{1,\dots,N^\star\}$.
	For any $x\in Z$ we let $\mathrm{col}(x)\in[1,H^\star]$ be the unique index such that $x\in X_{\mathrm{col}(x)}$.
	The sets $X_h$ contains exactly $\Phi(h)-\Phi(h-1)$ integers, and therefore by \Cref{lemma:profileIdentities}~($i$) $|X_h|=K(h)$, 
	thus $\mathrm{col}(x)$ takes the value $h$ for exactly $K(h)$ indices $x\in Z$.
	Hence
	\begin{equation}\label{eq:telescopeRewrite}
		\sum_{h=1}^{H^\star}K(h)\,\Phi(h)^2
		= \sum_{h=1}^{H^\star}\sum_{x\in X_h}\Phi(h)^2
		= \sum_{x=1}^{N^\star}\Phi(\mathrm{col}(x))^2 \enspace,
	\end{equation}
	where the first equality follows by $|X_h|=K(h)$ and in the last equality we grouped the terms of the summation by their value of $\mathrm{col}(x)$, and used the fact that $X_h, h \ge 1$ forms a partition of $Z$.

	We now prove the lower bound.
	By the definition of $\mathrm{col}(x)$ we have $x\le\Phi(\mathrm{col}(x))$, hence $\Phi(\mathrm{col}(x))^2\ge x^2$.
	Thus
	\[
	\sum_{h=1}^{H^\star}K(h)\,\Phi(h)^2 \ge \sum_{x=1}^{N^\star}x^2 = \frac{(N^\star)^3}{3}+\frac{(N^\star)^2}{2}+\frac{N^\star}{6} \ge \frac{(N^\star)^3}{3} \enspace,
	\]
	where the first inequality follows by \Cref{eq:telescopeRewrite}, and the equality follows by the well known sum $\sum_{h=1}^a x^2 = a(a+1)(2a+1)/6, a\ge 1$.

	We conclude proving the upper bound.
	By the definition of $\mathrm{col}(x)$ we have $x>\Phi(\mathrm{col}(x)-1)$, and $\Phi(\mathrm{col}(x)-1)=\Phi(\mathrm{col}(x))-K(\mathrm{col}(x))$ by~\Cref{lemma:profileIdentities}~($i$).
	By the above
	\[
	K(\mathrm{col}(x)) = \Phi(\mathrm{col}(x)) -\Phi(\mathrm{col}(x)-1) > \Phi(\mathrm{col}(x)) - x  \enspace,
	\]
	and thus $\Phi(\mathrm{col}(x))-x\le K(\mathrm{col}(x))-1$ since $x$, $\Phi(\mathrm{col}(x))$ and $K(\mathrm{col}(x)) $ are integers.
	
	Now note
	\[
	\Phi(\mathrm{col}(x))^2-x^2 = \big(\Phi(\mathrm{col}(x))-x\big)\big(\Phi(\mathrm{col}(x))+x\big) \le 2N^\star\big(K(\mathrm{col}(x))-1\big) \enspace,
	\]
	where we used $\Phi(\mathrm{col}(x))\le\Phi(H^\star)=N^\star$ and $x\le N^\star$.
	Consider the following sum
	\begin{align*}
	\sum_{x=1}^{N^\star}\Big(\Phi(\mathrm{col}(x))^2-x^2\Big) &\le 2N^\star\sum_{h=1}^{H^\star}K(h)\big(K(h)-1\big) \\
	&\le 2N^\star(\totLabs-2)\sum_{h=1}^{H^\star}K(h) = 2(N^\star)^2(\totLabs-2) \enspace ,
	\end{align*}
	the first inequality follows from the above bound and by grouping the terms by their value of $\mathrm{col}(x)\in [1,H^\star]$, where for fixed $h$ there are exactly $K(h)$ such terms (i.e., $|X_h| = K(h)$).
	The second inequality follows since $K(h)\le\totLabs-1$ and the final equality uses $\sum_{h=1}^{H^\star}K(h)=\Phi(H^\star)-\Phi(0)=N^\star$.
	From \Cref{eq:telescopeRewrite} we have that $\sum_{h=1}^{H^\star}K(h)\,\Phi(h)^2=\sum_{x=1}^{N^\star}x^2+\sum_{x=1}^{N^\star}\big(\Phi(\mathrm{col}(x))^2-x^2\big)$, thus
	\[
	\sum_{h=1}^{H^\star}K(h)\,\Phi(h)^2 \le \frac{(N^\star)^3}{3}+(N^\star)^2+2(N^\star)^2(\totLabs-2) \le \frac{(N^\star)^3}{3}+2\totLabs (N^\star)^2 \enspace,
	\]
	where we used the well known bound $\sum_{x=1}^{N^\star}x^2\le (N^\star)^3/3+(N^\star)^2$, which holds for every $N^\star\ge1$,
	which concludes the proof.
\end{proof}

The above identity will be used to bound the total number of operations performed by \suiteShapImp at each of its iterations $h\in[1,H^\star]$.
To obtain a final bound for the processing of the full dataset $\dataSet$ we need the following definition, let $\widetilde{N} = \max_{(\xp,\y) \in \dataSet} N^\star_\xp$, where $N^\star_\xp$ depends on the vote profile for $\xp$ (recall that we fixed a single data point $(\xp,\y)$ up to now).
We can now prove the time complexity of \suiteShapImp.

\begin{theorem}\label{theo:refinedRuntimeMCSHAP}
	Consider \suiteShapImp (\Cref{alg:ShapMC} modified as listed in \Cref{appsubsec:modifiedAlg}),
	its output coincides with the output of \Cref{alg:ShapMC}, and its time complexity to compute the Shapley values for all classifiers in $\modelSet$ in the game $\gameMC_\dataSet$ for a dataset $\dataSet$ is bounded by
	\[
	\bigO\!\left(|\unique|\big((\widetilde{N})^3+\totLabs (\widetilde{N})^2\big) + |\dataSet|\cardM\right) \enspace,
	\]
	and its memory complexity is bounded by $\bigO(|\unique|\totLabs+\totLabs \widetilde{N})$.
\end{theorem}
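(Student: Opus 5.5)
Correctness and running time are handled separately; correctness is mostly bookkeeping over results already stated. Each modification in \Cref{appsubsec:modifiedAlg} either shrinks a summation range (justified by \Cref{cor:restrictedRanges}) or shrinks the allocated matrix to $(K(h)+1)\times(\Phi(h)+1)$ (justified by \Cref{lemma:profileIdentities}~($iii$)--($v$)). So every dropped coefficient $[\gfvar^ku^m]P_h(\gfvar,u)$ or $[\gfvar^ku^m]P_h/(A_j+B_j\gfvar)$ is zero, and the evaluated sums equal those of \Cref{theo:ShapleyMCCorrect}. One point still needs checking: \texttt{AddClass} and \texttt{RemoveClass} must never read or write outside the smaller matrix. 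I would argue that the partial products $P^{(j)}_h$ of \Cref{lemma:dpRecursion} have supports dominated by those of the full product, bounded by $K(h)$ rows and $\Phi(h)$ columns. This holds because $K$ and $\Phi$ are sums of nonnegative per-label contributions $\ind[\cardM_j\ge h]$ and $\min\{\cardM_j,h\}$, and the DP row shift (by $h$ in $u$ and $1$ in $\gfvar$) occurs only when $\cardM_j\ge h$. Removing a class gives the $-j$ bounds of \Cref{lemma:profileIdentities}~($v$). Hence the output coincides with that of \Cref{alg:ShapMC}.

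For the time bound, fix one unique profile. Preprocessing, that is computing $K$, $\Phi$ and $H^\star$ by bucket sort, costs $\bigO(\totLabs+N^\star)$. For each $h\le H^\star$, building $P_h$ adds the classes one at a time. Adding class $j$ touches at most $(K(h)+1)(\Phi(h)+1)$ entries, with $\min\{\cardM_j,h-1\}+1\le\min\{\cardM_j,h\}+1$ operations per entry. Summing over $j$, and using $\sum_j\min\{\cardM_j,h\}=\Phi(h)$ together with a $\bigO(\totLabs)$ term for the $+1$'s, gives $\bigO((K(h)+1)(\Phi(h)+1)(\Phi(h)+\totLabs))$. The remove/evaluate/add loop over the labels with $\cardM_j\ge h$ has the same cost, and the Shapley evaluation sums are dominated by the matrix size. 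So the per-$h$ cost is $\bigO\big(K(h)\Phi(h)^2+\totLabs K(h)\Phi(h)+\text{lower order}\big)$, where I use $K(h)\ge1$ for $h\le H^\star\le\cardM_{(1)}$ to absorb the $+1$'s.

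Summing over $h\in[1,H^\star]$, the upper bound of \Cref{lemma:telescoping} gives $\sum_h K(h)\Phi(h)^2\le (N^\star)^3/3+2\totLabs(N^\star)^2$. For the second term, $\sum_h K(h)\Phi(h)\le N^\star\sum_hK(h)=(N^\star)^2$, which is multiplied by $\totLabs$. The residual terms, such as $\totLabs\Phi(h)$ and $\totLabs$ per $h$, sum to at most $\totLabs N^\star H^\star\le\totLabs(N^\star)^2$, since $H^\star\le N^\star$. This last inequality holds because each $h\le H^\star$ has $K(h)\ge1$ and $N^\star=\sum_{h\le H^\star}K(h)$. The per-profile cost is therefore $\bigO((N^\star)^3+\totLabs(N^\star)^2+\totLabs)$. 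I bound $N^\star\le\widetilde N$, multiply by $|\unique|$, and add $\bigO(|\dataSet|\cardM)$ for the predictions and for the accumulation over all points, exactly as in \Cref{theo:runtimeMCSHAP}. The stray $\bigO(|\unique|\totLabs)$ is absorbed into the stated bound (when $\widetilde N\ge 1$), or into reading the profile, which is already counted in $|\dataSet|\cardM$. For memory, one matrix of size $\bigO(\totLabs\widetilde N)$ is live at a time, and the cache of profiles costs $\bigO(|\unique|\totLabs)$.

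The main obstacle is the first step: certifying that the in-place DP stays inside the truncated matrix for the intermediate partial products and after \texttt{RemoveClass}. A secondary difficulty is attributing the per-entry cost carefully, so that the $\cardM_j$-dependent work sums to $\Phi(h)$ rather than to $N$.
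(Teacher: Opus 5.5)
Your proposal is correct and follows essentially the same route as the paper. Correctness comes from \Cref{cor:restrictedRanges} and \Cref{lemma:profileIdentities}; the per-iteration cost $\bigO(K(h)\Phi(h)^2+\totLabs K(h)\Phi(h))$ comes from $\sum_j\min\{\cardM_j,h\}=\Phi(h)$; and the sum over $h$ is closed with \Cref{lemma:telescoping}, $\sum_h K(h)\Phi(h)\le (N^\star)^2$ and $H^\star\le N^\star$. Your additional check, that the in-place \texttt{AddClass}/\texttt{RemoveClass} updates on intermediate partial products never leave the truncated $(K(h)+1)\times(\Phi(h)+1)$ matrix, is a point the paper leaves implicit, and your support-domination argument handles it correctly.
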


\begin{proof}
	The output of \suiteShapImp corresponds to the output of \Cref{alg:ShapMC} by \Cref{cor:restrictedRanges}.

	We now bound the time complexity.
	As in the proof of~\Cref{theo:runtimeMCSHAP}, by symmetry it is sufficient to compute the values $\shap_i, i\in[\cardM]$ for the data points in $\dataSet$ with unique vote profile $\VotVec\in\unique$, and evaluating the predictions of all classifiers over $\dataSet$ has time complexity $\bigO(|\dataSet|\cardM)$.
	Fix one data point $(\xp,\y)\in \dataSet$ with its vote profile, and let $H^\star= H^\star_\xp$ and $N^\star=N^\star_\xp$ for ease of notation.

	Fix an iteration $h\in[1,H^\star]$ of \suiteShapImp, then the matrix retaining the coefficients of $P_h(\gfvar, u)$ has size  $(K(h)+1) \times (\Phi(h)+1)$.
	For each $j\in[2,\totLabs]$ the polynomial $A_j(h;u)$ has at most $\min\{h,\cardM_j+1\}$ non-zero coefficients and $B_j(h;u)$ has at most one, thus multiplying the matrix by $A_j(h;u)+B_j(h;u)\gfvar$ requires $\bigO(K(h)\Phi(h)\min\{\cardM_j,h\})$ operations.
	Summing over the labels and using $\sum_{j=2}^{\totLabs}\min\{\cardM_j,h\}=\Phi(h)$, the matrix is computed in $\bigO(K(h)\Phi(h)^2)$ operations.
	The terms contributing to the Shapley value of a correct classifier require then $\bigO(K(h)\Phi(h))$ operations, by a simple linear scan of the matrix.
	For an incorrect label $j\in[2,\totLabs]$ the division and the multiplication with $A_j(h;u)+B_j(h;u)\gfvar$ are dominated by the construction of the matrix, hence the time complexity of processing all incorrect labels is bounded by $\bigO(\totLabs K(h)\Phi(h)+ K(h)\Phi(h)^2)$.

	Now observe that $H^\star \le N^\star$, since the term of $N^\star$ indexed by a label $j$ attaining $\cardM_j=\cardM_{(1)}$ equals $\min\{\cardM_{(1)},\cardM_1\}=H^\star$, and every term of $N^\star$ is non-negative.
	Iterating over the labels for any fixed $h$ has $\bigO(\totLabs)$ time complexity contributing $\bigO(\totLabs H^\star)=\bigO(\totLabs N^\star)$ over the iterations of \suiteShapImp.
	Furthermore, note that $\totLabs+H^\star=\bigO(\totLabs N^\star)$, thus the complexity of bucket sort is dominated by the above.
	Summing the costs over the iterations we have
	\[
	\sum_{h=1}^{H^\star}\bigO\big(K(h)\Phi(h)^2+\totLabs K(h)\Phi(h)\big) + \bigO(\totLabs N^\star) = \bigO\big((N^\star)^3+\totLabs (N^\star)^2\big) \enspace ,
	\]
	where we used~\Cref{lemma:telescoping} and \[
	\sum_{h=1}^{H^\star}K(h)\,\Phi(h)\le N^\star\sum_{h=1}^{H^\star}K(h)=(N^\star)^2 \enspace,
	\]
	 which holds since $\Phi(h)\le\Phi(H^\star)=N^\star$ and  $\sum_{h=1}^{H^\star}K(h) = \Phi(H^\star)-\Phi(0)$ by \Cref{lemma:profileIdentities}~($i$).
	
	Now consider the processing of the entire dataset $\dataSet$, for each point $(\xp, \y) \in \dataSet$ we have $N^\star_\xp \le \widetilde{N}$ by definition.
	Thus, summing over the vote profiles of $\unique$ and considering the evaluation of the votes of all classifiers, the time complexity to compute $\shap_i, i\in[\cardM]$ is bounded by
	\[
	\bigO\!\left(|\unique|\big((\widetilde{N})^3+\totLabs (\widetilde{N})^2\big) + |\dataSet|\cardM\right) \enspace,
	\]
	proving our main result.
	
	Finally, note that the memory complexity is dominated by storing the largest matrix across all data points in $\dataSet$ thus $\bigO(K(h)\Phi(h))=\bigO(\totLabs \widetilde{N})$, and the $\bigO(|\unique|\totLabs)$ values of the unique vote profile, and the output, as in the proof of~\Cref{theo:runtimeMCSHAP}, which concludes the proof.
\end{proof}

We now show that the time complexity of \suiteShapImp improves over that of \Cref{alg:ShapMC}.

\begin{corollary}\label{cor:refinedComparison}
	Recall that $N^\star = \sum_{j=2}^{\totLabs}\min\{\cardM_j,\cardM_1\}$.
	For the processing of a unique vote profile of a fixed data point $(\xp, \y) \in \dataSet$ with $N^\star\ge\totLabs$, the time complexity of \suiteShapImp improves on the bound of~\Cref{theo:runtimeMCSHAP} by a factor
	\[
	\Theta\!\left(\totLabs\left(\frac{\cardM}{N^\star}\right)^{3}\right) \enspace,
	\]
	that is, the improvement is
	\[
	\Omega\!\left(\max\left\{\totLabs\left(\frac{\cardM}{\cardM-\cardM_1}\right)^{3}, \frac{\cardM^{3}}{\totLabs^{2}\cardM_1^{3}}\right\}\right)\enspace.
	\]
\end{corollary}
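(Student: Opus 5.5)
The plan is to compare the two per-profile costs directly. By \Cref{theo:runtimeMCSHAP}, processing one unique vote profile costs $\Theta(\cardM^3\totLabs)$ under the worst-case analysis of \Cref{alg:ShapMC}. By the proof of \Cref{theo:refinedRuntimeMCSHAP}, \suiteShapImp costs $\bigO((N^\star)^3+\totLabs(N^\star)^2)$ for the same profile. I would argue that under the assumption $N^\star\ge\totLabs$ the second term is dominated by the first, since $\totLabs (N^\star)^2\le (N^\star)^3$. The refined cost is therefore $\Theta((N^\star)^3)$, which is both an upper bound and, by the lower bound of \Cref{lemma:telescoping} on $\sum_h K(h)\Phi(h)^2$, the order of work actually performed. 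Dividing gives the ratio $\Theta(\totLabs\,\cardM^3/(N^\star)^3)$, which is the first claim.

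For the second claim I would plug in the upper bound on $N^\star$ stated just after its definition, namely $N^\star\le\min\{\cardM-\cardM_1,(\totLabs-1)\cardM_1\}$. The ratio $\totLabs(\cardM/N^\star)^3$ is decreasing in $N^\star$, so each of the two upper bounds on $N^\star$ yields a lower bound on the improvement.

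1. Using $N^\star\le\cardM-\cardM_1$ gives an improvement of at least $\totLabs\,(\cardM/(\cardM-\cardM_1))^3$.
2. Using $N^\star\le(\totLabs-1)\cardM_1\le\totLabs\cardM_1$ gives at least $\totLabs\,\cardM^3/(\totLabs^3\cardM_1^3)=\cardM^3/(\totLabs^2\cardM_1^3)$.

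Both bounds hold simultaneously, so the improvement is $\Omega$ of their maximum.

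The delicate point is the meaning of ``improves by a factor $\Theta(\cdot)$''. The bound of \Cref{theo:runtimeMCSHAP} is only an upper bound, so the comparison is between the two stated bounds, not between true running times. I would state this explicitly and note that the $\Theta$ on the refined side is justified by \Cref{lemma:telescoping}. A second point to flag is that the assumption $N^\star\ge\totLabs$ is used only to absorb the $\totLabs(N^\star)^2$ term. Without that assumption the ratio would instead be $\Theta(\cardM^3/(N^\star)^2)$.
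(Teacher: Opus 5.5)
Your proposal is correct and follows the paper's own argument. You absorb $\totLabs(N^\star)^2$ into $(N^\star)^3$ using $N^\star\ge\totLabs$, divide the $\bigO(\totLabs\cardM^3)$ per-profile bound of \Cref{theo:runtimeMCSHAP} by $(N^\star)^3$, and then substitute $N^\star\le\min\{\cardM-\cardM_1,(\totLabs-1)\cardM_1\}$ to get the $\Omega(\max\{\cdot,\cdot\})$ form. The only cosmetic difference is that you use $(\totLabs-1)\cardM_1\le\totLabs\cardM_1$ directly rather than $\totLabs/(\totLabs-1)^3=\Theta(\totLabs^{-2})$, and your appeal to \Cref{lemma:telescoping} for tightness mirrors the paper's remark after the corollary.
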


\begin{proof}
	By the hypothesis we have $N^\star\ge\totLabs$.
	Thus, $\totLabs (N^\star)^2\le (N^\star)^3$, and the bound on the time complexity of \suiteShapImp from \Cref{theo:refinedRuntimeMCSHAP} becomes $\bigO((N^\star)^3)$, for a single data point.
	
	Now recall that by \Cref{theo:runtimeMCSHAP}, the complexity of \Cref{alg:ShapMC} to process a single data point (i.e., unique vote profile) is bounded by $\bigO(\totLabs\cardM^3)$.
	Thus the improvement, evaluated by the ratio of the bounds is 
	$
	\tfrac{\totLabs\cardM^3}{(N^\star)^3},
	$ yielding the first claim.

	The second claim follows by upper bounding $N^\star$ with $\min\{\cardM-\cardM_1,(\totLabs-1)\cardM_1\}$, as already shown.
	Substituting such a bound in $\tfrac{\totLabs\cardM^3}{(N^\star)^3}$ yields
	\[
	\frac{\totLabs\cardM^3}{\min\{\cardM-\cardM_1,(\totLabs-1)\cardM_1\}^3}
	= \max\left\{\totLabs\left(\frac{\cardM}{\cardM-\cardM_1}\right)^{3},\ \frac{\totLabs\cardM^{3}}{(\totLabs-1)^{3}\cardM_1^{3}}\right\} \enspace,
	\]
	where we used that for positive numbers $A,B,C$ it holds $A/\min\{B,C\}^3 = \max\{A/B^3, A/C^3\}$, since $x\mapsto x^3$ is increasing.
	The claim follows by $\totLabs/(\totLabs-1)^3=\Theta(\totLabs^{-2})$, concluding the proof.
\end{proof}

First note, that under the hypothesis $N^\star\ge\totLabs$, the lower bound of \Cref{lemma:telescoping} yields $\sum_{h\le H^\star}K(h)\Phi(h)^2=\Theta((N^\star)^3)$.
That is, the bound on the complexity of \suiteShapImp is tight, thus the above improvement is not an artifact of our analysis.

Furthermore, the bound from \Cref{theo:refinedRuntimeMCSHAP} is never worse than the bound of \Cref{theo:runtimeMCSHAP}, for any vote profile, since $N^\star\le\cardM-\cardM_1\le\cardM$ and thus $(N^\star)^3+\totLabs (N^\star)^2\le 2\totLabs\cardM^3$.
The improvement is substantial on vote profiles where most classifiers are correct.
Consider a data point with $N^\star\ge\totLabs$ on which a fraction $\alpha\in(0,1)$ of the classifiers predicts $\y_1$ we thus have $N^\star\le(1-\alpha)\cardM$, and by \Cref{cor:refinedComparison} the improvement is $\Omega(\totLabs/(1-\alpha)^3)$, i.e., a factor $10^3\totLabs$ for $\alpha=9/10$.

\ifarxiv
\subsection{A fast polynomial multiplication-based algorithm}\label{appsubsec:fft}

We observe that for any fixed iteration $h\in[1,H^\star]$, \suiteShapImp incurs a  total cost of $\bigO(K(h)\Phi(h)^2)$ for building the matrix retaining the coefficients of  $P_h(\gfvar, u)$, due to our simple way of performing polynomial multiplication.
That is, we process one label at a time and \texttt{AddClass} and \texttt{RemoveClass} respectively multiply and divide the matrix by the bivariate polynomials $A_j(h;u)+B_j(h;u)\gfvar$ for any $j\in[2,\totLabs]$.
We briefly describe an algorithm attaining $\bigO(K(h)(\Phi(h-1)+1)\log(K(h)(\Phi(h-1)+1))\log\totLabs)$ cost for processing the matrix containing the coefficients of $P_h(\gfvar, u)$, relying on fast polynomial multiplication.
We however note that we do not implement such technique, as fast polynomial multiplication requires to perform the products modulo a set of primes, followed by the use of the Chinese remainder theorem, while we focus on exact arithmetic, similar concerns are known for weighted majority games (see \citet{kakimura2026computing}).
For our analysis we use the notation $\mathsf{M}(s)$ to denote the number of operations used to multiply two univariate polynomials of degree less than $s$, following the convention of \citet{vzgathen2013modern}.
We recall that it holds $\mathsf{M}(s)=\bigO(s\log s)$ over a ring supporting the FFT, or $\mathsf{M}(s)=\bigO(s\log s\log\log s)$ over an arbitrary commutative ring \cite[Table 8.6]{vzgathen2013modern}.

\textbf{The algorithm.} Fix $h\ge1$ and recall from \Cref{lemma:bivariateGenFunction} that $P_h(\gfvar,u)=\prod_{j=2}^{\totLabs}(A_j(h;u)+B_j(h;u)\gfvar)$, where $B_j(h;u)=\binom{\cardM_j}{h}u^h$.

The variable $\gfvar$ marks a label receiving \emph{exactly} $h$ votes, hence in each factor $\gfvar$ it appears only in the monomial $B_j(h;u)\gfvar$, always multiplied by $u^h$.
A monomial $\gfvar^ku^m$ of $P_h(\gfvar,u)$ therefore collects one factor $u^h$ from each of the $k$ factors contributing $\gfvar$, that is, the exponent $m$ contains a term $kh$ which is determined by the exponent $k$ of $\gfvar$.
We can thus remove such a term, and for $j\in[2,\totLabs]$ we let
\[
F_j(\gfvar,u)\doteq A_j(h;u)+\binom{\cardM_j}{h}\gfvar \enspace .
\]
That is $P_h(\gfvar,u)=\prod_{j=2}^{\totLabs}F_j(u^h\gfvar,u)$, or equivalently
\[
[\gfvar^ku^m]P_h(\gfvar,u)=[\gfvar^ku^{m-kh}]\prod_{j=2}^{\totLabs}F_j(\gfvar,u) \enspace , \qquad\text{for every } k\ge0 \text{ and } m\ge0 \enspace ,
\]
i.e., by simply substituting $u^h\gfvar$ for $\gfvar$ recovers the removed factors.
The algorithm we now describe computes the coefficients of $\prod_{j=2}^{\totLabs}F_j(\gfvar,u)$.
The coefficients of $P_h(\gfvar,u)$ are then recovered by the identity above, that is by the map of the indices moving the coefficient of indices $(k,m)$ to the indices $(k,m+kh)$.
Using the factors $F_j(\gfvar,u)$ allows us to reduce the degree of the generating function in $u$, yielding a faster polynomial multiplication.
To simplify our notation, in the remainder of this subsection we let $P_h(\gfvar,u)$ denote $\prod_{j=2}^{\totLabs}F_j(\gfvar,u)$, leaving thus the above substitution implicit in the notation.

\emph{Bottom-up pass.} We can compute the matrix with the coefficients of $P_h(\gfvar,u)$ by divide and conquer as follows
\begin{compactenum}[($a$)]
	\item discard every label $\y_j, j\ge 2$ with $\cardM_j=0$, for which $A_j(h;u)=1$ and $\binom{\cardM_j}{h}=0$, hence $F_j(\gfvar,u)=1$, and let $\totLabs_h\le\totLabs-1$ be the number of remaining labels;
	\item build a binary tree whose nodes hold sets of labels with $\cardM_j\ge1, j\ge 2$: the root holds the $\totLabs_h$ labels with non-zero votes. 
	Any (internal) node holds a set $J$ of at least two labels, with its two children holding a balanced partition of $J$ of sizes $\lceil|J|/2\rceil$ and $\lfloor|J|/2\rfloor$. 
	A leaf holds a single label, therefore the tree has $\totLabs_h$ leaves and depth $\lceil\log_2\totLabs_h\rceil$, since it is a complete binary tree.
	\item assign to each leaf the factor $F_j(\gfvar,u)$ of the single label $\y_j$ it holds, and to each internal node the product of the two polynomials of its children.
	Thus a node holding a set $J_v$ holds $\prod_{j\in J_v}F_j(\gfvar,u)$, and the root node holds $P_h(\gfvar,u)$;
	\item the polynomial of each node $v$ in the tree has degree $K_v\doteq|\{j\in J_v : \cardM_j\ge h\}|$ in $\gfvar$ and degree $\Phi_v\doteq\sum_{j\in J_v}\min\{\cardM_j,h-1\}$ in $u$, since $\binom{\cardM_j}{h}$ is zero for $\cardM_j<h$, while the degree in $u$ of $F_j(\gfvar,u)$ is the degree of $A_j(h;u)$, that is $\min\{\cardM_j,h-1\}$, as the coefficient of $\gfvar$ in $F_j(\gfvar,u)$ does not depend on $u$.
	Therefore, the matrix of coefficients representing $\prod_{j\in J_v}F_j(\gfvar,u)$ of node $v$ has $(K_v+1)\times(\Phi_v+1)$ entries.
	Note that the root of the tree holds a matrix with $(K(h)+1)\times(\Phi(h-1)+1)$ entries, in place of the $(K(h)+1)\times(\Phi(h)+1)$ entries required to store the coefficients of $P_h(\gfvar,u)$ without our substitution;
	\item for any internal node $v$ with children $w$ and $r$ let $P_w$ and $P_r$ represent the two polynomials of the two nodes respectively.
	To compute $\prod_{j\in J_v}F_j(\gfvar,u)$ from $P_w, P_r$ apply the \emph{Kronecker substitution} $\gfvar=u^{\Phi_v+1}$ on both $P_w, P_r$, which maps each polynomial to a \emph{univariate} polynomial in $u$.
	The image of $P_w$ has degree $K_w(\Phi_v+1)+\Phi_w\le(K_v+1)(\Phi_v+1)-1$, since $K_w\le K_v$ and $\Phi_w\le\Phi_v$, and analogously for $P_r$.
	The two images can be multiplied with $\bigO(\mathsf{M}((K_v+1)(\Phi_v+1)))$ operations~\cite[Corollary 8.28]{vzgathen2013modern}.
	To recover the coefficients of $\prod_{j\in J_v}F_j(\gfvar,u)$ it is sufficient to split the resulting product of the two images into consecutive blocks of $\Phi_v+1$ coefficients, the block of index $k\in[0,K_v]$ corresponds to the coefficients of the term $\gfvar^k$.
	Such a splitting inverts the substitution since $J_v$ is the disjoint union of $J_w$ and $J_r$, hence $\Phi_w+\Phi_r=\Phi_v$ (see ($d$) above) and the product $\prod_{j\in J_v}F_j(\gfvar,u)$ has degree at most $\Phi_v$ in the variable $u$.
	The monomials $\gfvar^ku^m$ with $k\in[0,K_v]$ and $m\in[0,\Phi_v]$ are therefore mapped to the distinct powers $u^{k(\Phi_v+1)+m}$, avoiding overlap.
\end{compactenum}
It is clear that processing the tree in a bottom-up manner, we recover at the root node the matrix retaining the coefficients of $P_h(\gfvar, u)$. 
In turn, this allows us to compute the contribution of index $h$ to the Shapley value of a  correct classifier, by the first equation of \Cref{theo:ShapleyMCCorrect}.

To bound the total time complexity of an iteration $h\in[1,H^\star]$ of the algorithm we sum the cost of computing the polynomials of the nodes of the tree, one level at a time.
Let $V_L$ be the set of the nodes at level $L\in[0,\lceil\log_2\totLabs_h\rceil]$ of the tree, and recall from ($e$) that computing the polynomial of a node $v$ requires $\bigO(\mathsf{M}((K_v+1)(\Phi_v+1)))$ operations.
The label sets of the nodes of $V_L$ are pairwise disjoint subsets of the $\totLabs_h$ labels, thus $\sum_{v\in V_L}K_v\le K(h)$, $\sum_{v\in V_L}\Phi_v\le\Phi(h-1)$, and $|V_L|\le\totLabs_h$.
Therefore
\begin{equation}\label{eq:levelSize}
\sum_{v\in V_L} (K_v+1)(\Phi_v+1) \le K(h)\Phi(h-1)+K(h)+\Phi(h-1)+\totLabs_h = \bigO\big(K(h)(\Phi(h-1)+1)\big) \enspace ,
\end{equation}
where the first inequality expands the product and uses $\sum_{v\in V_L}K_v\Phi_v\le(\sum_{v\in V_L}K_v)(\sum_{v\in V_L}\Phi_v)$, which holds since all the terms are non-negative.
The last step groups the first two terms as $K(h)\Phi(h-1)+K(h)=K(h)(\Phi(h-1)+1)$, and bounds each of the two remaining terms with $K(h)(\Phi(h-1)+1)$.
The term $\Phi(h-1)$ is bounded since $K(h)\ge1$ for every $h\le H^\star$, while for the term $\totLabs_h$ we use $\totLabs_h\le K(h)(\Phi(h-1)+1)$.
In fact, for $h=1$ we have $\Phi(0)=0$, hence $K(h)(\Phi(h-1)+1)=K(1)=\totLabs_h$, and the bound holds with equality.
For $h\ge2$ each of the $\totLabs_h$ labels has $\min\{\cardM_j,h-1\}\ge1$, hence $\Phi(h-1)\ge\totLabs_h$, and the bound follows by $K(h)\ge1$.

The cost of performing the polynomial multiplications on a fixed level is therefore $\sum_{v\in V_L}\bigO(\mathsf{M}((K_v+1)(\Phi_v+1)))=\bigO(\mathsf{M}(K(h)(\Phi(h-1)+1)))$, by the superlinearity $\mathsf{M}(s)+\mathsf{M}(s')\le \mathsf{M}(s+s')$ and by $\mathsf{M}(cs)=\bigO(\mathsf{M}(s))$ for any constant $c$, see~\citet[Section 8.3]{vzgathen2013modern}.

The tree has $\bigO(\log\totLabs)$ levels, thus summing the cost of the levels yields a total of $\bigO(\mathsf{M}(K(h)(\Phi(h-1)+1))\log\totLabs)$ number of operations.
The bound $\mathsf{M}(s)=\bigO(s\log s)$ of the FFT to perform the multiplications over the tree yields a time complexity bounded by $\bigO(K(h)(\Phi(h-1)+1)\log (K(h)(\Phi(h-1)+1))\log\totLabs)$.

\emph{Top-down pass.} We now show that the divisions required \texttt{RemoveClass} can be performed without incurring in an additional complexity cost.
In particular, for $j\in[2,\totLabs]$ let
\[
Q_j\doteq\prod_{\substack{j'=2 \\ j' \neq j}}^\totLabs (A_{j'}(h;u) + B_{j'}(h;u)\gfvar) \enspace,
\]
that is, $Q_j$ is the generating function of~\Cref{cor:bivariateGenFunctionMinusJ}, obtained by removing the factor of $\y_j$ from the polynomial of~\Cref{lemma:bivariateGenFunction}.
Note that $Q_j$ is a product of the factors of~\Cref{lemma:bivariateGenFunction}, and not of the factors $F_j(\gfvar,u)$ used by the tree.
We will show that we do not require to compute explicitly the matrices $Q_j$, and we can recover their contribution from our tree constructed above.
For a fixed $h\in[1,H^\star]$ we can express
\begin{align*}
f_j(k,h,m) &=\binom{\cardM_1}{h}\left[\binom{\cardM_j-1}{h}\,w_1(k,m)+\binom{\cardM_j-1}{h-1}\,w_2(k,m)\right] \enspace , \text{ where}\\
w_1(k,m) &\doteq\frac{\eta(m+2h+1)}{k+2} \enspace , \quad \text{ and } \quad
w_2(k,m) \doteq\frac{\eta(m+2h)}{(k+1)(k+2)} \enspace .
\end{align*}
Note that the dependency of $f_j$ on the label $\y_j$ is limited to the two binomial coefficients, while $w_1$ and $w_2$ are common to all the labels $j\in[2,\totLabs]$.
We extend $\eta(\cdot)$ by $\eta(a)\doteq0$ for $a>\cardM$, thus $w_1(k,m)$ and $w_2(k,m)$ can be evaluated for every $k\ge0$ and $m\ge0$. 
Fix $i\in\{1,2\}$, for a polynomial $G(\gfvar,u)$ we let
\[
\Lambda(G,w_i)\doteq\sum_{k\ge0}\sum_{m\ge0} w_i(k,m)\,[\gfvar^ku^m]G \enspace .
\]
The two sums have finitely many non-zero terms, since $G$ has finitely many non-zero coefficients, and $w_i(k,m)$ is defined for every $k\ge0$ and $m\ge0$ by our extension of $\eta(\cdot)$, hence $\Lambda(\cdot,w_i)$ is a linear functional on the polynomials in $\gfvar$ and $u$.
Furthermore, for $j\in[2,\totLabs]$ and $h\ge1$ we let
\[
S_h(j)\doteq\sum_{k=0}^{\totLabs-2}\ \sum_{m=hk}^{\cardM-(\cardM_1+\cardM_j)} f_j(k,h,m)\,[\gfvar^ku^m]Q_j \enspace ,
\]
that is $S_h(j)$ is the summand of index $h$ of the second equation of \Cref{theo:ShapleyMCCorrect}, i.e., a classifier predicting the incorrect label $\y_j$ has Shapley value $-\sum_{h=1}^{\cardM_j}S_h(j)$ over $\gameMC_{\evz}$.
The next lemma shows that $S_h(j)$ is determined by the two values $\Lambda(Q_j,w_1)$ and $\Lambda(Q_j,w_2)$, that is by two linear functionals of $Q_j$.
Hence, to evaluate $S_h(j)$ we do not need to process the $(K_{-j}(h)+1)(\Phi(h-1)-\min\{\cardM_j,h-1\}+1)$ entries of the matrix representing $Q_j$ after the map of the indices.

\begin{lemma}\label{lemma:functionalContribution}
	Fix $j\in[2,\totLabs]$ and $h\ge1$.
	The two sums defining $\Lambda(Q_j,w_i)$, $i\in\{1,2\}$, have finitely many non-zero terms, and
	\[
	S_h(j)=\binom{\cardM_1}{h}\left[\binom{\cardM_j-1}{h}\Lambda(Q_j,w_1)+\binom{\cardM_j-1}{h-1}\Lambda(Q_j,w_2)\right] \enspace .
	\]
\end{lemma}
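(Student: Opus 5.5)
The plan is to substitute the decomposition of $f_j(k,h,m)$ into the definition of $S_h(j)$, then extend the truncated double sum to all $k\ge0,m\ge0$ by showing every added term is zero. Finiteness of $\Lambda(Q_j,w_i)$ is immediate: $Q_j$ is a polynomial, with degree at most $\totLabs-2$ in $\gfvar$ and at most $\cardM-(\cardM_1+\cardM_j)$ in $u$. Hence only finitely many coefficients $[\gfvar^ku^m]Q_j$ are non-zero, and $w_i(k,m)$ is a finite number for every $(k,m)$ thanks to the extension $\eta(a)=0$ for $a>\cardM$.

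\textbf{Step 1 (agreement of the weights).} I will check that, for $k$ and $m$ in the summation range of $S_h(j)$, $f_j(k,h,m)$ equals the decomposed expression with $w_1,w_2$. This is just the definition of $f_j$ after distributing the factor $\frac{1}{k+2}$. One caveat needs handling: the paper's convention that a term with a zero binomial vanishes regardless of the argument of $\eta$. Whenever $\eta$ would be evaluated outside $[1,\cardM]$, the relevant binomial should vanish. The extension $\eta(a)=0$ for $a>\cardM$ makes the two conventions agree, since either the binomial is zero or $\eta$ is zero. Arguments below $1$ never occur, because $m+2h\ge2$.

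\textbf{Step 2 (extension of the range).} I will show that $[\gfvar^ku^m]Q_j=0$ for every $(k,m)$ outside $k\in[0,\totLabs-2]$, $m\in[hk,\cardM-(\cardM_1+\cardM_j)]$.
\begin{itemize}
\item For $k>\totLabs-2$: $Q_j$ is a product of $\totLabs-2$ factors, each linear in $\gfvar$.
\item For $m<hk$: this is Lemma~\ref{lemma:support}\,($ii$).
\item For $m>\cardM-(\cardM_1+\cardM_j)$: this follows from Corollary~\ref{cor:bivariateGenFunctionMinusJ}, since a coalition avoiding $\y_1$ and $\y_j$ has at most $\cardM-\cardM_1-\cardM_j$ members. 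Equivalently, it is the bound $m\le\overline{m}_{-j}(k,h)$ in Lemma~\ref{lemma:support}\,($ii$).
\end{itemize}
So $S_h(j)=\sum_{k\ge0}\sum_{m\ge0}f_j(k,h,m)[\gfvar^ku^m]Q_j$.

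\textbf{Step 3 (linearity).} The binomial factors $\binom{\cardM_1}{h}$, $\binom{\cardM_j-1}{h}$ and $\binom{\cardM_j-1}{h-1}$ do not depend on $(k,m)$, so they can be pulled out of the finite sum. The two remaining sums are exactly $\Lambda(Q_j,w_1)$ and $\Lambda(Q_j,w_2)$.

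The only delicate step is Step 1: making sure that the boundary conventions on $\eta$ and on zero binomials match in every case, especially when $m+2h+1>\cardM$. I will settle this by noting that a non-zero coefficient $[\gfvar^ku^m]Q_j$ requires $m\le\cardM-\cardM_1-\cardM_j$. If moreover $\binom{\cardM_1}{h}\binom{\cardM_j-1}{h}\ne0$, then $h\le\cardM_1$ and $h\le\cardM_j-1$. Together these give $m+2h+1\le\cardM$, so $\eta$ is always evaluated inside its genuine domain on every surviving term.
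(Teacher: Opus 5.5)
Your proof is correct and follows the same route as the paper's. Both restrict the support of $Q_j$ to $k\le\totLabs-2$, $hk\le m\le\cardM-(\cardM_1+\cardM_j)$; both show that the extension $\eta(a)=0$ for $a>\cardM$ leaves every surviving term unchanged, because nonzero binomials force $m+2h+1\le\cardM$ (respectively $m+2h\le\cardM$); and both pull out the binomials, which do not depend on $(k,m)$, by linearity. The only cosmetic difference is how the support bounds are obtained: the paper uses Lemma~\ref{lemma:profileIdentities}~($v$), with the tighter limits $K_{-j}(h)$ and $\Phi(h)-\min\{\cardM_j,h\}$, and then relaxes them, whereas you read them off directly from the factor degrees and Corollary~\ref{cor:bivariateGenFunctionMinusJ}.
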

\begin{proof}
	By \Cref{lemma:profileIdentities}~($v$) and \Cref{lemma:support}~($ii$), a pair of indices $(k,m)$ for which $[\gfvar^ku^m]Q_j\ne0$ must satisfy
	\begin{equation}\label{eq:supportQj}
		k\le K_{-j}(h)\le\totLabs-2
		\qquad\text{and}\qquad
		hk\le m\le\Phi(h)-\min\{\cardM_j,h\}\le\cardM-(\cardM_1+\cardM_j) \enspace ,
	\end{equation}
	where the bound on $k$ and the bound $m\le\Phi(h)-\min\{\cardM_j,h\}$ follow by \Cref{lemma:profileIdentities}~($v$).
	Furthermore, $K_{-j}(h)\le\totLabs-2$ since $K_{-j}(h)$ counts the incorrect labels other than $\y_j$ receiving at least $h$ votes, which are at most $\totLabs-2$, thus by the fact that $k\le \totLabs-2$ and from \Cref{lemma:support}~($ii$) it must be that $m\ge hk$.
	The last inequality holds since 
	\[
	\Phi(h)-\min\{\cardM_j,h\}=\sum_{j'\in[2,\totLabs],j'\ne j}\min\{\cardM_{j'},h\}\le\sum_{j'\in[2,\totLabs],j'\ne j}\cardM_{j'}=N-\cardM_j \enspace.
	\]
	Hence $Q_j$ has finitely many non-zero coefficients, and this holds also for the two sums defining $\Lambda(Q_j,w_i)$, which sum over the non-zero coefficients of $Q_j$.
	Note that our extension of $\eta$ does not modify any non-zero term.
	That is, consider a pair $(k,m)$ of indices with $\binom{\cardM_1}{h}\binom{\cardM_j-1}{h}[\gfvar^ku^m]Q_j\ne0$,
	it must be that $h\le\cardM_1$, $h\le\cardM_j-1$ and $m\le\cardM-(\cardM_1+\cardM_j)$ by \Cref{eq:supportQj}, hence $m+2h+1\le\cardM$, and analogously $m+2h\le\cardM$ when $\binom{\cardM_1}{h}\binom{\cardM_j-1}{h-1}[\gfvar^ku^m]Q_j\ne0$.

	By the definition of $f_j$ and by linearity,
	\[
	S_h(j)=\binom{\cardM_1}{h}\left[\binom{\cardM_j-1}{h}T_1+\binom{\cardM_j-1}{h-1}T_2\right] \enspace ,\qquad
	T_i\doteq\sum_{k=0}^{\totLabs-2}\ \sum_{m=hk}^{\cardM-(\cardM_1+\cardM_j)}w_i(k,m)\,[\gfvar^ku^m]Q_j \enspace .
	\]
	In the definition of $\Lambda(Q_j,w_i)$, each pair $(k,m)$ of indices outside the ranges defining $T_i$ yields $[\gfvar^ku^m]Q_j=0$ by \Cref{eq:supportQj}, thus contributing a zero term to $\Lambda(Q_j,w_i)$.
	Therefore it holds $T_i=\Lambda(Q_j,w_i)$ for $i\in\{1,2\}$, which concludes the proof.
\end{proof}

By \Cref{lemma:functionalContribution} and \Cref{theo:ShapleyMCCorrect}, at a fixed iteration $h\in[1,H^\star]$ the contribution $S_h(j)$ of every classifier predicting an incorrect label is determined by the $2(\totLabs-1)$ values $\Lambda(Q_j,w_i)$, for $j\in[2,\totLabs]$ and $i\in\{1,2\}$.\footnote{We will drop the dependency on $h$ to simplify our notation.}
We now briefly discuss how to obtain the values $\Lambda(Q_j,w_i)$, by a top-down pass on the product tree built for computing the coefficients of $P_h(\gfvar, u)$, avoiding the explicit computation of $Q_j, j\in[2,\totLabs]$, which would defeat our improvement in the time complexity.

For a node $v$ of the tree, with labels $J_v$, let $P_v\doteq\prod_{j\in J_v}F_j(\gfvar,u)$ be its polynomial, and let $R_v\doteq\prod_{j\notin J_v}F_j(\gfvar,u)$ be the product of the factors outside its subtree.
Thus, $R_v(u^h\gfvar,u)=Q_j$ for any leaf node $v$ holding label $\y_j$, by the map of the indices of the factors $F_j(\gfvar,u)$.
In addition, it is simple to observe that $R_w=R_vP_r$ for the two children $w$ and $r$ of an internal node $v$.
We now fix $i\in\{1,2\}$ for the rest of our discussion, our argument will apply for each such value independently.
To account for the map of the indices we let $\widehat w_i(k,m)\doteq w_i(k,m+kh)$, which is clearly defined for every $k\ge0$ and $m\ge0$ by our extension of $\eta(\cdot)$.
Substituting $m=m'+kh$ in the definition of $\Lambda(\cdot,w_i)$ yields $\Lambda(R_v,\widehat w_i)=\Lambda(Q_j,w_i)$ for any leaf node $v$ holding $\y_j$.
For each node $v$ consider the linear functional $\Lambda_v(X)\doteq\Lambda(XR_v,\widehat w_i)$, where the argument $X$ corresponds to a polynomial of degree at most $K_v$ in $\gfvar$ and at most $\Phi_v$ in $u$, that is $X$ ranges over the polynomials with the degrees of $P_v$.
Note that $R_v$ has degree $K(h)-K_v$ in $\gfvar$ and $\Phi(h-1)-\Phi_v$ in $u$, since each factor $F_j(\gfvar,u), j\notin J_v$ defining $R_v$ has non-negative coefficients, thus their product sums the degrees of all factors, summing up to $K(h)-K_v$ and $\Phi(h-1)-\Phi_v$ by ($d$) and \Cref{def:profileFunctions}.
Hence the product $XR_v$ has degree at most $K(h)$ in $\gfvar$ and at most $\Phi(h-1)$ in $u$, that is the degrees of $P_h(\gfvar,u)$.
In particular,
since $P_vR_v=P_h(\gfvar,u)$ for any node $v$ in the tree we have $\Lambda_v(P_v)=\Lambda(P_h,\widehat w_i)$.
Any polynomial $X$ of the domain of $\Lambda_v$ is the sum of its monomials $\gfvar^ku^m$ weighted by the coefficients $X[k,m]$, hence by linearity
\[
\Lambda_v(X)=\sum_{k=0}^{K_v}\sum_{m=0}^{\Phi_v}X[k,m]\,\Lambda_v(\gfvar^ku^m) \enspace ,
\]
that is $\Lambda_v$ is determined by its $(K_v+1)(\Phi_v+1)$ values on the monomials.
We represent such values with a matrix of dimension $(K_v+1)\times(\Phi_v+1)$, and we use $\Lambda_v[k,m]\doteq\Lambda_v(\gfvar^ku^m)$ to denote the entries retained in such a matrix.
The next lemma shows two important properties exploited by our algorithm, used to evaluate the linear functionals $\Lambda_v$.

\begin{lemma}\label{lemma:shiftFunctional}
	Fix $i\in\{1,2\}$, $h\in[1,H^\star]$ and a node $v$ of the tree, and let $R_v[k',m']\doteq[\gfvar^{k'}u^{m'}]R_v$, with the convention $R_v[k',m']=0$ whenever $k'<0$ or $m'<0$.
	For every $k\in[0,K_v]$ and $m\in[0,\Phi_v]$ we have
	\begin{compactenum}[($i$)]
		\item
		\[
		\Lambda_v[k,m]=\sum_{k'\ge0}\sum_{m'\ge0}\widehat w_i(k+k',m+m')\,R_v[k',m'] \enspace ,
		\]
		where the sums have finitely many non-zero terms.
		\item For every polynomial $G(\gfvar,u)$ of degree at most $K_v-k$ in $\gfvar$ and at most $\Phi_v-m$ in $u$ it holds
		\[
		\Lambda_v(\gfvar^ku^mG)=\sum_{k'\ge0}\sum_{m'\ge0}G[k',m']\,\Lambda_v[k+k',m+m'] \enspace .
		\]
	\end{compactenum}
\end{lemma}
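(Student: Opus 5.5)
Both claims follow directly from the definitions of $\Lambda$, $\Lambda_v$ and $R_v$ by bilinearity of polynomial multiplication. No result beyond the finiteness of supports is needed.

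\emph{Part ($i$).} By definition, $\Lambda_v[k,m]=\Lambda_v(\gfvar^ku^m)=\Lambda(\gfvar^ku^mR_v,\widehat w_i)$. Multiplying $R_v$ by the monomial $\gfvar^ku^m$ shifts every coefficient, so
\[
[\gfvar^{a}u^{b}]\big(\gfvar^ku^mR_v\big)=R_v[a-k,b-m],
\]
with the convention that entries with negative indices vanish. Substituting this into the definition of $\Lambda(\cdot,\widehat w_i)$ gives
\[
\Lambda_v[k,m]=\sum_{a\ge0}\sum_{b\ge0}\widehat w_i(a,b)\,R_v[a-k,b-m].
\]
Reindexing with $k'=a-k$ and $m'=b-m$ and dropping the vanishing terms with $k'<0$ or $m'<0$ yields the stated formula. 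The sums are finite because $R_v$ is a polynomial: it has degree $K(h)-K_v$ in $\gfvar$ and $\Phi(h-1)-\Phi_v$ in $u$, as observed before the lemma. Moreover, $\widehat w_i$ is defined for all nonnegative indices thanks to the extension $\eta(a)=0$ for $a>\cardM$.

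\emph{Part ($ii$).} Expand $G=\sum_{k',m'}G[k',m']\gfvar^{k'}u^{m'}$. The polynomial $\gfvar^ku^mG$ is then a finite linear combination of the monomials $\gfvar^{k+k'}u^{m+m'}$. The degree hypotheses on $G$ guarantee $k+k'\le K_v$ and $m+m'\le\Phi_v$, so every such monomial lies in the domain of $\Lambda_v$. The map $X\mapsto XR_v$ is linear and $\Lambda(\cdot,\widehat w_i)$ is a linear functional, so $\Lambda_v$ is linear. Applying this linearity gives
\[
\Lambda_v(\gfvar^ku^mG)=\sum_{k',m'}G[k',m']\,\Lambda_v(\gfvar^{k+k'}u^{m+m'}),
\]
which is exactly the claimed expression in terms of the entries $\Lambda_v[k+k',m+m']$.

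\emph{Main point of care.} The only real subtlety is the domain bookkeeping in ($ii$). The matrix $\Lambda_v$ stores values only for $k\le K_v$ and $m\le\Phi_v$, so we must check that every index $(k+k',m+m')$ that is actually used falls inside this range. The degree hypotheses on $G$ ensure this. In ($i$), the remaining check is that the extension of $\eta$ keeps every term well-defined. No analytic obstacle arises: the argument is pure coefficient manipulation.
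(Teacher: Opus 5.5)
Your proposal is correct and follows essentially the same argument as the paper. Part ($i$) uses the monomial shift $[\gfvar^a u^b](\gfvar^k u^m R_v)=R_v[a-k,b-m]$, the reindexing $a=k+k'$, $b=m+m'$, and finiteness from $R_v$ being a polynomial; part ($ii$) expands $G$ into monomials, checks from the degree hypotheses that each lies in the domain of $\Lambda_v$, and applies linearity of $\Lambda_v$.
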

\begin{proof}
	($i$) The multiplication by the monomial $\gfvar^ku^m$ shifts the coefficients of $R_v$ by $k$ and $m$, that is $[\gfvar^au^b](\gfvar^ku^mR_v)=R_v[a-k,b-m]$ for every $a\ge0$ and $b\ge0$.
	By the definitions of $\Lambda_v$ and of $\Lambda$ we thus have
	\[
	\Lambda_v[k,m]=\Lambda(\gfvar^ku^mR_v,\widehat w_i)=\sum_{a\ge0}\sum_{b\ge0}\widehat w_i(a,b)\,R_v[a-k,b-m] \enspace ,
	\]
	and the claim follows by the substitutions $a=k+k'$ and $b=m+m'$, noting that the terms with $a<k$ or $b<m$ are zero by definition.
	It is simple to note that the sum has finitely many non-zero terms since $R_v$ is a polynomial, and $\widehat w_i(a,b)$ is defined for every $a\ge0$ and $b\ge0$ by our extension of $\eta(\cdot)$.

	($ii$) Expanding $G$ over its monomials and multiplying by $\gfvar^ku^m$ yields $\gfvar^ku^mG=\sum_{k'\ge0}\sum_{m'\ge0}G[k',m']\gfvar^{k+k'}u^{m+m'}$.
	Each of its monomials lies in the domain of $\Lambda_v$, since $k+k'\le K_v$ and $m+m'\le\Phi_v$ by the assumption on the degrees of $G$, and on the values of $k$ and $m$, thus $\gfvar^ku^mG$ is in the domain of $\Lambda_v$.
	The claim follows by the linearity of $\Lambda_v$ and by $\Lambda_v[k+k',m+m']=\Lambda_v(\gfvar^{k+k'}u^{m+m'})$.
\end{proof}

We are now ready to detail our top-down algorithm, fix $i\in\{1,2\}$ and proceed as follows
\begin{compactenum}[($i$)]
	\item the root $v$ has $J_v=[2,\totLabs]$ and $R_v=1$, hence $\Lambda_v(X)=\Lambda(X,\widehat w_i)$ and $\Lambda_v[k,m]=\widehat w_i(k,m)$ for $k\in[0,K(h)]$ and $m\in[0,\Phi(h-1)]$;
	\item at an internal node $v$ with children $w$ and $r$ we have $R_w=R_vP_r$. 
	Thus $\Lambda_w(X)=\Lambda(XP_rR_v,\widehat w_i)=\Lambda_v(XP_r)$, by evaluating $X=\gfvar^ku^m$ it holds
	\begin{equation}\label{eq:childFunctional}
	\Lambda_w[k,m]=\Lambda_v(\gfvar^ku^mP_r)=\sum_{k'=0}^{K_r}\sum_{m'=0}^{\Phi_r}P_r[k',m']\,\Lambda_v[k+k',m+m'] \enspace ,
	\end{equation}
	for $k\in[0,K_w]$ and $m\in[0,\Phi_w]$, where we used \Cref{lemma:shiftFunctional} ($ii$) with $G=P_r$, note that the degrees satisfy the bounds of the lemma, since $K_w+K_r=K_v$ and $\Phi_w+\Phi_r=\Phi_v$, i.e., $K_r\le K_v-k$ and $\Phi_r\le\Phi_v-m$.
	That is, we can obtain $\Lambda_w$ from the entries of $P_r$ (computed in the bottom-up processing of the tree) and from the entries of $\Lambda_v$ (already available since we are processing the tree top-down).
	The matrix $\Lambda_r$ is obtained analogously from $\Lambda_v$ and $P_w$.
	Note that we do not have to compute \Cref{eq:childFunctional} by evaluating each entry independently.
	Such an evaluation would require $(K_w+1)(\Phi_w+1)(K_r+1)(\Phi_r+1)$ operations, which is up to quadratic in the size of the matrix of $P_v$, defeating our purpose.
	We show in \Cref{lemma:middleProduct} how to obtain the entries of the matrix of $\Lambda_w$ by a \emph{single polynomial multiplication} of two bivariate polynomials.
	The first polynomial has the entries of the matrix of $\Lambda_v$ as its coefficients, and the second one is $P_r$ with its indices reversed.
	Computing both $\Lambda_w$ and $\Lambda_r$ is thus reduced to $\bigO(\mathsf{M}((K_v+1)(\Phi_v+1)))$ total operations, that is the cost incurred at the node $v$ to build $P_v$ in the bottom-up phase;
	\item read $\Lambda_v[0,0]$ for any leaf $v$, that is, for a leaf $v$ holding the label $\y_j$ it holds $R_v(u^h\gfvar,u)=Q_j$, hence $\Lambda_v[0,0]=\Lambda_v(1)=\Lambda(R_v,\widehat w_i)=\Lambda(Q_j,w_i)$. Finally, use such values to compute $S_h(j)$ by \Cref{lemma:functionalContribution}.
\end{compactenum}

The next lemma shows that the matrix $\Lambda_w$ (obtained in step ($ii$) above) can be computed from matrices $\Lambda_v$ and from the matrix of $P_r$ with its two indices reversed.\footnote{Such a matrix of coefficients is known as the \emph{middle product} of two polynomials, and it can be computed more efficiently than by the full multiplication we perform, with the techniques of \citet{Bostan2003}.}

\begin{lemma}\label{lemma:middleProduct}
	Fix $i\in\{1,2\}$, $h\in[1,H^\star]$, and an internal node $v$ of the tree with children $w$ and $r$.
	Let $C(\gfvar,u)$ be the product of the two bivariate polynomials
	\[
	\sum_{k=0}^{K_r}\sum_{m=0}^{\Phi_r}P_r[K_r-k,\Phi_r-m]\gfvar^ku^m \enspace ,\qquad
	\sum_{k=0}^{K_v}\sum_{m=0}^{\Phi_v}\Lambda_v[k,m]\gfvar^ku^m \enspace ,
	\]
	that is, the first polynomial has at its coefficients the entries of $P_r$ reversed,\footnote{In matrix form, the entry $P_r[a,b]$ is moved to the position $[K_r-a,\Phi_r-b]$, that is, both indices are reversed.} and the second polynomial holds the matrix of $\Lambda_v$ as its coefficients.
	Then the following hold.
	\begin{compactenum}[($i$)]
		\item $\Lambda_w[k,m]=[\gfvar^{k+K_r}u^{m+\Phi_r}]C(\gfvar,u)$ for every $k\in[0,K_w]$ and $m\in[0,\Phi_w]$; and
		\item the matrix of $\Lambda_w$ is obtained from the matrices of $\Lambda_v$ and of $P_r$ with $\bigO(\mathsf{M}((K_v+1)(\Phi_v+1)))$ total operations.
	\end{compactenum}
\end{lemma}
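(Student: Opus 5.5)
The plan is to prove part ($i$) by a direct coefficient comparison, starting from \Cref{eq:childFunctional}, and then to derive part ($ii$) by bounding the degrees of the two factors and invoking the Kronecker-substitution bound already used in the bottom-up pass.

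For ($i$), I would expand the product coefficientwise. Write $\bar P_r[a,b]\doteq P_r[K_r-a,\Phi_r-b]$ for the reversed matrix, taken to be zero outside $[0,K_r]\times[0,\Phi_r]$. Then
\[
[\gfvar^{k+K_r}u^{m+\Phi_r}]C(\gfvar,u)=\sum_{a,b}\bar P_r[a,b]\,\Lambda_v[k+K_r-a,\,m+\Phi_r-b].
\]
Substituting $k'=K_r-a$ and $m'=\Phi_r-b$ turns this into
\[
\sum_{k'=0}^{K_r}\sum_{m'=0}^{\Phi_r}P_r[k',m']\,\Lambda_v[k+k',m+m'].
\]
By \Cref{eq:childFunctional} this is exactly $\Lambda_w[k,m]$. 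The one point that needs care is that no term has to be truncated. For $k\le K_w$ and $m\le\Phi_w$ we have $k+k'\le K_w+K_r=K_v$ and $m+m'\le\Phi_w+\Phi_r=\Phi_v$. So every index $\Lambda_v[k+k',m+m']$ that appears lies inside the stored matrix. The entries of $\Lambda_v$ outside that range are zero by the polynomial convention, and they correspond to no term of \Cref{eq:childFunctional}. This matches the ranges of the convolution with the ranges of \Cref{lemma:shiftFunctional}~($ii$).

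For ($ii$), I would bound the degrees of $C$. The reversed $P_r$ has degree at most $K_r$ in $\gfvar$ and at most $\Phi_r$ in $u$. The polynomial built from $\Lambda_v$ has degree at most $K_v$ in $\gfvar$ and at most $\Phi_v$ in $u$. Hence $C$ has degree at most $K_v+K_r\le 2K_v$ in $\gfvar$ and at most $2\Phi_v$ in $u$.

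I would then apply the Kronecker substitution $\gfvar=u^{2\Phi_v+1}$, as in step ($e$) of the bottom-up pass. This turns both factors into univariate polynomials of degree $\bigO((K_v+1)(\Phi_v+1))$, and the block separation of $2\Phi_v+1$ guarantees that distinct monomials of $C$ do not overlap. One multiplication then costs $\bigO(\mathsf{M}((K_v+1)(\Phi_v+1)))$ operations, using $\mathsf{M}(cs)=\bigO(\mathsf{M}(s))$.

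Reading off the $(K_w+1)(\Phi_w+1)$ required coefficients from the blocks takes linear time, which is dominated by the multiplication. The same argument with $w$ and $r$ swapped computes $\Lambda_r$ at the same cost, so the total work at node $v$ remains $\bigO(\mathsf{M}((K_v+1)(\Phi_v+1)))$.

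I expect the only delicate step to be the index bookkeeping in ($i$): checking that the offset $(K_r,\Phi_r)$ places exactly the full-overlap coefficients into the range $[0,K_w]\times[0,\Phi_w]$. For ($ii$), the main care is choosing the Kronecker modulus large enough, namely $2\Phi_v+1$ rather than $\Phi_v+1$.
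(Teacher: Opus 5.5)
Your proposal is correct and follows the paper's proof essentially step by step: for ($i$) you extract the coefficient of $\gfvar^{k+K_r}u^{m+\Phi_r}$, undo the index reversal, check that every index of $\Lambda_v$ stays within $[0,K_v]\times[0,\Phi_v]$, and match the result to \Cref{eq:childFunctional}; for ($ii$) you use a Kronecker substitution together with $\mathsf{M}(cs)=\bigO(\mathsf{M}(s))$. The only difference is cosmetic: you take the modulus $2\Phi_v+1$ where the paper takes $\Phi_r+\Phi_v+1$, and both exceed the $u$-degree of $C$ and give the same $\bigO(\mathsf{M}((K_v+1)(\Phi_v+1)))$ bound.
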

\begin{proof}
	($i$) We adopt the convention $\Lambda_v[k,m]=0$ whenever $k<0$, $m<0$, $k>K_v$ or $m>\Phi_v$.
	For every $a\ge0$ and $b\ge0$ it holds
	\[
	[\gfvar^au^b]C(\gfvar,u)=\sum_{k'=0}^{K_r}\sum_{m'=0}^{\Phi_r}P_r[K_r-k',\Phi_r-m']\,\Lambda_v[a-k',b-m'] \enspace ,
	\]
	by the definition of polynomial multiplication and by our convention on $\Lambda_v$, since the first factor of $C(\gfvar,u)$ has degree $K_r$ in $\gfvar$ and $\Phi_r$ in $u$.
	Thus
	\[
	[\gfvar^au^b]C(\gfvar, u)=\sum_{k''=0}^{K_r}\sum_{m''=0}^{\Phi_r}P_r[k'',m'']\,\Lambda_v[a-K_r+k'',b-\Phi_r+m''] \enspace ,
	\]
	by substituting $k''=K_r-k'$ and $m''=\Phi_r-m'$.
	Fix $k\in[0,K_w]$ and $m\in[0,\Phi_w]$, and consider the identity above for $a=k+K_r$ and $b=m+\Phi_r$.
	Each entry of $\Lambda_v$ in the sum has therefore indices $k+k''$ and $m+m''$, with $0\le k+k''\le K_w+K_r=K_v$ and $0\le m+m''\le\Phi_w+\Phi_r=\Phi_v$, thus the indices do not range over the zero entries set by our convention. 
	The resulting sum coincides with right-hand side of \Cref{eq:childFunctional}, which equals $\Lambda_w[k,m]$ and proves the first part of the claim.

	($ii$) 
	We multiply the two bivariate polynomials defining $C(\gfvar, u)$ similarly to how we obtain $P_v$ at an internal node $v$ of the tree in the bottom-up phase, that is, by the Kronecker substitution and the FFT.
	First we apply the Kronecker substitution $\gfvar=u^{\Phi_r+\Phi_v+1}$ to the two factors of $C(\gfvar,u)$, as in step ($e$) of the bottom-up algorithm.
	Note that the polynomial $C(\gfvar,u)$ has degree at most $K_r+K_v$ in $\gfvar$ and at most $\Phi_r+\Phi_v$ in $u$, hence the substitution maps the monomials $\gfvar^au^b$ with $a\in[0,K_r+K_v]$ and $b\in[0,\Phi_r+\Phi_v]$ to distinct powers of $u$.
	To recover the coefficients of $C(\gfvar,u)$ we split the coefficients of the product of the two images (i.e., FFT's output) into consecutive blocks of $\Phi_r+\Phi_v+1$ coefficients.\footnote{The coefficient of $u^k$ in the product of the two images is the coefficient of $\gfvar^au^b$ in $C(\gfvar,u)$, for $a=\floor{k/(\Phi_r+\Phi_v+1)}$ and $b=k\bmod(\Phi_r+\Phi_v+1)$.}
	The image of the second factor of $C(\gfvar, u)$ has degree $K_v(\Phi_r+\Phi_v+1)+\Phi_v\le(K_v+1)(2\Phi_v+1)-1$, since $\Phi_r\le\Phi_v$, and the image of the first factor has a smaller degree, since $K_r\le K_v$ and $\Phi_r\le\Phi_v$.
	The two images are therefore multiplied with $\bigO(\mathsf{M}((K_v+1)(2\Phi_v+1)))=\bigO(\mathsf{M}((K_v+1)(\Phi_v+1)))$ operations by~\citet[Corollary 8.28]{vzgathen2013modern}, since $(K_v+1)(2\Phi_v+1)\le2(K_v+1)(\Phi_v+1)$ and $\mathsf{M}(cs)=\bigO(\mathsf{M}(s))$ for any constant $c$, as in the analysis of our bottom-up algorithm.
	We then recover the entries of the matrix of $\Lambda_w$ by reading $(K_w+1)(\Phi_w+1)\le(K_v+1)(\Phi_v+1)$ coefficients of $C(\gfvar, u)$, that is at most $\bigO((K_v+1)(\Phi_v+1))$ additional operations, which concludes the proof.
\end{proof}

The matrix $\Lambda_r$ is obtained by \Cref{lemma:middleProduct} simply exchanging the roles of $w$ and $r$, hence processing a node $v$ in the top-down pass requires $\bigO(\mathsf{M}((K_v+1)(\Phi_v+1)))$ operations.
By performing two top-down passes, one for each value of $i\in\{1,2\}$, we thus obtain the contributions $S_h(j)$ of the iteration $h$ for every $j\in[2,\totLabs]$, and summing over $h$ yields the Shapley value of the incorrect classifiers as from \Cref{theo:ShapleyMCCorrect}.
The polynomials $P_r$ used in step ($ii$) above, are those computed by the bottom-up phase, and the matrix of $\Lambda_v$ has the size of the matrix of $P_v$. 
Therefore, the cost of processing a level of the tree in top-down is bounded as in the bottom-up pass.
That is, each of the two top-down passes over $i\in\{1,2\}$ for an iteration $h\in[1,H^\star]$ of the algorithm requires $\bigO(\mathsf{M}(K(h)(\Phi(h-1)+1))\log\totLabs)$ operations.

We conclude with the resulting time complexity.
Recall from the proof of \Cref{lemma:telescoping} that $\Phi(\mathrm{col}(x)-1)<x$ for every $x\in[1,N^\star]$, and that $\mathrm{col}(x)$ takes the value $h$ for exactly $K(h)$ indices $x$, thus
\[
\sum_{h=1}^{H^\star}K(h)\big(\Phi(h-1)+1\big)=\sum_{x=1}^{N^\star}\big(\Phi(\mathrm{col}(x)-1)+1\big)\le\sum_{x=1}^{N^\star}x\le (N^\star)^2 \enspace .
\]
Now note that the time complexity of performing an iteration $h$ is
\[
\bigO\big(\mathsf{M}(K(h)(\Phi(h-1)+1))\log\totLabs\big)
=\bigO\big(K(h)(\Phi(h-1)+1)\log(\totLabs \widetilde{N})\log\totLabs\big) \enspace ,
\]
by using the bounds $\mathsf{M}(s)=\bigO(s\log s)$, $K(h)\le\totLabs-1$ and $\Phi(h-1)\le\widetilde{N}$, which hold for every $h$ and every vote profile.
Thus combining the above
\begin{align*}
\bigO\left(\log(\totLabs \widetilde{N})\log\totLabs\sum_{h=1}^{H^\star}K(h)\big(\Phi(h-1)+1\big)\right)
&=\bigO\big((N^\star)^2\log(\totLabs \widetilde{N})\log\totLabs\big) \\
&=\bigO\big((\widetilde{N})^2\log(\totLabs \widetilde{N})\log\totLabs\big) \enspace .
\end{align*}
Finally, accounting for the processing of each unique vote profile in $\unique$, we obtain a time complexity bounded by
\[
\bigO(|\unique|(\widetilde{N})^2\log(\totLabs \widetilde{N})\log\totLabs + |\dataSet|\cardM) \enspace .
\]

Consider the bound of \Cref{theo:refinedRuntimeMCSHAP}, both bounds contain the term $\bigO(|\dataSet|\cardM)$ for the evaluation of the classifiers over $\dataSet$.
On the remaining term, the above bound is smaller by a factor $\Theta((\widetilde{N}+\totLabs)/(\log(\totLabs \widetilde{N})\log\totLabs))$, showing the improvement in the asymptotic time complexity.

We conclude by noting that such an improvement comes at the price of larger memory complexity.
In fact, for a fixed iteration $h\in[1,H^\star]$, the entire tree has memory complexity bounded by $\bigO(K(h)(\Phi(h-1)+1)\log\totLabs)$.
That is, we need to retain all matrices at all $\bigO(\log\totLabs)$ levels required for the top-down pass. 
Note that at each level such matrices have at most $\bigO(K(h)(\Phi(h-1)+1))$ entries (see \Cref{eq:levelSize}).
The matrices of the functionals $\Lambda_v$ do not add any asymptotic memory complexity.
The resulting memory complexity is therefore bounded by $\bigO(|\unique|\totLabs+\totLabs \widetilde{N}\log\totLabs)$.
\fi

\end{document}